\documentclass[runningheads]{llncs}

\usepackage{eccv}

\usepackage{eccvabbrv}

\usepackage{graphicx}
\usepackage{booktabs}

\usepackage[accsupp]{axessibility} 

\usepackage{hyperref}

\usepackage{orcidlink}

\usepackage{dsfont}
\usepackage{etoolbox}
\usepackage{color}
\usepackage{xspace}

\makeatletter
\DeclareRobustCommand*\cal{\@fontswitch\relax\mathcal}
\makeatother

\newif\ifshowedits
\showeditstrue

\newcommand{\addeditor}[3]{%
  \definecolor{#1color}{rgb}{#3}
  \expandafter\newcommand\csname #1\endcsname[1]{%
  \ifshowedits
    {\color{#1color} ##1}%
  \else
    {##1}%
  \fi
  }%
  \expandafter\newcommand\csname #1rmk\endcsname[1]{%
  \ifshowedits
    {\color{#1color} {\bf [#2: ##1]}}
  \fi
  }%
  \expandafter\newcommand\csname #1rpl\endcsname[2]{%
  \ifshowedits
    {\color{#1color} ##1 \sout{##2}}
  \else
    {##1}
  \fi
  }%
}

\newcommand{\createtextvar}[1]{
  \expandafter\newcommand\csname #1\endcsname{%
  {\text{#1}}
}%
}
\newcommand{\mycomment}[1]{}

\newcommand{\calL}{{\cal L}}

\newcommand{\calN}{{\cal N}}
\newcommand{\calO}{{\cal O}}

\newcommand{\calS}{{\cal S}}

\newcommand{\posdefm}{{\mathbb{S}^3_{++}}}

\newcommand{\IE}{{\mathds{E}}}

\newcommand{\IR}{{\mathds{R}}}

\newcommand{\methodname}{Gaussian Wrapping\xspace}
\newcommand{\meshingmethodname}{Primal Adaptive Meshing\xspace}

\newcommand{\rayw}{{\textbf{w}}}
\newcommand{\rayo}{{\textbf{o}}}
\newcommand{\ray}{{r_{\rayo, \rayw}}}
\newcommand{\rayx}{{\textbf{x}}}
\newcommand{\tff}{{t^*_{\rayo, \rayw}}}
\newcommand{\trans}{{\text{T}_{\rayo,\rayw}}}
\newcommand{\pdft}{{p_{\rayo,\rayw}^{\text{ff}}}}

\newcommand{\diff}{{\text{d}}}
\newcommand{\mean}{{\mu}}
\newcommand{\meani}{{\mu_i}}

\newcommand{\tstar}{{t_{\rayo, \rayw}^{G}}}
\newcommand{\tstari}{{t_{\rayo, \rayw}^{G_i}}}
\newcommand{\tstarj}{{t_{\rayo, \rayw}^{G_j}}}

\newcommand{\transi}{{\text{T}_{\rayo,\rayw}^{(i)}}}
\newcommand{\Gstari}{{\bar{G}_{\rayo, \rayw}^{(i)}}}
\newcommand{\transj}{{\text{T}_{\rayo,\rayw}^{(j)}}}
\newcommand{\Gstarj}{{\bar{G}_{\rayo, \rayw}^{(j)}}}

\newcommand{\ddt}{{\frac{\diff}{\diff t}}}

\newcommand{\charx}{{\mathds{1}_{n_i^T(\rayx-\meani) \geq 0}}}
\newcommand{\charxnorm}{{\mathds{1}_{n_i^T(x-\meani) \geq 0}}}

\newcommand{\median}{{\text{Median}_M}}

\newcommand{\orientedsigma}{{\bar{\sigma}}}

\newcommand{\colorfield}{{\textbf{c}}}

\usepackage{colortbl}

\definecolor{rankFirst}{rgb}{1.0, 0.70, 0.70}   
\definecolor{rankSecond}{rgb}{1.0, 0.85, 0.70}  
\definecolor{rankThird}{rgb}{1.0, 1.00, 0.70}   

\definecolor{tablethree}{rgb}{0.7, 1, 1}
\definecolor{tabletwo}{rgb}{0.7, 0.85, 1}
\definecolor{tableone}{rgb}{0.7, 0.7, 1}

\newcommand{\fst}[1]{\cellcolor{rankFirst}#1}
\newcommand{\snd}[1]{\cellcolor{rankSecond}#1}
\newcommand{\trd}[1]{\cellcolor{rankThird}#1}
\newcommand{\best}{\cellcolor{rankFirst}}
\newcommand{\sbest}{\cellcolor{rankSecond}}
\newcommand{\tbest}{\cellcolor{rankThird}}

\newcommand{\bestbis}{\cellcolor{tableone}}
\newcommand{\sbestbis}{\cellcolor{tabletwo}}
\newcommand{\tbestbis}{\cellcolor{tablethree}}
\usepackage{dsfont}
\usepackage{comment}
\usepackage{amsthm}
\usepackage{bbold}
\usepackage[table]{xcolor}
\usepackage{multirow}

\addeditor{antoine}{AG}{0.0, 0.5, 0.0}
\addeditor{diego}{DG}{0.0, 0.0, 0.8}
\addeditor{bingchen}{BG}{0.9, 0., 0.9}
\addeditor{nissim}{NM}{0.9, 0.4, 0.}
\addeditor{maks}{MO}{0.3, 0.4, 0.95}
\addeditor{george}{GD}{0.0, 0.4, 0.9}
\addeditor{todo}{TODO}{0.9, 0.0, 0.0}

\showeditstrue
\usepackage[breakable]{tcolorbox}
\usepackage{xcolor}
\definecolor{dblue}{HTML}{F3F8FF}
\definecolor{dorange}{HTML}{fffaf3}

\makeatletter
\newcommand{\hypbox}[2][]{%
 \begin{tcolorbox}[
   colframe=white,
   colback=dblue, 
   arc=4mm,
  boxrule=0pt,
  boxsep=0pt
 ]
 \@ifnotempty{#1}{%
  \textbf{\textcolor{dblue!90}{#1}}\\[-0.5em]
  \textcolor{dblue!90}{\rule{\linewidth}{0.4pt}}
 }%
 #2\end{tcolorbox}
}

\newcommand{\proofbox}[2][]{%
 \begin{tcolorbox}[
   colframe=white,
   colback=dorange, 
   arc=4mm,
  boxrule=0pt,
  boxsep=0pt,
  breakable
 ]
 \@ifnotempty{#1}{%
  \textbf{\textcolor{dblue!90}{#1}}\\[-0.5em]
  \textcolor{dblue!90}{\rule{\linewidth}{0.4pt}}
 }%
 #2\end{tcolorbox}
}
\makeatother
\begin{document}

\title{From Blobs to Spokes: High-Fidelity Surface Reconstruction via Oriented Gaussians}

\titlerunning{Gaussian Wrapping}
\author{
    Diego Gomez\inst{1}\thanks{Both authors contributed equally to the paper.}\orcidlink{0009-0005-2847-8617} \and
    Antoine Guédon\inst{1}$^\star$\orcidlink{0009-0001-3107-4454} \and
    Nissim Maruani\inst{1,2}\orcidlink{0009-0005-0866-7058} \and
    Bingchen Gong\inst{1}\orcidlink{0000-0001-6459-6972} \and
    Maks Ovsjanikov\inst{1}\orcidlink{0000-0002-5867-4046}
}

\authorrunning{D.~Gomez et al.}

\institute{
LIX, École Polytechnique, France \and
Inria, Côte d'Azur, France
}

\maketitle

\begin{figure*}[!h]
\centering

\includegraphics[width=\linewidth]{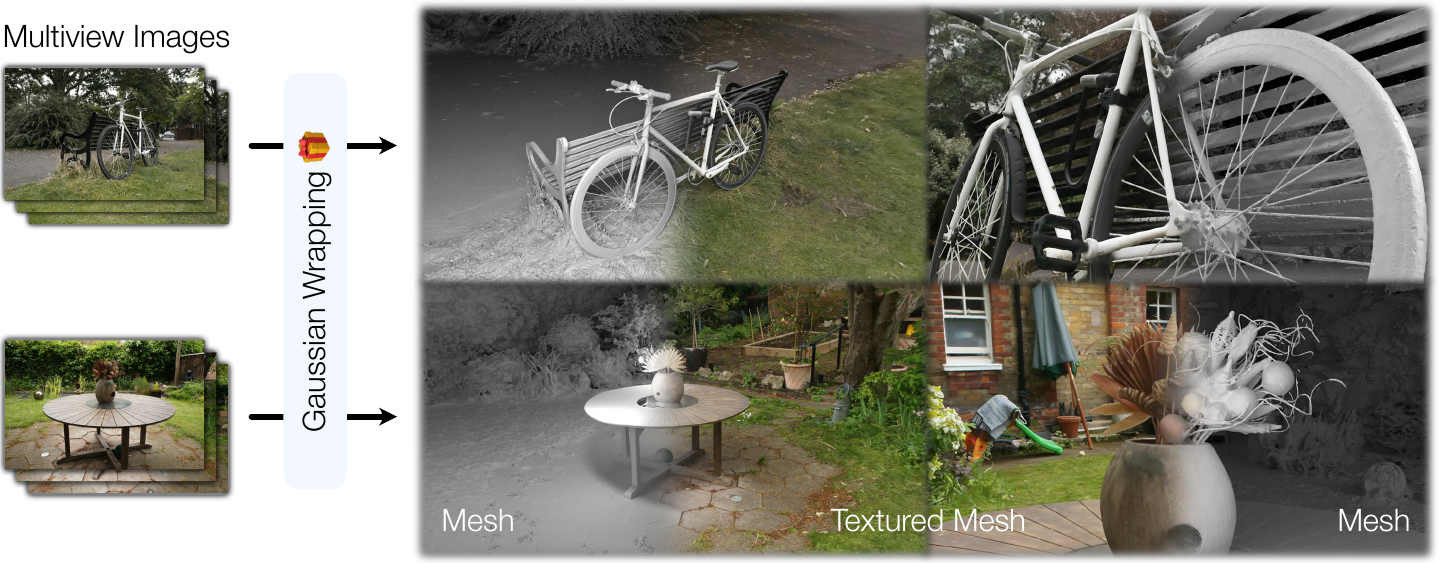}
\vspace{-0.5cm}
\caption{
\textbf{High-fidelity surface reconstruction from 3D Gaussian Splatting.}
Our method, \methodname, reconstructs watertight and textured surface meshes of full 3D scenes given multiview RGB images, by interpreting 3D Gaussians as stochastic oriented surface elements. This figure illustrates the resulting surface meshes, with and without the RGB texture. Our meshes represent the full scene---including background geometry and extremely thin structures such as bicycle spokes where existing methods fail---with a \textit{significantly} more compact representation than concurrent works~\cite{Zhang2026GeometryGrounded,chen2024pgsr}.
\vspace*{-10mm}}
\label{fig:teaser}
\end{figure*}

\begin{abstract}
3D Gaussian Splatting (3DGS) has revolutionized fast novel view synthesis, yet its opacity-based formulation makes \textit{surface extraction} fundamentally difficult.
Unlike implicit methods built on Signed Distance Fields or occupancy, 3DGS lacks a global geometric field, forcing existing approaches to resort to heuristics such as TSDF fusion of blended depth maps.
Inspired by the \textit{Objects as Volumes} framework~\cite{miller2024objectsasvolumes}, we derive a principled occupancy field for Gaussian Splatting and show how it can be used to extract highly accurate watertight meshes of complex scenes. Our key contribution is to introduce a learnable oriented normal at each Gaussian element and to define an adapted attenuation formulation, which leads to closed-form expressions for both the normal and occupancy fields at arbitrary locations in space. We further introduce a novel consistency loss and a dedicated densification strategy to enforce Gaussians to \textit{wrap} the entire surface by closing geometric holes, ensuring a complete shell of oriented primitives. We modify the differentiable rasterizer to output depth as an isosurface of our continuous model, and introduce \textbf{\meshingmethodname} for Region-of-Interest meshing at arbitrary resolution.
We additionally expose fundamental biases in standard surface evaluation protocols and propose two more rigorous alternatives. Overall, our method
\textbf{\methodname{}} sets a new state-of-the-art on DTU and Tanks and Temples, producing complete, watertight meshes at a fraction of the size of concurrent work---recovering thin structures such as the notoriously elusive bicycle spokes. Our project page is available \href{https://diego1401.github.io/BlobsToSpokesWebsite/index.html}{here}.
\keywords{3D Gaussian Splatting \and Surface Reconstruction \and Meshes}
\end{abstract}

\section{Introduction}

Reconstructing high-quality 3D surfaces from a set of 2D images is a key problem in computer vision. While recent advances in neural rendering, such as NeRF~\cite{mildenhall2020nerf} and 3DGS~\cite{kerbl3Dgaussians}, have shown impressive results on the novel view synthesis problem, extracting accurate and explicit geometry continues to be challenging. To bridge efficient volumetric rendering and surface reconstruction requires addressing their inherent differences: neural rendering is particularly suited to soft, semi-transparent representations, whereas surface reconstruction demands hard, well-defined boundaries to extract watertight geometry.

Current approaches to this inverse problem generally fall into two categories. Implicit representations, such as Signed Distance Functions (SDFs) and occupancy fields, excel at representing continuous, topologically consistent surfaces. However, they are often computationally expensive to train and tend to produce over-smoothed results, as their global solvers struggle to capture high-frequency details~\cite{wang2021neus,li2023neuralangelo}. On the other hand, explicit particle-based methods (i.e, 3DGS) exhibit real-time rendering and high-fidelity visual quality by representing scenes as discrete primitives. While they can capture fine details, there is no straightforward way to recover an ordered structure, i.e, a mesh, from such representations. Recent hybrid approaches attempt to close the gap by regularizing 3DGS~\cite{guedon2024sugar,guedon2025milo,yu2024gaussian,ren20252dgs,radl2025sof,Zhang2026GeometryGrounded}, but often resort to ad-hoc density thresholds or compromise the rendering speed and quality that make 3DGS attractive.

A core limitation of extracting surfaces from 3DGS is the interpretation of the primitive itself. Standard approaches treat Gaussians as symmetric ``blobs'' of mass or density. This assumption conflicts with the nature of an orientable surface, which is an asymmetric boundary separating empty and occupied space. By modeling surface points with symmetric primitives, existing methods bias reconstruction, and make surface extraction significantly more challenging. 

In this work, we address this limitation by introducing \textbf{\methodname}, a novel framework that reinterprets the role of the Gaussian primitive inspired by Objects as Volumes~\cite{miller2024objectsasvolumes}. Rather than treating Gaussians as symmetric clouds, we view them as \textit{oriented probabilistic surface elements} that represent the underlying geometry only within an oriented half-space. In this view, the Gaussian models the density decay on the outward-facing side of the boundary, while the inward-facing side is considered as fully occupied. This formulation allows us to derive a robust ``wrapping'' strategy: we identify geometric gaps where our oriented surface assumption is violated and densify the representation to obtain a watertight shell of oriented primitives. Our framework allows the derivation of closed-form geometric quantities---namely occupancy, its complement (\textit{vacancy}), and the normal field---enabling reconstruction of thin structures without artifacts or missing geometry. Links to extracted meshes, code and more are available \href{https://diego1401.github.io/BlobsToSpokesWebsite/index.html}{here}. Our key contributions are as follows.

\begin{itemize}
    \item We introduce \textbf{Oriented-Gaussians} and their associated training strategy in the multi-view setting.
    \item We derive a theoretical connection between 3DGS and implicit surface reconstruction by formulating Gaussians as oriented surface elements, inspired by \textit{Objects as Volumes} \cite{miller2024objectsasvolumes}. Importantly, this leads to closed-form expressions for both normal and occupancy fields at arbitrary locations without any additional learnable parameters.
    \item We propose \textbf{\meshingmethodname}, a mesh extraction procedure that leverages the derived Gaussian fields to produce high-quality, watertight meshes at controllable resolution, enabling recovery of extremely thin structures such as bicycle spokes~(\cref{fig:teaser}).
\end{itemize}

\begin{figure}[t]
\centering

\begin{subfigure}[b]{0.49\linewidth}
    \centering
    \includegraphics[width=0.49\linewidth]{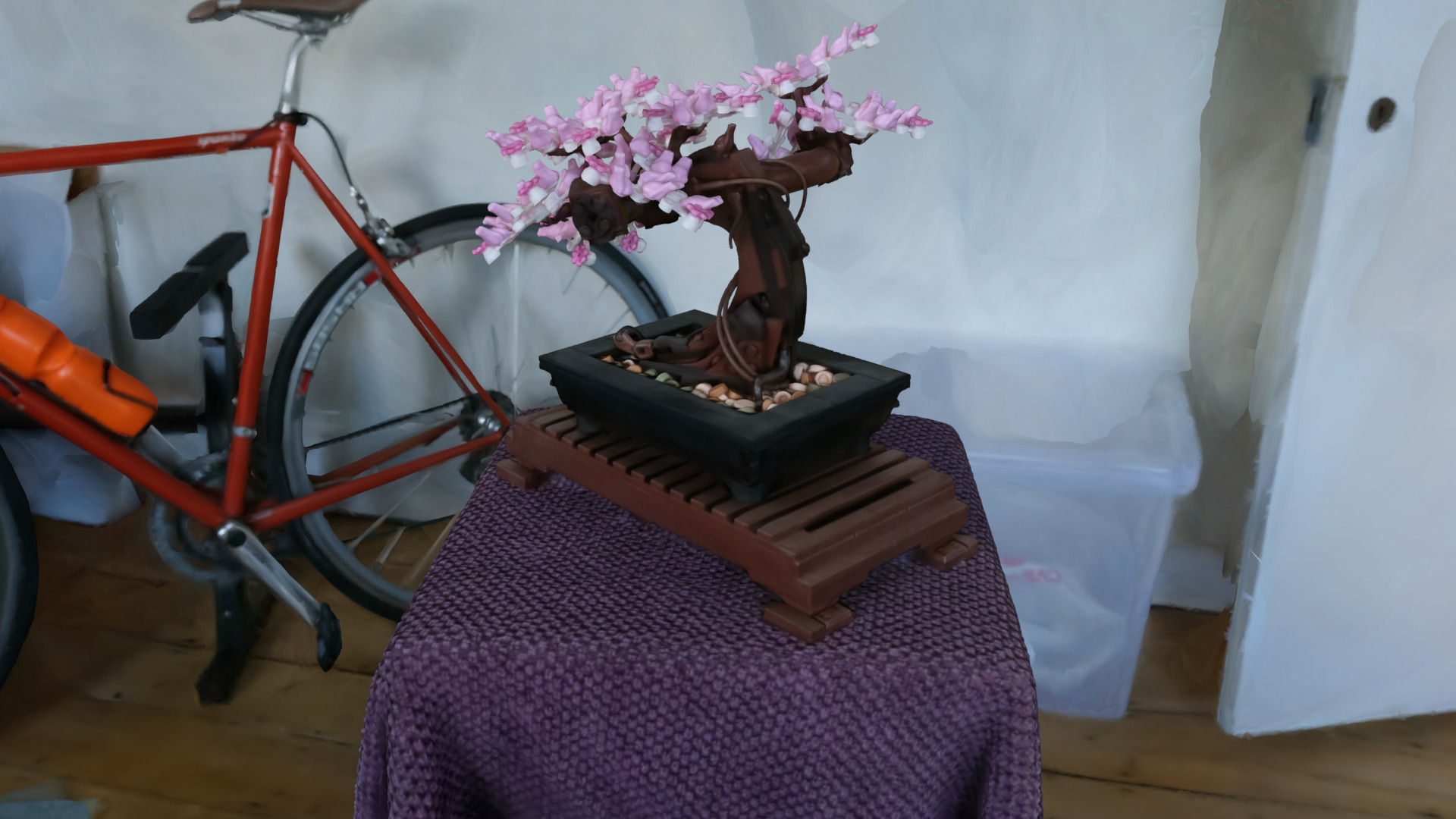}
    \includegraphics[width=0.49\linewidth]{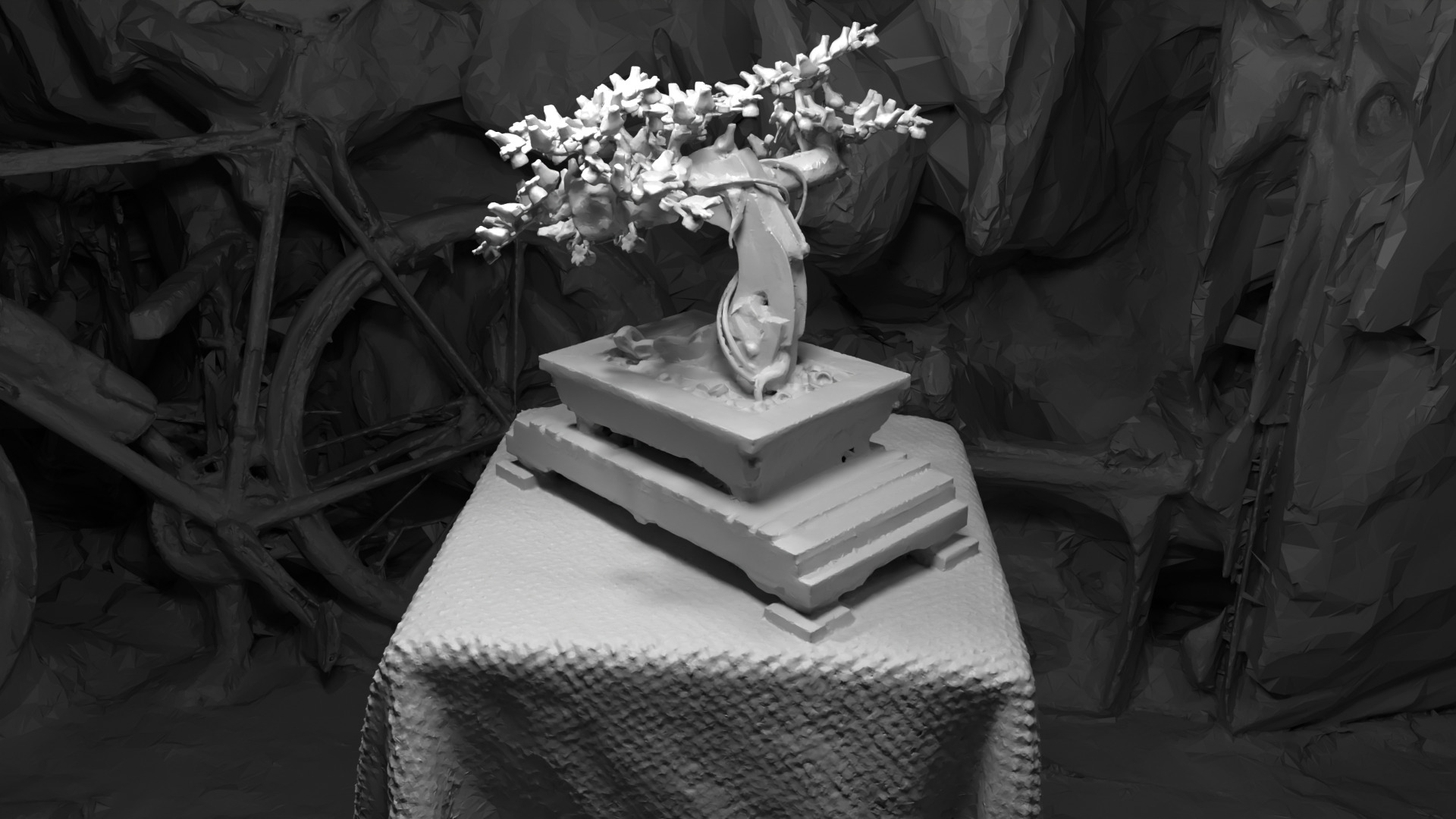}
    
\end{subfigure}
\begin{subfigure}[b]{0.49\linewidth}
    \centering
    \includegraphics[width=0.49\linewidth]{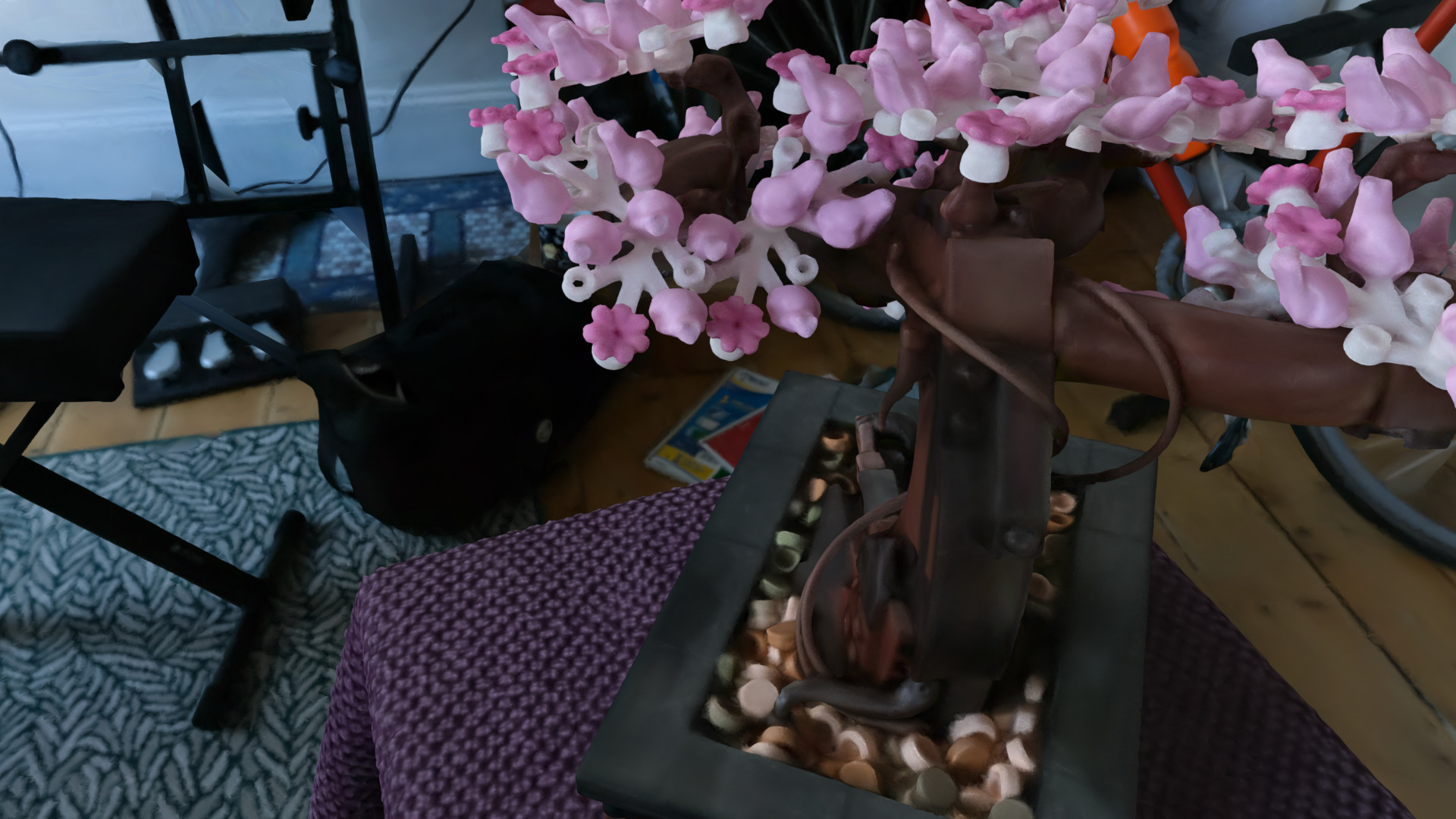}
    \includegraphics[width=0.49\linewidth]{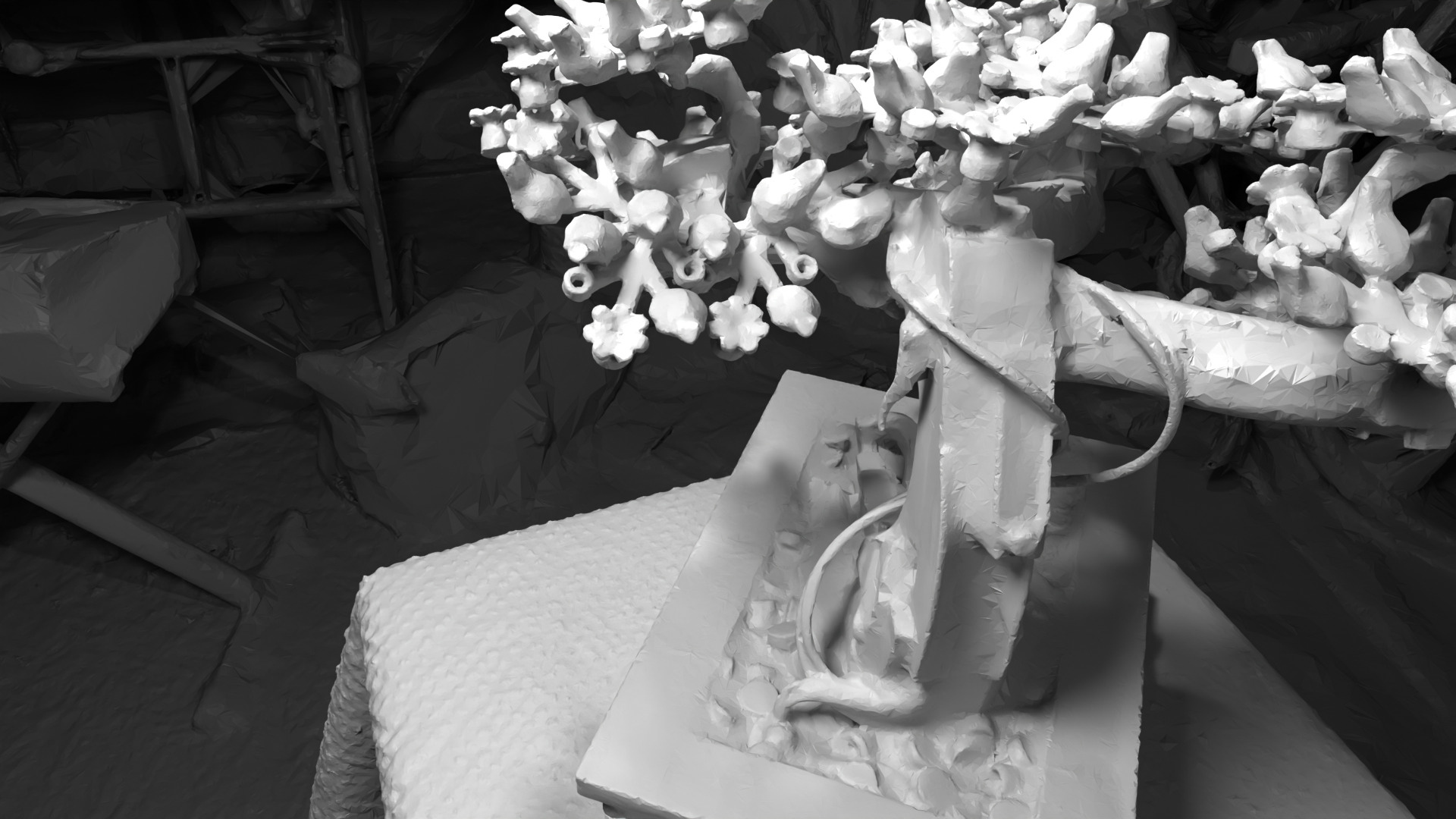}
    
\end{subfigure}
\begin{subfigure}[b]{0.49\linewidth}
    \centering
    \includegraphics[width=0.49\linewidth]{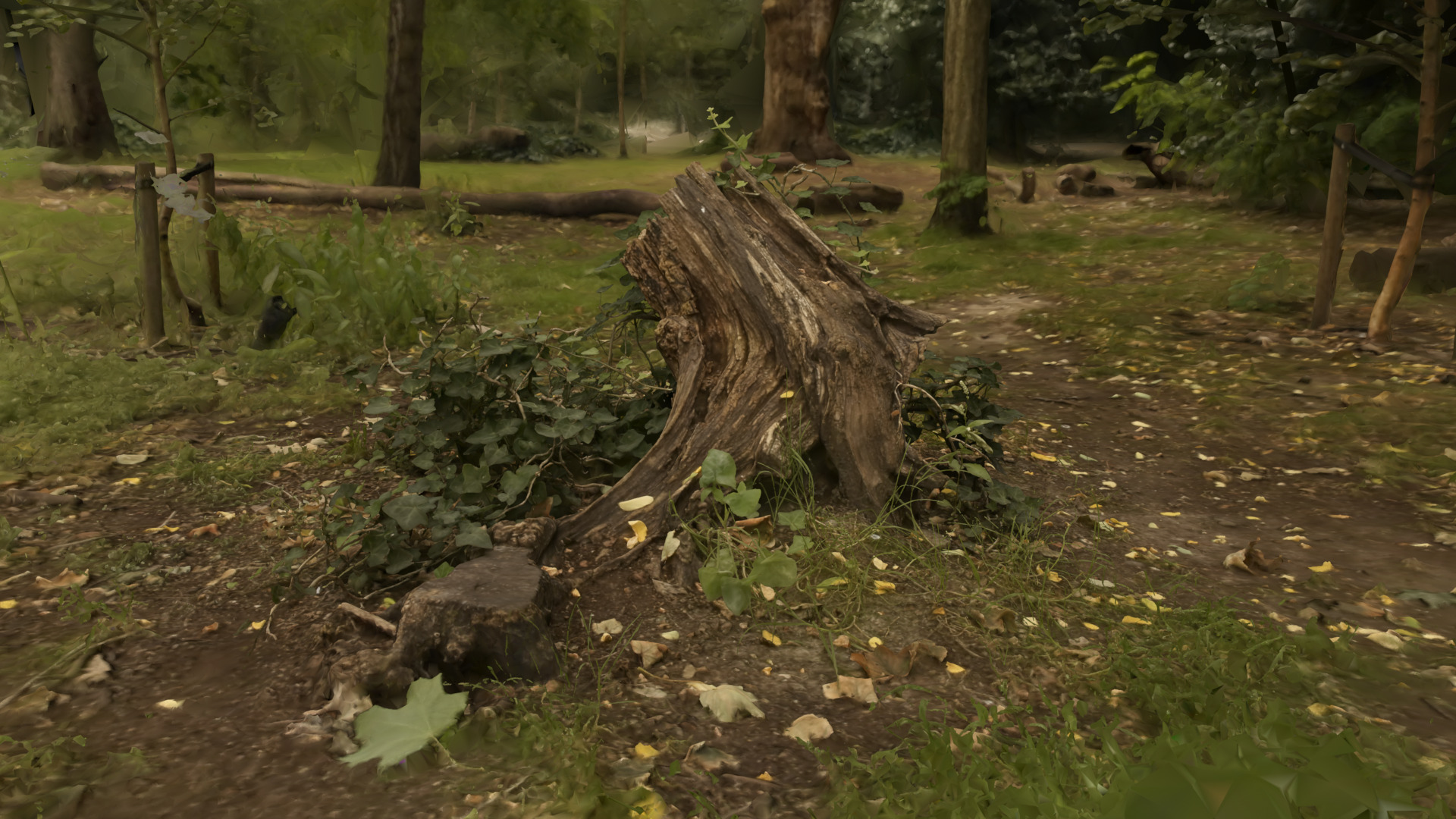}
    \includegraphics[width=0.49\linewidth]{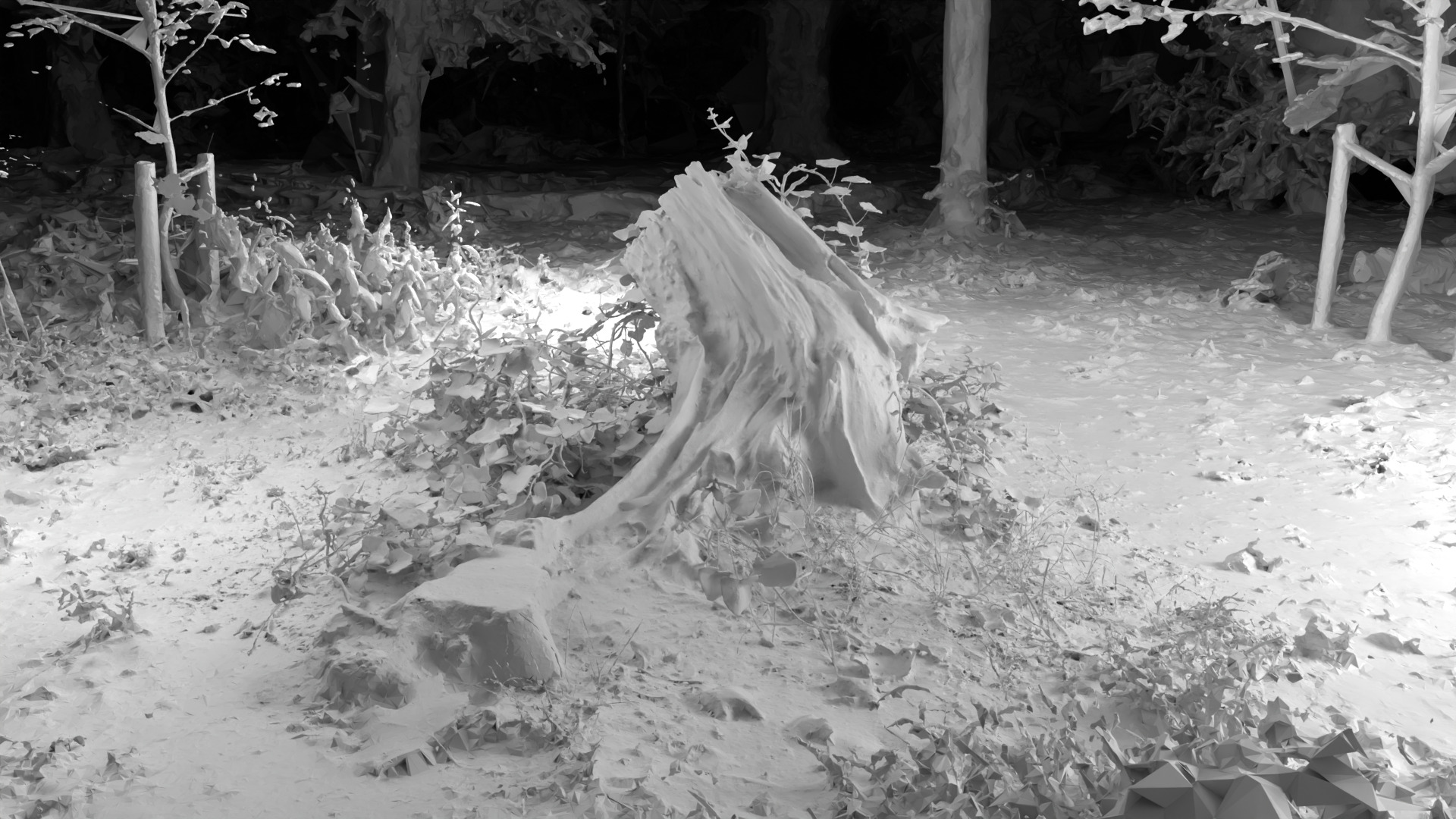}
    \caption{Mesh w/wo texture}
\end{subfigure}
\begin{subfigure}[b]{0.49\linewidth}
    \centering
    \includegraphics[width=0.49\linewidth]{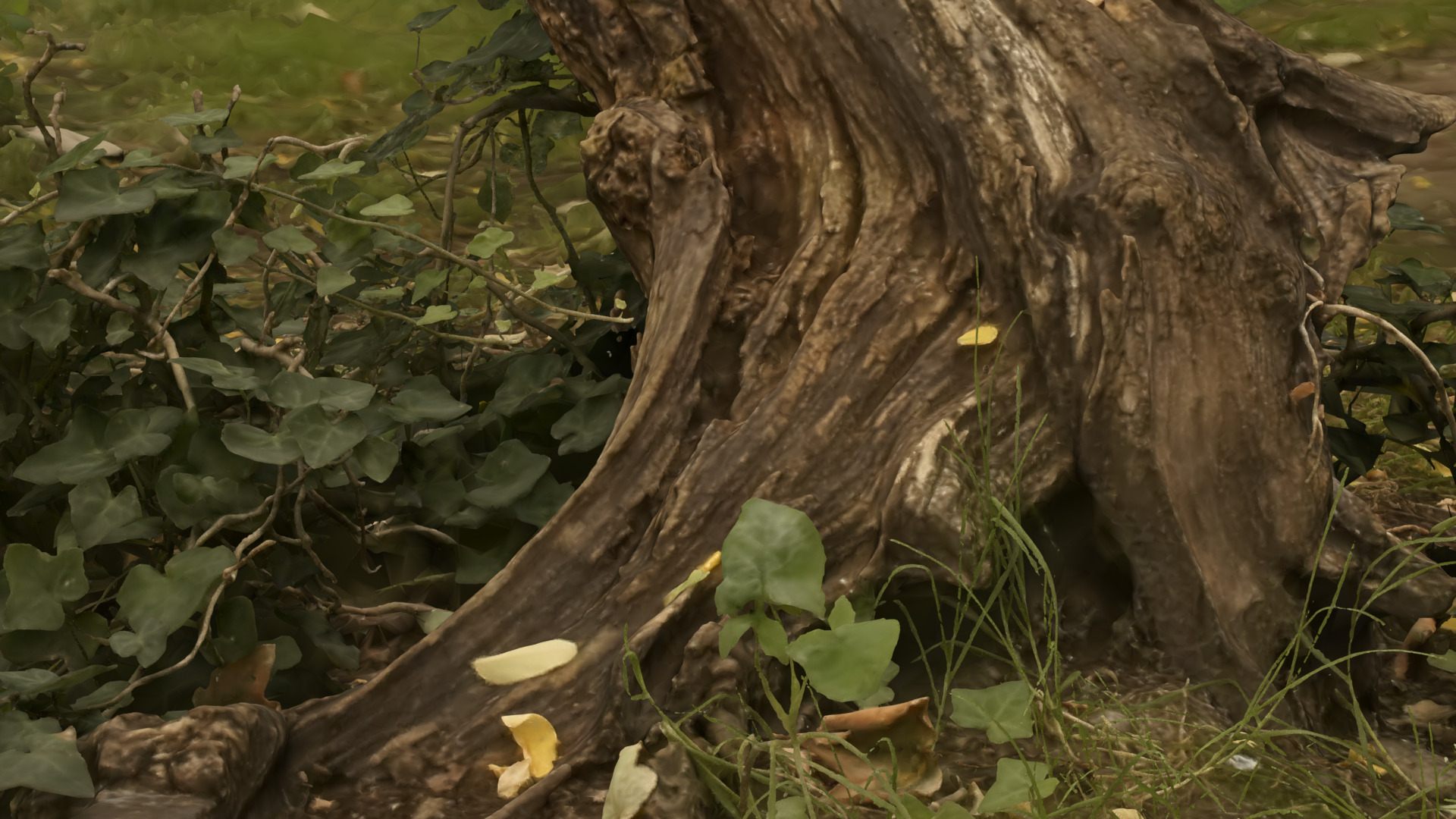}
    \includegraphics[width=0.49\linewidth]{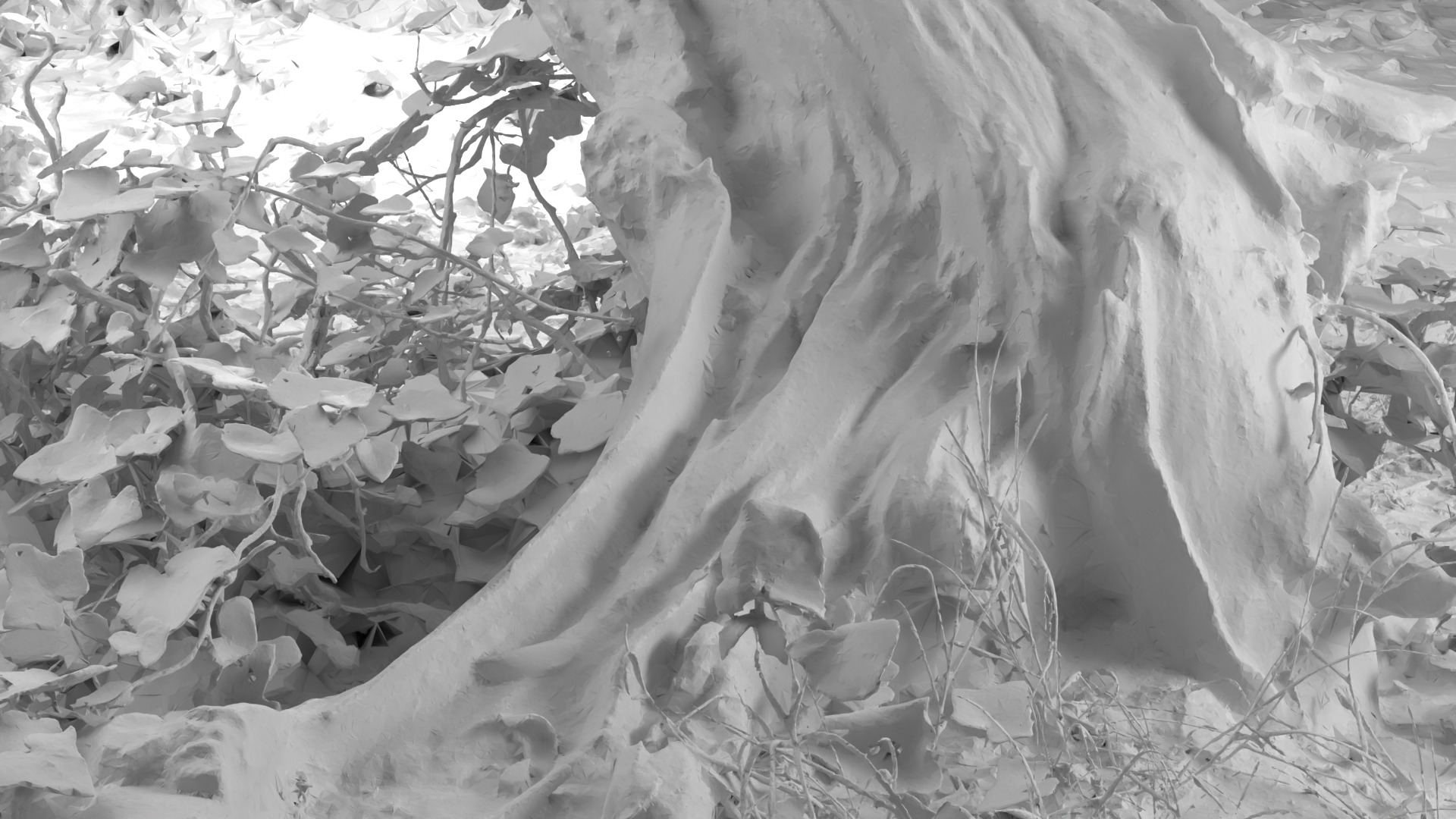}
    \caption{Close-up view of Mesh w/wo texture}
\end{subfigure}

\caption{\textbf{Examples of textured meshes reconstructed with our method.} While being watertight and lighter than meshes reconstructed by state-of-the-art methods, 
our approach is able to reconstruct extremely fine details such as a few blades of grass---as can be seen in the close-up view of the stump scene~\cite{barron22mipnerf360}.\vspace{-3mm}
}
\label{fig:showcase}
\end{figure}
\section{Related work}

\subsubsection{Novel View Synthesis.}
NeRF~\cite{mildenhall2020nerf} models scenes as continuous radiance fields whose image formation relies on ray-marching through an attenuation field (also called \textit{density}) parameterized by an MLP.
Subsequent work improves anti-aliasing~\cite{barron2021mip, barron22mipnerf360} and training efficiency via hash encodings~\cite{mueller2022instantngp} or explicit volumetric structures~\cite{yu2021plenoxels, chen2022tensorf}.
3D Gaussian Splatting~\cite{kerbl3Dgaussians} redefined this landscape by replacing ray-marching with rasterization of learned Gaussian primitives, achieving real-time, high-quality rendering.
Further extensions address aliasing~\cite{yu2024mip}, scalability~\cite{mallick2024taming, kheradmand20243d}, and rendering fidelity.
Despite impressive visual quality, none of these methods target explicit surface mesh extraction.

\vspace*{-4mm}
\subsubsection{Surface Reconstruction from Images.}
Implicit approaches coupling volume rendering with signed distance functions (NeuS~\cite{wang2021neus}, VolSDF~\cite{yariv2021volume_volsdf}, Neuralangelo~\cite{li2023neuralangelo}) convert a learnable SDF into a view-dependent attenuation field and yield smooth, topologically consistent surfaces but require very long optimization times. Mesh Baking~\cite{yariv2023bakedsdf, Reiser2024SIGGRAPH} approaches employ a two-stage pipeline that first trains a volumetric representation, then extracts a mesh to bake rendering onto it; this line of work however focuses on rendering and does not measure the geometric loyalty of the produced meshes.
Discrete mesh methods~\cite{munkberg22nvdiffrec, shen23flexicubes} produce strong results on isolated objects but do not scale to entire scenes.

In this regard, the efficiency of Gaussian Splatting motivates a growing family of surface-extraction approaches.
Several jointly optimize Gaussians with an auxiliary neural implicit field~\cite{chen2023neusg, lyu20243dgsr, yu2024gsdf, li2025gaussianudf, zhang2024gspull}, but the dual optimization limits scalability and the implicit branch remains bounded by MLP capacity.
Others regularize Gaussians directly: 2DGS~\cite{huang20242d} replaces 3D primitives with lower-dimensional ones, RaDe-GS~\cite{zhang2024rade}, PGSR~\cite{chen2024pgsr}, Quadratic Gaussian Splatting~\cite{zhang2024quadratic}, and VCR-GauS~\cite{chen2024vcr} explore alternative rendering strategies or multi-view consistency constraints, while SuGaR~\cite{guedon2024sugar} and GOF~\cite{yu2024gaussian} propose specialized extraction procedures.
A shared limitation is the reliance on heuristic depth definitions---typically TSDF fusion of blended depth maps---without a principled link between the Gaussian representation and a well-defined geometric field.

\textit{Objects as Volumes}~\cite{miller2024objectsasvolumes} establishes a closed-form relationship between vacancy and view-dependent attenuation under reciprocal exponential transport, grounding the theory behind Neural SDF methods~\cite{wang2021neus, yariv2021volume_volsdf}.
Since the 3DGS image formation model is not attenuation-based, this result does not apply directly---though Celarek~\etal~\cite{celarek2025does} show 3DGS approximates volumetric rendering with view-dependent attenuation.
Concurrent work GGGS~\cite{Zhang2026GeometryGrounded} builds upon~\cite{miller2024objectsasvolumes} and exploits a continuous transmittance derived from Gaussians for improved depth estimation. However, their formulation suffers from surface erosion and fails to capture fine geometric details. Our method addresses this limitation by introducing oriented Gaussians that yield a reciprocal attenuation, from which closed-form vacancy and normal fields naturally follow---enabling principled, high-fidelity mesh extraction and limiting erosion.

\vspace*{-3mm}
\subsubsection{Predicting Normals from Radiance Fields.} 

Estimating normals from radiance fields is established practice in inverse rendering, where accurate normal maps help disentangle material properties~\cite{liang2024gs, jin2023tensoir, mai2023neural, gomez2024rrm, verbin2022refnerf}.
In contrast, we predict normals not for appearance decomposition but to construct an oriented surface---directly enabling robust surface reconstruction.

\vspace*{-3mm}

\subsubsection{Voronoi \& Delaunay-based Methods.} 
Voronoi diagrams and their dual Delaunay triangulations were first utilized as backbones for surface reconstruction with provable guarantees~\cite{aurenhammer1991voronoi, amenta_new_1998, amenta_power_2001, dey_tight_2003}, and later integrated into learning-based pipelines~\cite{sulzer2021scalable, maruani_voromesh_2023, maruani_ponq_2024, binninger2025tetweave, son2024dmesh, son2025dmesh++, zhang_high-fidelity_2025}.  Within neural rendering, GOF~\cite{yu2024gaussian} and MILo~\cite{guedon2025milo} build Delaunay triangulations over Gaussian-derived vertices, while other methods~\cite{govindarajan_radiant_2025, mai_radiance_meshes_2025,held2025meshsplatting} integrate them into ray-tracing pipelines for view synthesis rather than reconstruction. In contrast, our \meshingmethodname incorporates ideas from modern reconstruction pipelines~\cite{zhang_high-fidelity_2025} to generate Delaunay-based watertight meshes at any desired resolution.

\section{Background and Motivation}
\label{sec:background}

We focus on \textit{fully opaque objects}, where the scene is binary: each point is either inside or outside the object. Our goal is to derive a \textit{principled occupancy field} from 3D Gaussian Splatting, \ie, a view-independent scalar field encoding scene geometry, from which high-quality surface meshes can be extracted~(\cref{fig:showcase}).

\subsubsection{Objects as Volumes.} 
Reconstructing well-defined surfaces requires a properly defined geometric field such as an occupancy function $\calO:\IR^3\rightarrow[0,1]$, yet volumetric rendering---the backbone of both NeRF~\cite{mildenhall2020nerf} and 3DGS~\cite{kerbl3Dgaussians}---operates on continuous, semi-transparent volumes rather than on sharp boundaries.
\textit{Objects as Volumes}~(OaV)~\cite{miller2024objectsasvolumes} bridges this gap by interpreting the scene as a stochastic surface: the occupied region is modeled as a random set and the occupancy $\calO$ as the probability of a point to be occupied. Standard volumetric rendering quantities then emerge as expectations over this random geometry.

Specifically, under the assumption of exponential light transport, this framework grounds the theoretical justification of Neural SDFs~\cite{wang2021neus, yariv2021volume_volsdf}, and links the \emph{vacancy} $v(\rayx) = 1 - \calO(\rayx)$ (the probability that $\rayx\in\IR^3$ is unoccupied) to the attenuation (or density) $\sigma: \IR^3 \times \calS^2\rightarrow\IR_+$ of standard volumetric ray-marching:
\begin{equation}
    \forall (\rayx, \rayw)\in \IR^3 \times \calS^2, \quad
    \sigma(\rayx, \rayw) = |\rayw \cdot \nabla \log v(\rayx)| \> ,
    \label{objects_as_volumes}
\end{equation}
where attenuation is assumed to be \emph{reciprocal}, \ie $\sigma(\rayx, \rayw) = \sigma(\rayx, -\rayw)$.
This equation directly connects a geometric field, the vacancy, to the attenuation driving volumetric rendering.
It is the cornerstone of our approach: if Gaussian Splatting can be cast as an attenuation-based model satisfying reciprocity, then Eq.~\ref{objects_as_volumes} yields an explicit occupancy field and, in turn, a principled surface.

\subsubsection{Gaussian Splatting.} 3D Gaussian Splatting~\cite{kerbl3Dgaussians} represents a scene as $N$ Gaussian primitives, each defined by
$G_i(\rayx) = \alpha_i \exp ( -\frac{1}{2} {(\rayx - \mean_i)}^T \Sigma_i^{-1} (\rayx - \mean_i))$,
with opacity $\alpha_i \in [0,1]$, mean $\mean_i\in\IR^3$, and precision matrix $\Sigma_i^{-1}\in\posdefm$.
Unlike NeRF~\cite{mildenhall2020nerf}, which ray-marches through a continuous attenuation field $\sigma$, 3DGS relies on \emph{opacity}: it renders the pixel $p$ by projecting each primitive onto the image plane and alpha-compositing contributions in depth order:
\begin{equation}
    C(p) = \sum_{i} c_i\, \alpha_i G_i^{*}(p) \prod_{j < i}\!\bigl(1 - \alpha_j G_j^{*}(p)\bigr) \> ,
    \label{eq:3dgs_rendering}
\end{equation}
where $G_i^{*}$ is either a projected 2D Gaussian as in 3DGS~\cite{kerbl3Dgaussians}, or the evaluation of the 3D Gaussian at the point of maximum contribution along the ray~\cite{yu2024gaussian,zhang2024rade,Zhang2026GeometryGrounded}. A key consequence is that each Gaussian's contribution depends only on its projected value at the pixel—not on the ray path length through the primitive. Gaussian opacities are therefore bounded in $[0,1]$ and cannot be identified with the potentially unbounded attenuation coefficient $\sigma$. As a result, the OaV framework cannot be applied to 3DGS directly.

\subsubsection{Finding an attenuation model equivalent to 3DGS.} Recent work~\cite{celarek2025does} shows that the 3DGS image formation model can nonetheless be reinterpreted as \textit{``a volumetric rendering model with view-dependent extinction''}, assuming primitives do not overlap in 3D. To be precise, we show in the appendix that, \textit{assuming
statistical independence and non-overlapping Gaussian primitives}, the image formation model of a Gaussian Splatting representation—relying on
alpha-blending with opacity—is equivalent to a ray-marching volumetric rendering model with the following continuous attenuation:
\[
\sigma(\rayx, \rayw) = \sum_{i=1}^N \max\left(
            0,
            -\rayw \cdot \nabla \log (1-G_i(\rayx))
        \right) \> ,
\]
for any ray origin $\rayo\in\IR^3$ in empty space, $\rayw\in\calS^2$, and $t\in [0,+\infty)$. 
Contrary to previous work~\cite{Zhang2026GeometryGrounded}, this attenuation factor enforces geometric properties necessary for accurate surface extraction; please refer to the appendix for more details.
However, this expression is not reciprocal in general, \ie $\sigma(\rayx, \rayw) \neq \sigma(\rayx, -\rayw)$. Reciprocity is a necessary condition for applying results from OaV, thus preventing us from deriving a closed-form vacancy expression.

\subsubsection{A reciprocal attenuation model.} To restore reciprocity, we propose to \textit{orient} Gaussians: we equip each primitive with a learnable, oriented normal vector. This results in an explicit attenuation-based formulation of 3DGS that satisfies the requirements of OaV, yielding both more accurate depth rendering and an explicit geometric field for surface extraction. This attenuation-based formulation also induces a volumetric rendering model equivalent to the image formation model of a Gaussian Splatting representation---\textit{under the additional assumption that Gaussians properly ``wrap" the scene}.
Specifically, we define the following oriented attenuation:
\hypbox[]{
\begin{definition}[Oriented Gaussian]\label{definition:oriented_gaussians}
    We assign an oriented normal vector parameter $n_i\in\calS^2$ to each Gaussian $G_i$, and define its oriented attenuation coefficient as
    \begin{equation}
        \orientedsigma_i(\rayx, \rayw) = \charx |\rayw \cdot \nabla\log (1-G_i(\rayx))| \> .
        \label{eq:oriented_attenuation}
    \end{equation}
\end{definition}}
We detail in the appendix---both theoretically and qualitatively---how this formulation not only satisfies the geometric properties required for applying OaV, but is also a valid approximation of the Gaussian Splatting image formation model under specific assumptions. This attenuation-based model constitutes the basis of our approach, and allows us to derive the method below. 

While Geometry Field Splatting~\cite{jiang2025geometry} leverages OaV to define a geometric field out of particles, it applies its analysis to Gaussian surfels only and relies on TSDF for mesh extraction, limiting its capability to recover details and scale to full scenes with background geometry. 
Geometry-Grounded Gaussian Splatting~\cite{Zhang2026GeometryGrounded} adopts a setting similar to our method, applying Equation~\ref{objects_as_volumes} to derive a continuous transmittance and improve depth estimation. However, it does not guarantee the multi-view consistency of the depth, which is essential for a well-defined occupancy field as illustrated in Section~\ref{sec:experiments}.
In contrast, we restore reciprocity while enforcing accurate and multiview-consistent geometry by \emph{orienting} each Gaussian with a normal vector.
We provide in the appendix an illustration of how the non-oriented attenuation from~\cite{Zhang2026GeometryGrounded} fails to produce multi-view consistent depth maps and complete surfaces.

In the following section, we expose our theoretical results and describe how our formulation can be used for improving surface extraction from Gaussian Splatting representations.
\textbf{Please refer to the appendix for extensive proofs and derivations motivating our results.}
\section{Method}\label{sec:method}

\begin{figure}[t]
\centering
\includegraphics[width=\linewidth]{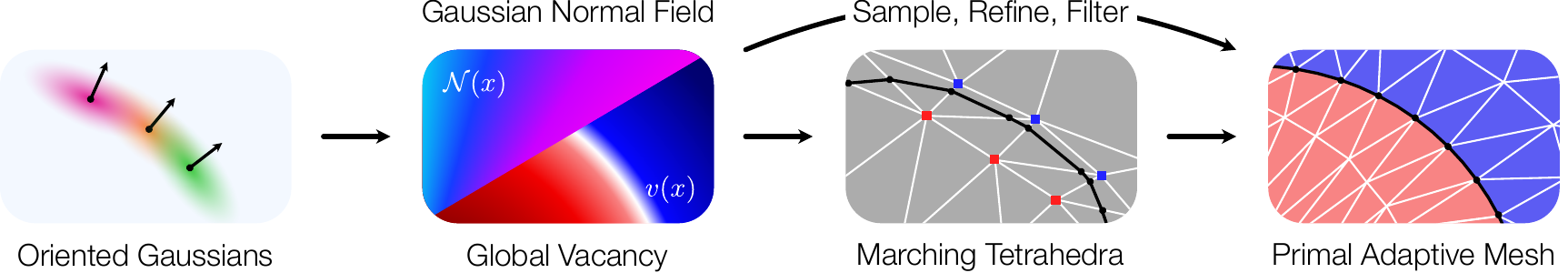}
\vspace*{-2mm}
\caption{
\textbf{Wrapping the surface with Oriented Gaussians.} Our Gaussian Wrapping setting allows to evaluate the normal and vacancy fields of the scene. We first leverage the vacancy on our pivot-based marching tetrahedra meshing approach. We then leverage both the vacancy and normal fields in our Primal Adaptive Meshing.
\vspace*{-5mm}}
\label{fig:pipeline}
\end{figure}

\paragraph{Method Overview:} Our method operates as follows: first, as mentioned above, we endow each Gaussian with a learnable oriented normal parameter (note that these are \textit{the only} additional learnable parameters of our framework), and introduce the corresponding attenuation coefficient, as in Definition \ref{definition:oriented_gaussians}. Based on this construction and thanks to the Objects as Volumes formalism \cite{miller2024objectsasvolumes}, we derive closed-form expressions for both the normal and occupancy fields for arbitrary locations in space, without any additional learnable parameters. Second, we propose a training strategy, which learns the oriented normals through an image-based consistency loss, using an adapted rasterizer and coupled with a new normal-aware densification strategy. Lastly, we exploit our normal field and occupancy estimates in a novel dedicated mesh-extraction approach, which both respects the underlying geometry, while being adaptive to details. For a pipeline figure presenting our method, please see~\cref{fig:pipeline}.

\vspace{-0.3cm}
\subsection{Gaussian Wrapping}\label{sec:gaussian_wrapping}

Our core intuition is simple: for accurate surface reconstruction, Gaussians must form a sealed boundary around the object.
We achieve this by endowing each Gaussian with a learnable oriented normal $n_i$ and enforcing a \emph{wrapping} behavior: visible Gaussians near the surface should point outward, away from the occupied region and toward the camera.
This wrapping creates a continuous shell of oriented primitives even around extremely thin structures---such as bicycle spokes---for which, standard methods only recover the \textit{appearance} with a few unoriented primitives, but cannot recover the geometry.

More fundamentally, we show in the appendix that when Gaussians properly wrap the scene, the oriented attenuation of~\cref{definition:oriented_gaussians} becomes a valid approximation of the Gaussian Splatting image formation model, and the \textit{Objects as Volumes} framework~\cite{miller2024objectsasvolumes} directly yields an explicit vacancy field $v$. 

We promote wrapping in the scene through two complementary mechanisms.

\paragraph{Normal alignment loss.} Rendering depth and normals $n_i$ with the Splatting rasterizer~\cite{zhang2024rade,yu2024gaussian} yields, respectively, an estimate of the surface depth and the expected orientation of Gaussians near the surface~\cite{miller2024objectsasvolumes}. As a result, we enforce the alignment between the normals $n_i$ and the surface during training via:
\begin{equation}
    \mathcal{L}_{\text{N}} = \sum_{p}
    1 - N(p) \cdot \nabla D(p) \> ,
    \label{eq:normal_field_regularization}
\end{equation}
where $N(p)$ is the rendered oriented normal at pixel $p$ and $\nabla D(p)$ the image-space gradient of the rendered depth. 

Specifically, we modify the differentiable CUDA rasterizer introduced in~\cite{Zhang2026GeometryGrounded} to match our framework, and render depth as the exact $0.5$-isosurface of our geometric field.

\paragraph{Densification.} Gaps appear in the wrapping shell where Gaussians fail to cover all sides of a surface, causing $\calL_{\text{N}}(p)$ to rise locally. This provides a natural densification criterion: every $K$ iterations, we compute errors $\calL_{\text{N}}(p)$ across all training views and propagate them to individual Gaussians via their blending weights. High-error Gaussians are cloned with flipped normals, closing holes and reinforcing the shell. We illustrate this process in the appendix.

\paragraph{Loss Details.} During optimization, we combine our novel normal alignment objective $\calL_{\text{N}}$ in conjunction with the following set of losses: The standard photometric~\cite{kerbl3Dgaussians} loss $\calL_{\text{RGB}}$, depth-normal consistency $\calL_{\text{DN}}$~\cite{yu2024gaussian,zhang2024rade} and the multi-view loss $\mathcal{L}_{\text{MV}} = \lambda_{\text{pc}} \mathcal{L}_{\text{pc}} +  \lambda_{\text{gc}} \mathcal{L}_{\text{gc}}$, where the terms control the strength of the photometric and geometric consistency as defined in~\cite{Zhang2026GeometryGrounded,chen2024pgsr}. These have the following weights $\lambda_{\text{DN}}=0.05,\lambda_{\text{N}}=0.05, \lambda_{\text{pc}}=0.6, \lambda_{\text{gc}}=0.02$.
This results in the total loss 
\[
\calL=\calL_{\text{RGB}} +  \lambda_{\text{DN}}\calL_{\text{DN}} + \lambda_{\text{N}}\mathcal{L}_{\text{N}} + \mathcal{L}_{\text{MV}}
\]

\vspace{-0.1cm}
\subsection{Gaussian Vector and Normal Fields}

Assuming Gaussians properly wrap the scene, applying Eq.~\ref{objects_as_volumes} from \textit{Objects as Volumes}~\cite{miller2024objectsasvolumes} to our reciprocal attenuation (Eq.~\ref{eq:oriented_attenuation}) yields our main result.
\hypbox[]{
\begin{proposition}[Gaussian Vector and Normal Fields]\label{prop:vector_normal_field}
    We define the Gaussian vector field $V\colon\IR^3\to\IR^3$ of $N$ oriented Gaussians and its normalization $\calN\colon\IR^3\to\calS^2$ as:
    \begin{align}
        V(x) &:= \nabla\log v(x) = \sum_{i=1}^N \charxnorm \nabla\log (1-G_i(x)) \> \label{prop:vector_field} \\
        \calN(x) &:= \frac{V(x)}{\|V(x)\|} \>  \label{eq:normal_field}
    \end{align}
    
    $\calN(x)$ is well-defined and coincides with the true normal field of the expected stochastic surface of $M$ in a neighborhood of the surface.
    
\end{proposition}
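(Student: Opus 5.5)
The plan is to identify a vacancy field $v$ whose OaV attenuation, via Eq.~\ref{objects_as_volumes}, coincides with the sum of oriented attenuations $\bar\sigma=\sum_i \orientedsigma_i$ from Definition~\ref{definition:oriented_gaussians}. I then read off $\nabla\log v$ from that identification.

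The natural candidate is
\[
v(x)=\prod_{i=1}^N \bigl(1-G_i(x)\bigr)^{\charxnorm},
\]
so that each Gaussian attenuates only on its outward half-space $n_i^T(x-\meani)\ge 0$ and is transparent behind it. Away from the separating planes $n_i^T(x-\meani)=0$, the indicators are locally constant. Taking logarithms and differentiating term by term then gives exactly Eq.~\ref{prop:vector_field}. This part is a routine computation.

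Next I would check consistency with the attenuation. Since $v>0$ wherever $G_i<1$, which holds for $\alpha_i<1$, $\log v$ is defined. Eq.~\ref{objects_as_volumes} then requires $\bar\sigma(x,\rayw)=|\rayw\cdot V(x)|$.

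This is the main obstacle. In general $|\sum_i a_i|\neq\sum_i|a_i|$, so the two sides need not agree. I would resolve it using the wrapping assumption stated before the proposition: near the surface, the active Gaussians (those with $x$ on their outward side) have gradients $\nabla\log(1-G_i(x))$ lying in a common half-space. For a fixed ray direction, the terms $\rayw\cdot\nabla\log(1-G_i)$ therefore share a sign, and the absolute value commutes with the sum.

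To justify the shared sign, I would argue as follows:
\begin{itemize}
\item $\nabla\log(1-G_i(x))=-\frac{G_i}{1-G_i}\,\Sigma_i^{-1}(x-\meani)$.
\item Wrapping means the active Gaussians are flat primitives with $n_i$ aligned to $\Sigma_i^{-1}(x-\meani)$ on their active side.
\item On that side each gradient therefore points roughly along $-n_i$, and the $n_i$ agree locally with the outward normal.
\end{itemize}
I would state this as the precise hypothesis ``$\Sigma_i^{-1}(x-\meani)\cdot n_j\ge 0$ for all active $i,j$ near the surface''. Reciprocity of $\bar\sigma$ follows immediately from the absolute value, so OaV applies and $v$ is the vacancy, which gives $V=\nabla\log v$.

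For well-definedness of $\calN$, I need $V(x)\neq 0$ near the surface. Under the same sign-coherence hypothesis the active terms cannot cancel. At least one Gaussian is active with $G_i>0$ and $x\neq\meani$ in a neighborhood of the wrapped surface, so $\|V\|>0$.

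Finally, the expected stochastic surface in OaV is a level set of $v$, typically $\{v=1/2\}$. Its normal is $\nabla v/\|\nabla v\|=\nabla\log v/\|\nabla\log v\|=\calN$, since $v>0$. With the sign convention fixed by the orientations $n_i$, $\calN$ is the true normal field in that neighborhood. The delicate points are the discontinuities of the indicator on the planes $n_i^T(x-\meani)=0$; I would handle them by noting that these planes have measure zero, or by restricting to points off these planes.
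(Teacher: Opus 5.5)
The step that fails is ``OaV applies and $v$ is the vacancy'' for your explicit candidate $v=\prod_i(1-G_i)^{\charxnorm}$. Because the exponent drops to $0$ behind the plane $n_i^T(x-\meani)=0$, each factor resets to $1$ there. A point deep inside the wrapped object, behind every shell Gaussian's plane, therefore gets $v\approx 1$ and is declared empty. Along a ray entering the object, $v$ is multiplied by $1/(1-G_i(y))$ at each inward plane crossing $y$, a factor as large as $1/(1-\alpha_i)$ near $\meani$.

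The measure-zero remark does not dispose of this, for three reasons.
OaV ties vacancy to attenuation through ray integrals, $\log v(\rayo+t\rayw)=\int_0^t \rayw\cdot\nabla\log v(\rayo+s\rayw)\,\diff s$ with $v(\rayo)=1$; the paper uses exactly this in its lower-bound proof.
Every ray into the object crosses these planes.
$\orientedsigma$ is bounded, with no singular part on the planes, so the vacancy it determines stays frozen behind each plane (the constant-transmittance region of Fig.~\ref{fig:vacancy}) rather than resetting.

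Matching $|\rayw\cdot\nabla\log v|$ with $\orientedsigma$ off the planes therefore does not identify your $v$ as the vacancy. The paper builds no candidate. It takes the vacancy that OaV attaches to the reciprocal $\orientedsigma$ and writes the pointwise identity $|\rayw\cdot\nabla\log v(x)|=\sum_i\charxnorm\,|\rayw\cdot\nabla\log(1-G_i(x))|$. It then removes the absolute values by the wrapping sign assumption and lets $\rayw$ range over $\calS^2$; the formula for $V$ falls out with no construction.

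You correctly single out the absolute values as the crux, as the paper does. However, your hypothesis $\Sigma_i^{-1}(x-\meani)\cdot n_j\ge 0$ does not deliver the sign coherence you need. Write $a_i=\nabla\log(1-G_i(x))$ for the active terms. Vectors in a common half-space need not have same-sign projections onto every direction: for $a_{1,2}=n\pm u$ with $u\perp n$ and $\rayw=u/\|u\|$, one gets $|\rayw\cdot(a_1+a_2)|=0<2\|u\|=\sum_k|\rayw\cdot a_k|$. In general, at grazing directions only tangential components survive, so $\orientedsigma(x,\rayw)>|\rayw\cdot V(x)|$ and the OaV relation fails. Requiring $|\rayw\cdot V|=\sum_i|\rayw\cdot a_i|$ for all $\rayw$ forces the nonzero active terms to be positive multiples of one vector. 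That is what the paper's assumption amounts to: each active term has the sign of $\rayw\cdot\nabla\log v$ or vanishes, for arbitrary $\rayw$. Under that assumption your non-cancellation argument for $\|V\|>0$ is correct.

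Two smaller points. First, the gradient is $\nabla\log(1-G_i)=+\frac{G_i}{1-G_i}\Sigma_i^{-1}(x-\meani)$, which on the active side points along $+n_i$, toward empty space. With your sign, $\calN$ would be the inward normal, contradicting your final step. Second, the level-set argument gives the normal only on $\{v=1/2\}$ itself. To get the true normal on a neighborhood, as stated, the level sets of $v$ must be parallel to the surface. The paper secures this by assuming the SDF is $f=\frac{1}{s}\Psi^{-1}(v)$ with $\Psi$ an increasing sigmoid, so $\nabla f$ is a positive multiple of $\nabla\log v$, and then applying the Eikonal equation.
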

}
These closed-form expressions link Gaussian parameters directly to surface geometry and form the basis of our mesh extraction pipeline (\cref{sec:applications}). Note that we can query these quantities explicitly by querying a neighborhood of Gaussians around the point $x$. Proofs are deferred to the appendix. 

\vspace{-0.1cm}
\subsection{Mesh Extraction}\label{sec:applications}

We now leverage the vacancy $v$ as an implicit function for mesh extraction. While direct integration of $V$ (Eq.~\ref{prop:vector_field}) along any camera ray is possible, Gaussians that do not contribute to rendering can remain hidden inside the geometry after optimization, violating the wrapping assumption and producing artifacts if intersected along a ray. In practice, we compute the vacancy by iterating over the set of all training camera rays $\mathcal{T}_c$ with
\begin{equation}
    v(\rayx) = \max_{(\rayo,\rayw)\in \mathcal{T}_c}\left\{
         \prod_{i=1}^N \left(1 - \Gstari(t)\right) : \rayx = \rayo + t\rayw, t>0
    \right\} \> ,
    \label{eq:lower_bound_vacancy}
\end{equation} 
where $\Gstari(t) = G_i(\rayo + \min(t, \tstari)\rayw)$; and $\tstari := \arg\max_{t\geq 0} G_i(\rayo + t\rayw)$. Intuitively, this computation corresponds to finding the ``most'' unobstructed ray reaching point $x$. This result \textit{formalizes the opacity-field meshing heuristics} empirically adopted in recent works~\cite{yu2024gaussian,Zhang2026GeometryGrounded,radl2025sof}. Strictly speaking, this expression is a lower bound on the true vacancy (see appendix for proof) and is inherently robust: since the products can only decrease along a ray, floating Gaussians hidden inside the geometry cannot inflate the vacancy estimate. 

\paragraph{Pivot-Based Marching Tetrahedra.} Under our oriented Gaussian assumption, the surface intersects each Gaussian between its center and the low-density side, parallel to the oriented plane. We therefore spawn two \emph{Delaunay pivots} per Gaussian: the center $\meani$ and $\meani + 3s_i n_i$, where $s_i = \|S_iR_i^Tn_i\|_2$ is the ellipsoid scaling along $n_i$. Occupancy values at each pivot are obtained using the vacancy lower bound (Eq.~\ref{eq:lower_bound_vacancy}); we then apply Marching Tetrahedra on the resulting Delaunay triangulation and refine vertices to the $0.5$-isosurface via binary search. This approach is conceptually similar to GOF~\cite{yu2024gaussian} and GGGS~\cite{Zhang2026GeometryGrounded}, yet contrasts with the 9 pivots per Gaussian required by prior works~\cite{yu2024gaussian,zhang2024rade,guedon2025milo,Zhang2026GeometryGrounded}, yielding substantially lighter, fully watertight meshes without sacrificing surface fidelity.

\paragraph{Primal Adaptive Meshing.} Although simple and efficient, the pivot-based formulation instantiates vertices directly from Gaussian primitives rather than from the underlying scene geometry. To leverage our global vacancy field $v$ and further improve mesh generation, we introduce an adaptive meshing framework inspired by primal Delaunay-based reconstruction methods~\cite{amenta_new_1998, dey2008delaunay,zhang_high-fidelity_2025, sulzer2021scalable, maruani_ponq_2024, son2024dmesh, son2025dmesh++}. The pipeline proceeds in four stages:

\begin{enumerate}
    \item \textbf{Vertices Initialization.} We initialize our vertices by sampling the marching tetrahedra mesh faces, weighting each face inversely proportional to its distance from the nearest training camera.

    \item \textbf{Isosurface Refinement.} Vertices are projected onto the $0.5$-isosurface of $v$ via an iterative Newton update:\;
    \begin{equation}\label{eq:newton_update}
        x_{i+1} = x_i + \frac{1}{2} (0.5 - v(x_i))\mathcal{N}(x_i),
    \end{equation}
    %
    \vspace*{1mm}
    \item \textbf{Filtering.} Vertices $x$ with $|0.5 - v(x)| > \epsilon$ are considered outliers and removed. Stages 1--3 iterate until no points are removed.

    \item \textbf{Delaunay.} We compute the Delaunay tetrahedralization of the remaining vertices and classify each tetrahedron as inside or outside, following~\cite{zhang_high-fidelity_2025}, based on the vacancy value of randomly sampled interior points.
    
    \item \textbf{Mesh extraction.} The final surface mesh is obtained by extracting triangle faces that separate inside and outside tetrahedra (see \cref{fig:pipeline}).
\end{enumerate}

\begin{figure}[t]
\centering

\begin{subfigure}[b]{0.49\linewidth}
    \centering
    \includegraphics[width=0.49\linewidth]{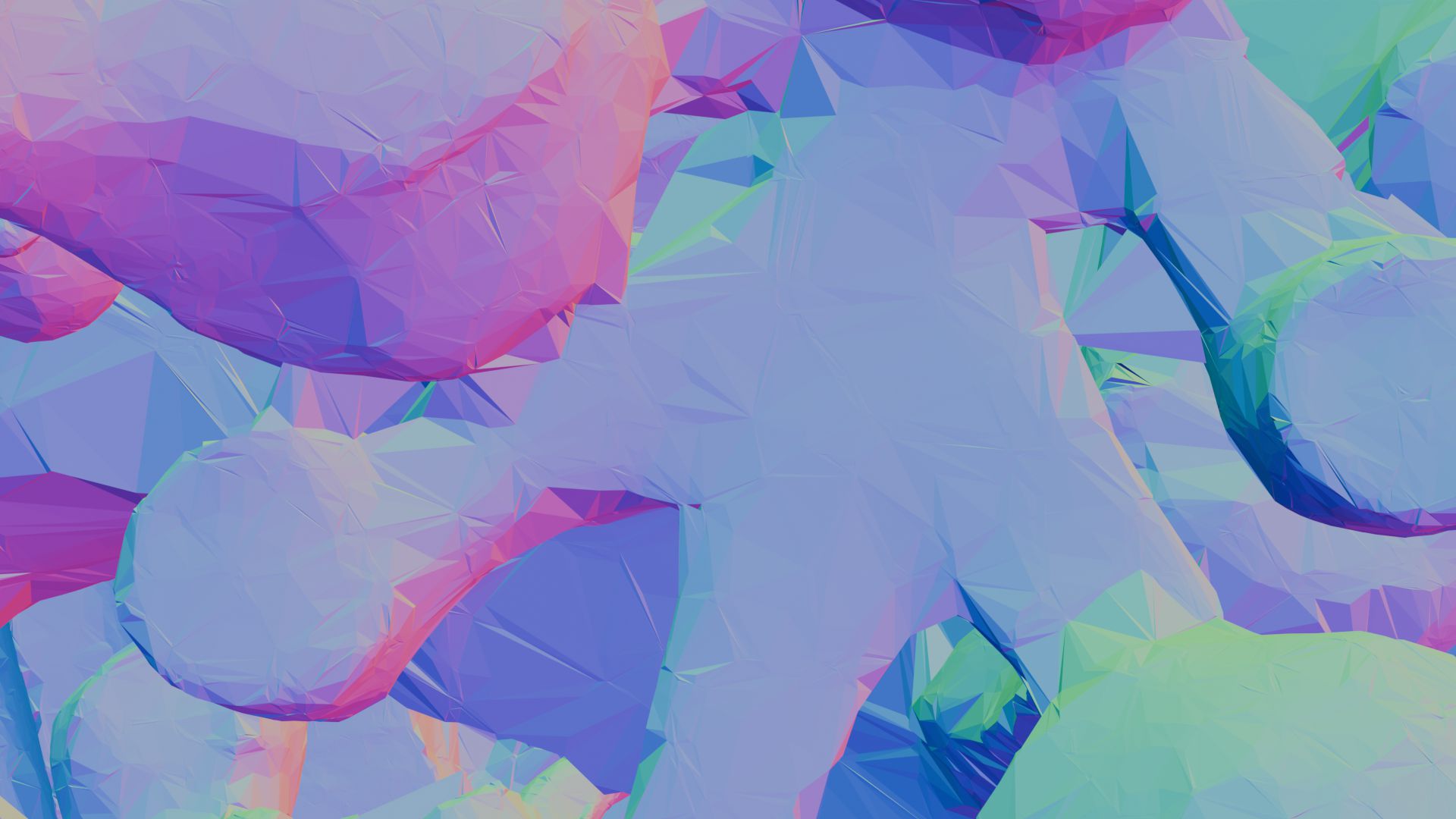}
    \includegraphics[width=0.49\linewidth]{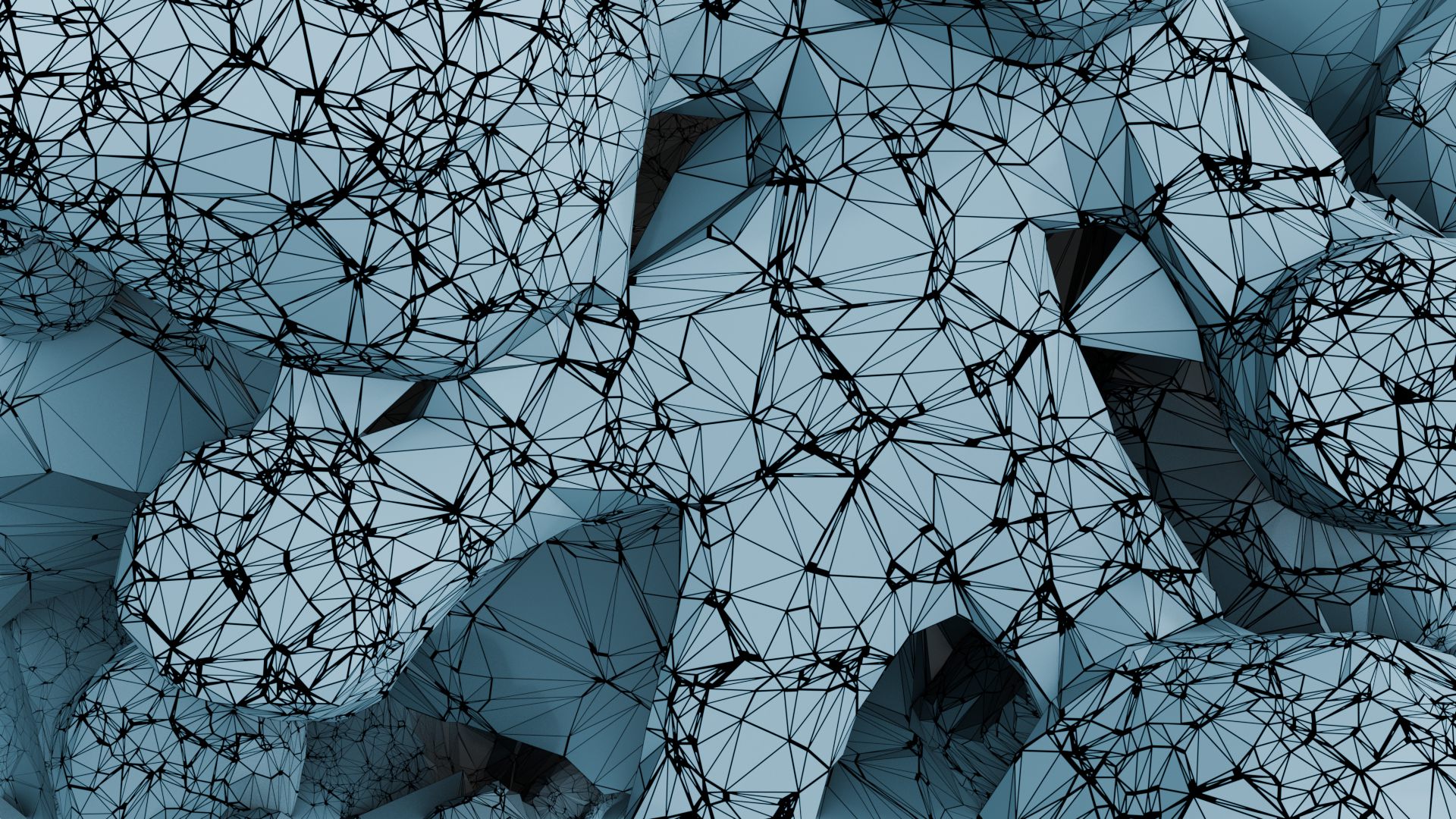}
\end{subfigure}
\begin{subfigure}[b]{0.49\linewidth}
    \centering
    \includegraphics[width=0.49\linewidth]{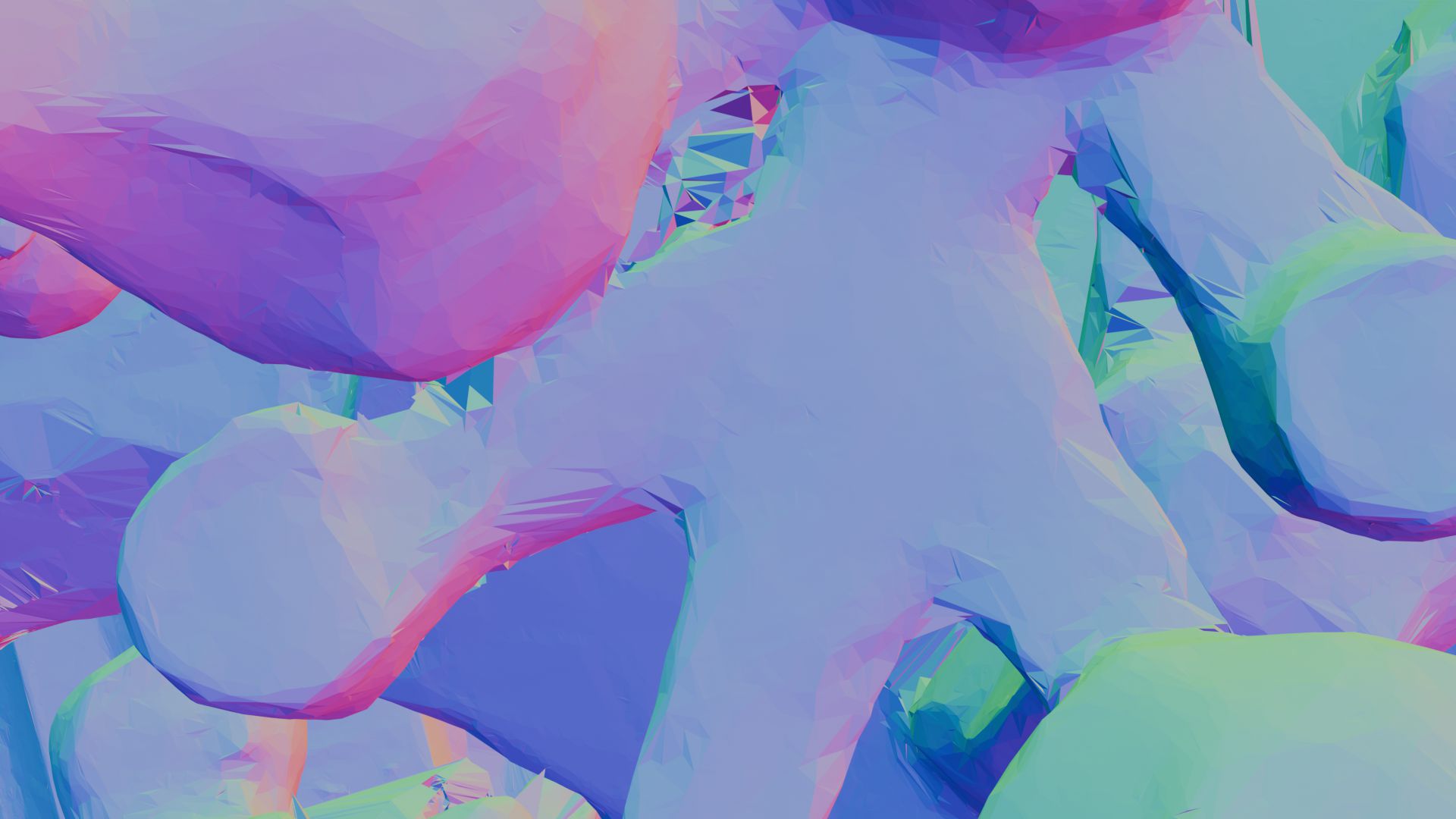}
    \includegraphics[width=0.49\linewidth]{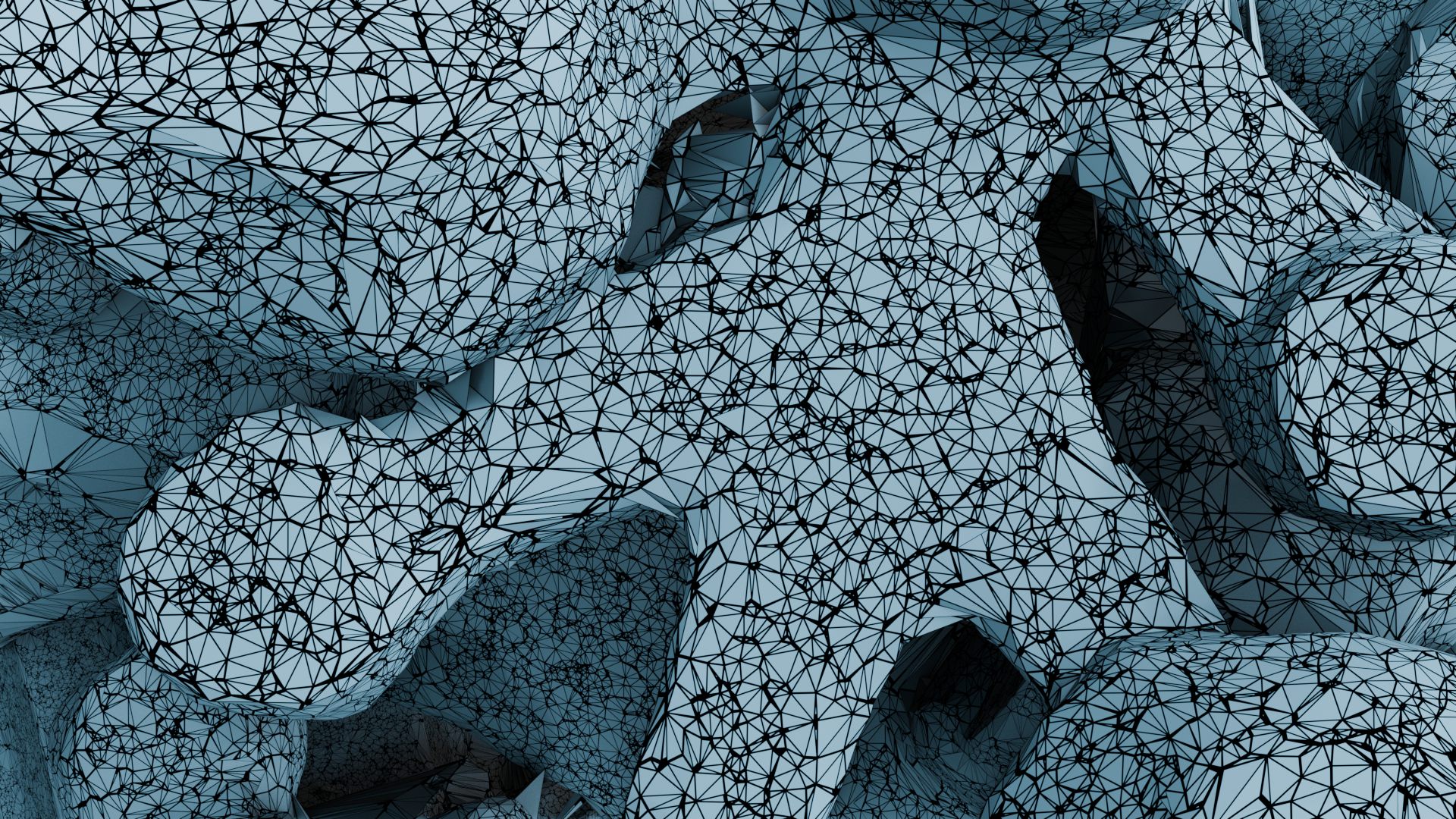}
\end{subfigure}
\begin{subfigure}[b]{0.49\linewidth}
    \centering
    \includegraphics[width=0.49\linewidth]{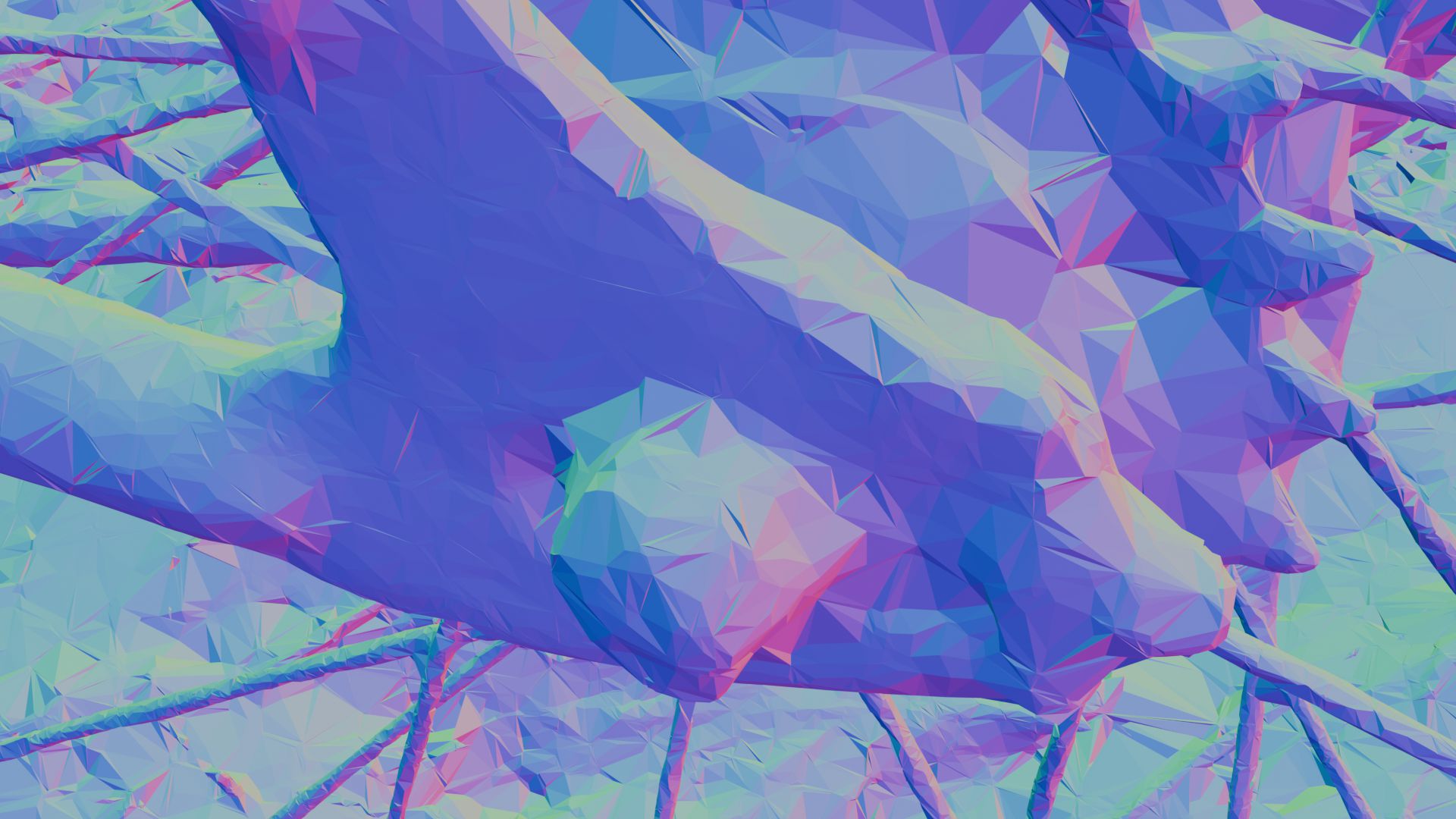}
    \includegraphics[width=0.49\linewidth]{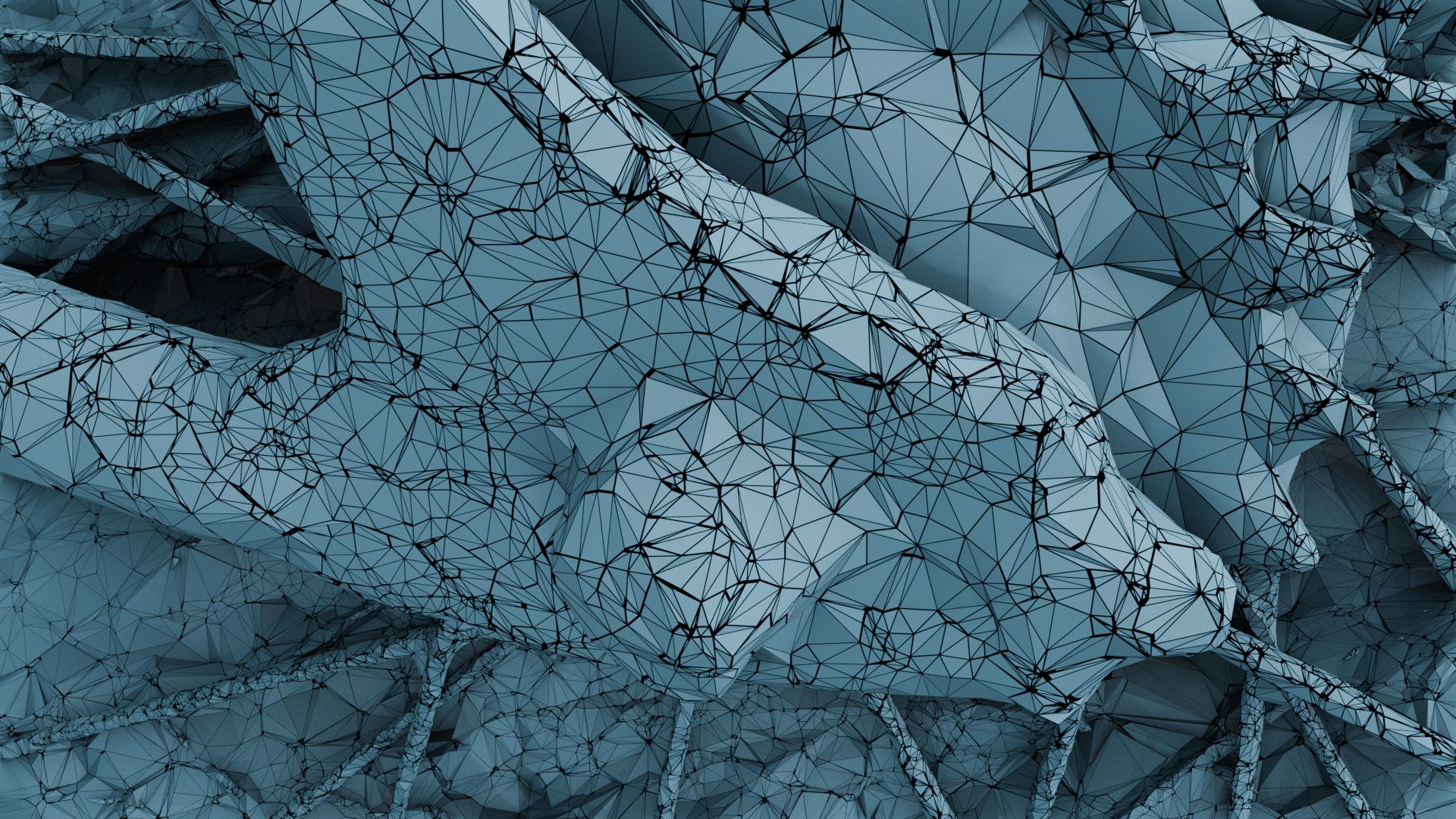}
    \caption{Pivot-based Marching Tetrahedra}
\end{subfigure}
\begin{subfigure}[b]{0.49\linewidth}
    \centering
    \includegraphics[width=0.49\linewidth]{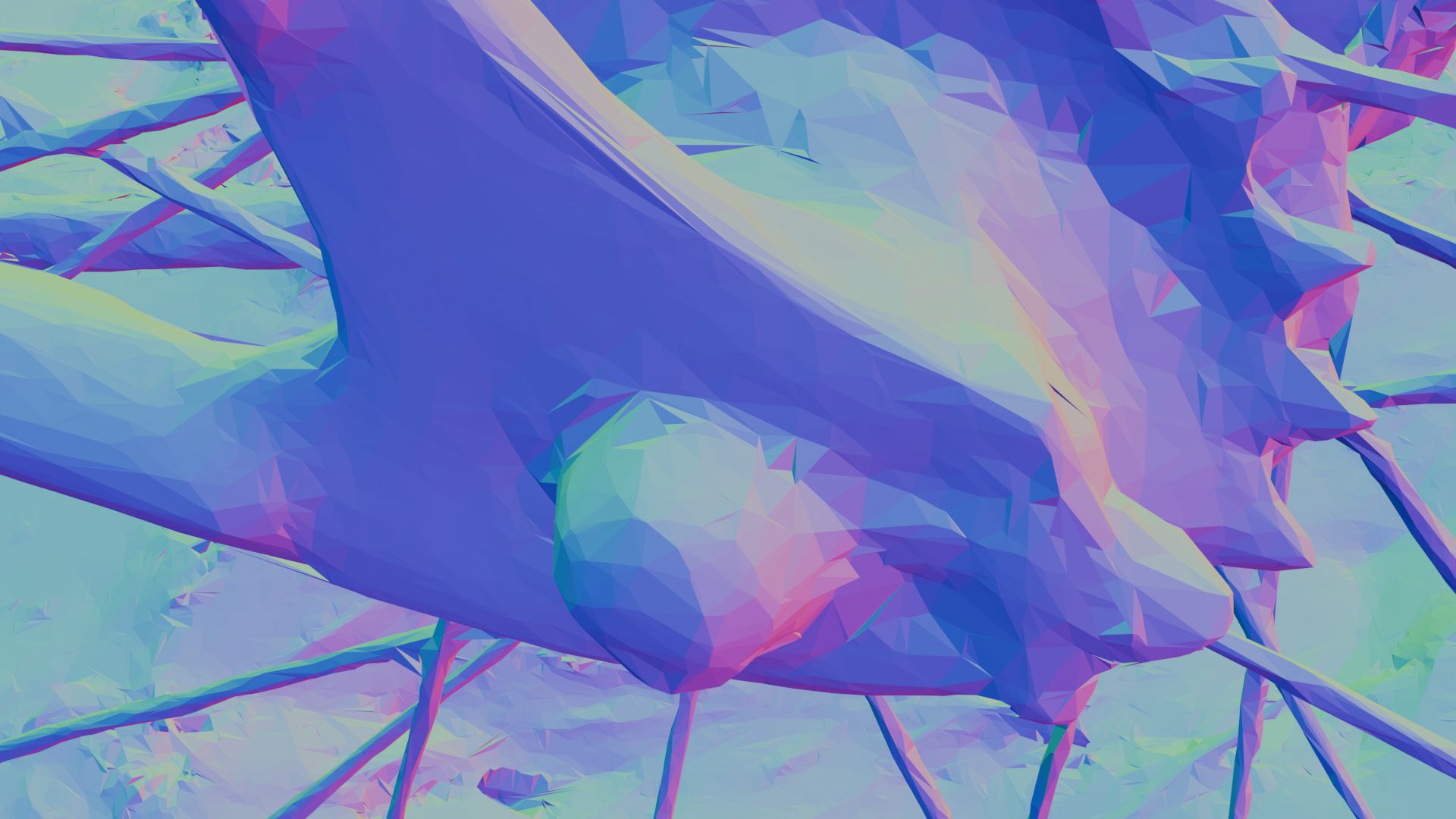}
    \includegraphics[width=0.49\linewidth]{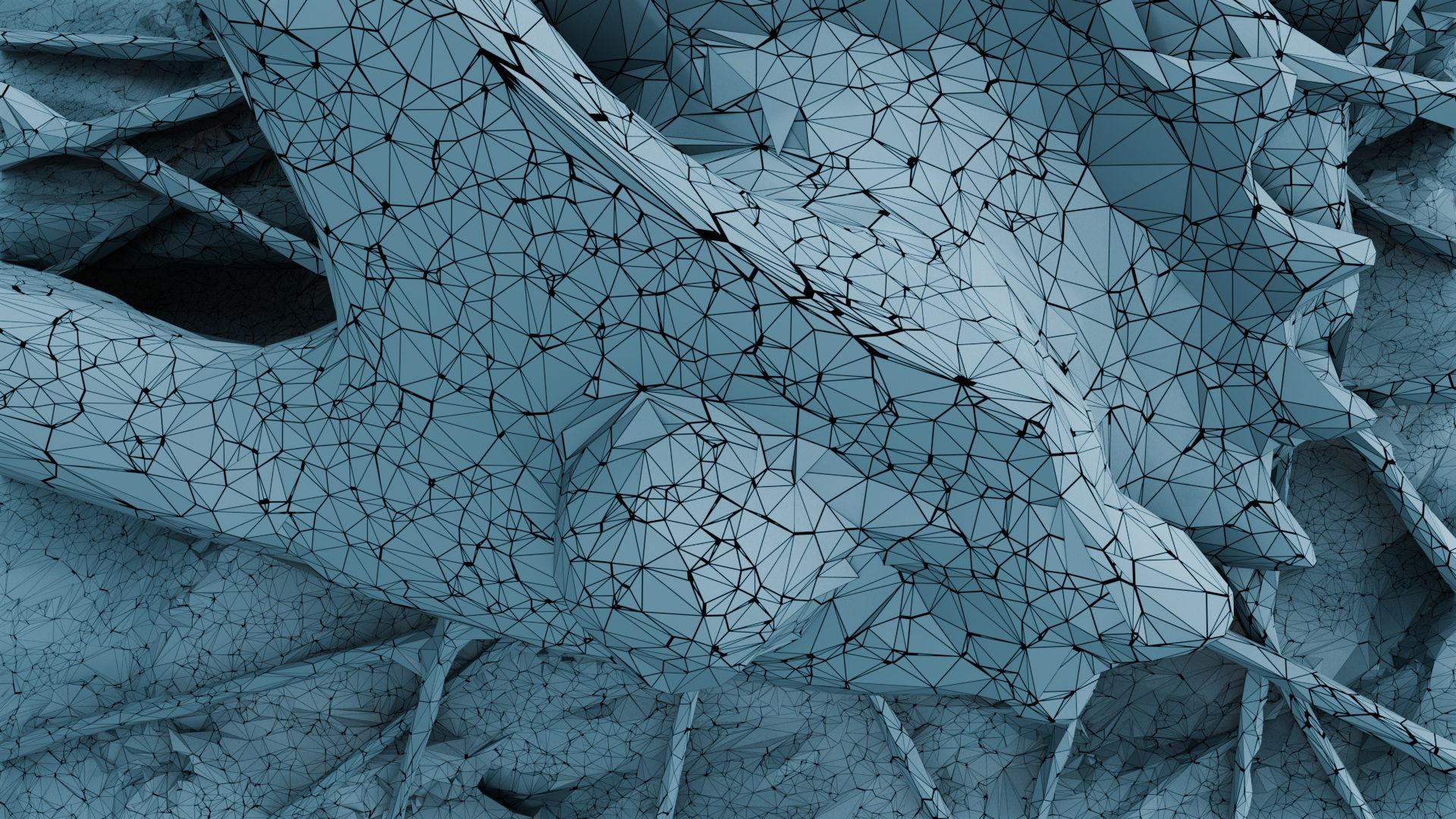}
    \caption{Primal Adaptive Mesh }
\end{subfigure}

\vspace*{-2mm}
\caption{\textbf{Primal Adaptive Meshing on restricted scene segments.} Close-up comparisons between our standard Pivot-based MTet extraction (a) and the Primal Adaptive Mesh (b) applied to the same segment on the bicycle and bonsai scenes of the MipNeRF 360 dataset. The Primal Adaptive Mesh produces topologically clean surfaces that faithfully reflect the underlying Gaussian isosurface. 
\vspace*{-3mm}
}
\label{fig:primal_ablation_meshing_zoom}
\end{figure}

This approach effectively decouples mesh resolution from Gaussian distribution, enabling extremely fine meshing of restricted segments of the scene~(\cref{fig:primal_ablation_meshing_zoom}). Although global distance-based metrics remain largely unchanged ---being inherently insensitive to high-frequency detail--- the resulting meshes exhibit substantially improved smoothness and reduced discretization artifacts.
Details for the isosurface refinement stage can be found in the appendix.
\section{Experiments}\label{sec:experiments}

\begin{figure*}[t]
\centering

\resizebox{\linewidth}{!}{
   
    \includegraphics[width=0.20\linewidth]{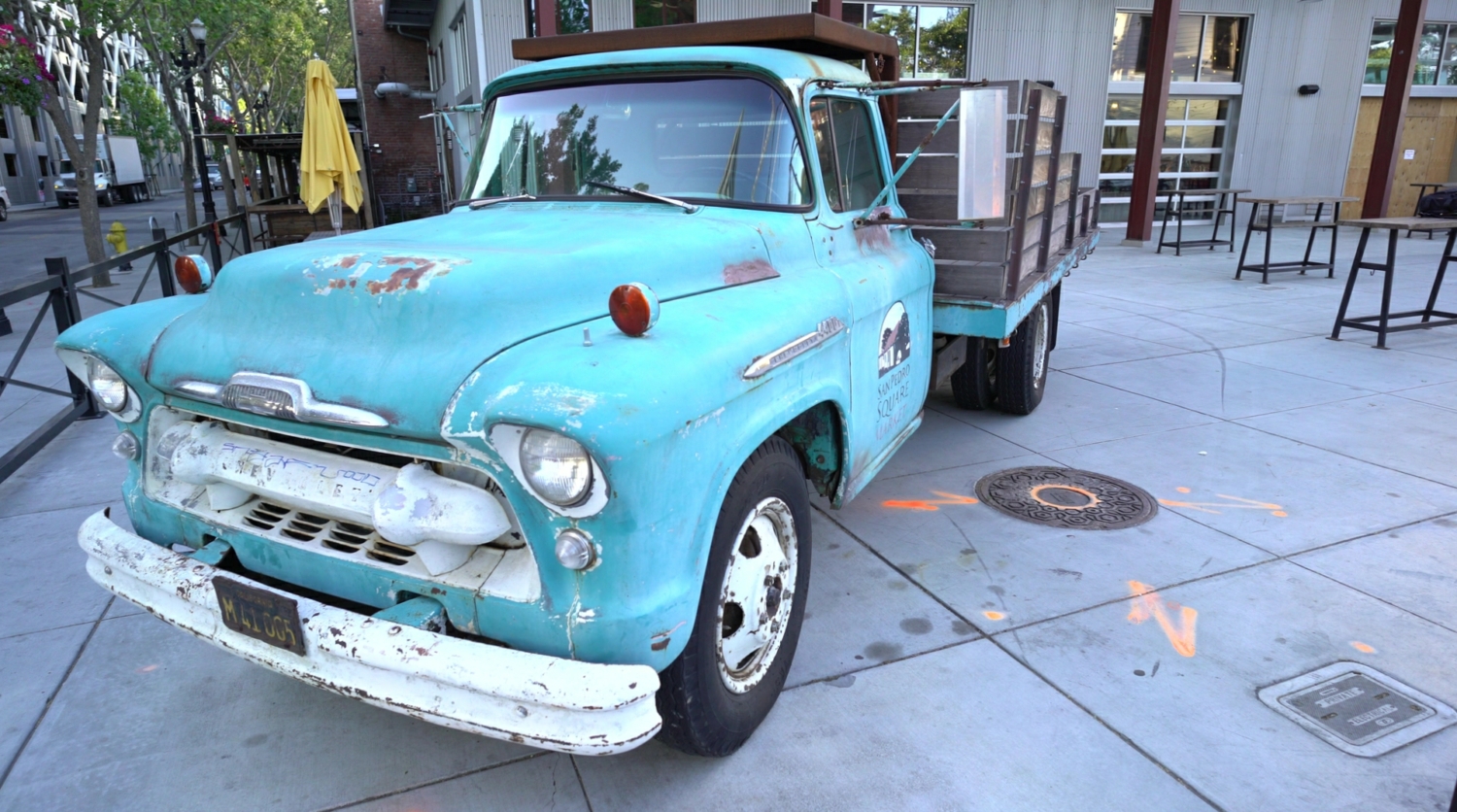}
    \includegraphics[width=0.20\linewidth]{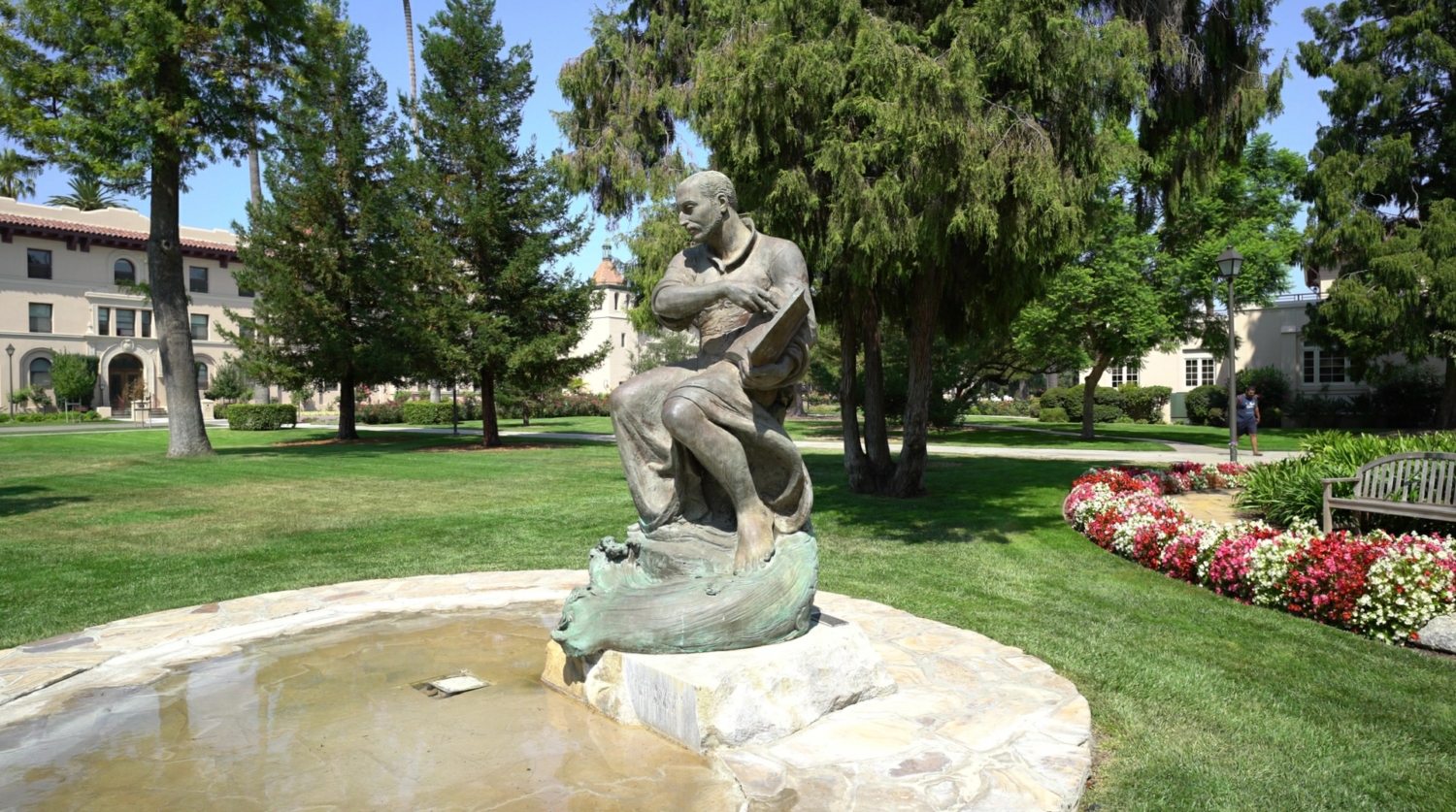}
    \includegraphics[width=0.20\linewidth]{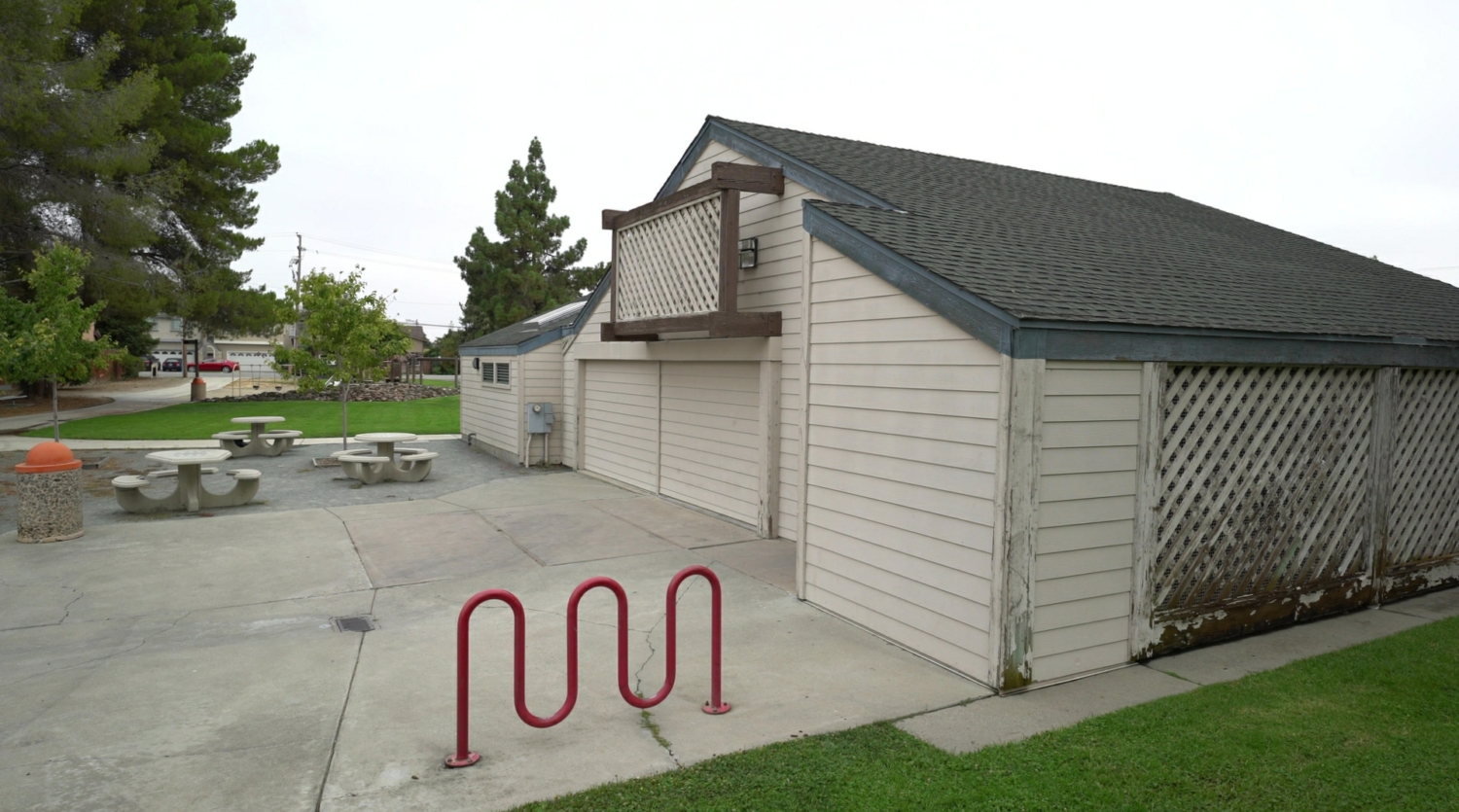}
    \includegraphics[width=0.20\linewidth]{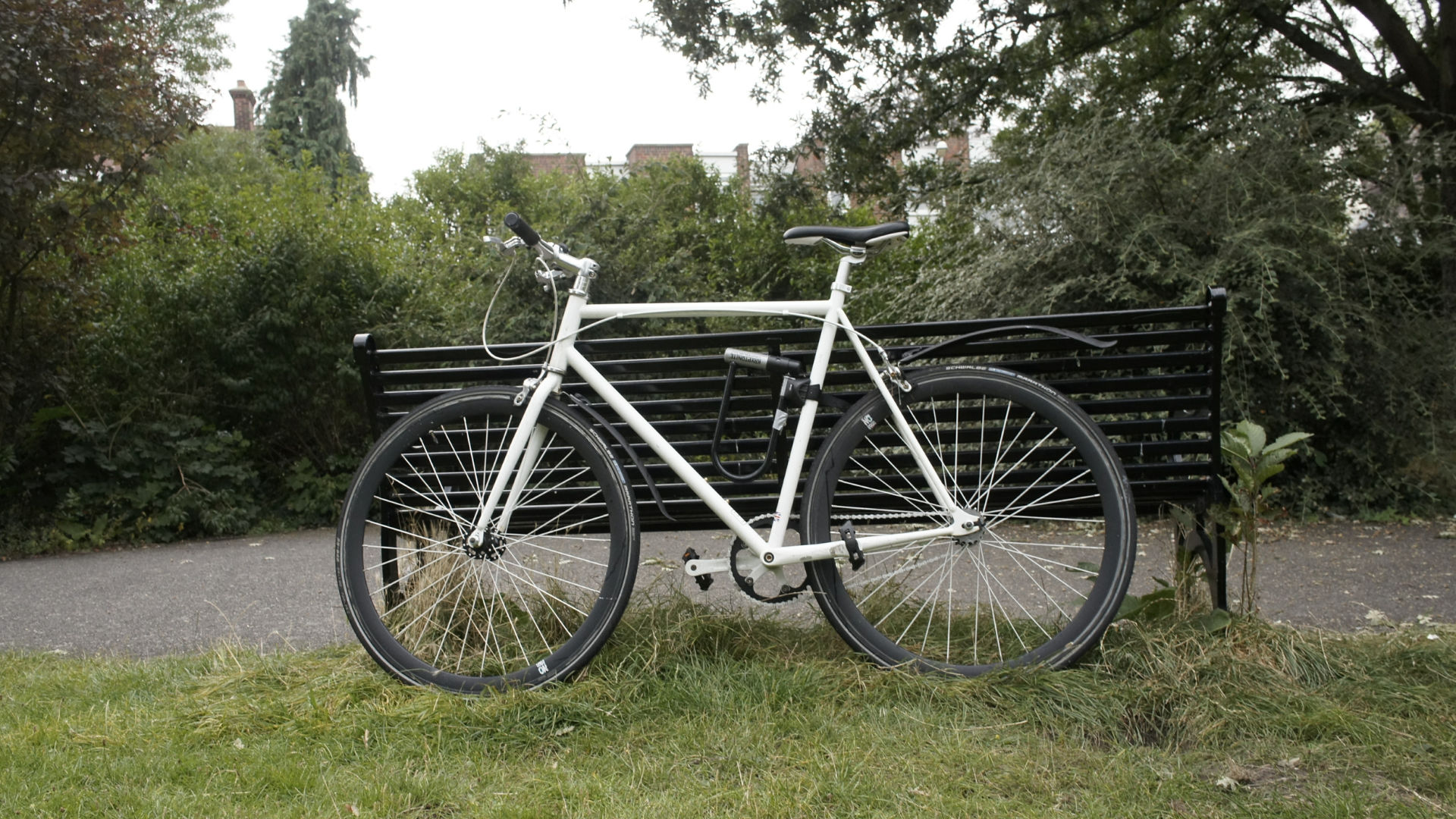}
    \includegraphics[width=0.20\linewidth]{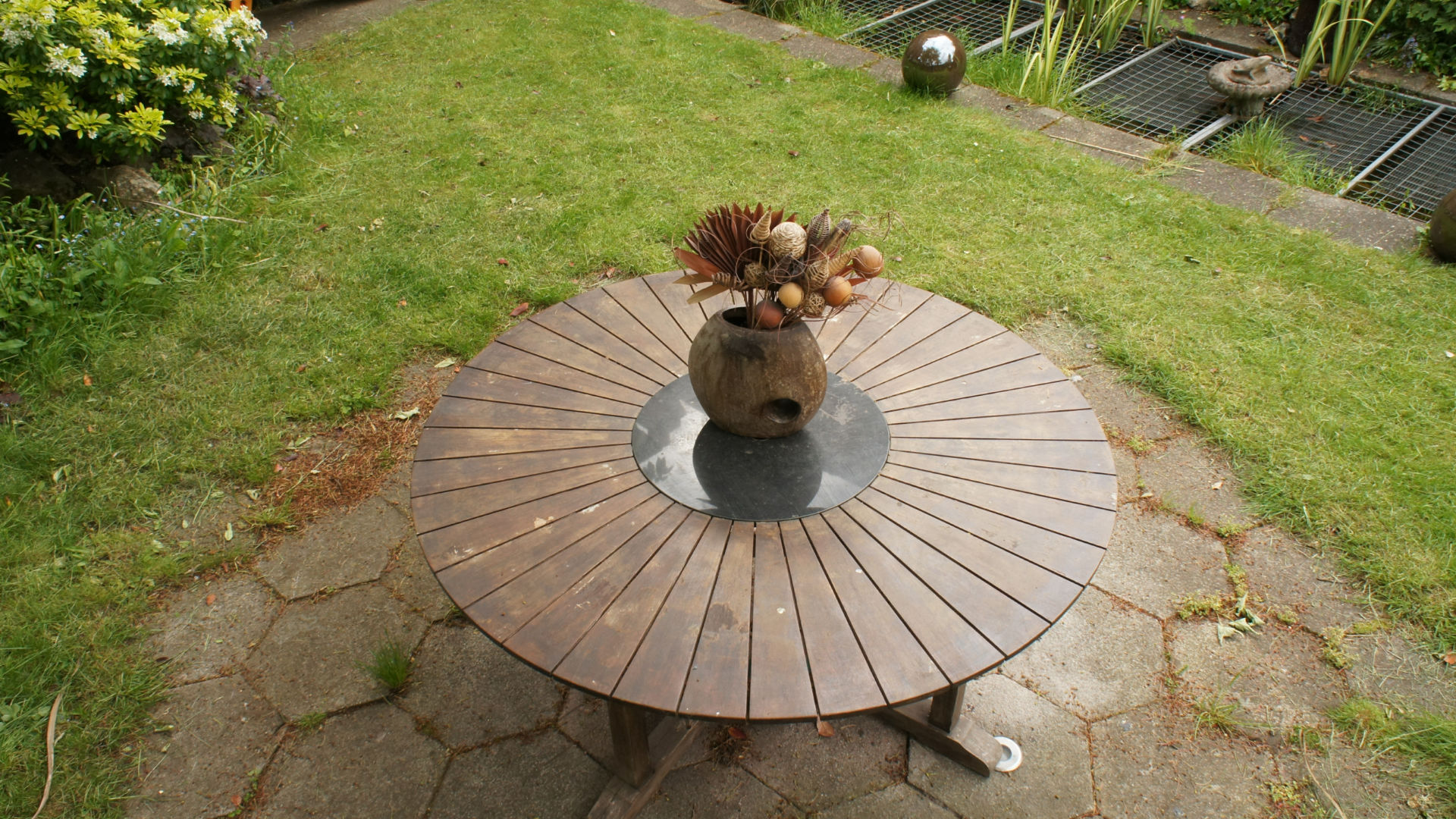}
}
\resizebox{\linewidth}{!}{
    \includegraphics[width=0.20\linewidth]{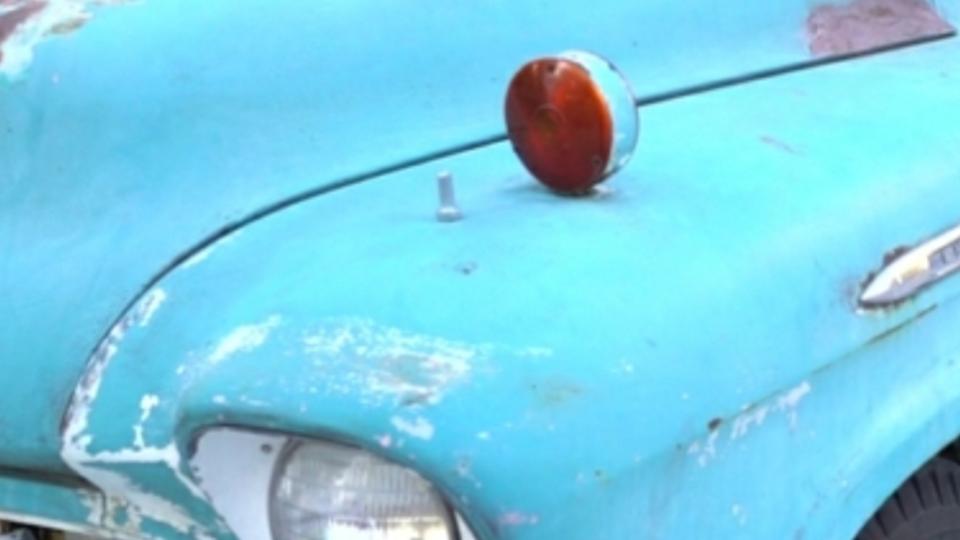}
    \includegraphics[width=0.20\linewidth]{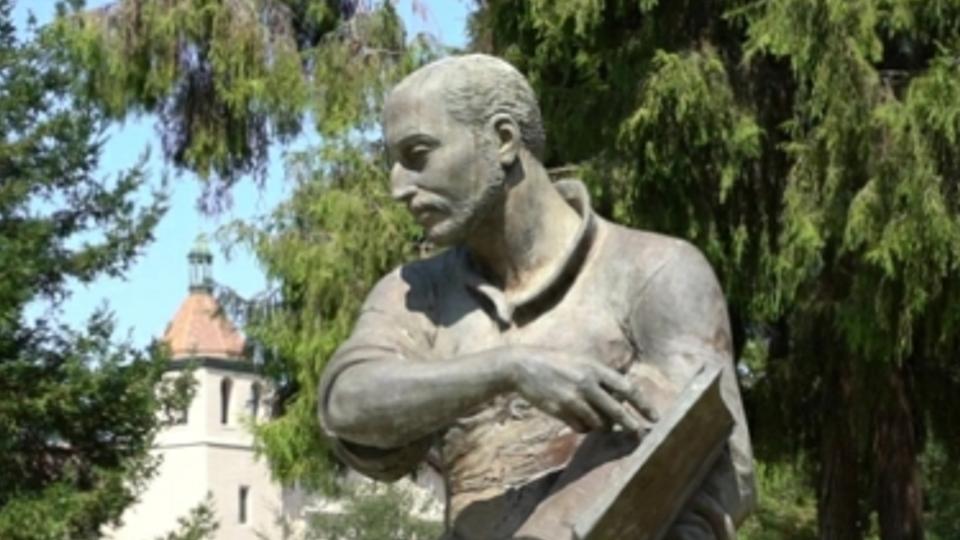}
    \includegraphics[width=0.20\linewidth]{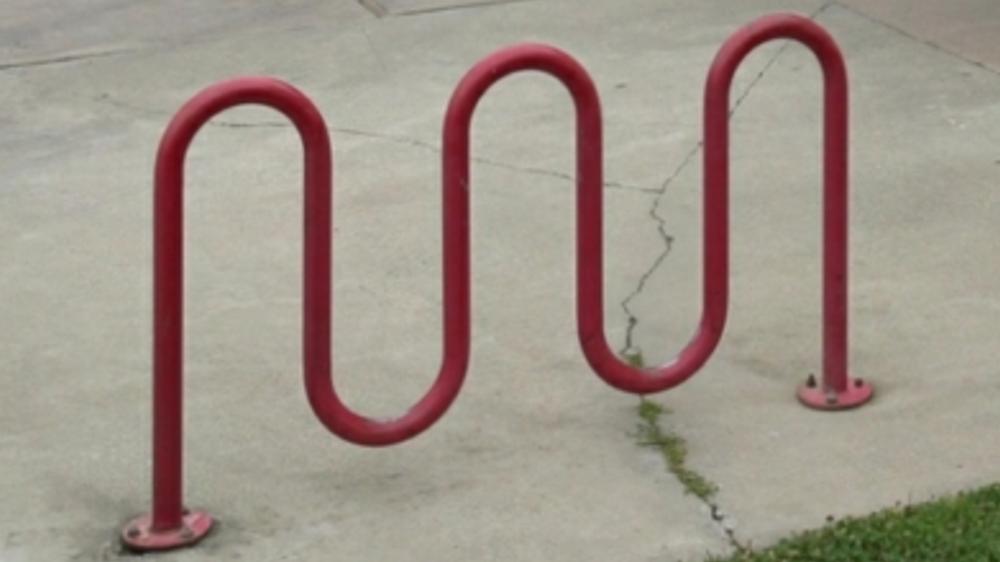}
    \includegraphics[width=0.20\linewidth]{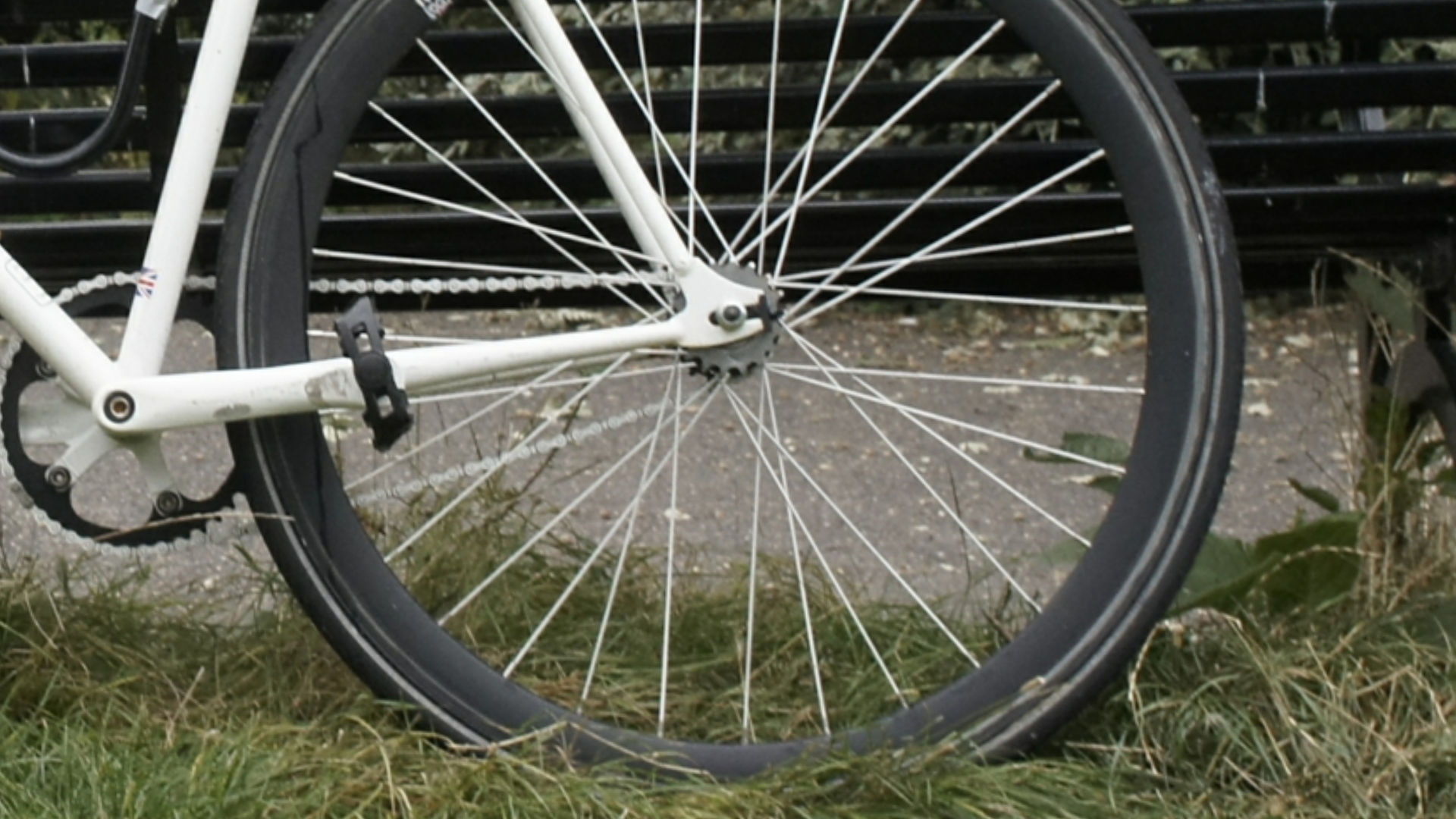}
    \includegraphics[width=0.20\linewidth]{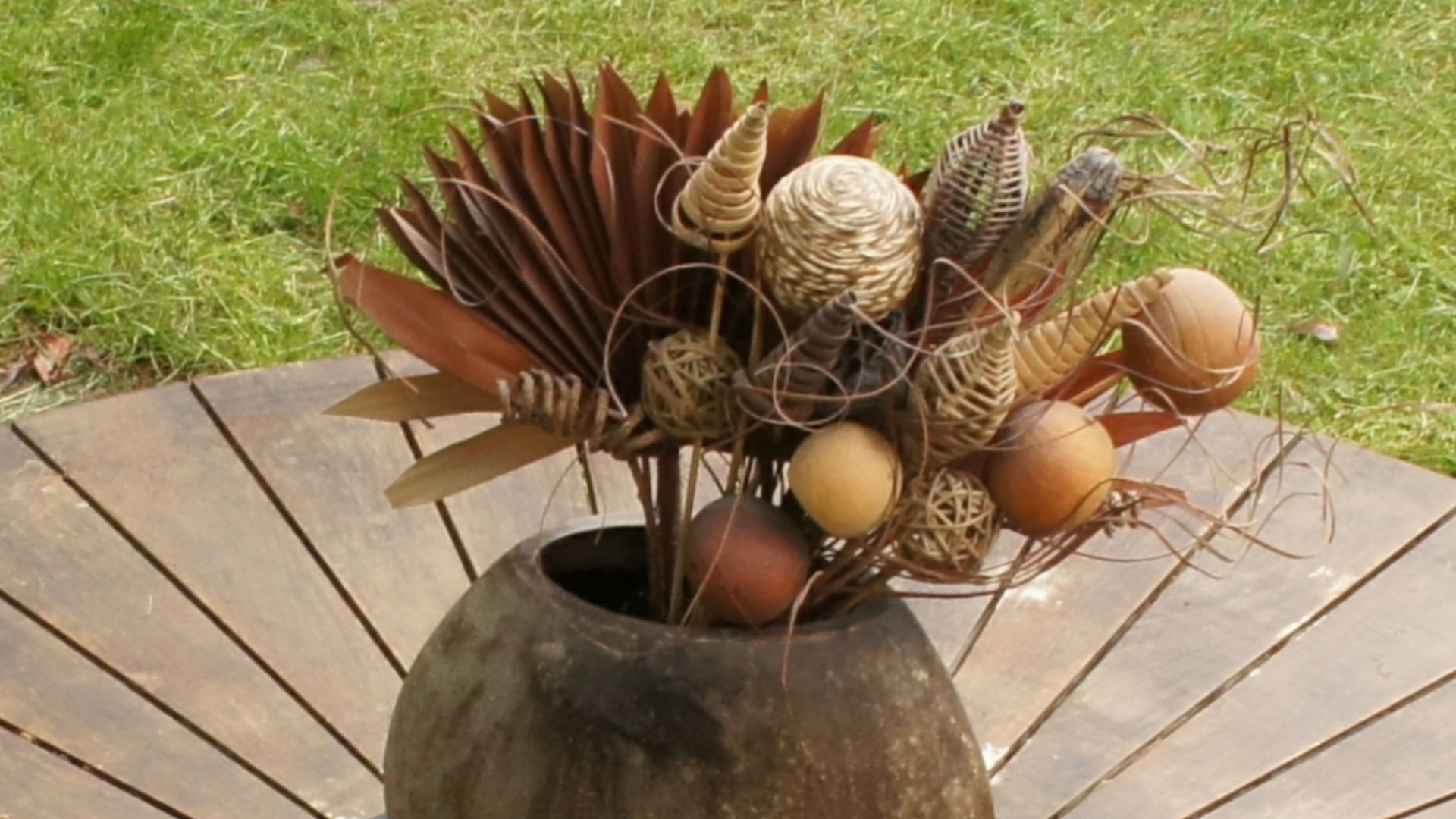}

}
\vspace*{-6mm}
\caption*{Ground Truth Images}
\vspace*{1mm}

\resizebox{\linewidth}{!}{
    \includegraphics[width=0.20\linewidth]{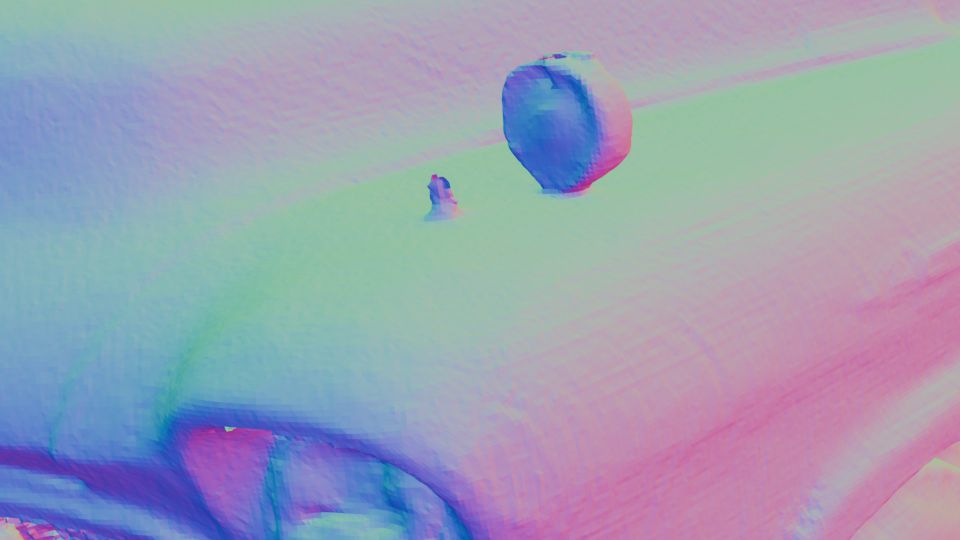} 
    \includegraphics[width=0.20\linewidth]{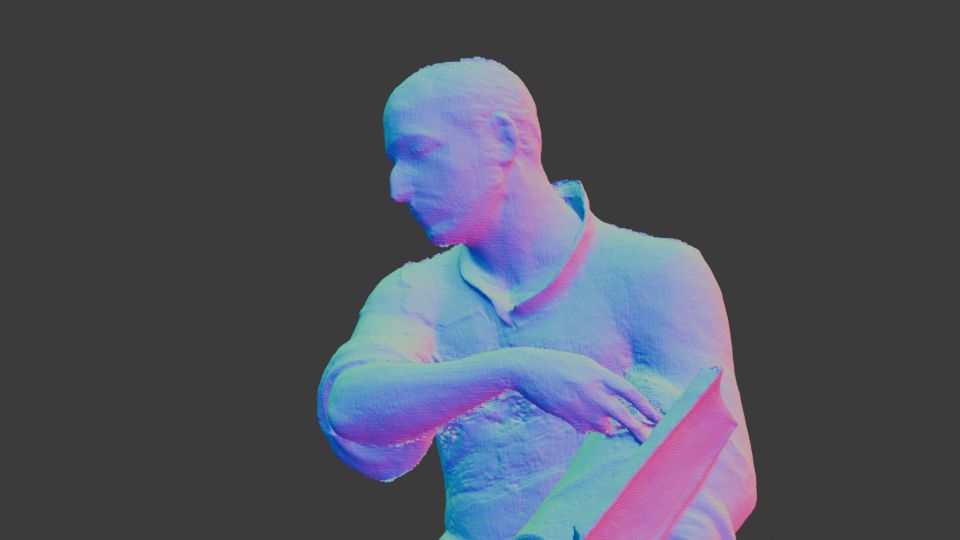} 
    \includegraphics[width=0.20\linewidth]{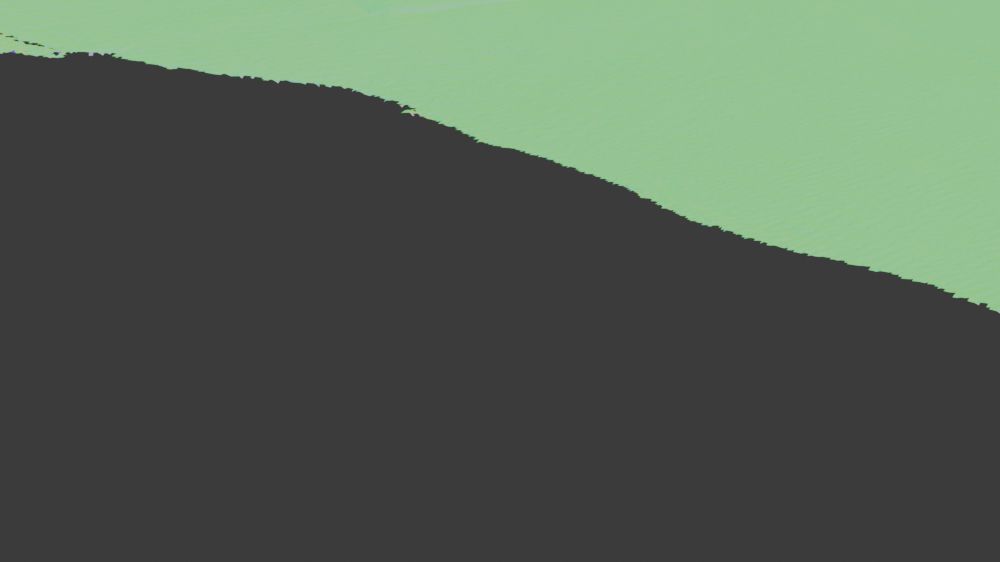}
    \includegraphics[width=0.20\linewidth]{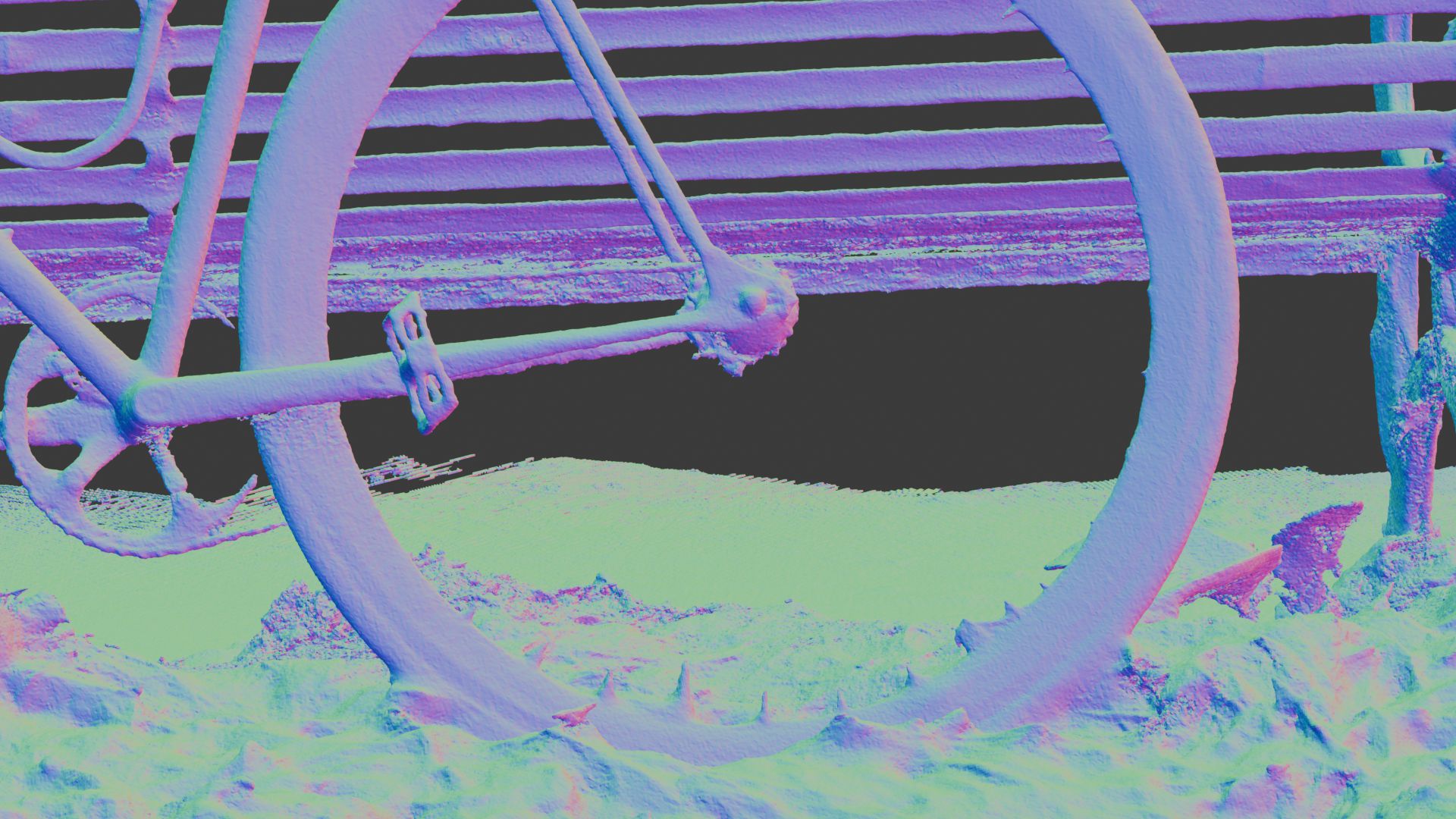}
    \includegraphics[width=0.20\linewidth]{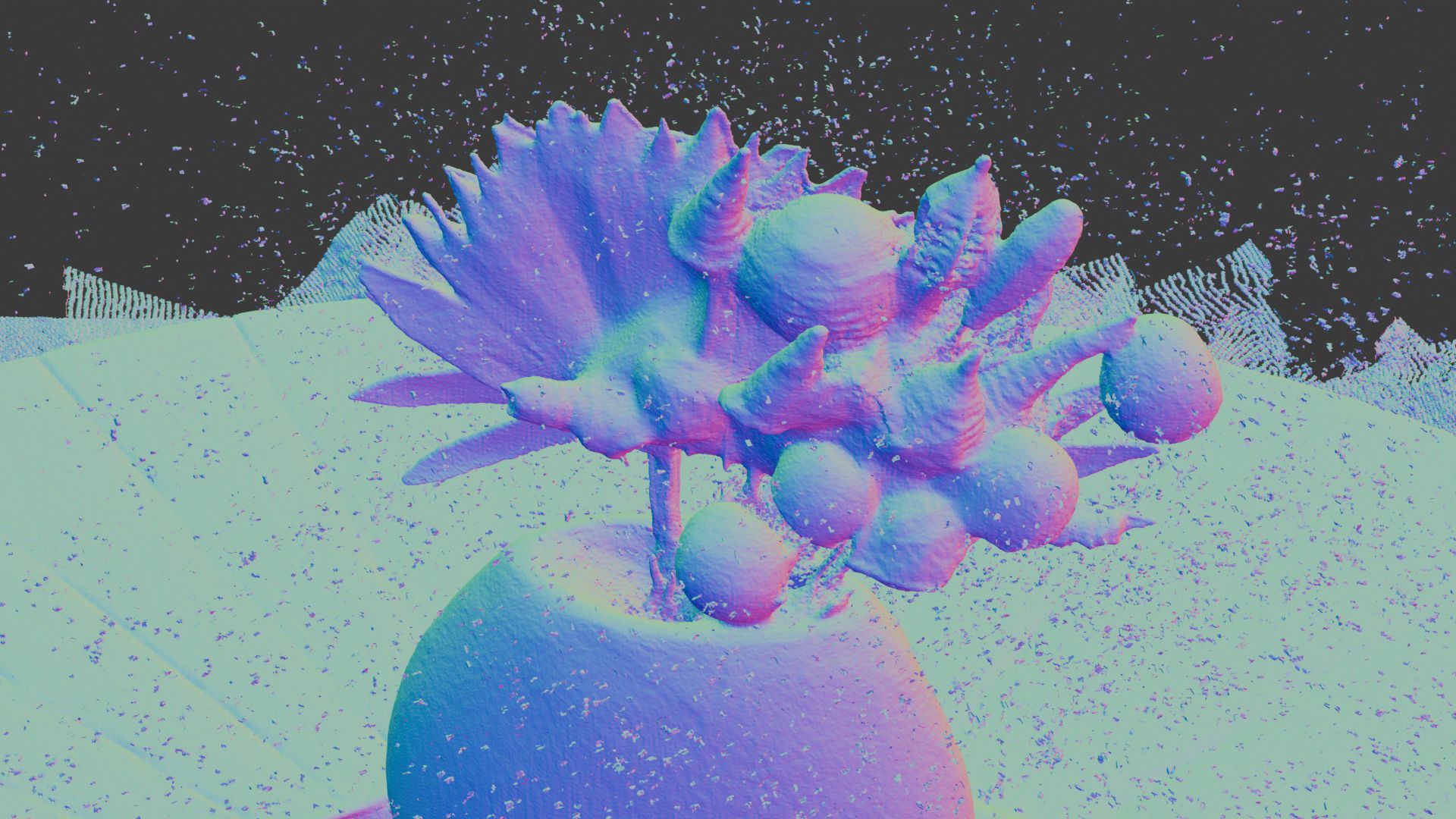}

}
\vspace*{-6mm}
\caption*{(b) PGSR} 
\vspace*{1mm}

\resizebox{\linewidth}{!}{
    \includegraphics[width=0.20\linewidth]{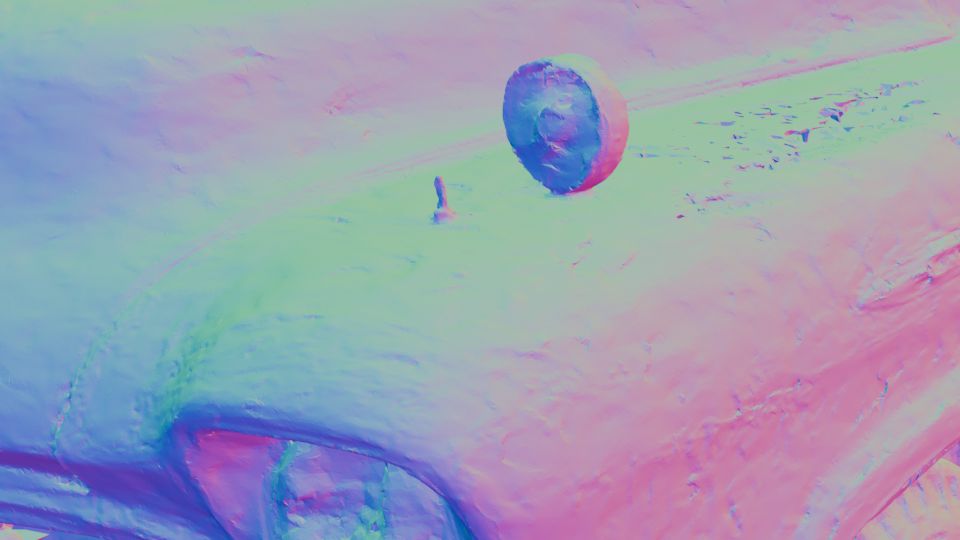} 
    \includegraphics[width=0.20\linewidth]{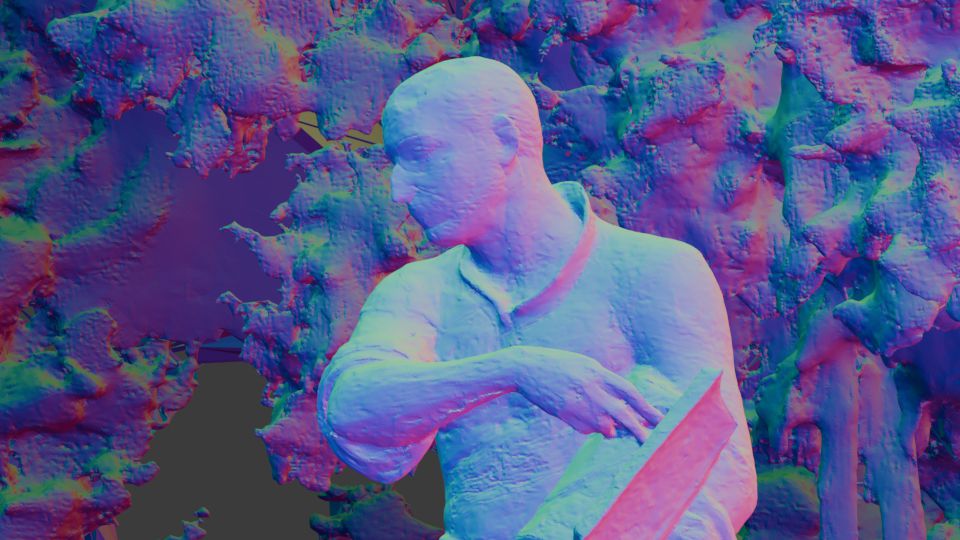} 
    \includegraphics[width=0.20\linewidth]{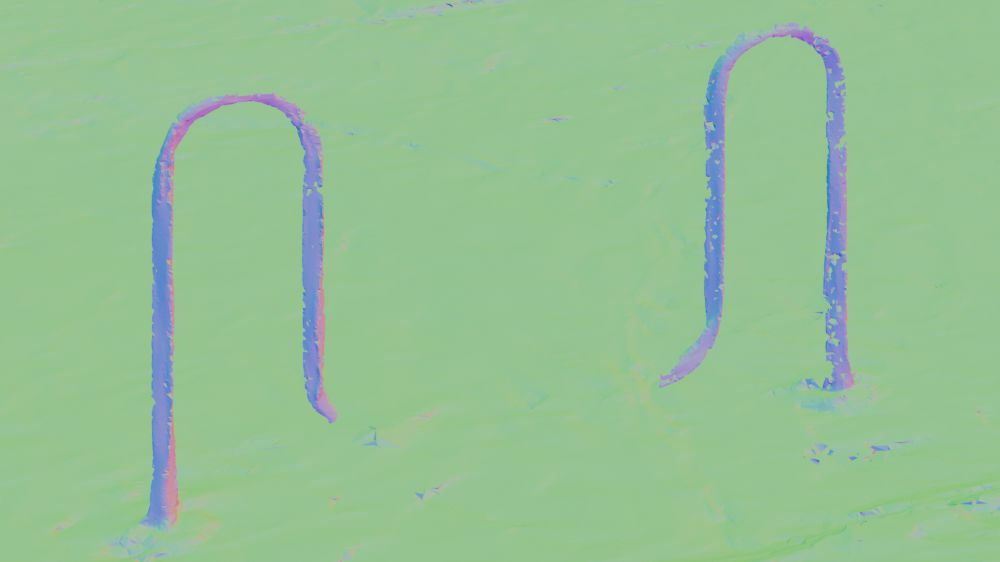}
    \includegraphics[width=0.20\linewidth]{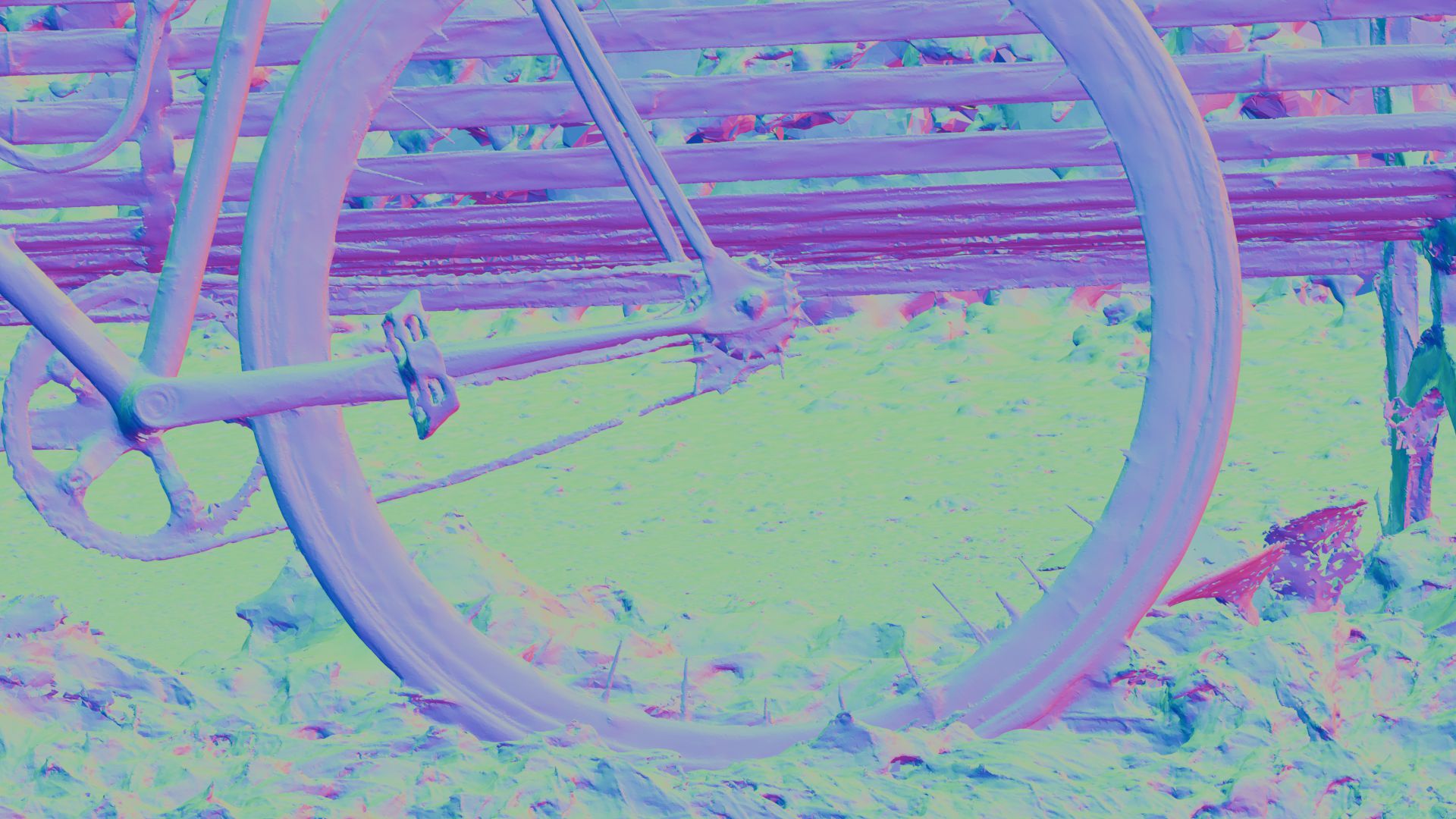}
    \includegraphics[width=0.20\linewidth]{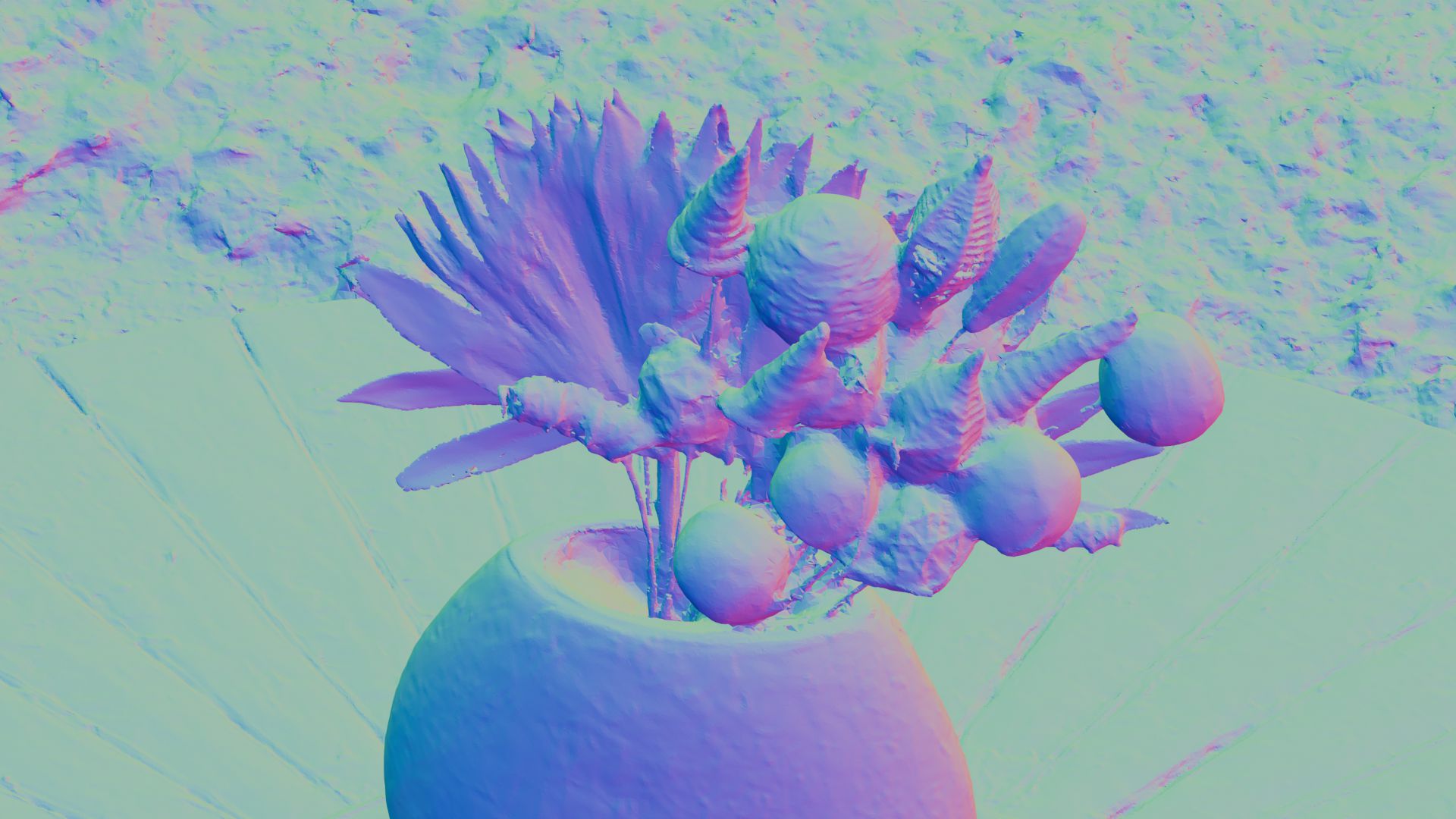}

}
\vspace*{-6mm}
\caption*{(a) GGGS} 
\vspace*{1mm}

\resizebox{\linewidth}{!}{
    \includegraphics[width=0.20\linewidth]{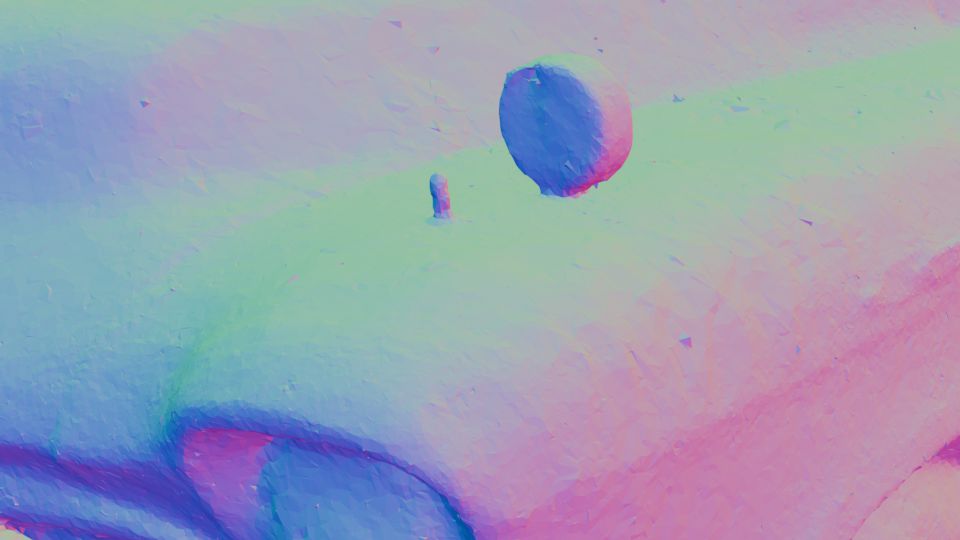} 
    \includegraphics[width=0.20\linewidth]{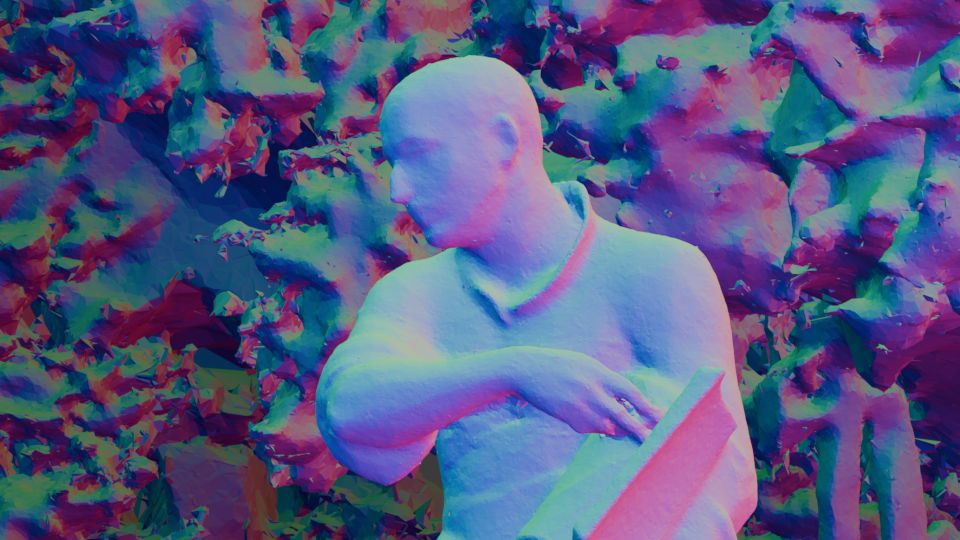} 
    \includegraphics[width=0.20\linewidth]{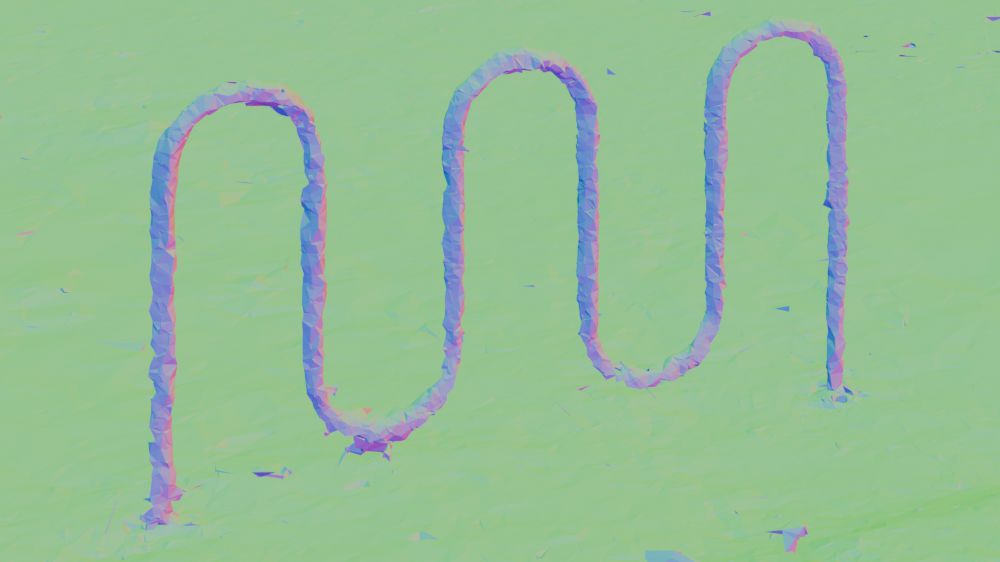}
    \includegraphics[width=0.20\linewidth]{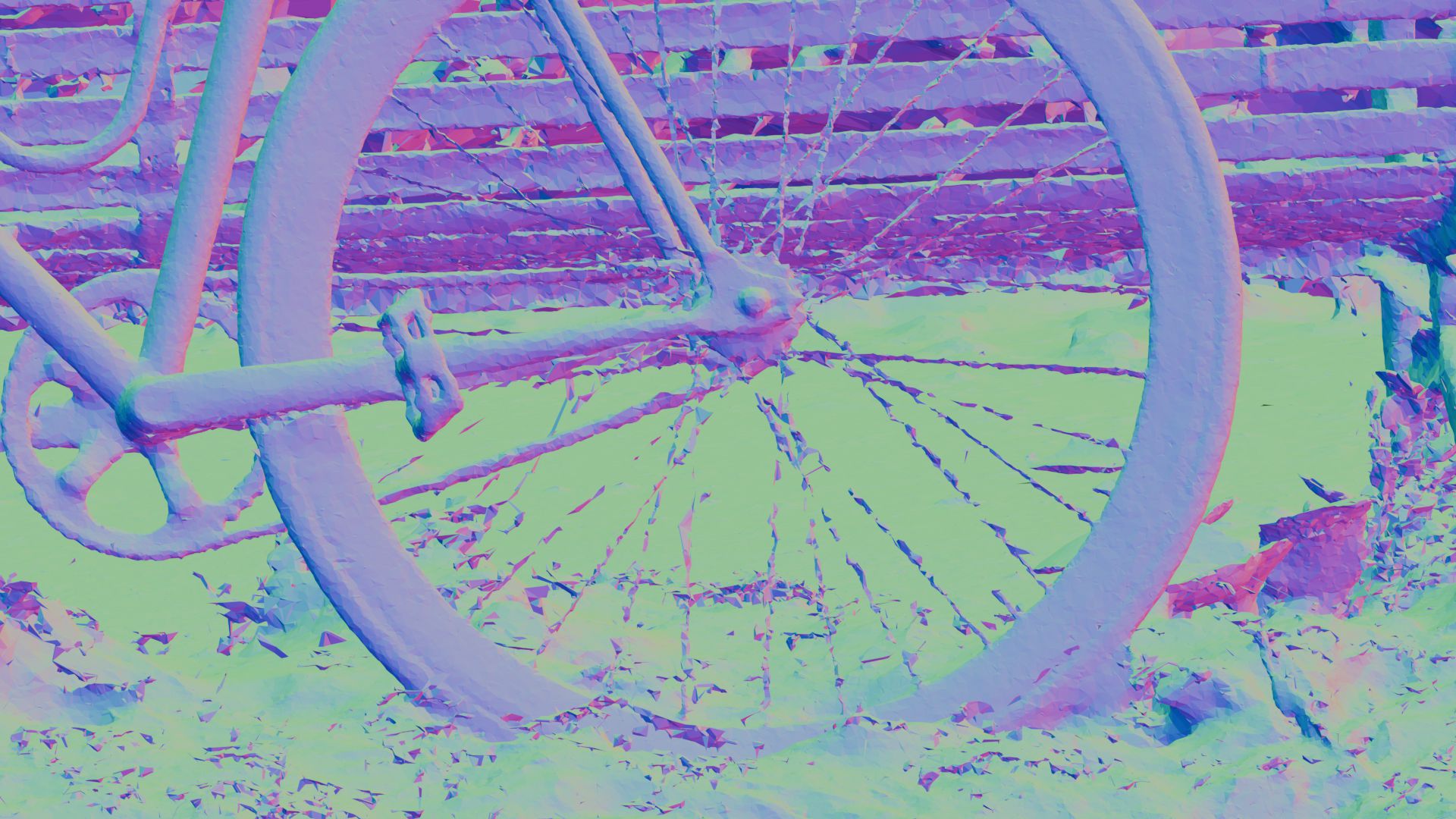}
    \includegraphics[width=0.20\linewidth]{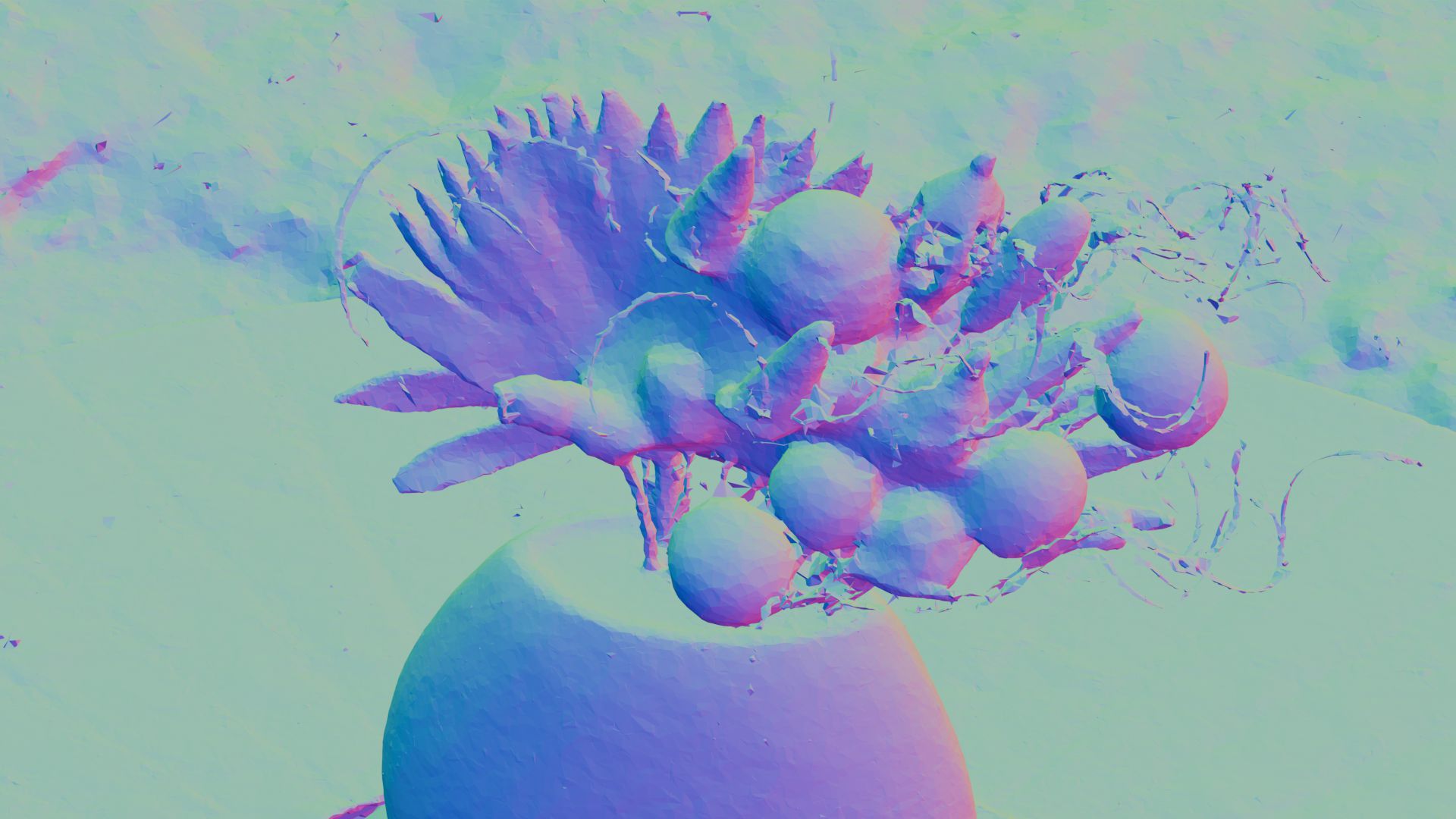}

}
\vspace*{-6mm}
\caption*{(c) MILo} 
\vspace*{1mm}

\resizebox{\linewidth}{!}{
    \includegraphics[width=0.20\linewidth]{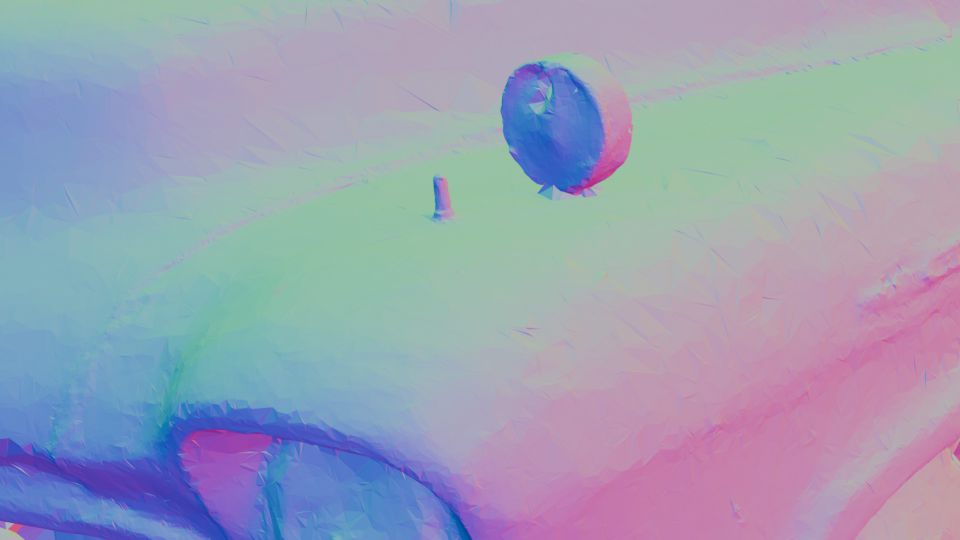} 
    \includegraphics[width=0.20\linewidth]{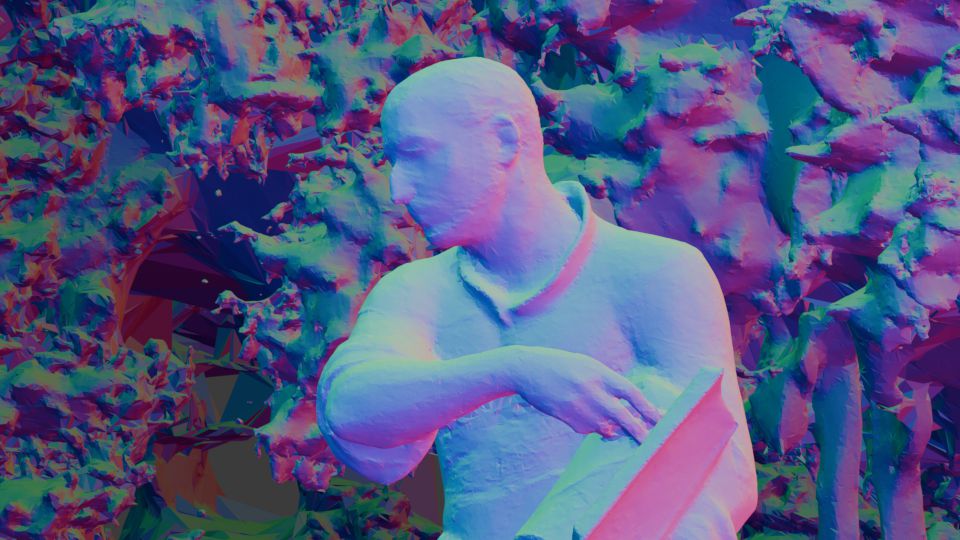} 
    \includegraphics[width=0.20\linewidth]{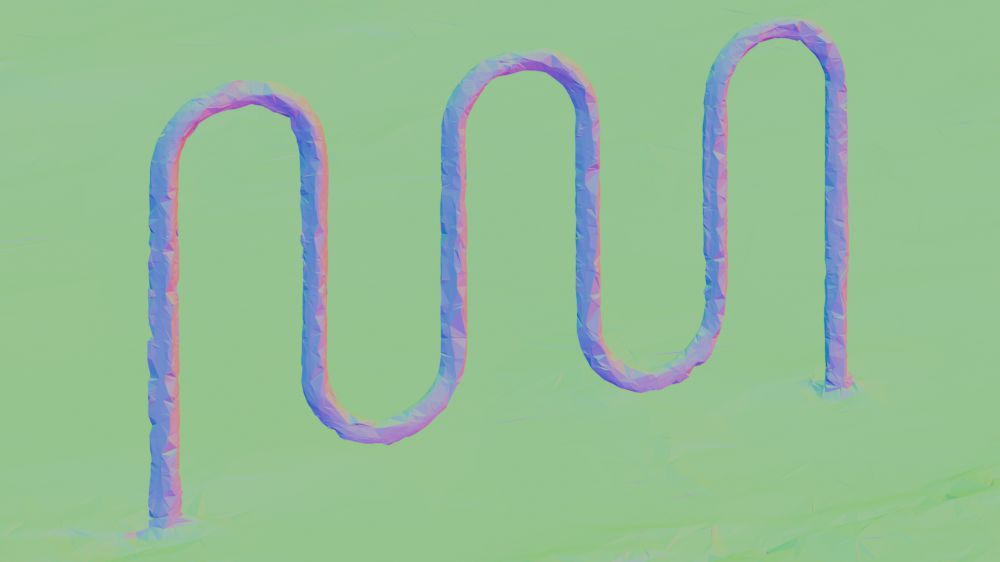}
    \includegraphics[width=0.20\linewidth]{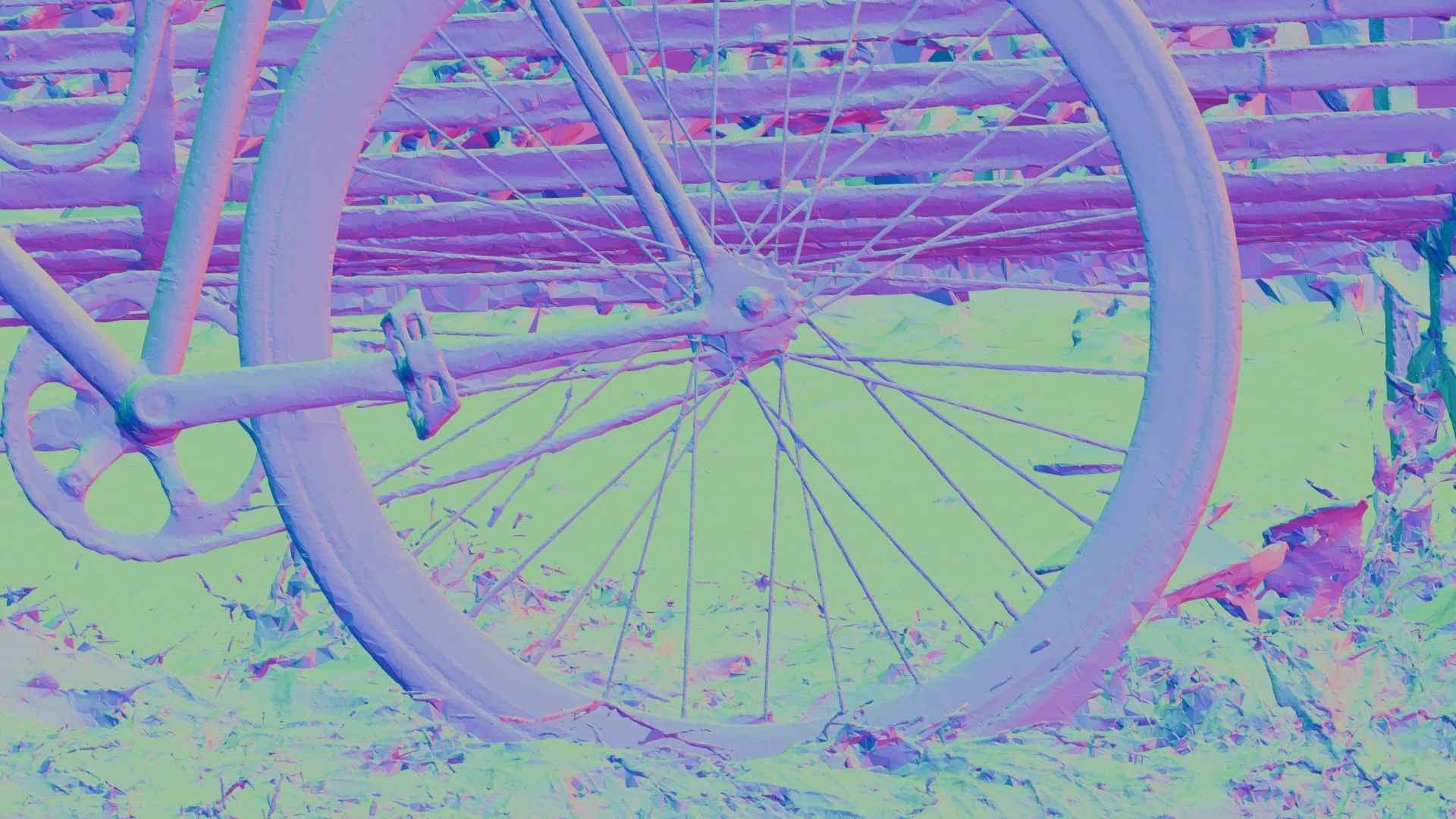}
        \includegraphics[width=0.20\linewidth]{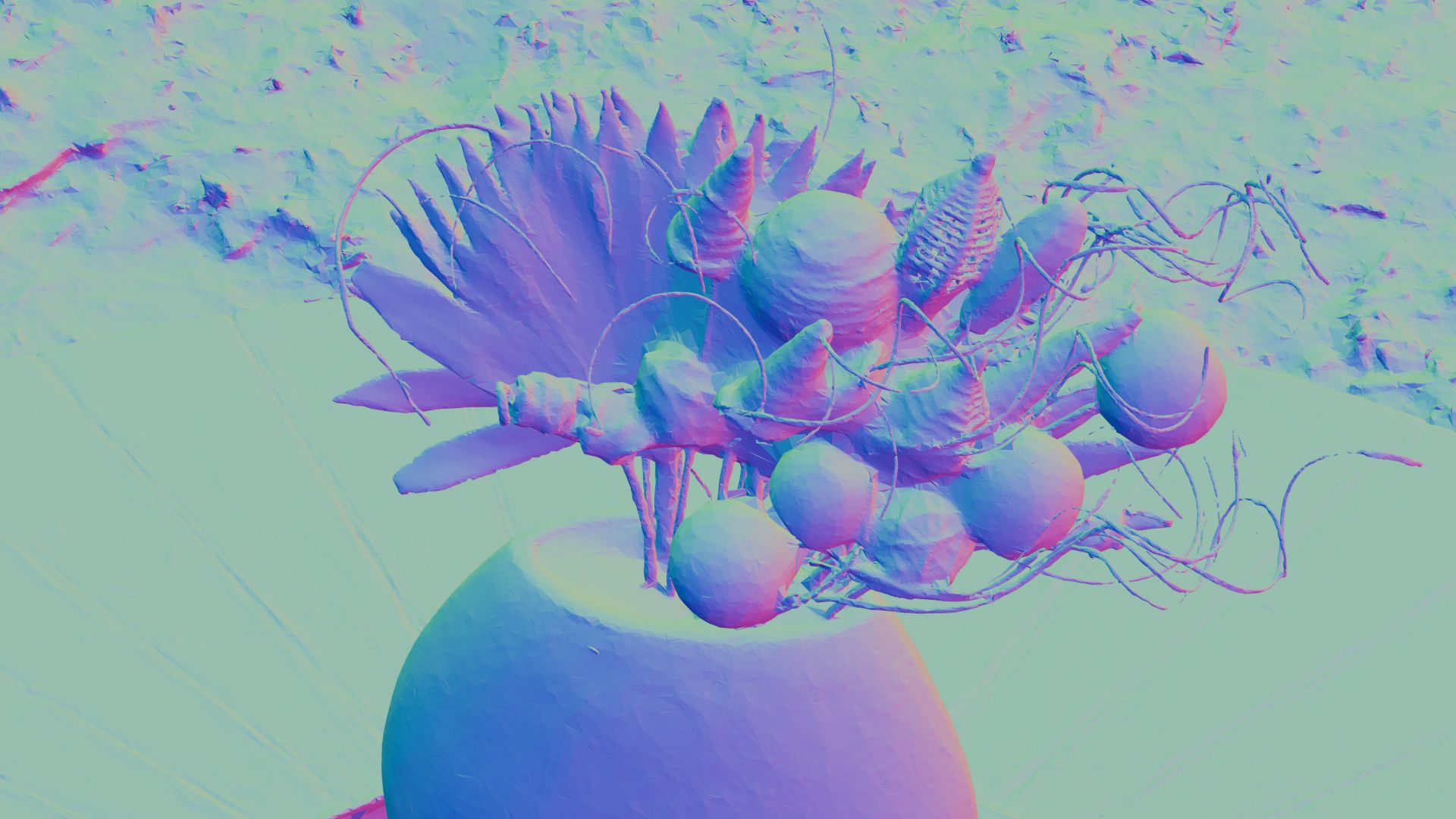}

}
\vspace*{-6mm}
\caption*{(d) Ours} 
\vspace*{-2mm}

\caption{\textbf{Qualitative comparison on T\&T and MipNerf360.} Our method recovers significantly finer surface detail and thin structures compared to baselines. Close-up insets highlight regions where competing methods erode geometry or fail to close holes.
\vspace*{-8mm}
}

\label{fig:qualitative}
\end{figure*}

In this section, we evaluate and ablate \methodname{} on standard benchmarks. We expose a fundamental bias in the standard evaluation protocol for Gaussian-based surface reconstruction, and propose two complementary evaluations in \cref{sec:revisiting_evaluation}: \textit{Uniform Sampling} and \textit{Virtual Scanning}. We refer the reader to the appendix for all of our implementation details.

\vspace{-3mm}
\subsection{Experimental Setup}
\paragraph{Datasets \& Evaluations.} We perform quantitative and qualitative evaluations on the widely used \textbf{DTU}~\cite{jensen2014large} and \textbf{Tanks and Temples (T\&T)}~\cite{Knapitsch2017} datasets. Furthermore, we leverage the Mesh-Based Rendering (MBR) evaluation introduced in MILo~\cite{guedon2025milo} to evaluate the mesh completeness.

\vspace{-2mm}
\paragraph{Baselines.} We compare our method against state-of-the-art explicit surface reconstruction techniques. We separate these into two categories: Methods capable of full scene reconstruction (GOF~\cite{yu2024gaussian}, SOF~\cite{radl2025sof}, MILo~\cite{guedon2025milo}, RaDe-GS~\cite{zhang2024rade}, GGGS~\cite{Zhang2026GeometryGrounded}), and methods focusing on foreground only (PGSR~\cite{chen2024pgsr}, 2DGS~\cite{huang20242d}).

\vspace{-2mm}
\paragraph{Metrics.} We report Chamfer Distance (CD) on the DTU dataset; F1-score on the T\&T dataset; and image metrics for NVS on the Mip-NeRF~360 dataset. Moreover, we present image metrics for the MBR evaluation on T\&T and Mip-NeRF~360 datasets. For \textit{Uniform Sampling}, \textit{Virtual Scanning}, and \textit{Mesh-Based Rendering}, we run all baseline methods ourselves under the exact same protocol.

\vspace{-3mm}
\subsection{Revisiting Surface Evaluation for Gaussian Splatting}
\label{sec:revisiting_evaluation}

The standard evaluation practice on T\&T and structured scan datasets is fundamentally flawed for two reasons: (1) the conventional approach builds the predicted surface point cloud from the vertices and face centers of the extracted mesh, biasing metrics toward denser tessellations; 

and (2) ground data (GT) acquisition, relying on laser scan, unfairly penalizes methods that correctly reconstruct occluded geometry and surfaces visible at a grazing angle. 
To resolve this, we introduce two robust evaluation strategies (\cref{tab:tandt_combined}).

\vspace{-0.2cm}
\paragraph{Uniform Sampling Evaluation.} To eliminate vertex-density bias, we uniformly sample a fixed number of points from the reconstructed mesh surface within the GT crop volume, ensuring a strictly fair geometric comparison.

\vspace{-0.2cm}
\paragraph{Virtual Scanning Evaluation.} Uniform sampling alone does not take into account incomplete meshes that mimic GT bias. We thus propose to simulate the original data acquisition process by rendering depth maps of the reconstructed meshes from the input camera poses and back-projecting them into the GT crop volume. 

\begin{table*}[!ht]
    \centering
    \caption{\textbf{Quantitative comparison on the Tanks \& Temples Dataset~\cite{Knapitsch2017}.}
    F1 scores under two complementary evaluation protocols.
    \textit{Uniform Sampling}: points are uniformly sampled from the mesh surface, eliminating vertex-count bias.
    \textit{Virtual Scanning}: depth maps are back-projected from the original camera viewpoints, mirroring the ground truth acquisition process. We report average training time.
    }
    \vspace{-0.2cm}
    \footnotesize 
    \setlength{\tabcolsep}{4pt} 
    \resizebox{0.98\linewidth}{!}{
    \begin{tabular}{@{}l|ccc|cccccc}\label{tab:tandt_combined}
     & \multicolumn{3}{c|}{Foreground only} & \multicolumn{6}{c}{Full scene extraction} \\
     & 2DGS & PGSR & \textbf{Ours (PAM)} & GOF & SOF & RaDe-GS & MILo & GGGS & \textbf{Ours (2p)} \\
     \hline
     \multicolumn{10}{l}{\textit{Uniform Sampling Evaluation}} \\  
     \hline
    Barn        & \tbestbis 0.48 & \bestbis 0.65  & \sbestbis 0.63 & 0.40 & \trd 0.45 & 0.42        & 0.40        & \snd 0.57 & \fst 0.64 \\
    Caterpillar & \tbestbis 0.24 & \bestbis 0.45  & \sbestbis 0.38 & 0.23 & 0.20      & 0.22        & \trd 0.24   & \fst 0.41 & \fst 0.41 \\
    Courthouse  & \sbestbis 0.13 & \bestbis 0.24  & \tbestbis 0.07 & 0.10 & 0.07      & \snd 0.12   & \trd 0.11   & 0.10      & \fst 0.19 \\
    Ignatius    & \tbestbis 0.51 & \bestbis 0.81  & \sbestbis 0.76 & 0.55 & 0.68      & 0.66        & \trd 0.72   & \fst 0.79 & \snd 0.77 \\
    Meetingroom & \tbestbis 0.18 & \bestbis 0.33  & \sbestbis 0.28 & 0.19 & 0.15      & \trd 0.22   & 0.14        & \snd 0.36 & \fst 0.37 \\
    Truck       & \sbestbis 0.46 & \bestbis 0.63  & \sbestbis 0.46 & 0.41 & 0.39      & \trd 0.45   & 0.43        & \fst 0.50 & \fst 0.50 \\
    \hline
    Mean        & \tbestbis 0.33 & \bestbis 0.52  & \sbestbis 0.43 & 0.31 & 0.32      & \trd 0.35   & 0.34        & \snd 0.45 & \fst 0.48 \\
    \hline
     \multicolumn{10}{l}{\textit{Virtual Scanning Evaluation}} \\  
     \hline
    Barn        & \tbestbis 0.51 & \sbestbis 0.65 & \bestbis 0.67  & 0.49 & 0.49      & 0.47        & \trd 0.52   & \snd 0.62 & \fst 0.67 \\
    Caterpillar & \tbestbis 0.31 & \bestbis 0.50  & \bestbis 0.50  & 0.29 & 0.31      & 0.30        & \trd 0.33   & \fst 0.52 & \snd 0.50 \\
    Courthouse  & \tbestbis 0.10 & \bestbis 0.18  & \sbestbis 0.16 & 0.11 & 0.06      & 0.11        & \trd 0.12   & \snd 0.16 & \fst 0.18 \\
    Ignatius    & \tbestbis 0.61 & \bestbis 0.82  & \sbestbis 0.78 & 0.63 & 0.63      & 0.66        & \trd 0.74   & \fst 0.80 & \snd 0.78 \\
    Meetingroom & \tbestbis 0.19 & \sbestbis 0.35 & \bestbis 0.41  & 0.24 & 0.23      & \trd 0.25   & 0.23        & \fst 0.42 & \snd 0.41 \\
    Truck       & \tbestbis 0.54 & \bestbis 0.66  & \sbestbis 0.63 & 0.50 & 0.48      & \trd 0.57   & 0.56        & \fst 0.65 & \snd 0.63 \\
    \hline
    Mean        & \tbestbis 0.38 & \bestbis 0.53  & \bestbis 0.53  & 0.38 & 0.37      & 0.39        & \trd 0.42   & \fst 0.53 & \fst 0.53 \\
    \hline
    \hline
    Times        & \bestbis 12 m & \tbestbis 45 m & \sbestbis 27 m & 69 m & \sbest 17 m             &\best 12 m& 150 m     & 32 m & \tbest 27 m \\
    \hline
    \end{tabular}
    }
    \vspace*{-4.5mm}
\end{table*}

\vspace{-0.2cm}
\paragraph{Legacy Evaluation.} For backward compatibility, we report the standard vertex-based metric. However, this metric is highly sensitive to tessellation density (e.g., extracting our meshes with 9 pivots instead of 2 artificially inflates scores). Please see the appendix for a detailed discussion of these flaws.

\vspace{-0.1cm}
\subsection{Main Results}

\vspace{-0.1cm}
\paragraph{Geometry on DTU.} We report Chamfer Distance across 15 scenes of the DTU dataset in the appendix. Our method yields highly competitive results, outperforming most prior methods and achieving scores comparable to GGGS~\cite{Zhang2026GeometryGrounded}. Importantly, our method does not exhibit the surface erosion artifacts observed in GGGS, as illustrated in \cref{fig:qualitative}. We note that we outperform foreground only methods such as 2DGS and PGSR on this dataset.

\vspace{-2mm}
\paragraph{Geometry on T\&T.} We evaluate on the T\&T dataset under three complementary protocols, all reported in \cref{tab:tandt_combined}.
\begin{itemize}
    \item \textit{Uniform Sampling Evaluation.} 
    
    Our method sets a new state of the art under this unbiased protocol among the full scene extraction methods. PGSR~\cite{chen2024pgsr} remains an apparent outlier: its use of TSDF fusion and depth filtering yields non-watertight meshes whose holes coincide with those of the ground truth scan, an artifact of the acquisition process rather than genuine geometric quality. We document this in the appendix. We note that Ours (PAM)'s performance takes a toll on this metric. Uniformly sampling points to reconstruct scenes such as Courthouse or Meetingroom is far from optimal. We highlight that the advantage of this meshing procedure is the meshing of regions of interest at arbitrary resolution~(\cref{fig:primal_ablation_meshing_zoom}).
    \item \textit{Virtual Scanning Evaluation.} Simulating the acquisition bias of the ground truth via our virtual scanning procedure, we set a new state of the art alongside GGGS, and see that PGSR has no longer an advantage when this bias is taken into account.
\end{itemize}

\vspace{-3mm}
\paragraph{Mesh-Based Rendering on T\&T and Mip-NeRF~360.} To complement our geometric evaluation, we report Mesh-Based Rendering (MBR) metrics~\cite{guedon2025milo}, which assesses background reconstruction quality and serve as a proxy for mesh completeness. Results are shown in \cref{tab:mbr_metrics}. Despite lacking the dedicated rendering prior of MILo~\cite{guedon2025milo}, our method performs competitively across all datasets and sets a new state of the art on Mip-NeRF~360. This evaluation also exposes a critical weakness of PGSR: its depth-filtering strategy leaves substantial holes in background geometry, resulting in severely degraded rendering quality—confirming that its strong geometric scores are an artifact of the evaluation protocol rather than true reconstruction quality.

\vspace{-0.1cm}
\paragraph{Novel View Synthesis on Mip-NeRF~360.} Although NVS is not the primary objective of our method, we achieve competitive performance on Mip-NeRF~360~\cite{barron22mipnerf360} and set a new state of the art for outdoor scenes, reflecting the effectiveness of our representation in challenging, unbounded settings. Results are reported in the appendix.

\vspace*{-3mm}
\begin{table*}[!ht]
    \centering
    \caption{\textbf{Mesh-Based Novel View synthesis metrics on real-world datasets}. We report PSNR, SSIM, and LPIPS metrics, as well as the total number of Gaussians and vertices (in millions). Our method demonstrates superior performance on these challenging unbounded scenes.}
    \vspace*{-3mm}
    \footnotesize 
    \setlength{\tabcolsep}{4pt} 
    \resizebox{0.99\linewidth}{!}{
    \begin{tabular}{@{}l|ccccc|ccccc}\label{tab:mbr_metrics}
     & \multicolumn{5}{c@{}|}{MipNeRF~360} & \multicolumn{5}{c@{}}{Tanks~\&~Temples} \\
     & PSNR $\uparrow$ & SSIM $\uparrow$ & LPIPS $\downarrow$ & \#Gaussians $\downarrow$ & \#Verts $\downarrow$ & PSNR $\uparrow$ & SSIM $\uparrow$ & LPIPS $\downarrow$ & \#Gaussians $\downarrow$ & \#Verts $\downarrow$ \\
     \hline
     \multicolumn{11}{l}{\textit{Foreground only}} \\
     \hline
     2DGS
     & \sbestbis 15.36 & \sbestbis 0.4987 & \sbestbis 0.4749 & \bestbis 1.88 & \bestbis 4.31
     & \sbestbis 14.23 & \sbestbis 0.5697 & \sbestbis 0.4854 & \bestbis 0.98 & \sbestbis 16.39 \\
     PGSR
     & \bestbis 15.63 & \bestbis 0.5834 & \bestbis 0.4509 & \sbestbis 3.16 & \sbestbis 22.78
     & \bestbis 14.55 & \bestbis 0.5956 & \bestbis 0.4819 & \sbestbis 1.47 & \bestbis 11.79 \\
     \hline
     \multicolumn{11}{l}{\textit{Full scene extraction}} \\
     \hline
     GOF
     & 24.25 & 0.7017 & 0.3454 & \tbest 2.99 & 32.80
     & 20.10 & 0.6475 & 0.4073 & \tbest 1.25 & 11.63 \\
     RaDe-GS
     & \tbest 24.84 & 0.7291 & 0.3128 & \sbest 2.91 & \tbest 30.85
     & \tbest 20.70 & 0.6767 & 0.3876 & \sbest 1.18 & \tbest 10.06 \\
     MILo
     & \sbest 25.075 & \tbest 0.7339 & \tbest 0.3096 & \best 0.46 & \best 6.73
     & \best 21.198 & \tbest 0.6908 & \tbest 0.3782 & \best 0.28 & \best 4.36 \\
     GGGS
     & 24.55 & \sbest 0.7386 & \sbest 0.2986 & 4.08 & 43.46
     & \tbest 20.70 & \sbest 0.6912 & \sbest 0.3734 & 1.73 & 21.81 \\
     Ours
     & \best 25.10 & \best 0.752 & \best 0.289 & 4.0 & \sbest 11.79
     & \sbest 20.98 & \best 0.7004 & \best 0.3674 & 2.07 & \sbest 5.80\\
    \end{tabular}
    }
    \vspace*{-4mm}
\end{table*}

\vspace{-0.3cm}
\paragraph{Qualitative Comparisons.} \cref{fig:qualitative} presents visual comparisons against all baselines. \methodname{} consistently produces surfaces with sharper fine details, fewer topological artifacts, and better coverage of thin structures such as bicycle spokes, corroborating the quantitative findings.

\paragraph{Conclusion.} Our results demonstrate that \methodname{} achieves robust state-of-the-art performance across all evaluation settings. While methods such as PGSR appear competitive under specific protocols by exploiting biases in GT data, the TSDF extraction limits their quality. Our method performs consistently across both legacy and newly proposed metrics, and is further validated by complementary rendering-based and qualitative evidence. This consistency confirms our improvements reflect genuine advances in surface reconstruction.

\subsection{Ablation Studies}

All ablation studies are conducted on T\&T, reporting F1 scores and MBR metrics (SSIM, PSNR, LPIPS). More details are provided in appendix.

\paragraph{Normal Alignment Losses.} \cref{tab:ablation_combined} reports results with and without our normal alignment losses. F1 scores remain stagnant---our rasterizer already places Gaussians near the isosurface---but MBR metrics improve substantially: our losses align Gaussians with the surface, enabling clean mesh extraction with only 2 pivots. Without them, fine details such as thin structures and sharp edges are lost. This also serves as an illustration that the T\&T dataset and ground truth is best to measure coarse geometry alignment, and fails to measure detail loyalty. 

\begin{table*}[t]
    \centering
    \caption{\textbf{Ablation studies on Tanks \& Temples.}
    \textit{Top}: Mean scores across all scenes for our component ablation, reporting F1-score under both evaluation protocols and Mesh-Based Rendering (MBR) metrics.
    \textit{Bottom}: Per-scene Virtual Scan F1-score demonstrating that our contributions generalize as a plug-in regularizer to RaDe-GS~\cite{zhang2024rade}. The normal alignment loss and densification procedure constitute the Gaussian Wrapping approach. 
    }
    \label{tab:ablation_combined}
    \vspace*{-3mm}
    \setlength{\tabcolsep}{3pt} 
    \renewcommand{\arraystretch}{1.1}
    \footnotesize 
    
    \resizebox{\linewidth}{!}{%
    \begin{tabular}{@{}l|cc|ccc@{}}
    \multicolumn{6}{l}{\textit{Component Ablation (Mean across scenes)}} \\
    \hline
    & \multicolumn{2}{c|}{\textbf{F1-Score} $\uparrow$} & \multicolumn{3}{c}{\textbf{MBR Metrics}} \\
    \cline{2-3}\cline{4-6}
    \textbf{Configuration} & \textbf{Virt.\ Scan} & \textbf{Mesh Qual.} & \textbf{PSNR} $\uparrow$ & \textbf{SSIM} $\uparrow$ & \textbf{LPIPS} $\downarrow$ \\
    \hline
    Baseline                 & \fst0.53 & \fst\fst0.48  & \trd 20.74 & \trd0.6958  & \trd0.3718          \\
    + Normal Alignment Loss & \snd 0.52 & \fst0.48  & \snd20.78    & \snd0.6986   & \snd0.3683          \\
    + Normal Alignment Loss + Densification          & \fst0.53 & \fst0.48  & \fst20.98   & \fst0.7004   & \fst0.3674 \\
    \hline
    \end{tabular}}

    \smallskip

    \resizebox{\linewidth}{!}{%
    \begin{tabular}{@{}l|cccccc|c@{}}
    \multicolumn{8}{l}{\textit{Gaussian Wrapping Generalization to RaDe-GS (Virtual Scan F1-Score)}} \\
    \hline
    \textbf{Configuration} & Barn & Caterpillar & Courthouse & Ignatius & Meetingroom & Truck & \textbf{Mean} \\
    \hline
    RaDe-GS                              & \snd0.47          & \snd0.30          & \snd0.11          & \snd0.66          & \snd0.25          & \snd0.57          & \snd0.39          \\
    RaDe-GS + Gaussian Wrapping & \fst0.63 & \fst0.46 & \fst0.13 & \fst0.71 & \fst0.34 & \fst0.60 & \fst0.48 \\
    \hline
    \end{tabular}}
    \vspace*{-3mm}
\end{table*}

\vspace{-0.1cm}
\paragraph{Generalizability of Gaussian Wrapping.} To confirm that the Gaussian Wrapping approach is architecture-agnostic, we plug it into RaDe-GS~\cite{zhang2024rade}. The consistent improvement across all T\&T scenes (\cref{tab:ablation_combined}; and figure in the appendix) demonstrates that our method operates as an effective drop-in regularizer well beyond our specific pipeline. Note that the difference between the RaDe-GS + Gaussian Wrapping combination and Ours in previous tables is the use of our depth rasterization, which precisely queries the 0.5-isosurface via binary search.

\vspace{-0.5cm}
\section{Conclusion}\label{sec:conclusion}

\vspace{-0.2cm}
We presented \methodname{}, a principled method grounded on the OaV framework. Our method reconstructs complete scenes including extremely thin structures where prior methods fail, without surface erosion artifacts, and at a fraction of the mesh weight of competitors. We further expose fundamental biases in standard evaluation protocols and propose two alternatives. Finally, Primal Adaptive Meshing enables extraction of meshes with better topology and controllable resolution.

\vspace{-0.2cm}
\paragraph{Limitations and Future Work.}
Our Primal Adaptive Meshing currently relies on uniform sampling of an initial MTet mesh, which can be suboptimal for highly detailed scenes. Developing more principled sampling strategies---for instance, guided by the Gaussian Vector Field or local surface curvature---would extend the approach to finer and more intricate geometry. Moreover, our attenuation-based formulation is not specific to 3DGS: extending it to more accurate volumetric rendering models such as EVER~\cite{mai2025ever} is a natural and promising direction.

\section{Acknowledgements}

We are grateful to George Drettakis for insightful discussions. Parts of this work were supported by the ERC Consolidator Grant ``VEGA'' (No. 101087347), the ERC Advanced Grant ``explorer'' (No. 101097259), as well as gifts from Ansys Inc., and Adobe Research.

%
%
\clearpage
\bibliographystyle{splncs04}
\bibliography{main}

\clearpage
\appendix
\section{Overview of the Appendix}\label{sec:overview_appendix}

This appendix serves three purposes: \textbf{(i)} provide proofs for the theoretical claims made in the main paper; \textbf{(ii)} provide the implementation details required to reproduce our results; and \textbf{(iii)} showcase additional results, including quantitative tables, qualitative figures, and an illustration of the evaluation bias we expose.
Throughout the theory sections, key results are highlighted in \colorbox{dblue}{\strut blue boxes} and their corresponding proofs in \colorbox{dorange}{\strut orange boxes}.

\paragraph{Section~\ref{sec:derivations}} provides details on the theoretical results presented in the paper.
We give further proof and motivation for our choice of attenuation and normal field. We also illustrate how the transmittance introduced in GGGS~\cite{Zhang2026GeometryGrounded} leads to surface erosion.

\paragraph{Section~\ref{sec:mesh_extraction_details}} presents the derivation of the lower-bound for estimating vacancy (\cref{sec:mesh_extraction_details}), and the Newton projection step used in our Primal Adaptive Meshing procedure (\cref{prop:newton}).

\paragraph{Section~\ref{sec:experiments_details}} provides additional quantitative results (DTU, legacy evaluation on T\&T, Novel View Synthesis on Mip-NeRF~360), examples of the evaluation bias discussed in the main paper with a concrete code snippet from the RaDe-GS repository (\cref{fig:code_flaw}).

\section{Deriving an Equivalent Ray-Marching Formulation}\label{sec:derivations}

Objects as Volumes (OaV)~\cite{miller2024objectsasvolumes} establishes a strong theoretical framework exploring the relationship between surface extraction and volumetric rendering. Specifically, the results introduced in OaV only hold for volumetric rendering models relying on ray-marching through an attenuation field $\sigma$ (also called \textit{density} field), such as Neural Radiance Fields~\cite{mildenhall2020nerf}.

Even though 3D Gaussian Splatting relies on volumetric primitives, its image formation model is based on rasterization: volumetric particles are first projected onto the image plane as flat 2D Gaussians, then blended together to obtain pixel colors.
This formulation is not strictly equivalent to volumetric ray-marching, and theoretical results from OaV cannot be directly applied to Gaussian Splatting. 

While rasterization allows for incredibly fast rendering, it is less accurate than methods relying on ray-marching through attenuation. Still, in practice, it can be shown~\cite{celarek2025does} that given enough Gaussians, the lack of accuracy has minor impact on the novel view synthesis quality of 3DGS representations.

The goal of this section is to propose, motivate and justify a volumetric ray-marching formulation equivalent to the image formation model of 3D Gaussian Splatting under the right set of assumptions. 
Building on this equivalence, we then apply the results of OaV to 3D Gaussian Splatting, enhancing surface extraction from Gaussians.

\subsection{Deriving Attenuation}\label{sec:derivation_attenuation}
Let us start by deriving the attenuation in this formulation.

\hypbox[]{
\begin{lemma}[Transmittance of a single Gaussian.]\label{lemma:transmittance_single_gaussian}
    The image formation model of a Gaussian Splatting representation composed of a single Gaussian, which relies on alpha-blending with opacity, is equivalent to a ray-marching volumetric rendering model with the following continuous transmittance function: for any ray origin $\rayo\in\IR^3$ in empty space, direction $\rayw\in\calS^2$, and flight distance $t\in [0,+\infty)$:
    \begin{equation}
        \trans(t) = 1 - G(\rayo + \min(t, \tstar)\rayw) \> ,
    \end{equation}
    where $\tstar := \arg \max_{t\geq 0} \{G(\rayo + t\rayw)\}$.
    The corresponding attenuation coefficient is view-dependent and is given by:
    \begin{equation}
        \sigma(\rayx, \rayw) = \max\left(
            0,
            -\rayw \cdot \nabla \log (1-G(\rayx))
        \right) \> .
    \label{eq:gaussian_splatting_attenuation_single}
    \end{equation}
\end{lemma}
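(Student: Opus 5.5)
The argument has three steps: pin down what ``equivalent'' means for a single Gaussian, check that the proposed transmittance satisfies it, then read off the attenuation by differentiating $-\log \trans$.

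\textbf{What equivalence requires.} With one Gaussian, the 3DGS image formation model (Eq.~\ref{eq:3dgs_rendering}, with $G^*$ the value of the 3D Gaussian at the point of maximum contribution along the ray) does two things:
\begin{itemize}
\item it blends the primitive's color with weight $G(\rayo+\tstar\rayw)$;
\item it leaves a residual transmittance of $1-G(\rayo+\tstar\rayw)$ to the background.
\end{itemize}
A ray-marching model reproduces this exactly if two conditions hold. First, its transmittance $\trans(t)$ satisfies $\trans(0)=1$, since $\rayo$ lies in empty space. Second, $\lim_{t\to\infty}\trans(t) = 1-G(\rayo+\tstar\rayw)$. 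Because a single primitive carries a single color, the composited color $\int_0^\infty c\,\bigl(-\tfrac{\diff}{\diff t}\trans(t)\bigr)\,\diff t$ equals $c\,(1-\trans(\infty))$. That matches the rasterized color and background weight.

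\textbf{Checking the proposed transmittance.} Take $\trans(t) = 1 - G(\rayo+\min(t,\tstar)\rayw)$. We verify four properties:
\begin{itemize}
\item \emph{Start value.} $\trans(0)=1-G(\rayo)$, which is $1$ up to the empty-space assumption that $G(\rayo)$ is negligible. One could also renormalize by $\trans(0)$.
\item \emph{End value.} For $t\ge \tstar$ the transmittance is constant, equal to $1-G(\rayo+\tstar\rayw)$, as required.
\item \emph{Monotonicity.} Restricted to the ray, $t\mapsto G(\rayo+t\rayw)$ is a 1D Gaussian (an exponential of a concave quadratic), hence unimodal. On $[0,\tstar]$ it is nondecreasing, because either $\tstar=0$ or $\tstar$ is the unconstrained mode. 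So $\trans$ is nonincreasing, and therefore a valid transmittance.
\item \emph{Regularity.} $\trans$ is absolutely continuous, which lets us write $\trans(t)=\exp(-\int_0^t \sigma)$ with $\sigma = -\tfrac{\diff}{\diff t}\log \trans \ge 0$.
\end{itemize}

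\textbf{Computing the attenuation.} For $t<\tstar$ the chain rule gives
\[
\sigma = -\ddt \log(1-G(\rayo+t\rayw)) = -\rayw\cdot\nabla\log(1-G(\rayx)), \qquad \rayx=\rayo+t\rayw.
\]
This quantity is nonnegative there, because $G$ increases along $\rayw$. For $t>\tstar$ we have $\sigma=0$. In that region $G$ decreases along the ray, so $-\rayw\cdot\nabla\log(1-G)\le 0$.

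Both cases combine into $\max(0,-\rayw\cdot\nabla\log(1-G(\rayx)))$. This expression depends only on $(\rayx,\rayw)$ and not on $\rayo$, which gives Eq.~\ref{eq:gaussian_splatting_attenuation_single} and shows the attenuation is a genuine, view-dependent field.

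\textbf{Main obstacle.} The delicate point is this last step: showing that the sign of the directional derivative along the ray coincides exactly with the condition $t<\tstar$. The plan is to use unimodality of the 1D Gaussian, treating separately the boundary case $\tstar=0$, where the ray starts past the mode. The non-differentiable point $t=\tstar$ has measure zero and does not affect the integral.
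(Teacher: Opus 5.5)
Your proof is correct, but it runs in the opposite direction from the paper's.

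The paper \emph{derives} the transmittance. Alpha blending fixes $\trans(t)=1-G(\rayo+\tstar\rayw)$ for $t\ge\tstar$. The branch before the mode then comes from an extra modeling assumption: the level sets of $(\rayw,t)\mapsto\trans(t)$ are spheres (ellipsoids for anisotropic $G$) centered at the mean. Under that assumption, each point $\rayo+t\rayw$ with $t<\tstar$ can be matched with the maximum-contribution point of another ray on the same level set, which gives $\trans(t)=1-G(\rayo+t\rayw)$. The paper then differentiates and checks that the rendered color is $\mathbf{c}(\rayw)\,[\trans(0)-\trans(\infty)]=\mathbf{c}(\rayw)\,G(\rayo+\tstar\rayw)$.

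You instead take the formula as given and verify exactly what color equivalence requires: $\trans(0)\approx 1$ and $\trans(\infty)=1-G(\rayo+\tstar\rayw)$. Your route buys rigor and generality:
\begin{itemize}
\item Unimodality of the Gaussian restricted to a line handles anisotropic covariances directly, rather than by analogy with the isotropic case.
\item It shows $\trans$ is nonincreasing, so it is a valid transmittance.
\item It genuinely justifies the $\max(0,\cdot)$ form through the sign analysis on either side of $\tstar$; the paper only says ``differentiating yields''.
\item It shows the resulting $\sigma$ does not depend on $\rayo$, so it is a well-defined field on $\IR^3\times\calS^2$.
\end{itemize}

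What your route does not explain is why this particular transmittance is chosen. The endpoint conditions leave the interpolation on $[0,\tstar)$ free. For example, the GGGS attenuation $\frac{1}{2}|\rayw\cdot\nabla\log(1-G)|$ also integrates to $\trans(\infty)=1-G(\rayo+\tstar\rayw)$ when $G(\rayo)\approx 0$. It is the paper's level-set argument that singles out the ellipsoidal choice, which it later contrasts with GGGS. Since the lemma asserts only equivalence, not uniqueness, your verification suffices for the statement.
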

}
\begin{figure}[t]
\centering
\begin{subfigure}[b]{\linewidth}
    \centering
    \includegraphics[width=0.75\linewidth]{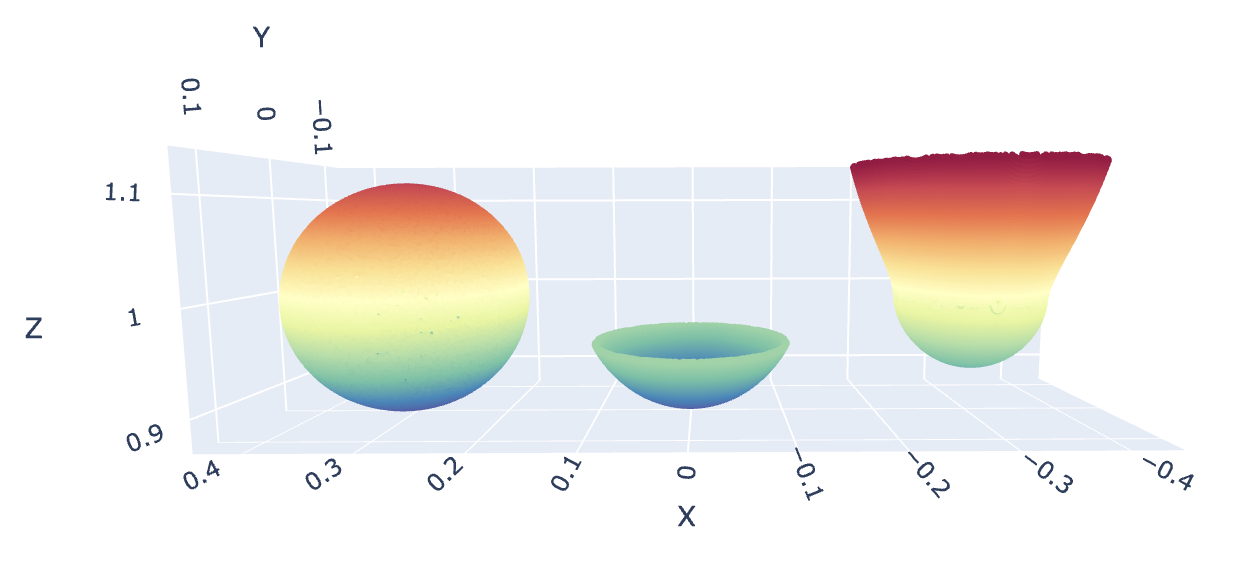}
    \caption*{\quad \quad \quad\quad (a) Gaussian  \quad \quad \quad  (b) Ours \quad  \quad  \quad  \quad (c) GGGS~\cite{Zhang2026GeometryGrounded}}
\end{subfigure}

\begin{subfigure}[b]{0.19\linewidth}
    \centering
    \includegraphics[width=\linewidth]{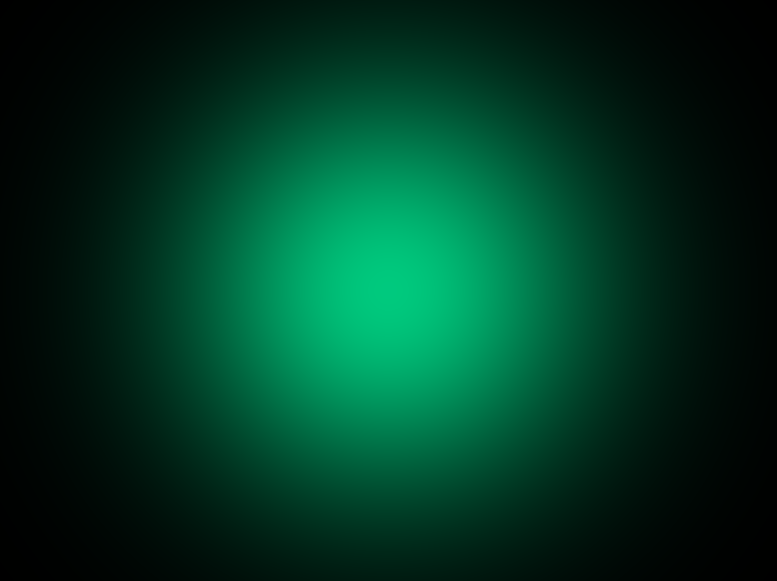}
    \caption*{(d) RGB render}
\end{subfigure}
\begin{subfigure}[b]{0.19\linewidth}
    \centering
    \includegraphics[width=\linewidth]{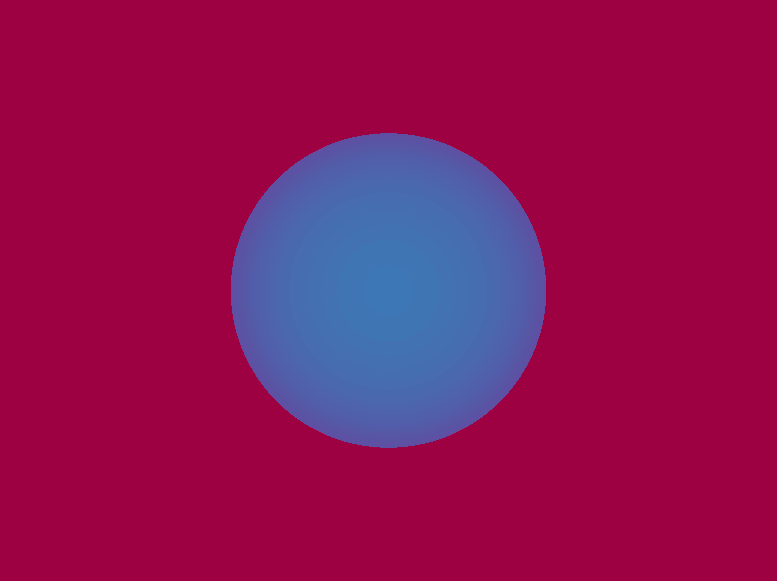}
    \caption*{(e) Depth (Ours)}
\end{subfigure}
\begin{subfigure}[b]{0.19\linewidth}
    \centering
    \includegraphics[width=\linewidth]{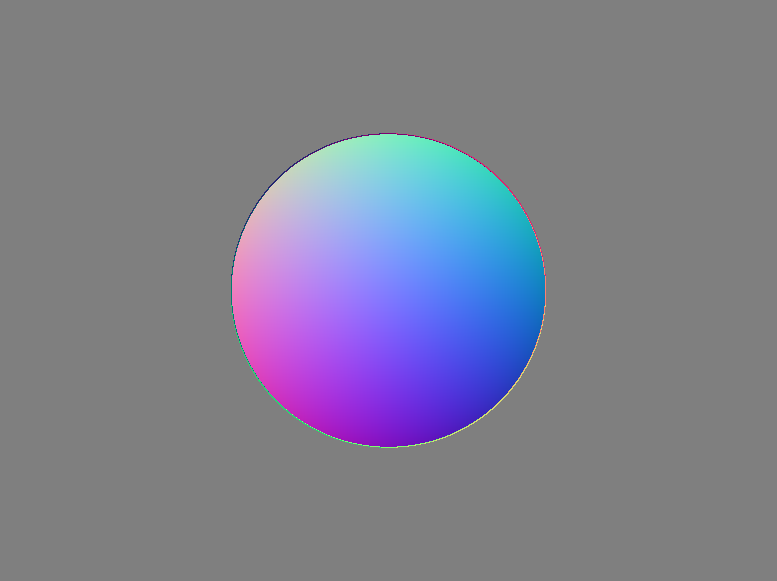}
    \caption*{(f) Normal (Ours)}
\end{subfigure}
\begin{subfigure}[b]{0.19\linewidth}
    \centering
    \includegraphics[width=\linewidth]{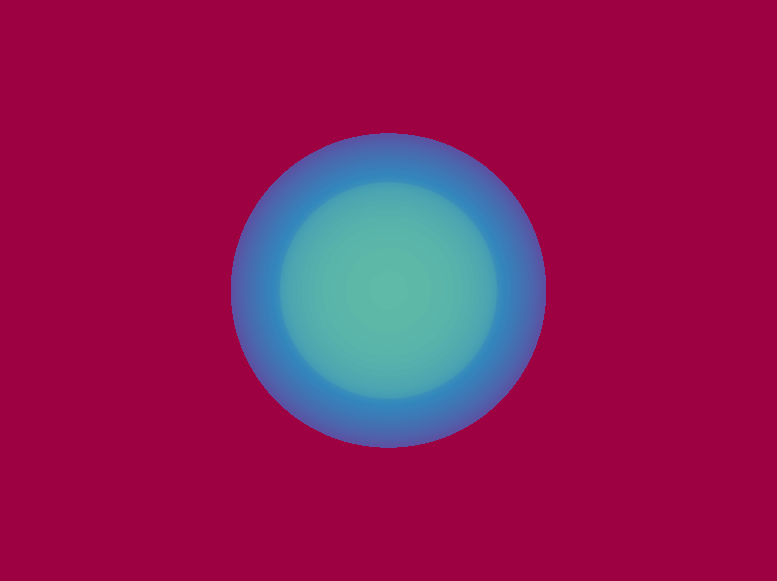}
    \caption*{(g) Depth~\cite{Zhang2026GeometryGrounded}}
\end{subfigure}
\begin{subfigure}[b]{0.19\linewidth}
    \centering
    \includegraphics[width=\linewidth]{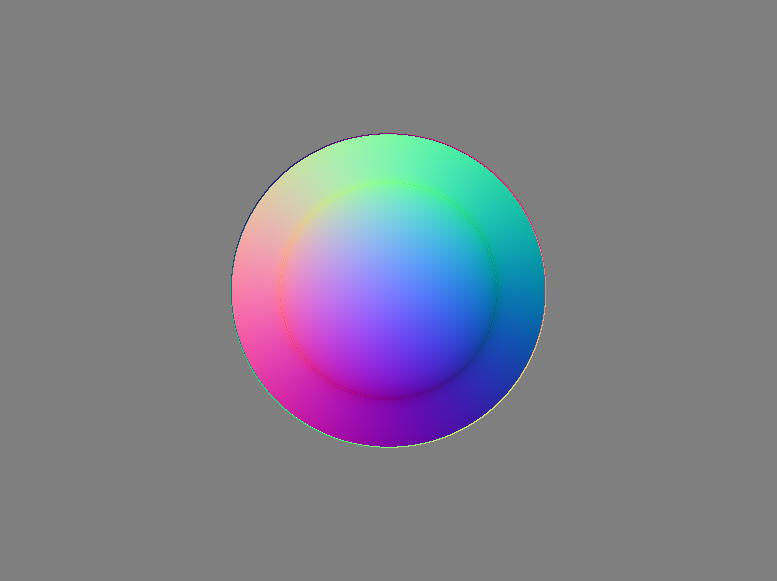}
    \caption*{(h) Normal~\cite{Zhang2026GeometryGrounded}}
\end{subfigure}
\caption{\textbf{Comparison between our continuous attenuation model, and the one proposed in GGGS~\cite{Zhang2026GeometryGrounded}.} \textbf{Top figure:} The sphere illustrated on the left (a) represents the $0.5$-isosurface of a single isotropic Gaussian. Given a camera located below the Gaussian and looking up, we represent the corresponding $0.5$-isosurface of the transmittance for both our continuous attenuation model (b) and GGGS~\cite{Zhang2026GeometryGrounded} (c). Our method produces unbiased, spherical isosurfaces aligned with the visible part of the actual Gaussian isosurface. On the contrary, GGGS produces non-multi-view consistent surfaces misaligned with the Gaussian isosurface, leading to erosion. \textbf{Bottom row} illustrates the actual RGB rendering, depth maps and normal maps obtained for both methods.
}
\label{fig:comparison_transmittance}
\end{figure}

\paragraph{Sketch of proof.} Our model is uniquely determined by enforcing three conditions on the attenuation coefficient:
\begin{enumerate}
    \item First, applying ray-marching (as in NeRF~\cite{mildenhall2020nerf}) with the attenuation-based model should produce the same rendered image as Gaussian Splatting.
    \item Second, the transmittance function should reach its maximum value at the point of maximum contribution along the ray $\rayx = \rayo + \tstar \rayw$.
    \item Third, following the definition of transmittance from Objects as Volumes~\cite{miller2024objectsasvolumes}, we interpret transmittance as visibility probability along a ray. Therefore, we assume an isosurface of the transmittance function for a single Gaussian should be a portion of an ellipsoid centered at the Gaussian center.
\end{enumerate}
We note that, transmittance having ellipsoidal isosurfaces is not a simplification, and uniquely determines the form of the attenuation coefficient in Eq.~\ref{eq:attenuation_gaussian_splatting}. 
Removing this assumption, another natural and simpler possible attenuation can be proposed: $\sigma(\rayx, \rayw) = \frac{1}{2}\left|\rayw \cdot \nabla \log (1-G(\rayx))\right|$.
This attenuation factor corresponds to the continuous transmittance introduced in the concurrent work~\cite{Zhang2026GeometryGrounded}. It does not enforce the aforementioned geometric constraint on isosurfaces.
\cref{fig:comparison_transmittance} illustrates the difference between our transmittance and~\cite{Zhang2026GeometryGrounded}: For a single Gaussian, we rendered the depth as the $0.5$-isosurface of the transmittance, as proposed in~\cite{Zhang2026GeometryGrounded}. The simpler attenuation from~\cite{Zhang2026GeometryGrounded} produces non-spherical surfaces for a single Gaussian. This results in (a) non-multi-view consistent depth maps that are largely misaligned with the actual Gaussian; and (b) eroded surfaces. The qualitative comparison presented in the main paper and in~\cref{fig:comparison_transmittance} illustrates how this misalignment erodes the extracted surfaces.

\proofbox[]{
\begin{proof}
We first derive a continuous transmittance expression for a single Gaussian by imposing consistency with alpha blending while following the reasoning of~\cite{miller2024objectsasvolumes}. We then verify equivalence in the rendering result between alpha blending and the proposed density-based model.

Following~\cite{yu2024gaussian,zhang2024rade}, the alpha-blending formula models the contribution of a single Gaussian to the transmittance as being constant after the point of maximum Gaussian contribution along the ray. Hence, for a scene composed of a single Gaussian $G$:
\begin{equation}
    \forall t\geq\tstar \> , \> \trans(t) = \trans(\tstar) = 1 - G(\rayo + \tstar \rayw) \> .
\end{equation}
We now determine how $\trans$ interpolates from its initial value to $\trans(\tstar)$ before reaching the maximum-contribution point.

Following~\cite{miller2024objectsasvolumes}, we interpret transmittance as the visibility probability along a ray, \ie, the probability of a point not being occluded by the scene along the ray. Therefore, for a fixed ray origin $\rayo$ and scalar $\lambda>0$, the $\lambda$-isosurface of $(\rayw, t) \mapsto \trans(t)$ corresponds to the visible part of a candidate scene surface at decision level $\lambda$.

To simplify the analysis, we temporarily assume that the Gaussian $G$ is isotropic. In this case, the scene surface is assumed to be spherical around $\mean$, such that each isosurface of the transmittance function $(\rayw, t) \mapsto \trans(t)$ contains the visible part of a sphere centered at $\mean$.

For any ray direction $\rayw$ and flight distance $t<\tstar$, the point $\rayo + t\rayw$ is located on the $\lambda$-isosurface of $(\rayw, s) \mapsto \trans(s)$ with $\lambda = \trans(t)$. There exists at least one direction $\rayw'$ and flight distance $t'$ such that $\rayo + t'\rayw'$ is located on the same $\lambda$-isosurface and $\rayo + t'\rayw'$ is the point with maximum contribution along the ray with direction $\rayw'$.

It follows that $\trans(t) = T_{\rayo, \rayw'}(t') = 1 - G(\rayo + t'\rayw')$. Since the Gaussian is isotropic and the spherical $\lambda$-isosurface is centered at $\mean$, we have $||\rayo + t\rayw - \mean|| = ||\rayo + t'\rayw' - \mean||$, such that $G(\rayo + t\rayw) = G(\rayo + t'\rayw')$.

We conclude that, for any ray direction $\rayw$ and flight distance $t$, the transmittance satisfies
\begin{equation}
    \trans(t) =
    \begin{cases}
        1 - G(\rayo + t\rayw) \> , & t < \tstar \\
        1 - G(\rayo + \tstar\rayw) \> , & t \geq \tstar \\
    \end{cases} \> ,
\end{equation}
such that:
\begin{equation}
    \trans(t) = 1 - G(\rayo + \min(t, \tstar)\rayw) \> .
\end{equation}
Differentiating yields
\begin{equation}
    -\ddt \log \trans(t) = \sigma(\rayo + t\rayw, \rayw) \> ,
\end{equation}
with 
\begin{equation}
    \sigma(\rayx, \rayw) = \max\left(0, -\rayw \cdot \nabla \log (1-G(\rayx))\right) \> .
\end{equation}
The same derivation extends to anisotropic Gaussians by replacing spherical level sets with ellipsoidal ones centered at $\mean$.

We now verify equivalence between alpha blending and the proposed density-based model by showing that both formulations produce the same rendered color. Let the Gaussian $G$ be equipped with a directional color field $\colorfield:\calS^2 \rightarrow [0,1]^3$. The rendered color $C_{\rayo,\rayw}$ along a ray from an empty point $\rayo$ in direction $\rayw$ under the density-based model is
\begin{equation}
    \begin{split}
        C_{\rayo,\rayw} 
        & = \int_0^{+\infty} \sigma(\rayo + s\rayw, \rayw) \trans(s) \colorfield(\rayw) \diff s \\
        & = \colorfield(\rayw) \int_0^{+\infty} -\ddt \trans(s) \diff s \\
        & = \colorfield(\rayw) \left[ \trans(0) - \trans(+\infty) \right] \\
        & = \colorfield(\rayw) \left[G(\rayo + \tstar \rayw) - G(\rayo)\right] \\
        & = \colorfield(\rayw) G(\rayo + \tstar \rayw) \> , \\
    \end{split} 
\end{equation}
where $G(\rayo)\simeq 0$ since $\rayo$ lies in empty space. The right-hand side matches the alpha-blending expression for a single Gaussian~\cite{yu2024gaussian,zhang2024rade}, which proves equivalence.

\end{proof}
}
\begin{figure}[t]
\centering

\begin{subfigure}[b]{0.512\linewidth}
    \centering
    \includegraphics[width=\linewidth]{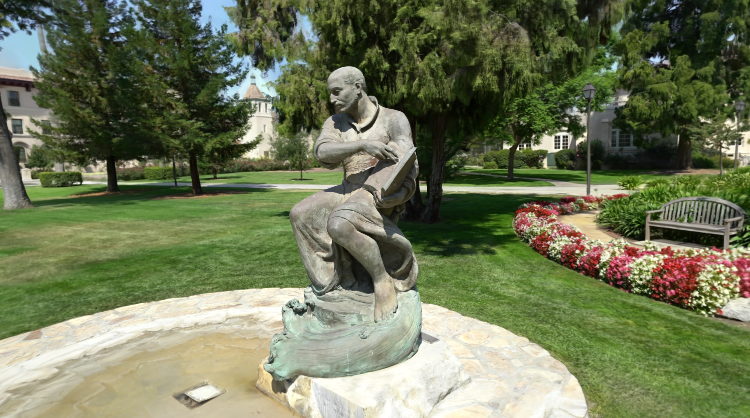}
    \caption{Gaussian Splatting rendering~\cite{zhang2024rade}}
\end{subfigure}
\begin{subfigure}[b]{0.468\linewidth} 
    \centering
    \begin{subfigure}[b]{\linewidth}
        \includegraphics[width=0.48\linewidth]{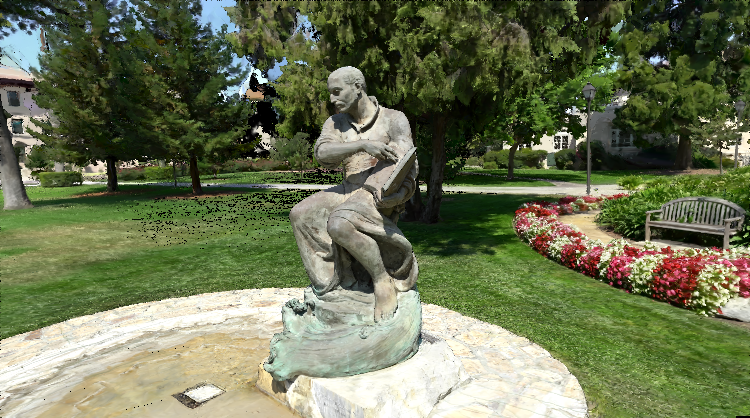}
        \includegraphics[width=0.48\linewidth]{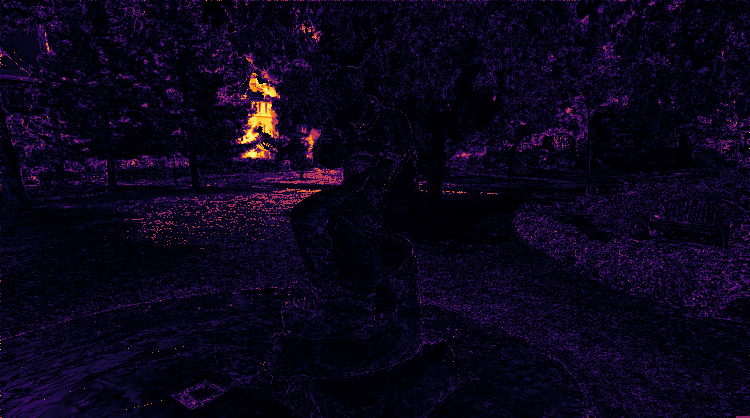}
        \caption{Ray-marching with attenuation}
    \end{subfigure}
    \begin{subfigure}[b]{\linewidth}
        \includegraphics[width=0.48\linewidth]{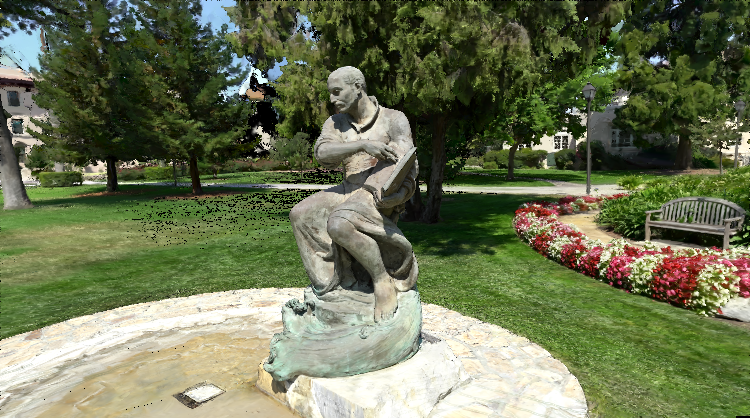}
        \includegraphics[width=0.48\linewidth]{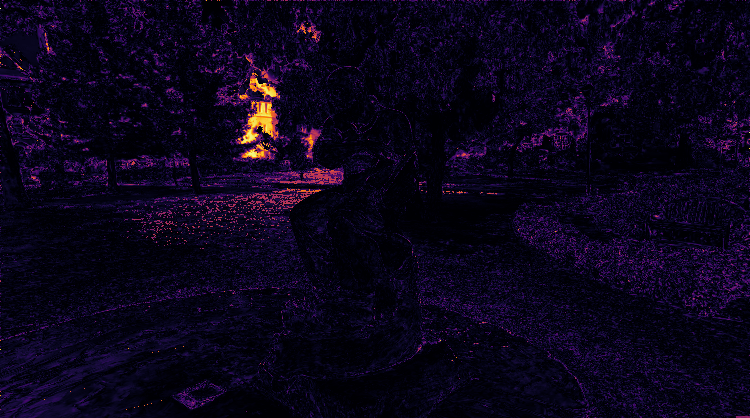}
        \caption{Ray-marching with oriented attenuation}
    \end{subfigure}
\end{subfigure}

\caption{\textbf{Equivalence between Gaussian Splatting rendering and our attenuation-based formulations.} (a) Standard Gaussian Splatting rasterization. (b) Ray-marching our derived attenuation field from Equation~\ref{eq:attenuation_gaussian_splatting}, with corresponding error map. (c) Ray-marching the oriented attenuation after optimizing Gaussians with our oriented normal regularization~\ref{eq:normal_field_regularization}. Both attenuation-based models produce visually similar results, confirming the formal equivalence established in \cref{sec:derivation_attenuation} and the approximation proposed in \cref{sec:oriented_gaussians}. The discrepancies occur precisely when the non-overlapping conditions aren't met, such as the poorly supervised background.}
\label{fig:attenuation_vs_splatting}
\end{figure}

We now extend the lemma to a set of $N$ Gaussians $\{G_i\}_{i=1}^N$ under two assumptions: (a) statistical independence, following RayGauss~\cite{blanc2025raygauss}, and (b) non-overlapping Gaussian primitives~\cite{celarek2025does}. Although the second assumption is approximate, alpha blending makes a similar approximation by sorting Gaussians front-to-back and compositing them as if overlap were negligible in 3D. In practice, this approximation is effective.

\hypbox[]{
\begin{proposition}[Transmittance of Gaussian Splatting]\label{theorem:transmittance_gaussian}
    Assuming statistical independence and non-overlapping Gaussian primitives, the image formation model of a Gaussian Splatting representation---relying on alpha-blending with opacity---is equivalent to a ray-marching volumetric rendering model with the following continuous transmittance function, for any ray origin $\rayo\in\IR^3$ in empty space, $\rayw\in\calS^2$, and $t\in [0,+\infty)$:
    \begin{equation}
        \trans(t) = \prod_{i=1}^N \left(1 - \Gstari(t)\right) \> ,
        \label{eq:transmittance_gaussian_splatting}
    \end{equation}
    where $\Gstari(t) = G_i(\rayo + \min(t, \tstari)\rayw)$.
    The corresponding attenuation coefficient is view-dependent and given by:
    \begin{equation}\label{eq:attenuation_gaussian_splatting}
        \sigma(\rayx, \rayw) = \sum_{i=1}^N \max\left(
            0,
            -\rayw \cdot \nabla \log (1-G_i(\rayx))
        \right) \> .
    \end{equation}
\end{proposition}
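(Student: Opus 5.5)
The plan is to reduce the multi-Gaussian statement to Lemma~\ref{lemma:transmittance_single_gaussian}, applied primitive by primitive, using the two stated assumptions. Independence turns the joint visibility probability into a product, and non-overlap lets each factor be identified with the alpha-blending factor $(1-\alpha_i G_i^*)$.

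\textbf{Transmittance.} Following OaV, $\trans(t)$ is the probability that the segment from $\rayo$ to $\rayo+t\rayw$ is unoccluded. Under statistical independence of the $N$ stochastic primitives, this probability factorizes as the product of the per-primitive visibility probabilities. Each factor is exactly the single-Gaussian transmittance of Lemma~\ref{lemma:transmittance_single_gaussian}, namely $1-\Gstari(t)$, which gives Eq.~\ref{eq:transmittance_gaussian_splatting}.

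\textbf{Attenuation.} Define $\sigma$ through $-\ddt\log\trans(t)=\sigma(\rayo+t\rayw,\rayw)$. Since the logarithm turns the product into a sum, $\sigma$ is the sum of the per-Gaussian attenuations already computed in the lemma. On $t<\tstari$ the $i$-th term is $-\rayw\cdot\nabla\log(1-G_i)$, which is nonnegative there because $G_i$ increases along the ray. On $t>\tstari$ the term is $0$, and the directional derivative there is $\le 0$. This yields the $\max(0,\cdot)$ form of Eq.~\ref{eq:attenuation_gaussian_splatting}, with no dependence on $\rayo$, so $\sigma$ is a genuine field on $\IR^3\times\calS^2$.

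\textbf{Rendering equivalence (main obstacle).} Write $C=\int_0^\infty \sigma\,\trans\,\colorfield\,\diff s$ and split it as $\sum_i\int \sigma_i\,\trans\,\colorfield_i\,\diff s$.

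Non-overlap means that along the ray the supports of the $\sigma_i$ are disjoint intervals $I_i$, ordered front to back. For $s\in I_i$ every factor with $j\neq i$ in $\trans(s)$ is constant:
\begin{itemize}
\item for $j$ in front of $i$, the factor equals $1-G_j(\rayo+\tstarj\rayw)$, since primitive $j$ has already reached its maximum;
\item for $j$ behind $i$, the factor is $\approx 1$, since primitive $j$ has not yet been reached.
\end{itemize}
Hence the $i$-th integral becomes $\prod_{j<i}(1-G_j^*)\,\colorfield_i\int_{I_i}-\ddt(1-\Gstari)\,\diff s$. By the computation in the lemma's proof, the last integral equals $G_i(\rayo+\tstari\rayw)$. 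Summing over $i$ recovers Eq.~\ref{eq:3dgs_rendering} with $G_i^*$ the maximum-contribution evaluation.

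The delicate point is making ``non-overlapping'' precise. Gaussians have infinite support, so I would define it as disjointness of the rising intervals $[0,\tstari]$ over the region where $G_i$ is non-negligible, and then interpret the identification as exact up to tails. I would also note that the ordering used by alpha blending coincides with the ordering of the $\tstari$.
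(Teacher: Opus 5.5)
Your proposal is correct and follows essentially the same route as the paper. Independence gives the product of the single-Gaussian transmittances from Lemma~\ref{lemma:transmittance_single_gaussian}; the paper phrases this as additive attenuation $\sigma=\sum_i\sigma_i$ under exponential transport, which is equivalent to your factorization of visibility probabilities. Non-overlap then yields the front-to-back piecewise decomposition along the ray, with earlier factors saturated at $1-G_j(\rayo+\tstarj\rayw)$ and later ones $\approx 1$, so each piece integrates to its alpha-blending term. Your explicit check that the $\max(0,\cdot)$ form depends only on $(\rayx,\rayw)$, and your attempt to make ``non-overlapping'' precise up to tails, are small refinements of what the paper leaves at the level of $\simeq$ approximations.
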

}

\paragraph{Sketch of proof.} Extending the previous proposition to multiple Gaussians is straightforward, as statistical independence between Gaussians implies that the total transmittance is the product of the individual transmittances. Finally, assuming Gaussians do not overlap in the 3D domain allows for proving the equivalence between the two image formation models.
Figure~\ref{fig:attenuation_vs_splatting} compares images rendered by ray-marching our attenuation field and by the standard Gaussian Splatting rasterizer, confirming visual consistency.

\proofbox[]{
\begin{proof}
    We first derive the transmittance expression from statistical independence. We then verify equivalence between alpha blending and the proposed density-based model.

    We assume the primitives to contribute independently to attenuation, such that the total coefficient is the sum of the individual coefficients: $\sigma(\rayx, \rayw) = \sum_i \sigma_i(\rayx, \rayw)$.
    Equivalently, under exponential transport, the total transmittance is the product of the individual transmittances:
    \begin{align}
        \trans(t) &= \exp\left( - \int_0^t \sum_i \sigma_i(\rayo + s\rayw, \rayw) \diff s \right) \\
        &= \prod_i \exp\left( - \int_0^t \sigma_i(\rayo + s\rayw, \rayw) \diff s \right) \\
        &= \prod_i \transi(t) \> ,
    \end{align}
    where $\transi(t)$ denotes the transmittance of the $i$-th Gaussian. Using \cref{lemma:transmittance_single_gaussian}, we obtain
    \begin{equation}
        \trans(t) = \prod_i \left(1 - \Gstari(t)\right) \> .
    \end{equation}

    We now verify equivalence between alpha blending and the proposed density-based model by comparing rendered color values.
    
    Consider an empty ray origin $\rayo$ and direction $\rayw$. Under the non-overlap assumption, Gaussians can easily be ordered along the ray, e.g., by projecting their centers onto the ray. We retain the $M$ Gaussians with positive projection and order them by increasing projected distance from $\rayo$.
    As Gaussians do not overlap, there exist $t_0 < \cdots < t_M$ such that
    \begin{align}
        & t_0  = 0 \\
        & t_M  = +\infty \\
        & \forall t\in[t_{i-1}, t_i] \> , \> \sigma(\rayo + t\rayw, \rayw) \simeq \sigma_i(\rayo + t\rayw, \rayw) \\
        & \forall i\in[1,M] \> , \> \Gstari(t_{i-1}) \simeq 0 \\
        & \forall i\in[1,M] \> , \> \Gstari(t_i) = \Gstari(\tstari) \> \> \> .
    \end{align}
    Similarly, we equip each Gaussian $G_i$ with a directional color field $\colorfield_i:\calS^2 \rightarrow [0,1]^3$ and assume the total color field $\colorfield:\IR^3\times\calS^2\rightarrow[0,1]^3$ to verify $\colorfield(\rayo + t\rayw, \rayw) \simeq \colorfield_i(\rayw)$ for all $t\in[t_{i-1}, t_i]$.
    As a result, for any $i\leq M$ and $t\in[t_{i-1}, t_i]$, we have:
    \begin{equation}
        \Gstarj(t) = \begin{cases}
            G_j(\rayo + \tstarj\rayw) & \text{if } j < i \\
            G_i(\rayo + t\rayw) & \text{if } j = i \\
            0 & \text{if } j > i 
        \end{cases} \> \> \> ,
    \end{equation}
    such that for all $t\in[t_{i-1}, t_i]$:
    \begin{equation}
        \transj(t) = \begin{cases}
            1 - G_j(\rayo + \tstarj\rayw) & \text{if } j < i \\
            1 - G_i(\rayo + t\rayw) & \text{if } j = i \\
            1 & \text{if } j > i 
        \end{cases} \> \> \> ,
    \end{equation}
    and, still for all $t\in[t_{i-1}, t_i]$:
    \begin{equation}
        \begin{split}
            \trans(t) 
            & = \prod_{j=1}^M \transj(t) \\ 
            & = \transi(t) \prod_{j=1}^{i-1} \transj(\tstarj) \> \> \> .
        \end{split}
    \end{equation}
    The rendered color value $C_{\rayo,\rayw}$ along the ray is then given by:
    \begin{equation}
        \begin{split}
            C_{\rayo,\rayw}
            & = \int_0^{+\infty} \sigma(\rayo + s\rayw, \rayw) \trans(s) \colorfield(\rayo + s\rayw, \rayw) \diff s \\ 
            & = \sum_{i=1}^M \int_{t_{i-1}}^{t_i} \sigma(\rayo + s\rayw, \rayw) \trans(s) \colorfield(\rayo + s\rayw, \rayw) \diff s \\
            & = \sum_{i=1}^M \int_{t_{i-1}}^{t_i} 
            \sigma_i(\rayo + s\rayw, \rayw) 
            \left(\transi(s) \prod_{j=1}^{i-1} \transj(\tstarj) \right)
            \colorfield_i(\rayw) \diff s \\
            & = \sum_{i=1}^M 
            \colorfield_i(\rayw)
            \left(\int_{t_{i-1}}^{t_i} \sigma_i(\rayo + s\rayw, \rayw) \transi(s) \diff s\right)
            \prod_{j=1}^{i-1} \transj(\tstarj) \\
            & = \sum_{i=1}^M 
            \colorfield_i(\rayw)
            \left[\transi(t_i) - \transi(t_{i-1})\right]
            \prod_{j=1}^{i-1} \transj(\tstarj) \\
            & = \sum_{i=1}^M 
            \colorfield_i(\rayw)
            \Gstari(\tstari)
            \prod_{j=1}^{i-1} \left(1-\Gstarj(\tstarj)\right) \> \> \> .
        \end{split}
    \end{equation}
    The right-hand side is exactly the alpha-blending color for multiple Gaussians~\cite{yu2024gaussian,zhang2024rade}, proving equivalence.
\end{proof}
}

\subsection{Oriented Gaussians}\label{sec:oriented_gaussians}

The attenuation factor from the previous section is not reciprocal: $\sigma(\rayx,\rayw)\neq\sigma(\rayx,-\rayw)$ in general. Each Gaussian $G_i$ contributes to attenuation only before the ray reaches its point of maximum contribution $\rayo+\tstari\rayw$, so reversing the ray changes where transmittance varies, breaking reciprocity. This asymmetry primarily manifests when a ray \emph{exits} an object (see \cref{fig:vacancy}).

Reciprocity is a necessary condition for applying the OaV results~(see main paper), which links attenuation to the direction-independent $\nabla \log v$. Without it, back-facing Gaussians introduce a systematic bias that prevents deriving a closed-form vacancy expression.

\begin{figure}
\centering
\begin{subfigure}{0.3\textwidth}
    \includegraphics[width=\textwidth]{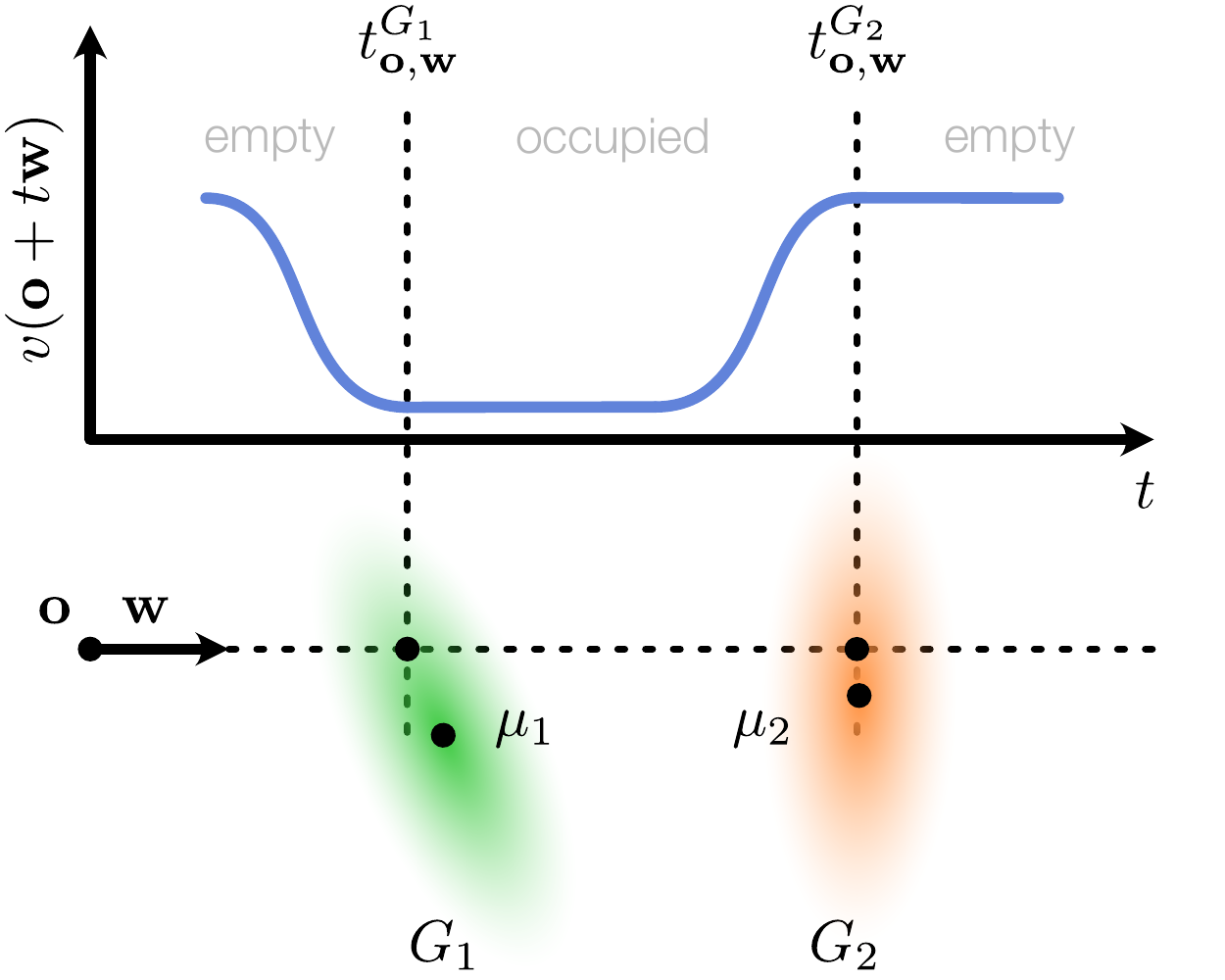}
    \caption{Initial attenuation.}
    \label{fig:vacancy_tstar}
\end{subfigure}
\hspace{1cm}
\begin{subfigure}{0.3\textwidth}
    \includegraphics[width=\textwidth]{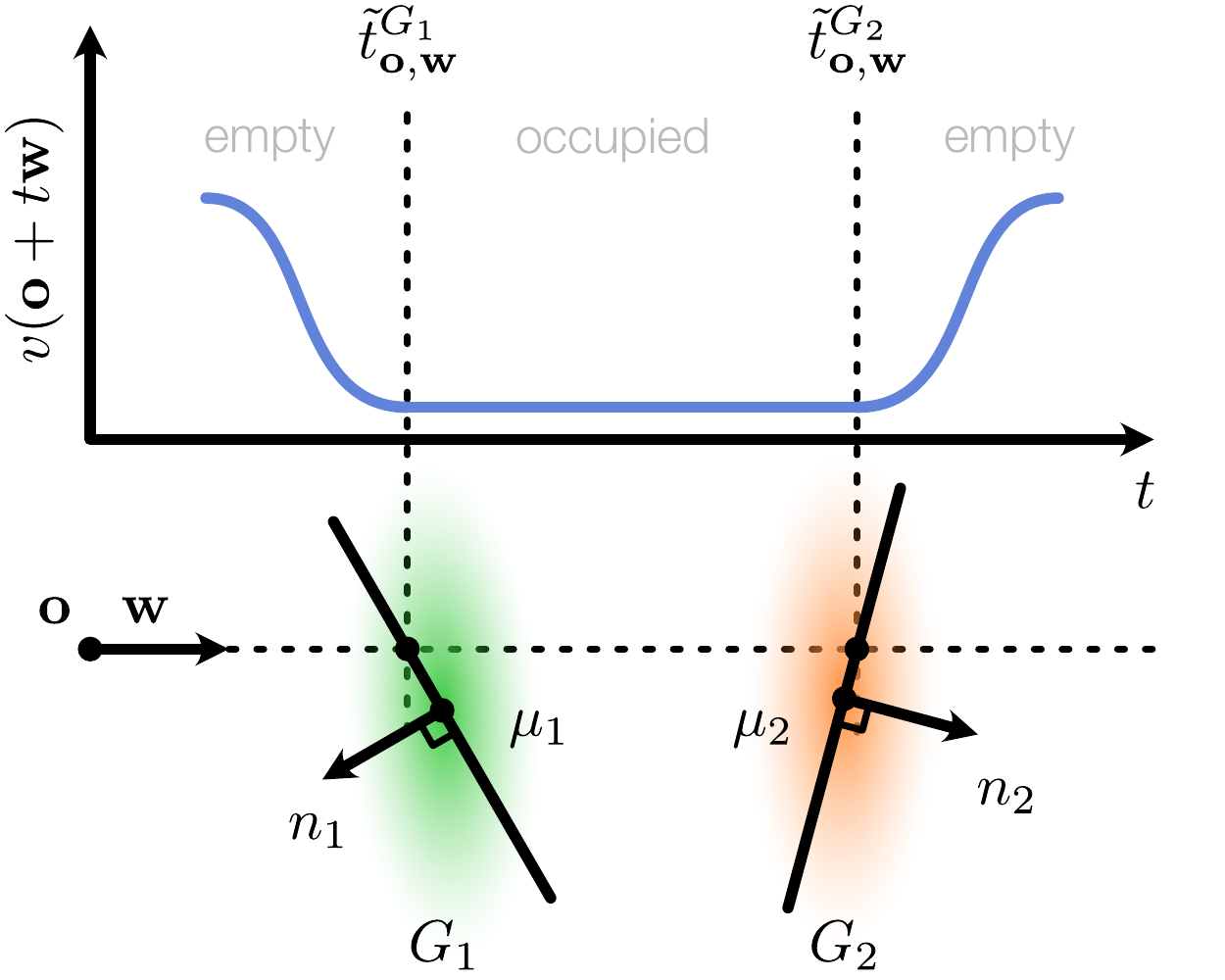}
    \caption{Oriented attenuation.}
    \label{fig:vacancy_tplane}
\end{subfigure}
    \caption{\textbf{Evolution of the vacancy along a ray.} In contrast to the initial attenuation we derived for Gaussian Splatting (a), our oriented Gaussian formulation (b) allows for a reciprocal formulation of the vacancy. This figure shows how our formulation corrects the bias present in (a). Indeed, in (a) vacancy starts changing just before each Gaussian. This poses a problem for $G_2$, since vacancy starts evolving from occupied to empty before crossing $t_{o,w}^{G_2}$. You can see in that in our formulation (b) this change happens after crossing this critical point. This renders the evaluation of vacancy reciprocal}
    \label{fig:vacancy}
\end{figure}

To restore reciprocity, we \emph{orient} each Gaussian with a normal vector and modify the attenuation accordingly, assuming the scene geometry is \emph{wrapped} by oriented Gaussians each covering a local surface patch.

\hypbox[]{
\begin{definition}[Oriented Gaussian]\label{definition:oriented_gaussians_appendix}
    Consider a parameter $n_i\in\calS^2$ associated to each $G_i$ --- we call it the normal vector of the Gaussian. We define the oriented attenuation coefficient of the Gaussian $G_i$ as
    \begin{equation}
        \orientedsigma_i(\rayx, \rayw) = \charx |\rayw \cdot \nabla\log (1-G_i(\rayx))| \> .
        \label{eq:oriented_attenuation_appendix}
    \end{equation}
\end{definition}
}

By introducing $n_i$, we replace the direction-dependent split at $\tstari$ with a fixed half-space: $\orientedsigma_i$ is non-zero where $n_i^T(\rayx - \meani) \geq 0$ and zero otherwise. Since this half-space depends only on the Gaussian's position and normal---not the ray direction---the resulting attenuation is reciprocal by construction, enabling the application of OaV's results, and thus the derivation of a closed-form equation for vacancy (see~\cref{vector_field}).

This introduces a consistency requirement: the region where transmittance is constant must lie inside the object (see \cref{fig:vacancy}), which holds when Gaussians properly \textit{wrap} the surface. Concretely, both formulations agree for non-occluded, front-facing Gaussians when the point of maximum contribution $\rayo+t^{G_1}_{\rayo,\rayw}\rayw$ aligns with the plane intersection $\rayo+\tilde{t}^{G_1}_{\rayo,\rayw}\rayw$.

\paragraph{Normal alignment loss.} We promote Gaussians to ``wrap'' the scene through the following normal alignment loss.
\begin{equation}
    \mathcal{L}_{\text{N}} = \sum_{p}
    1 - N(p) \cdot \nabla D(p) \> ,
    \label{eq:normal_field_regularization_appendix}
\end{equation}
where $N(p)$ is the rendered oriented normal at pixel $p$ and $\nabla D(p)$ is the image-space gradient of the rendered depth. Following~\cite{miller2024objectsasvolumes}, rendering depth and normals $n_i$ with the Splatting rasterizer~\cite{zhang2024rade,yu2024gaussian} yields respectively the expected depth of the point of maximum contribution $\rayo+t^{G_1}_{\rayo,\rayw}\rayw$, and the expected orientation of the Gaussians along the ray. As a result, $\mathcal{L}_{\text{N}}$ enforces $\rayo+t^{G_1}_{\rayo,\rayw}\rayw$ and the plane intersections $\rayo+\tilde{t}^{G_1}_{\rayo,\rayw}\rayw$ to be similar on average, encouraging the two formulations to agree.

\subsection{Deriving Surface Normals}

The previous section established that a Gaussian Splatting representation is approximately equivalent to an attenuation-based model. When Gaussians are aligned on a surface and properly wrap the scene geometry, orienting each Gaussian with a normal vector yields a reciprocal attenuation coefficient $\sigma$. Gaussian Splatting then satisfies the reciprocal exponential transport assumption of OaV~\cite{miller2024objectsasvolumes}, and their result provides an explicit expression of $\nabla\log v$.

This section leverages this gradient field to derive an explicit normal field on the stochastic surface. This normal field serves two purposes: supervising oriented Gaussians during training to enforce proper surface wrapping, and extracting a mesh from the trained representation.

\hypbox[]{
\begin{definition}[Gaussian vector field]
    Consider a Gaussian Splatting representation of $N$ oriented Gaussians.
    The Gaussian vector field $V$ is defined as the gradient of the log-vacancy. Following OaV:
    \begin{equation}
        \begin{split}
            V(x) = \nabla\log v(x) 
            & = \sum_{i=1}^N \charx \nabla\log (1-G_i(x)) \> .
        \end{split}
        \label{vector_field}
    \end{equation}
\end{definition}
}

\proofbox[]{
\begin{proof}
    Combining the result of OaV and~\cref{eq:oriented_attenuation_appendix}, for any $\rayx\in\IR^3$ and $\rayw\in\calS^2$, yields:
    \begin{equation}
        \begin{split}
            |\rayw \cdot \nabla\log v(\rayx)| 
            & = \sum_i \charx |\rayw \cdot \nabla\log (1-G_i(\rayx))| \> .
        \end{split}
    \end{equation}
    Under the wrapping assumption, each gradient term $\nabla\log (1-G_i(\rayx))$ points from occupied space ($v=0$) toward empty space ($v=1$) when $\charx > 0$. As a result, we assume that, locally, each term $\charx \rayw \cdot \nabla\log (1-G_i(\rayx))$ either has the same sign as $\rayw \cdot \nabla\log v(\rayx)$ or is equal to zero.
    Hence, we can disambiguate the absolute values within the sum and write:
    \begin{equation}
        \begin{split}
            \rayw \cdot \nabla\log v(\rayx) 
            & = \sum_i \rayw \cdot \charx \nabla\log (1-G_i(\rayx)) \> .
        \end{split}
        \label{eq:vector_field_sign}
    \end{equation}
    Since Eq.~\ref{eq:vector_field_sign} holds for arbitrary direction $\rayw\in\calS^2$, Eq.~\ref{vector_field} follows.
\end{proof}
}

\hypbox[]{
\begin{proposition}[Gaussian normal field]
    Consider a Gaussian Splatting representation of $N$ oriented Gaussians.
    The Gaussian normal field $\calN:\IR^3 \to \calS^2$ is defined as the normalized Gaussian vector field $V:\IR^3 \to \IR^3$:
    \begin{equation}
        \calN(x) = 
        \begin{cases}
            \frac{V(x)}{\|V(x)\|} & \text{if } \|V(x)\| > 0 \\
            0 & \text{otherwise}
        \end{cases} \> \> \> .
        \label{eq:normal_field_appendix}
    \end{equation}
    In a neighborhood of the surface, $\|V\|>0$ and $\calN$ coincides with the true normal field of the expected stochastic surface of $M$.
\end{proposition}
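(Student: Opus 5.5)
The plan is to identify the true normal of the expected stochastic surface with the normalized gradient of the vacancy, and then read off that gradient from the Gaussian vector field. In the \textit{Objects as Volumes} setting, the expected stochastic surface of $M$ is a level set of the occupancy. We take it to be the $0.5$-isosurface $\{x : v(x) = 1/2\}$; any fixed level works identically. At a regular point of $v$, the outward unit normal of a level set, pointing from occupied ($v=0$) toward empty ($v=1$) space, is $\nabla v/\|\nabla v\|$. Since $v>0$ near the surface, we have
\[
\nabla\log v = \frac{\nabla v}{v},
\]
with $v>0$ a positive scalar. Normalizing therefore removes the factor $1/v$, so $\nabla v/\|\nabla v\| = \nabla\log v/\|\nabla\log v\|$. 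By the preceding definition of the Gaussian vector field (Eq.~\ref{vector_field}), whose derivation is already justified under the wrapping assumption, $\nabla\log v = V$. Hence $\calN = V/\|V\|$ coincides with the true normal wherever $\|V\|>0$. The claim then reduces to showing that $\|V\|>0$ in a neighborhood of the surface.

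For non-vanishing I would argue through the attenuation. By Eq.~\ref{objects_as_volumes}, $|\rayw\cdot V(x)| = \sum_i \charx\, |\rayw\cdot\nabla\log(1-G_i(x))|$ for all $\rayw$. So $V(x)=0$ forces every active term to vanish for every direction, that is, $\nabla G_i(x)=0$ for all $i$ with $n_i^T(x-\meani)\ge 0$. Note that
\[
\nabla\log(1-G_i) = \frac{G_i\,\Sigma_i^{-1}(x-\meani)}{1-G_i},
\]
which vanishes only at $x=\meani$ or where $G_i$ underflows to zero. In addition, the sign-consistency assumption used in the proof of Eq.~\ref{vector_field} rules out cancellation: all active terms lie in a common half-space with respect to $\nabla\log v$.

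It remains to show that some active term is nonzero on a neighborhood. At a surface point, $v=1/2\in(0,1)$, so the transmittance is strictly decreasing somewhere along every ray through the point. The wrapping assumption then places $x$ in the active half-space of at least one Gaussian with $G_i(x)>0$ and $x\neq\meani$, so that term is nonzero. By continuity of $V$ away from the measure-zero planes $n_i^T(x-\meani)=0$, strict positivity persists on an open neighborhood.

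The main obstacle is this last step. It is not automatic: we must ensure that the surface does not pass exactly through a Gaussian center, or lie on one of the half-space boundary planes where $V$ jumps. I would handle this by appealing to wrapping as the standing assumption. Under wrapping, the $0.5$-level lies strictly on the low-density side of each active Gaussian, between $\meani$ and $\meani+3s_in_i$ as in the pivot construction, which excludes both degenerate locations. Away from these locations, $V$ is smooth and the implicit function theorem makes the level set a regular surface.
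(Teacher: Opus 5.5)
Your argument reaches the claim at the paper's level of rigor, but by a different route. The paper never looks at level sets of $v$ directly. It posits, following \emph{Objects as Volumes}, that near the surface the SDF of $\partial M$ is $f=\frac{1}{s}\Psi^{-1}(v)$ for an increasing sigmoid $\Psi$. It notes that $\nabla f$ is then a positive multiple of $\nabla\log v$, and lets the Eikonal constraint $\|\nabla f\|=1$ do the normalization.

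Your first step is the same monotone-reparametrization idea, applied to the level set rather than to an SDF: dividing out the positive factor $1/v$ gives $\nabla v/\|\nabla v\| = V/\|V\|$. On $\partial M$ itself this needs no sigmoid/SDF hypothesis, which is a genuine merit.

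The paper's hypothesis buys two things you do not get.
\begin{itemize}
\item Off the surface, the paper's ``true normal field'' is $\nabla f$, i.e.\ the normal at the nearest surface point. This equals the normal of the $v$-level set through $x$ only because $f=\frac{1}{s}\Psi^{-1}(v)$ forces the level sets of $v$ to be parallel to $\partial M$. In the neighborhood you are therefore proving agreement with level-set normals, which is the paper's statement only under that assumption.
\item Non-vanishing comes for free, though the paper leaves it implicit. Since $\nabla f=c\,\nabla\log v$ with $c=v\,(\Psi^{-1})'(v)/s$ finite and positive for $v\in(0,1)$, the constraint $\|\nabla f\|=1$ gives $\|V\|=1/c>0$.
\end{itemize}
Your route instead characterizes exactly where $V$ can vanish from the Gaussian structure. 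That is more concrete, but it forces you to import a geometric wrapping hypothesis.

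Your last step asks more of that hypothesis than it can give. You require the surface to avoid the plane $n_j^T(x-\mu_j)=0$ of every active Gaussian. But each wrapping Gaussian has $\mu_j$ on the occupied side, and the plane through it is unbounded, so that plane generally crosses the $0.5$-level set somewhere. Since $G_j>0$ everywhere, Gaussian $j$ is active at those crossing points. The pivot heuristic ``between $\meani$ and $\meani+3s_in_i$'' does not rule this out.

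You do not need plane avoidance, nor continuity of $V$. Your own criterion works pointwise: from $|\rayw\cdot V|=\sum_i \charxnorm\,|\rayw\cdot\nabla\log(1-G_i(x))|$, the condition $V(y)=0$ forces every term active at $y$ to vanish. So it suffices to have one Gaussian $k$ whose \emph{open} active half-space contains $x$, with $x\neq\mu_k$. On any small ball around $x$ that stays inside that half-space and avoids $\mu_k$, the $k$-th term is nonzero, hence $V\neq 0$ there, whatever other planes cross the ball.

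Separately, the observation that transmittance decreases somewhere along each ray does not place an active Gaussian at $x$ itself, so it does no work in your argument.
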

}

\proofbox[]{
\begin{proof}
    The normal field of a scene $E\subset\IR^3$ with surface $\partial E \subset E$ is defined as the gradient of a signed distance function (SDF) $f:\IR^3 \to \IR$ satisfying
    \begin{align}
        f(x) & \leq 0 \quad \text{if } x \in E, \\
        f(x) & > 0 \quad \text{if } x \notin E, \\
        f(x) & = 0 \quad \text{if } x \in \partial E,\\
        \|\nabla f(x)\| & = 1 \quad \text{for all } x \in \IR^3 \quad \text{(Eikonal equation).}
        \label{eq:signed_distance_field}
    \end{align}
    Following \textit{Objects as Volumes}~\cite{miller2024objectsasvolumes}, we assume the SDF $f$ representing the expected stochastic surface $\partial M$ is locally a scalar function of the vacancy $v$. Specifically, we assume there exists an increasing sigmoidal function $\Psi:\IR \to (0,1)$ and a scalar $s>0$ such that, for all $x$ in a neighborhood of the surface:
    \begin{equation}
        f(x) = \frac{1}{s} \Psi^{-1}(v(x)) \> .
    \end{equation}
    Since $\Psi^{-1}$ and $\log$ are strictly increasing, $\nabla f$ and $\nabla \log v$ are collinear with a positive scalar factor in a neighborhood of the surface.
    The Eikonal constraint $\|\nabla f\|=1$ then implies $\nabla f = \nabla\log v \,/\, \|\nabla\log v\|$, so $\calN$ coincides with the true surface normal field near the surface.
\end{proof}
}

In summary, under our oriented Gaussian assumption, the normal field of the stochastic surface represented by Gaussian Splatting admits an explicit closed-form expression through the Gaussian vector field $V$ (Eq.~\ref{vector_field}). This expression enables both surface-aware regularization during training and mesh extraction at inference time. We can see in~\cref{fig:vector_field_alignment}, that the normalized vector field (i.e, the normal field) queried at the median depth (i.e, 0.5-isosurface our transmittance) has been regularized to match the rendered normals only through rendering losses. We show that this is the case in theory too in~\cref{lemma:normal_field_supervision_appendix}.

\begin{figure}[ht!]
\centering
\setlength{\tabcolsep}{1pt}
\renewcommand{\arraystretch}{0.5}

\begin{tabular}{ccc}
    \small Ground Truth & \small Rendered Normals & \small Normal Field $\mathcal{N}$ \\[2pt]
    \includegraphics[width=0.323\linewidth]{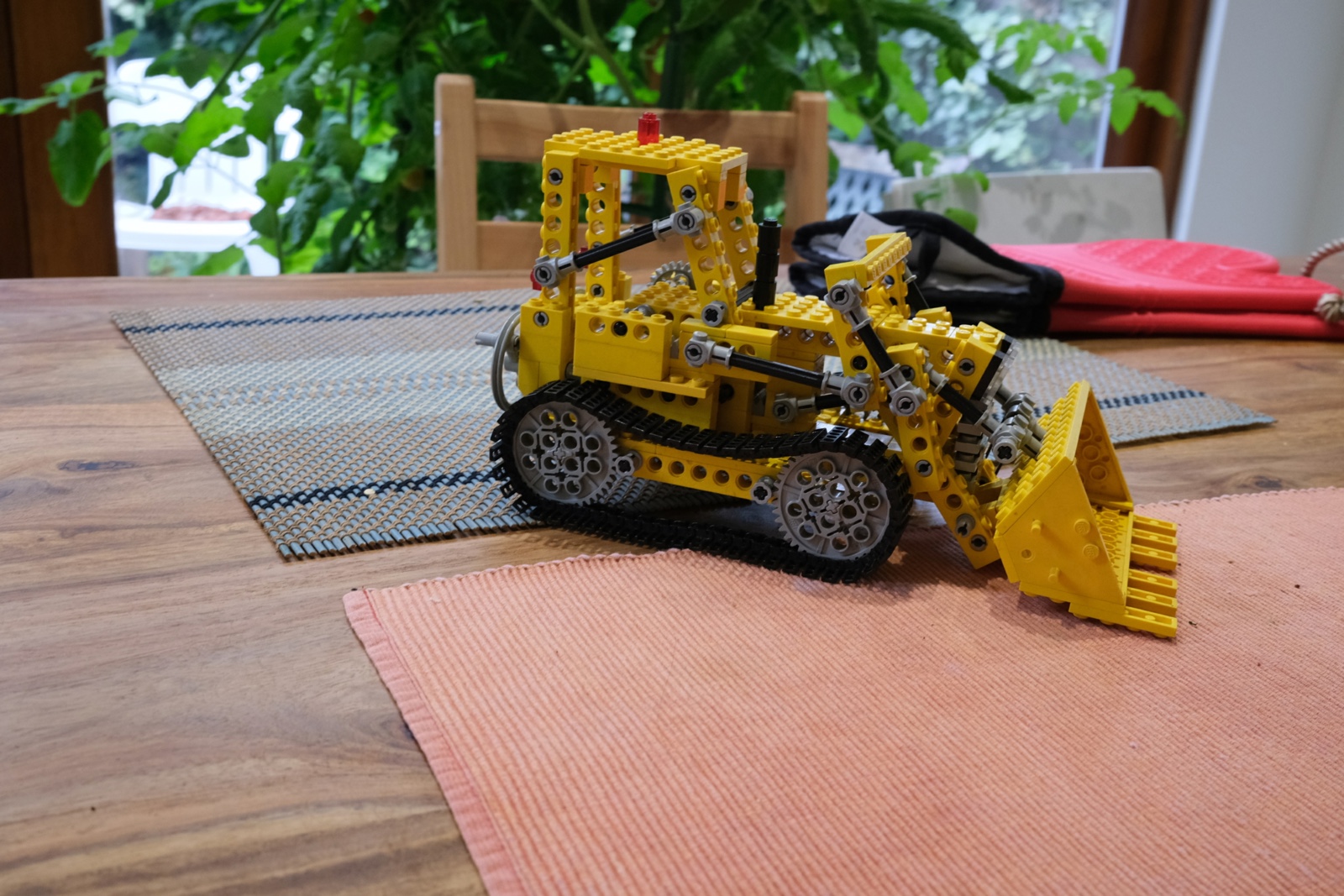} &
    \includegraphics[width=0.323\linewidth]{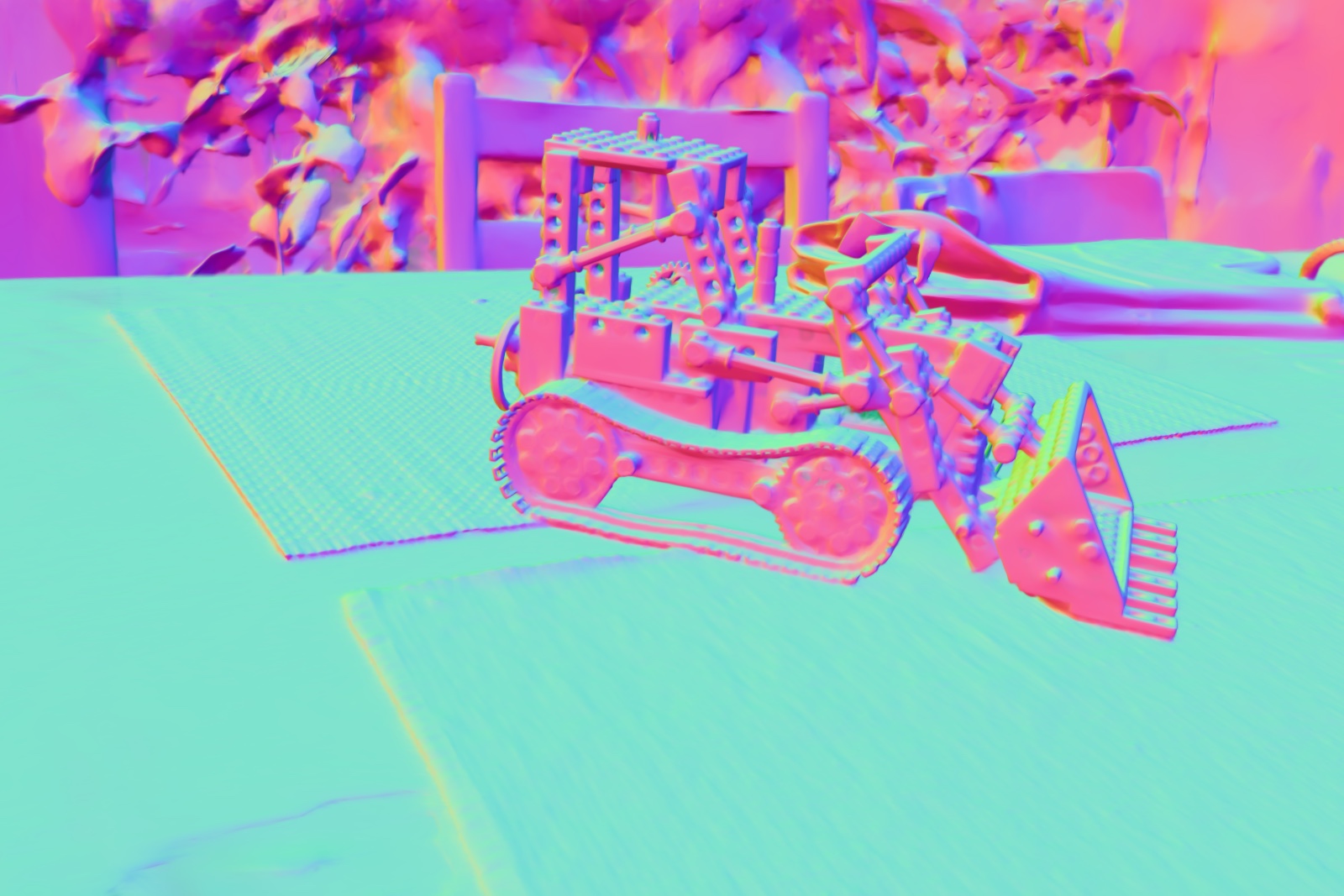} &
    \includegraphics[width=0.323\linewidth]{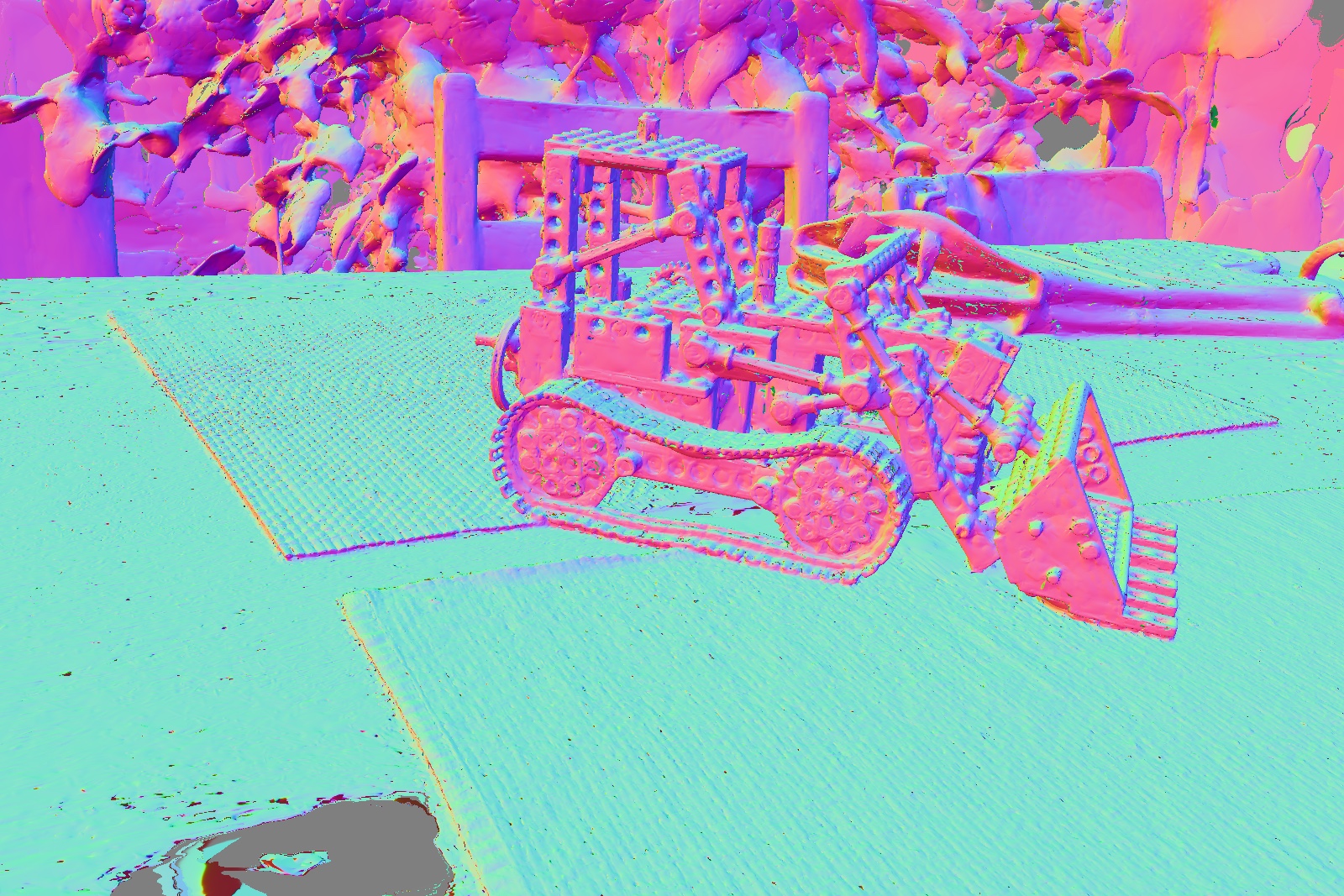} \\[1pt]
    \includegraphics[width=0.323\linewidth]{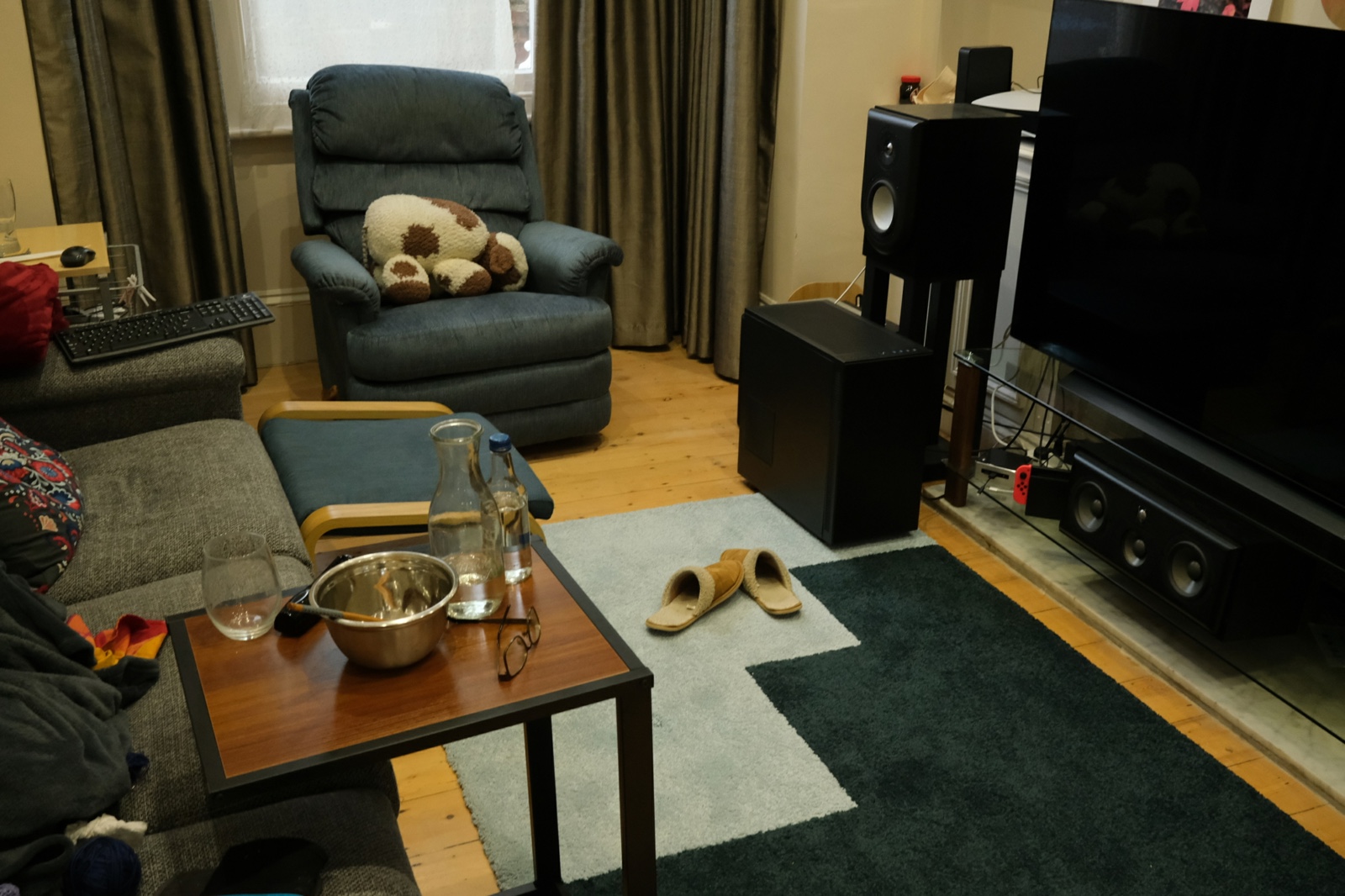} &
    \includegraphics[width=0.323\linewidth]{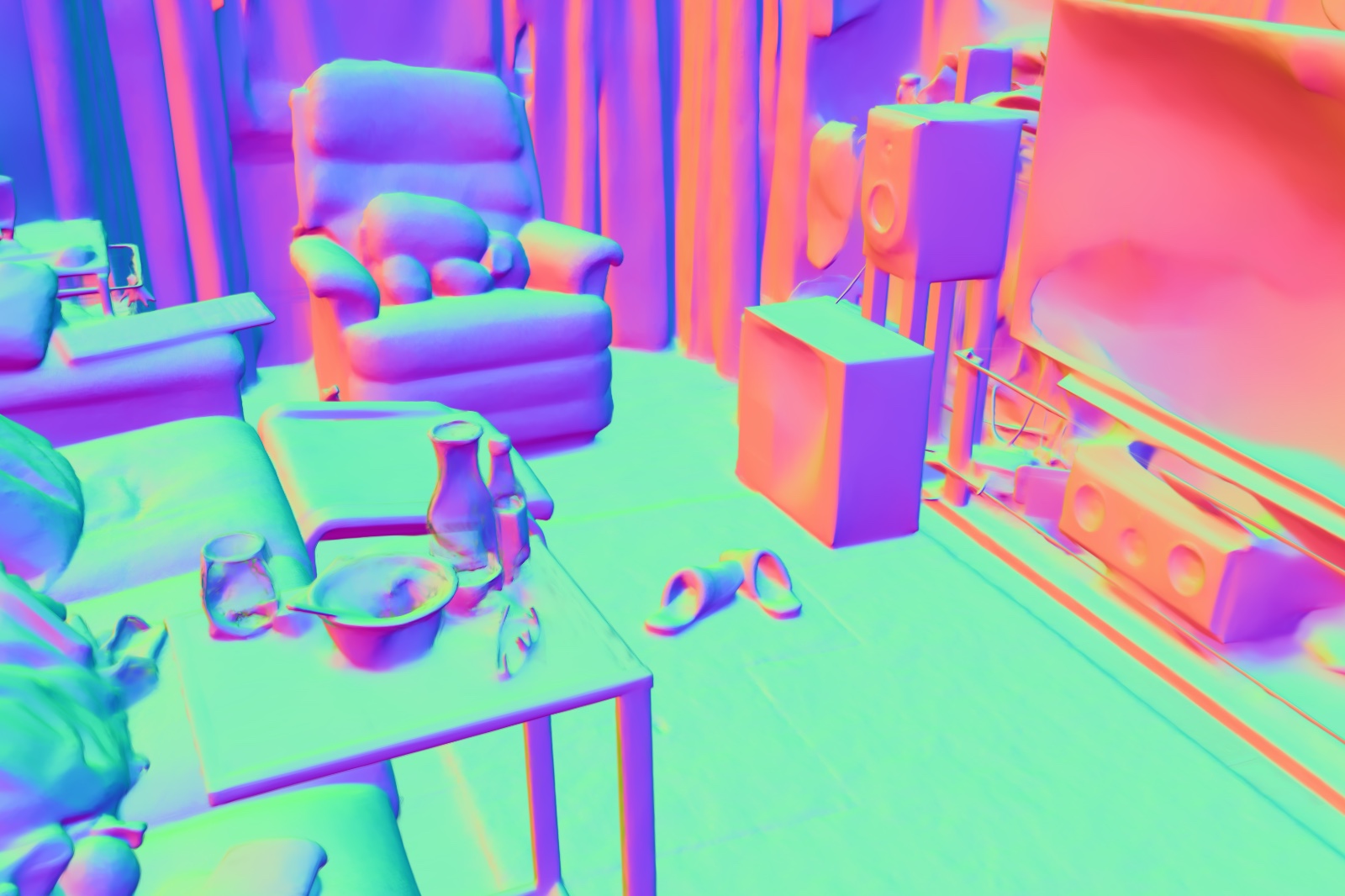} &
    \includegraphics[width=0.323\linewidth]{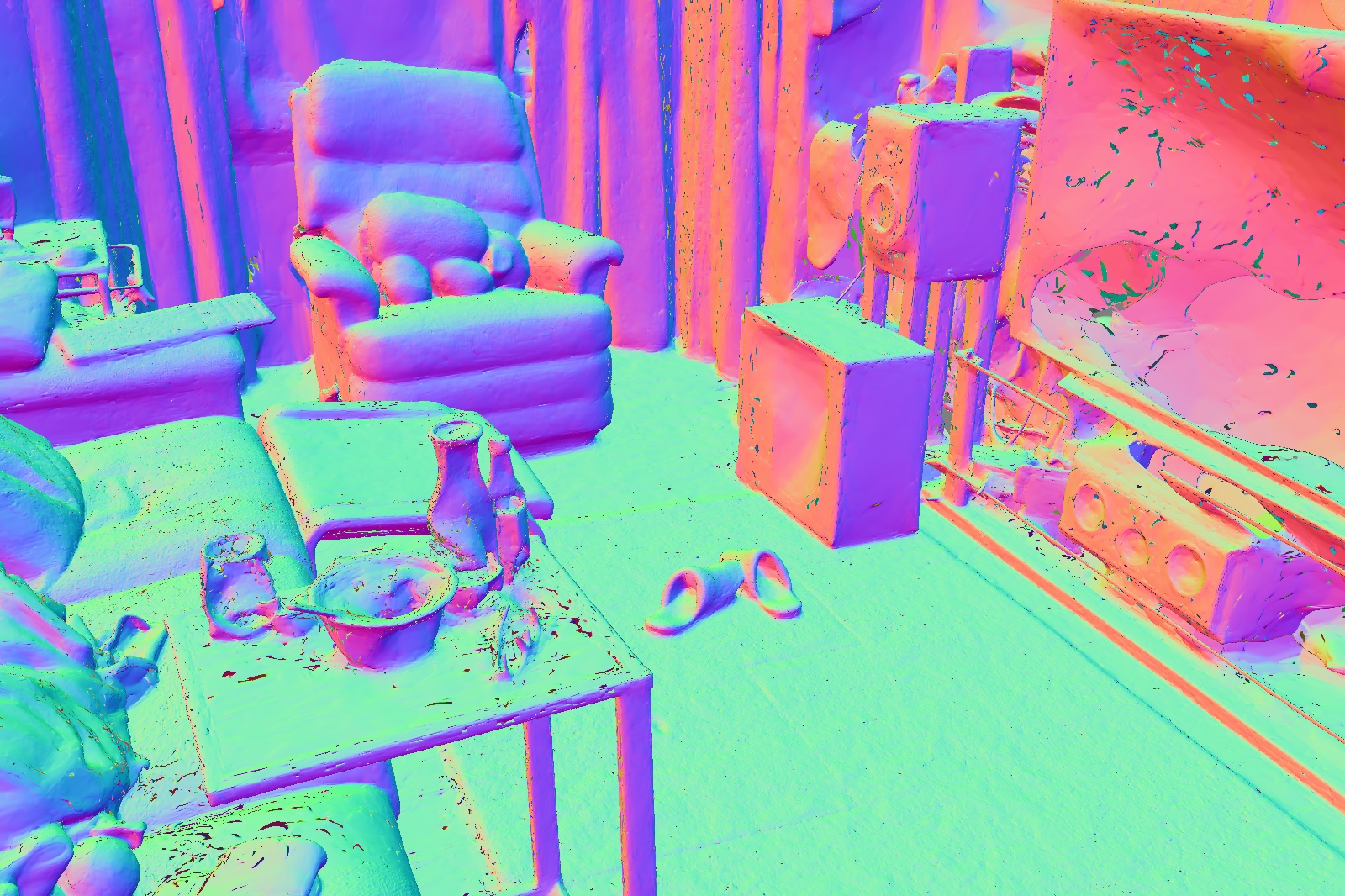} \\[1pt]
    \includegraphics[width=0.323\linewidth]{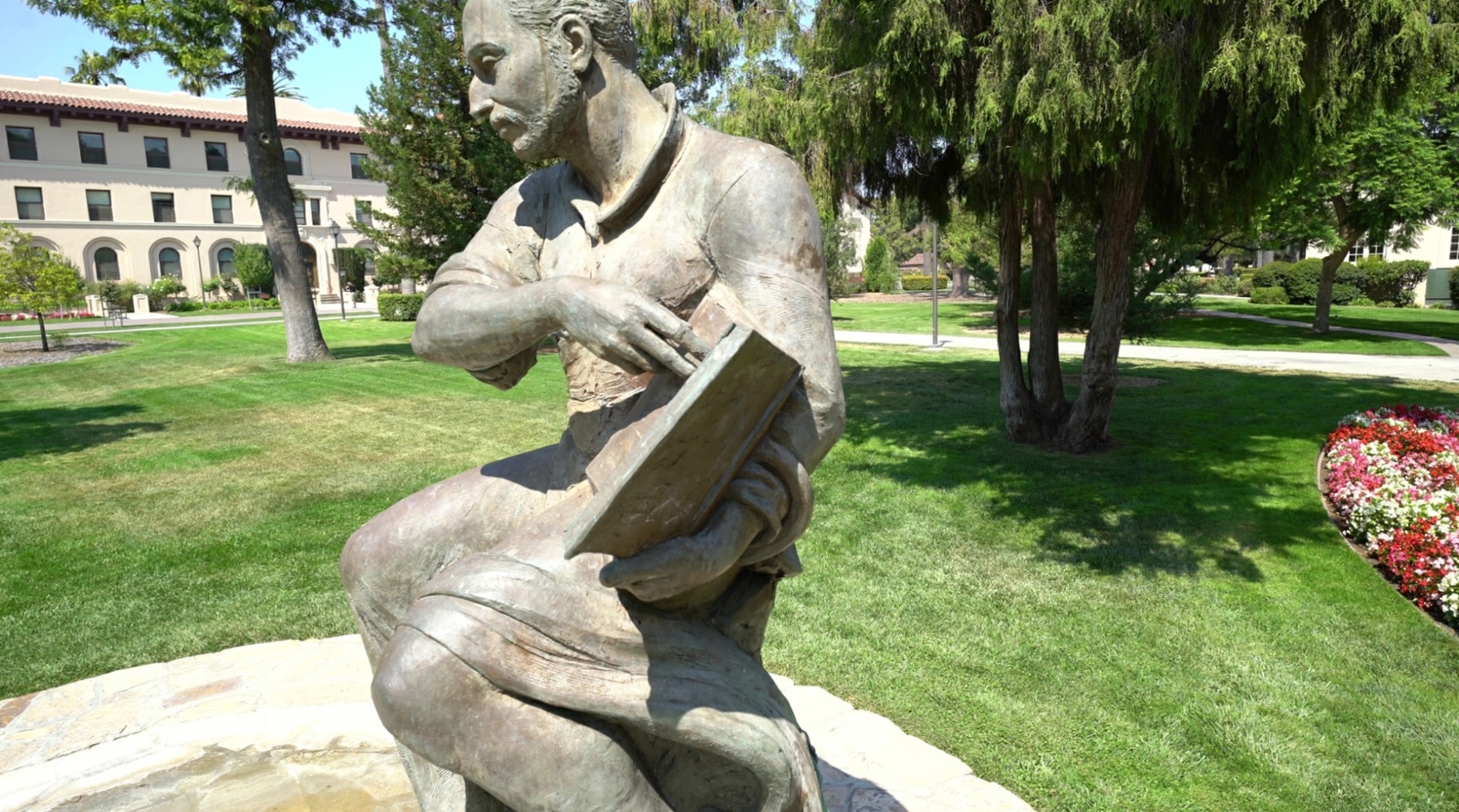} &
    \includegraphics[width=0.323\linewidth]{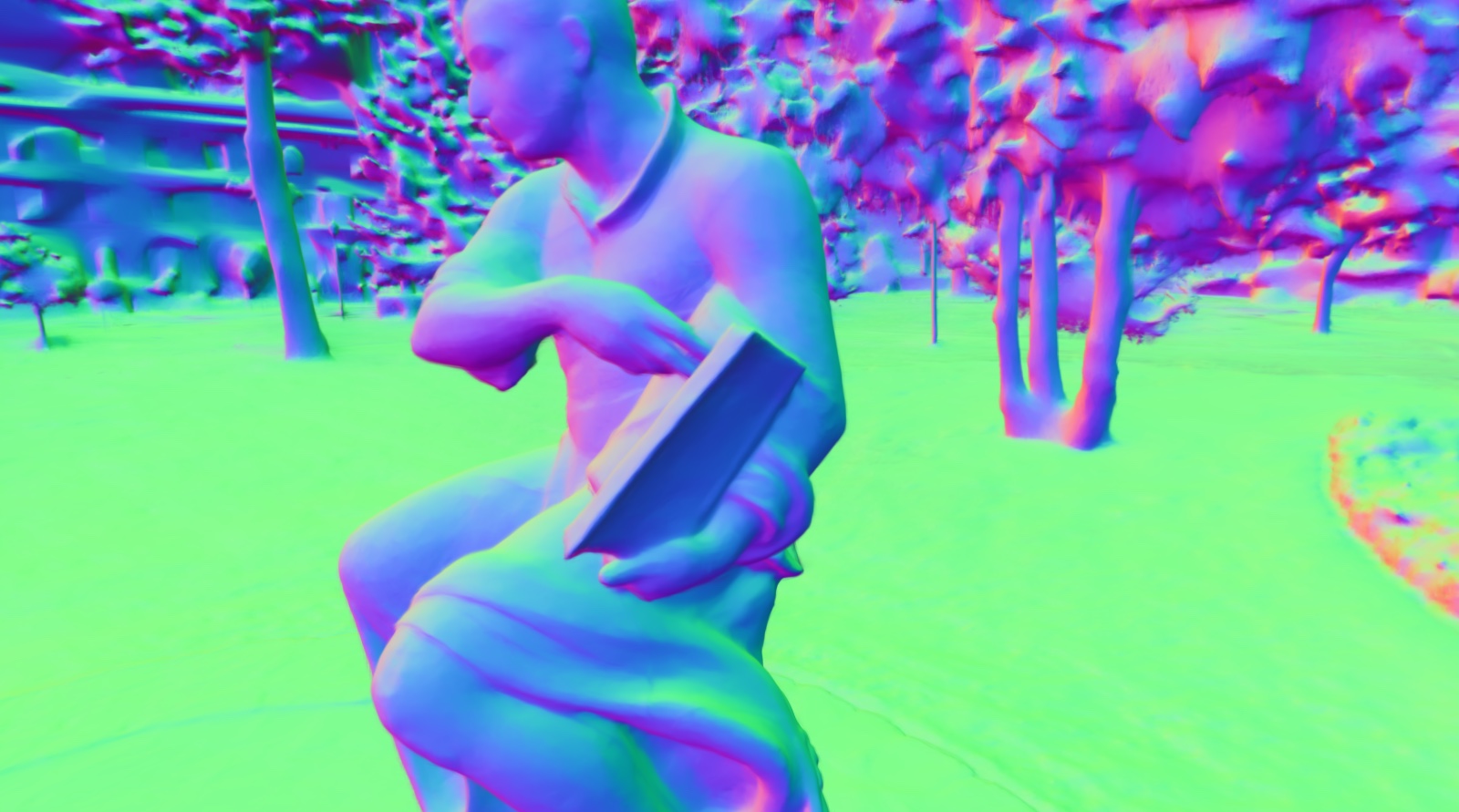} &
    \includegraphics[width=0.323\linewidth]{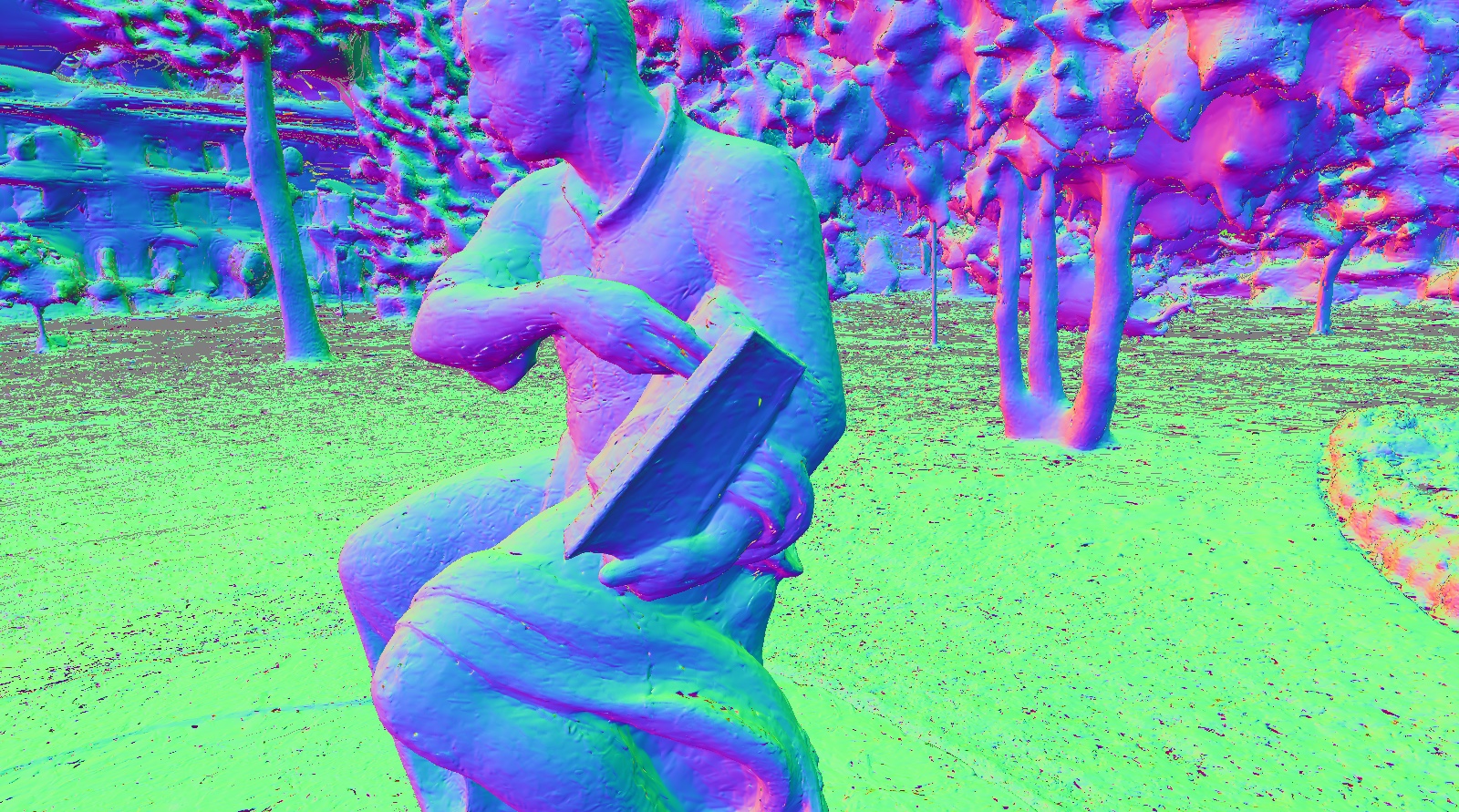} \\[1pt]
    \includegraphics[width=0.323\linewidth]{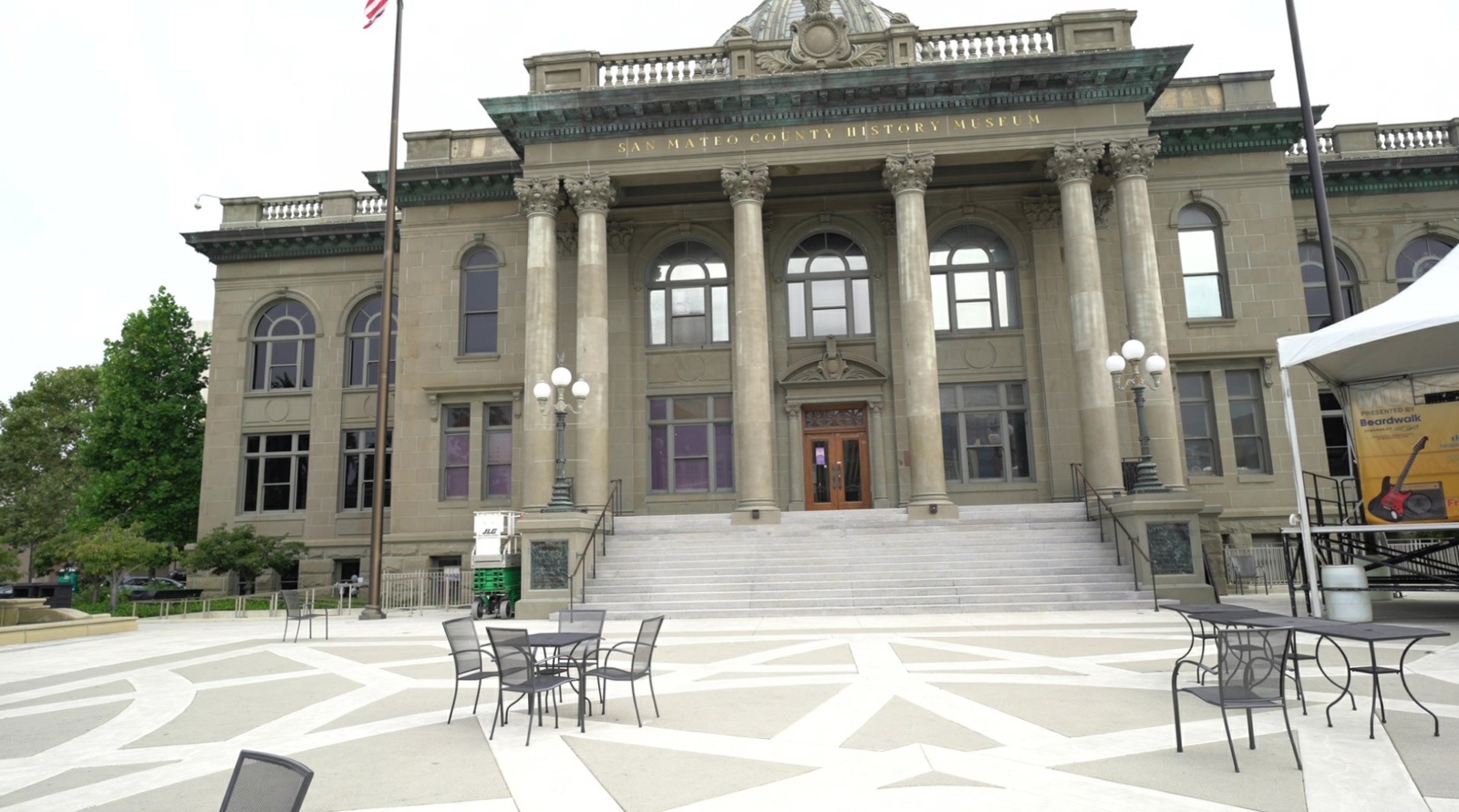} &
    \includegraphics[width=0.323\linewidth]{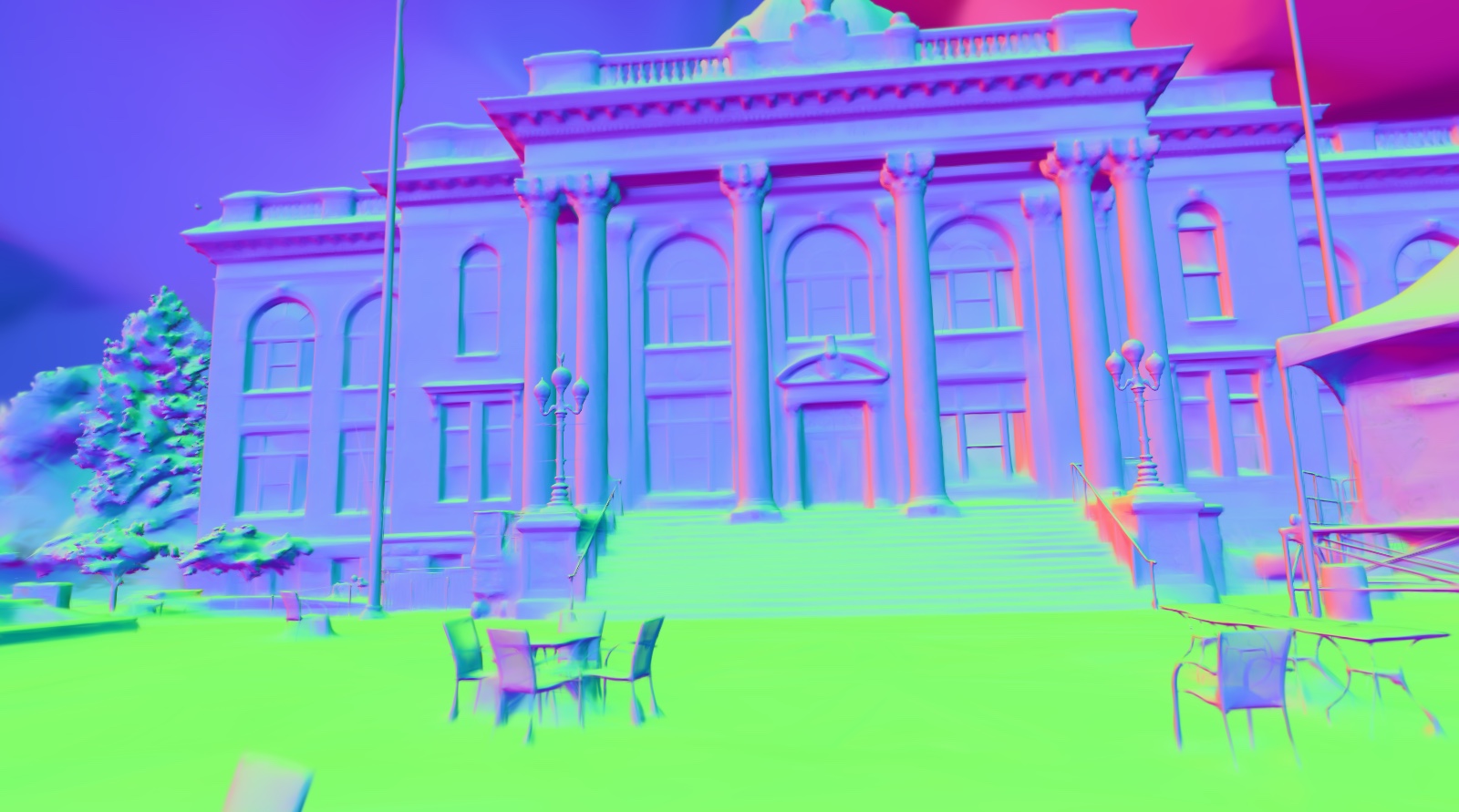} &
    \includegraphics[width=0.323\linewidth]{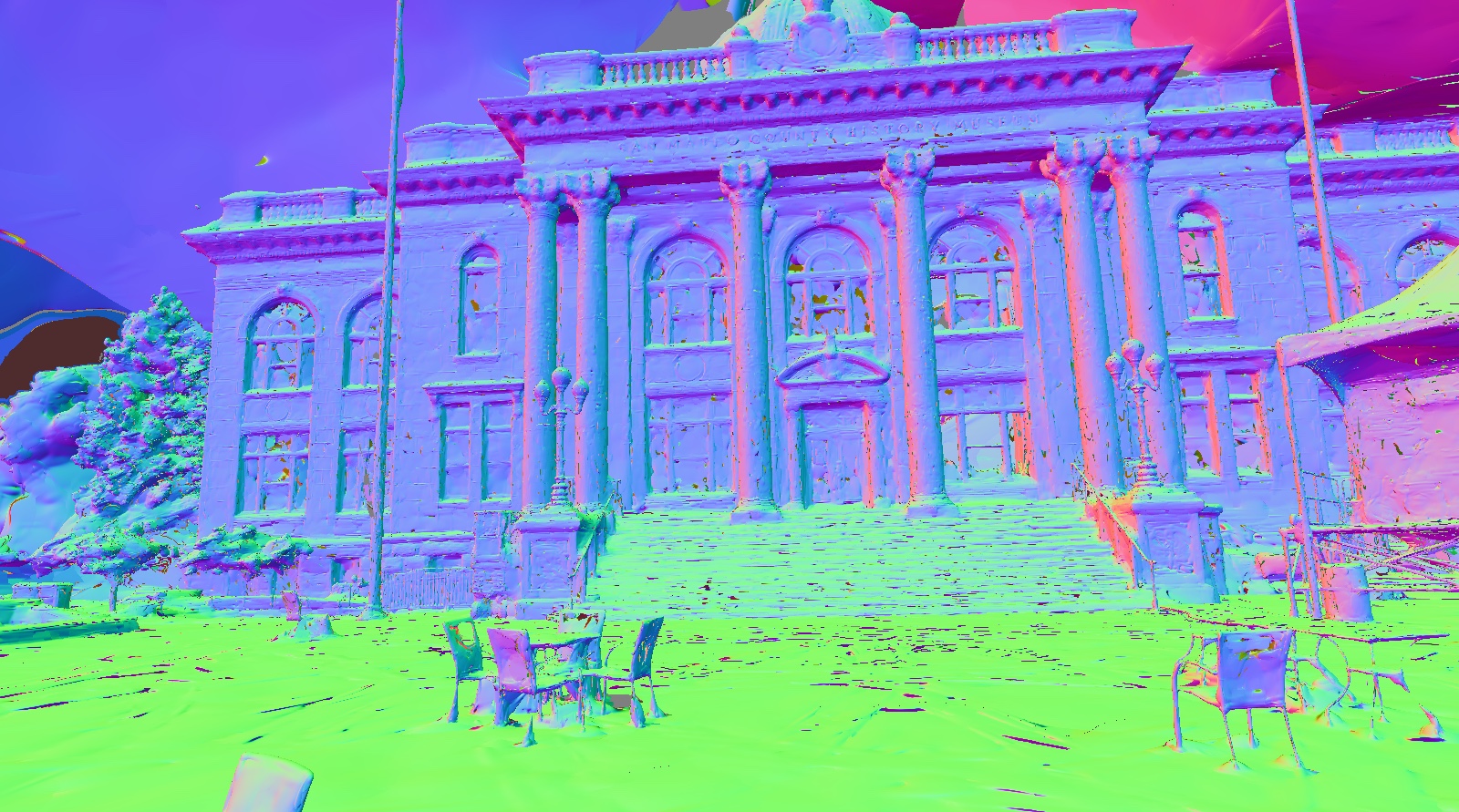} \\
\end{tabular}

\caption{
\textbf{Gaussian vector field aligned with surface normals.}
Visualization of the Gaussian normal field $\mathcal{N}$, \ie the normalized vector field $V = \nabla\log v$, across four scenes (Kitchen, Room, Ignatius, Courthouse).
Each row shows, from left to right: the ground truth RGB image, the normals rendered by alpha-blending the oriented Gaussian normals $n_i$, and the normal field $\mathcal{N}(x)$ queried at points $x$ lying on the 0.5-isosurface of the transmittance (median depth).
In practice, $\mathcal{N}(x)$ is evaluated by querying the $K$-nearest Gaussians to $x$ ($K=32$).
The close agreement between rendered normals and $\mathcal{N}$ confirms that the Gaussian vector field is well-aligned with the surface. We note that artifacts in the normal field $
\mathcal{N}$ correspond to under-reconstructed parts of the scene (e.g specular spot on Kitchen table, TV in room, background in Ignatius scene, and sky in Courthouse scene).
}\label{fig:vector_field_alignment}
\end{figure}

\hypbox[]{
\begin{lemma}[Supervision of Normal Field]
    We supervise the normal field $\mathcal{N}$ by enforcing consistency between:
    \begin{enumerate}
        \item The rendered normal maps created by rasterizing the 3D Gaussian Splatting representation with normals $n_i$;
        \item The image-derivative of the depth.
    \end{enumerate}
    Doing so promotes the value of our normal field at the 0.5-isosurface of the transmittance to match the true normal of the surface.
\end{lemma}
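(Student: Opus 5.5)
The plan is to compare the two quantities that the loss $\calL_{\text{N}}$ of Eq.~\ref{eq:normal_field_regularization_appendix} couples with the value of $\calN$ at the median-depth point. The argument proceeds by showing that, near the surface, all three are \emph{positive combinations of the same oriented normals} $n_i$. Fix a pixel $p$ with ray $(\rayo,\rayw)$, and let $x_p=\rayo+t_p\rayw$ be the point where $\trans(t_p)=0.5$; this is the depth our rasterizer renders. I will treat the following three facts in turn.

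\textbf{Step 1 (the rendered normal).} Following the rendering model of Proposition~\ref{theorem:transmittance_gaussian}, I will write $N(p)=\sum_i w_i\, n_i$, where the $w_i=\Gstari(\tstari)\prod_{j<i}(1-\Gstarj(\tstarj))\ge 0$ are the alpha-blending weights. These weights concentrate on the Gaussians whose oriented half-space contains the portion of the ray around $x_p$, i.e.\ the front-facing Gaussians of the wrapping shell.

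\textbf{Step 2 (the depth-gradient normal).} Because depth is rendered as the exact $0.5$-isosurface of $v$ (by reciprocity, $\trans$ and $v$ share this level set under wrapping), the depth map parametrizes the level set $\{v=0.5\}$ near $x_p$. The normal obtained from $\nabla D(p)$ is therefore the unit normal of that level set, namely $\pm\nabla v(x_p)/\|\nabla v(x_p)\|$. By the Gaussian normal field proposition above, this equals $\calN(x_p)$, with the sign fixed towards the camera.

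\textbf{Step 3 (the vector field).} By Eq.~\ref{vector_field}, $V(x_p)=\sum_i \charxnorm\,\nabla\log(1-G_i(x_p))$, and each term equals
\[
-\frac{G_i(x_p)}{1-G_i(x_p)}\,\Sigma_i^{-1}(\mu_i-x_p)\,\cdot(-1).
\]
I will use the wrapping assumption (Gaussians flattened along $n_i$, with $x_p$ on the positive side) to argue that each active term is a positive multiple of $n_i$ up to a small tangential error. This shows that $V(x_p)=\sum_i \beta_i n_i$ with $\beta_i\ge 0$.

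With these three facts, the conclusion follows from unit-sphere convexity. When $\calL_{\text{N}}(p)=0$, Step 2 gives that $\sum_i w_i n_i$ equals a unit vector, which requires $\sum_i w_i=1$. Since each $n_i$ is a unit vector, this forces every $n_i$ with $w_i>0$ to coincide with the true normal (equality case of the triangle inequality). Provided that the Gaussians active in $V(x_p)$ are the same ones carrying blending weight, Step 3 then gives $\calN(x_p)=V/\|V\|$ equal to that same vector. For a small but nonzero loss, I will make this quantitative: if $\|\sum_i w_i n_i - \nu\|\le\varepsilon$, then the $w$-weighted spread of the $n_i$ around $\nu$ is $O(\sqrt{\varepsilon})$, and any other positive combination of the $n_i$ stays within the same cone.

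The main obstacle is Step 3, and in particular the identification of the supports. For a nondegenerate ellipsoid, $\Sigma_i^{-1}(x-\mu_i)$ is not exactly parallel to $n_i$, and the Gaussians active in $V$ (all those with $n_i^T(x_p-\mu_i)\ge 0$) need not coincide with those that have large blending weight. My first approach would be to state the flatness condition explicitly, namely that $n_i$ is the eigenvector of $\Sigma_i^{-1}$ with the largest eigenvalue and that the anisotropy ratio is large. I would then bound the tangential component by this ratio and use the non-overlap assumption so that, near $x_p$, only the front-facing shell Gaussians contribute non-negligibly to both sums.
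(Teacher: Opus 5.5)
Your Step 3 does not close, and the difficulty comes from insisting on a pointwise comparison at $x_p$. With the correct sign, the $i$-th term of $V(x_p)$ is $\frac{G_i(x_p)}{1-G_i(x_p)}\Sigma_i^{-1}(x_p-\mu_i)$; your displayed expression evaluates to its negative. This term is not parallel to $n_i$. Moreover, the Gaussians with $n_i^\top(x_p-\mu_i)\ge 0$ and non-negligible $G_i(x_p)$ need not be the ones carrying blending weight on that ray. The flatness and anisotropy hypotheses you propose are extra (the paper needs neither), and even with non-overlap the identification of supports remains a hope rather than an argument.

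The missing idea, which is what the paper uses, is to compare ray averages instead of point values. By OaV, the expected normal at the visible surface is $\IE_M[\mathcal{N}(o+t^*w)]=\int_0^{\infty}\sigma(o+sw,w)\,T(s)\,\mathcal{N}(o+sw)\,ds$. This is a radiance-field rendering integral, and alpha blending discretizes it with exactly the weights $w_i$ of the rendered normal map, once each Gaussian is given a single representative value. Take $V_i\propto\int_{\IR^3}G_i(x)\,\mathds{1}_{n_i^\top(x-\mu_i)\ge 0}\,\nabla\log(1-G_i(x))\,dx$. Under $y=\Sigma_i^{-1/2}(x-\mu_i)$, the integrand becomes a radial function of $\|y\|$ times $\Sigma_i^{-1/2}y$, integrated over the half-space $\{(\Sigma_i^{1/2}n_i)^\top y\ge 0\}$. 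By reflection symmetry, the integral of a radial weight times $y$ over that half-space is a positive multiple of $\Sigma_i^{1/2}n_i$. Hence $V_i$ is a positive multiple of $n_i$ exactly, for every covariance. Rendering the $n_i$ therefore is rendering the Gaussian-averaged normal field, with no tangential error and no need to reconcile two different weightings. The resulting statement concerns the ray-averaged normal field rather than the exact value at $x_p$, but that is all the (informal) lemma claims, and it needs no flatness assumption.

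Step 2 is circular as written. You identify the depth-gradient normal with $\mathcal{N}(x_p)$ because $T$ and $v$ share the $0.5$ level set ``under wrapping''. That presupposes agreement between the rendered transmittance (the non-oriented one, which does not involve the $n_i$) and the oriented vacancy, which is exactly what $\mathcal{L}_{\text{N}}$ is meant to promote. If that identification were available, the conclusion would hold with no loss at all, and Steps 1 and 3 would be idle.

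In the paper the depth side plays the opposite role. The rendered depth is the median free-flight distance $\inf\{t: T(t)\le 0.5\}$. Its image-space derivative estimates the normal of the median visible surface independently of the $n_i$ and of the oriented-Gaussian assumption; that is, it is the reference against which the averaged normal field is matched. Your final argument only needs this reference to be a unit vector, so drop the identification.

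The triangle-inequality equality step is a sharpening the paper does not attempt. However, it requires $N(p)$ to enter the loss unnormalized and $\sum_i w_i=1$ (no residual transmittance). If $N(p)$ is renormalized before the dot product, zero loss constrains only the direction of the blend, not the individual $n_i$.
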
\label{lemma:normal_field_supervision_appendix}
}

\proofbox[]{
\begin{proof}
    
First, let us start by showing why supervising the rendered normal maps---produced by splatting Gaussians with their learnable normals $n_i$---also supervises the normal field $\mathcal{N}$.

For a given ray $\ray$, the expectancy of the normal field value at the observed surface is equal to
\begin{equation}
    \IE_M(\mathcal{N}(\rayo+\tff\rayw)) = \int_0^{+\infty} \pdft(s) N(\rayo+s\rayw)\diff s
\end{equation}
where $\tff$ is the free-flight distance and $\pdft$ is the pdf of $\tff$. 
Following OaV, $\pdft(t) = \sigma(\rayo+t\rayw, \rayw) \trans(t)$ and the expectancy is equal to
\begin{equation}
    \IE_M(\mathcal{N}(\rayo+\tff\rayw)) = \int_0^{+\infty} 
    \sigma(\rayo+s\rayw, \rayw) \trans(s)
    N(\rayo+s\rayw)\diff s \> ,
\end{equation}
which is precisely equal to the rendering equation of radiance fields.

In the context of Gaussian splatting, this integral is discretized and approximated with alpha-blending. As such, for rendering the normal field, we need to attribute to every Gaussian a single value $\mathcal{N}_i$ per Gaussian to approximate the above integral. Let's start with the vector field $V$. We need to select a single value $V_i$ for rendering.
A natural choice is to use the average value of the contribution of the Gaussian to $V$, \ie, the integral of the $i$-th term of the sum multiplied by the normalized density of the Gaussian:
\begin{equation}
    V_i \propto \int_{\IR^3} G_i(x)\cdot \charx \nabla\log (1-G_i(x)) \diff x
\end{equation}
This integral has a symmetry around the vector $n_i$ (this can be easily seen with a change of variable to reduce and center the Gaussian; the resulting integral is over a hemisphere). As such, the result $V_i \propto n_i$, and similarly the value of the normalized vector field $\mathcal{N}_i \sim n_i$ as both have norm $1$.

As a consequence, \textit{one can simply render the plane normals $n_i$ with Gaussian splatting to estimate the average value of the Normal field $\mathcal{N}$ on the observed surface}, and supervise $\calN$ through a rendering loss.

The question then arises of how to supervise the rendered normal maps. We propose to compare this value to the derivative of the rendered depth. Indeed, according to~\cite{miller2024objectsasvolumes}, rendering the depth directly estimates the expectancy of the free-flight distance $\tff$ for each pixel, \ie, the average position of the visible surface point.
By taking the derivative of the depth (\ie, the normal of the expected visible surface), we can then compute an estimate of the surface normals that does not depend on our \textit{oriented Gaussians} assumption and normals $n_i$. We can finally use this estimate to supervise our plane normals.

More precisely, we choose as the depth to use, the median value of $\tff$ (as opposed to the expectancy). Indeed, the median provides an estimate of $\tff$ which is more robust to statistical outliers than the expectancy. The median of $\tff$ can be computed as
\begin{equation}
    \begin{split}
        \median(\tff) & = \inf \left\{
        t\in[0,+\infty) : 
        \int_0^t \pdft(s) \diff s \geq 0.5
        \right\} \\
        & = \inf \left\{
        t\in[0,+\infty) : 
        \int_0^t -\ddt\trans(s) \diff s \geq 0.5
        \right\} \\
        & = \inf \left\{
        t\in[0,+\infty) : 
        \trans(0) - \trans(t) \geq 0.5 
        \right\}\\
        & = \inf \left\{
        t\in[0,+\infty) : 
        1 - \trans(t) \geq 0.5 
        \right\}\\
        & = \inf \left\{
        t\in[0,+\infty) : 
        \trans(t) \leq 0.5 
        \right\} \> ,
    \end{split}
\end{equation}
such that in the context of point-based rendering, the median of $\tff$ can be estimated as the depth $\tstar$ of the first Gaussian along the ray for which the transmittance reaches the 0.5 threshold. This definition is consistent with the median depth as implemented in previous works~\cite{yu2024gaussian,zhang2024rade,radl2025sof}
\end{proof}
}

\section{Mesh Extraction Details}\label{sec:mesh_extraction_details}

We now proceed to leverage the closed-form equation for the vacancy for mesh extraction. In this section, we provide additional details on the procedures introduced in the main paper.

\subsection{Estimating Vacancy} 

As stated, we leverage the ability to compute the vacancy $v$---the probability of a point to be outside the opaque volume---for arbitrary points of space. In this section, we provide further motivation and proof for the formula introduced in the main document.

The explicit value of the log-vacancy $\rayx$ can be computed by integrating the vector field along any camera ray $\ray$ such that $\rayx = o+t\rayw$ for some $t \in [0,+\infty)$:
\begin{equation}
    \begin{split}
        \int_0^{t} V(\rayo+s\rayw)\cdot \rayw \diff s 
        & = \int_0^{t} \nabla\log v(\rayo+s\rayw)\cdot \rayw \diff s \\
        & = \log v(\rayo+t\rayw) - \log v(\rayo) \\
        & = \log v(\rayx)
    \end{split} \> \> ,
\end{equation}
since $\log v(\rayo) = \log 1 = 0$ as the ray origin $\rayo$ is located in empty space.

However, directly integrating the vector field might produce noisy results as it assumes all Gaussians follow our \textit{oriented Gaussian} assumption. In practice, some Gaussians not participating in the rendering might be trapped inside the volume and perturb the computation of this integral. Moreover, even though our optimization and densification greatly help to enforce our assumption and wrap the geometry with oriented Gaussians, it is possible for \textit{holes} to remain.
Consequently, we propose another computation which proves to be more robust to such scenarios.

\hypbox[]{
\begin{proposition}
    Let $v:\IR^3 \to [0,1]$ be the vacancy function, and let $\mathcal{T}_c$ denote the set of all training camera rays. For any point $\rayx \in \IR^3$, the vacancy admits the following lower bound:
    \begin{equation}
        v(\rayx) \geq \max_{(\rayo,\rayw)\in \mathcal{T}_c}\left\{
             \prod_{i=1}^N \left(1 - \Gstari(t)\right) : \rayx = \rayo + t\rayw, t>0
        \right\} \> ,
        \label{eq:lower_bound_vacancy_appendix}
    \end{equation} 
    where $\Gstari(t) = G_i(\rayo + \min(t, \tstari)\rayw)$; and $\tstari := \arg\max_{t\geq 0} G_i(\rayo + t\rayw)$.
\end{proposition}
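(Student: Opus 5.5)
The plan is to bound $v(\rayx)$ ray by ray, and then take the maximum over all rays. Fix a training ray $(\rayo,\rayw)\in\mathcal{T}_c$ and $t>0$ with $\rayx=\rayo+t\rayw$. It then suffices to show
\[
v(\rayx)\;\geq\;\prod_{i=1}^N\left(1-\Gstari(t)\right)=\trans(t),
\]
where the equality is Proposition~\ref{theorem:transmittance_gaussian}. Since the right-hand side of the claimed inequality is a supremum of such quantities over admissible pairs, the bound follows immediately once the single-ray inequality is established.

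The single-ray inequality $v(\rayo+t\rayw)\geq\trans(t)$ will be obtained from the probabilistic interpretation of transmittance in Objects as Volumes. There, $\trans(t)$ is the probability that the segment $[\rayo,\rayo+t\rayw]$ is entirely unoccupied by the random set $M$, i.e.\ that the free-flight distance satisfies $\tff>t$. The vacancy $v(\rayx)$ is the probability that the single point $\rayx$ is unoccupied. The event ``the whole segment is free'' is contained in the event ``its endpoint $\rayx$ is free''. Monotonicity of probability then gives $v(\rayx)\geq \Pr(\tff>t)=\trans(t)$. The ray origin $\rayo$ is a camera center, so it lies in empty space, which is consistent with $\trans(0)=1=v(\rayo)$.

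To phrase the same argument analytically, we can note that $\trans$ is non-increasing in $t$, and that $-\ddt\log\trans$ is the attenuation of Eq.~\ref{eq:attenuation_gaussian_splatting}. We would then compare it with $-\ddt\log v(\rayo+t\rayw)=-\rayw\cdot V$. By Eq.~\ref{objects_as_volumes}, $|\rayw\cdot\nabla\log v|$ is the reciprocal attenuation, and it dominates the signed rate $-\rayw\cdot\nabla\log v$. Integrating from $0$ to $t$ therefore gives $\log v(\rayx)\geq\log\trans(t)$, provided the attenuation used in the transmittance is at least the reciprocal one.

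The main obstacle is reconciling the two attenuation models. The transmittance in Proposition~\ref{theorem:transmittance_gaussian} uses the non-reciprocal splatting attenuation, whereas $v$ is defined through the oriented, reciprocal one. Under the wrapping assumption the two agree along unoccluded front-facing portions. Where they differ, we need the splatting attenuation to be at least the part of the oriented attenuation that decreases $\log v$; this holds because the $\max(0,\cdot)$ term catches every positive contribution before $\tstari$. If this comparison is too delicate, the plan is to fall back on the event-inclusion argument above. That argument only requires interpreting $\trans$ as a visibility probability for the same stochastic scene, and it is the reading adopted throughout the appendix. It also explains the stated robustness: hidden Gaussians can only decrease $\trans$, and so can never push the estimate above $v$.
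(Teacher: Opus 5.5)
Your single-ray reduction and your analytic paragraph are the paper's proof. It invokes the OaV relation $|\rayw\cdot\nabla\log v(\rayo+t\rayw)|=-\ddt\log\trans(t)$, uses $|a|\geq -a$, integrates from $0$ to $t$ with $v(\rayo)=\trans(0)=1$, and then maximizes over $\mathcal{T}_c$.

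The mismatch you flag is real, and the paper does not resolve it inside the proof. That relation ties $v$ to the transmittance of the \emph{oriented} attenuation, so the paper's argument yields $v$ at least the oriented transmittance. That does not imply the stated bound with the non-oriented product, because neither attenuation dominates the other pointwise. The paper only addresses the substitution after the proof, by appeal to training.

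Your fix closes this gap, but state it precisely. The condition in your third paragraph (splatting attenuation at least the reciprocal one) is false. Write $a_i:=-\rayw\cdot\nabla\log(1-G_i)$. On the positive half-space past $\tstari$, the oriented term is $|a_i|>0$ while $\max(0,a_i)=0$.

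What works is the following. Take the signed decomposition of Eq.~\ref{vector_field}, $-\rayw\cdot V=\sum_i\charx\,a_i$, and combine it with the elementary termwise inequality $c\,a\leq\max(0,a)$ for $c\in\{0,1\}$. This gives $-\ddt\log v(\rayo+t\rayw)\leq-\ddt\log\trans(t)$ for the non-oriented product. Integrating with $v(\rayo)=1\geq\trans(0)$ then yields the stated bound directly. The absolute-value relation Eq.~\ref{objects_as_volumes} alone would not suffice, since it carries no sign information about individual terms.

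The event-inclusion argument is a genuinely different and more elementary route, needing no differentiability or exponential transport. However, it does not sidestep the mismatch; it assumes it away. It needs a single random set whose segment-visibility from the cameras is the product and whose point vacancy is the oriented $v$. The existence of such a set already entails the inequality, and the paper provides none.

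In fact, the product is exactly the segment-visibility law of the union of independent random superlevel sets $\{G_i\geq U_i\}$, with $U_i$ uniform on $[0,1]$. That set has point vacancy $\prod_i(1-G_i(\rayx))$ rather than $v$, so event inclusion for it bounds a different quantity. Use the termwise comparison as the proof and keep event inclusion as intuition.
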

}

\proofbox[]{
\begin{proof}

Following OaV, we establish the following relation between the log-vacancy and the log-transmittance for any $\rayo,\rayw, t$:
\begin{equation}
    |w\cdot\nabla\log v(\rayo+t\rayw)| = -\ddt\log\trans(t) \> ,
\end{equation}
which implies, as $\ddt\log\trans(t)$ is always non-positive:
\begin{equation}
    w\cdot\nabla\log v(\rayo+t\rayw) \geq \ddt\log\trans(t) \> .
\end{equation}
By integrating the formula from $0$ to $t$ along the ray $\ray$,
\begin{equation}
    \log v(\rayo+t\rayw) - \log v(\rayo) \geq \log\trans(t) - \log\trans(0) \> ,
\end{equation}
with $\log v(\rayo) = \log\trans(0) = \log1 = 0$, resulting in:
\begin{equation}
\begin{split}
    \log v(\rayo+t\rayw) & \geq \log\trans(t) \\
    v(\rayo+t\rayw) & \geq \trans(t) \> \> .
\end{split}    
\end{equation}
Consequently, for any point $\rayx$, we have:
\begin{equation}
    v(\rayx) \geq \max\left\{
        \trans(t) : (\rayo,\rayw)\in \IR^3\times\calS^2, t>0, \text{ s.t. } \rayx = \rayo + t\rayw
    \right\} \> .
    \label{lower_bound_vacancy}
\end{equation}
Equation~\ref{lower_bound_vacancy} shows that we can compute a lower bound of the vacancy at $\rayx$ as the maximum of transmittance values along camera rays passing through $\rayx$. This particularly holds for the set $\mathcal{T}_c$ of all training camera rays.
\end{proof}
}

In practice, for any 3D point $\rayx$, we iterate over training cameras and compute the transmittance along the corresponding ray for each camera. \textbf{We finally select the maximum transmittance over the camera rays as an estimate of the vacancy.}

The computation of the lower bound is more robust to floaters and Gaussians hidden inside the geometry, as transmittance is non-decreasing and mainly depends on the visible Gaussians participating in the rendering. Additionally, we use the transmittance from the non-oriented scenario in~\cref{lower_bound_vacancy}. This choice of transmittance is more robust to outlier Gaussians not ``properly'' wrapping some parts of the scene. Moreover, our losses enforce un-oriented and oriented formulations of the transmittance formula to be consistent for visible Gaussians.

\subsection{Newton Projection Step}

\begin{figure}[ht!]
\centering
\includegraphics[width=\linewidth]{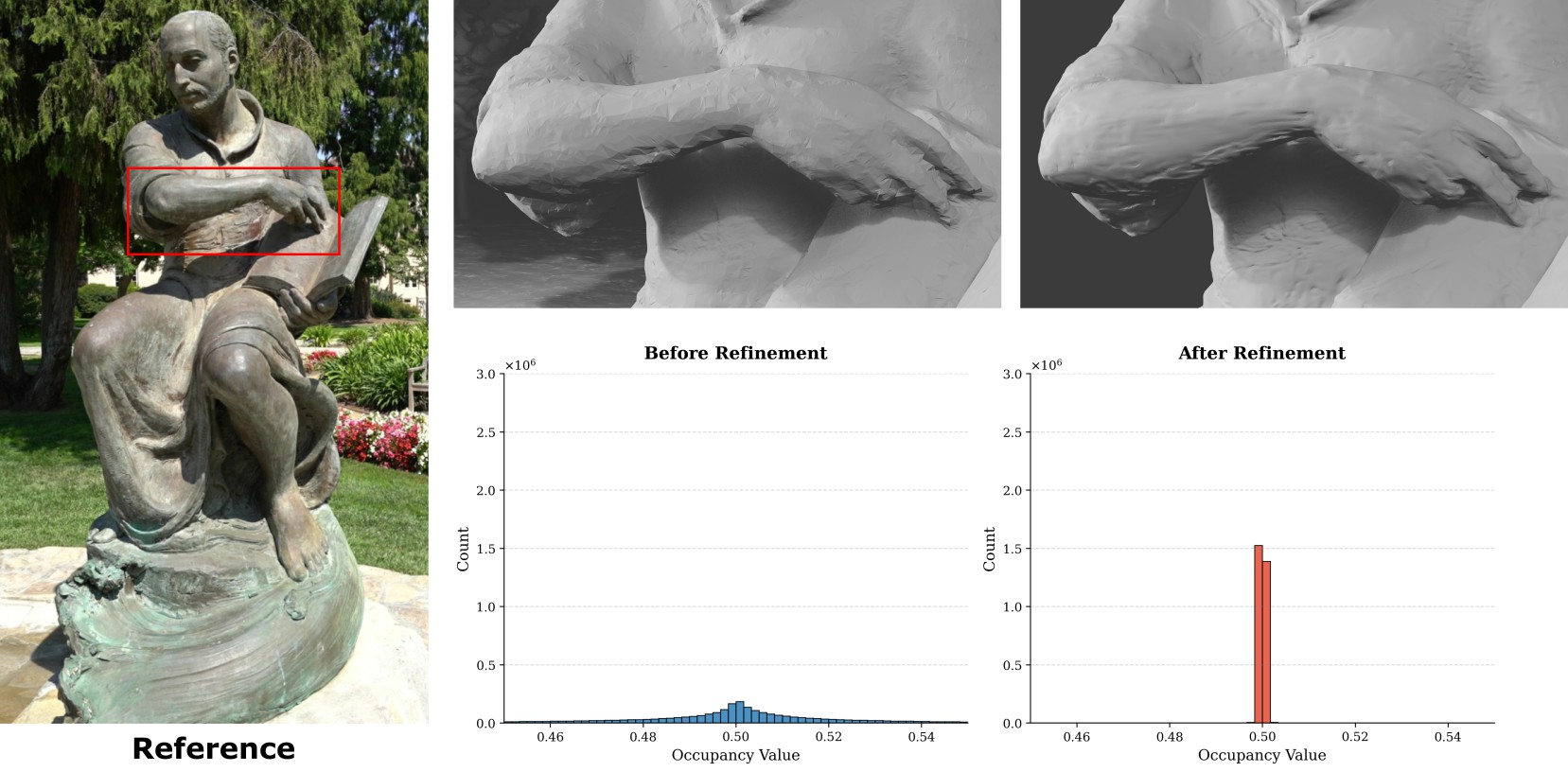}

\caption{
\textbf{Effect of the Newton projection step on vacancy values at sampled points.}
\textit{Left:} Reference image with a mark on zoomed area.
\textit{Top:} mesh renders of the Baseline and our proposed method on the Truck scene of T\&T.
\textit{Bottom:} distribution of vacancy values $v(x)$ evaluated at the points to refine for PAM~(zoomed in on the $[0.45, 0.55]$ range around the target isosurface $v=0.5$). This plot was obtained performing PAM on the Ignatius scene of T\&T, sampling 3 Million points and performing 10 refinement steps.
\textit{Bottom Left (blue):} Sampled points on mesh without Newton refinement;
\textit{Bottom Right (red):} after applying our Newton projection step (Proposition~\ref{prop:newton}), points collapse onto the isosurface $v=0.5$, validating the efficiency of our refinement. Qualitatively we can see that disentangling the vertex distribution from the Gaussians distributions allows to retrieve extremely fine-grained details.
}\label{fig:newton_refinement}
\end{figure}

Here, we motivate the choice of the Newton step we use for refining the point cloud during \textit{Primal Adaptive Meshing}. \cref{fig:newton_refinement} illustrates the effectiveness of this procedure. The Newton step aims to collapse points to the 0.5-isosurface of the vacancy.

\hypbox[]{
\begin{proposition}\label{prop:newton}
    Let $v\colon\IR^3\to[0,1]$ be the vacancy function and note that the normal field $\mathcal{N}(x) = \nabla v(x)/\|\nabla v(x)\|^2$. The first-order Newton step projecting $x_i$ onto the isosurface $v^{-1}(0.5)$ and restricting updates to the gradient direction is:
    \begin{equation}
        x_{i+1} = x_i + \frac{0.5 - v(x_i)}{2}\,\mathcal{N}(x_i).
    \end{equation}
\end{proposition}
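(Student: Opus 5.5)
The plan is to derive the update from a first-order Taylor expansion of $v$ around the current iterate $x_i$. We seek a displacement $\delta$ such that $v(x_i+\delta)=0.5$. Linearizing gives
\begin{equation}
v(x_i+\delta)\approx v(x_i)+\nabla v(x_i)\cdot\delta .
\end{equation}
Restricting the update to the gradient direction means $\delta=\lambda\nabla v(x_i)$ with $\lambda\in\IR$. The linearized condition $v(x_i)+\lambda\|\nabla v(x_i)\|^2=0.5$ then forces $\lambda=(0.5-v(x_i))/\|\nabla v(x_i)\|^2$. With the normalization adopted in the statement, $\mathcal{N}=\nabla v/\|\nabla v\|^2$, the full Newton step is therefore $\delta=(0.5-v(x_i))\,\mathcal{N}(x_i)$. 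I would also note that this is the minimum-norm solution of the underdetermined linear equation $\nabla v\cdot\delta=0.5-v$, which justifies the phrase ``restricting updates to the gradient direction''. Among all first-order corrections, it is the orthogonal projection onto the linearized level set.

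Next I would connect this to the quantities the paper can actually evaluate. Since $V=\nabla\log v$ (Definition of the Gaussian vector field), we have $\nabla v=v\,V$, so $\nabla v/\|\nabla v\|^2=V/(v\|V\|^2)$. The gradient direction is therefore the same as that of the unit normal field of the Gaussian normal field proposition. The step can thus be computed in closed form from the $K$-nearest Gaussians, together with the vacancy lower bound of Eq.~\ref{eq:lower_bound_vacancy_appendix}.

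The main obstacle is the factor $\tfrac12$, which the pure Newton derivation does not produce. My plan is to present it as a damped (relaxed) Newton step with step size $\tfrac12$, and to argue why damping is appropriate here. Near the surface, the modelling assumption used in the normal-field proof gives $v(x)=\Psi(s f(x))$ with $\Psi$ sigmoidal and $f$ a local SDF. Along the normal line, $v$ is therefore a one-dimensional sigmoid whose curvature changes sign exactly at the isovalue. An undamped Newton step can overshoot on the flat tails of $\Psi$, where the gradient is small. To support this, I would carry out a one-dimensional analysis with $g(\tau)=\Psi(s\tau)-0.5$ and show that the halved step keeps iterates on the same side of the root and contracts toward it. A fixed-point argument would give a rate $|\tau_{k+1}|\le c|\tau_k|$ with $c<1$ in a neighbourhood where $\Psi'$ is bounded below.

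I would also remark that the lower bound $v$ is only an estimate, so a conservative step limits the effect of errors in $\nabla v$. If this contraction argument proves too delicate, I would fall back to stating the $\tfrac12$ as an empirical relaxation parameter on top of the first-order projection derived above, supported by the convergence behaviour in \cref{fig:newton_refinement}.
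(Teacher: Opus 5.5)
Your linearization of $v$ itself is correct, but it yields the undamped step $x_{i+1}=x_i+(0.5-v(x_i))\,\mathcal{N}(x_i)$. From there the factor $\frac12$ can only be inserted by hand, and that is the gap. The statement asserts that the halved update \emph{is} the first-order Newton step. A local contraction argument for a relaxed iteration would only show that a step you chose converges, and your fallback (an empirical relaxation parameter) gives up on deriving the formula at all.

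The missing idea is to write the Newton condition for the squared residual $f(x)=(0.5-v(x))^2$ instead of for $v-0.5$; its zero set is still $v^{-1}(0.5)$. Because $\nabla f=-2(0.5-v)\nabla v$, the $2$ from differentiating the square is exactly the source of the $\frac12$. Imposing $f(x_i)+\nabla f(x_i)^\top\bigl(t\,\nabla v(x_i)\bigr)=0$ gives
\[
t=\frac{-(0.5-v(x_i))^2}{-2\,(0.5-v(x_i))\,\|\nabla v(x_i)\|^2}=\frac{0.5-v(x_i)}{2\,\|\nabla v(x_i)\|^2},
\]
so $x_{i+1}=x_i+t\,\nabla v(x_i)=x_i+\frac{0.5-v(x_i)}{2}\,\mathcal{N}(x_i)$ with the statement's $\mathcal{N}=\nabla v/\|\nabla v\|^2$. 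This assumes $v(x_i)\neq 0.5$; otherwise $x_i$ already lies on the isosurface. This is the paper's argument.

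Your reading is close in substance: $f$ has a double root, so Newton on $(0.5-v)^2$ is exactly Newton on $0.5-v$ with the step halved, and the two updates coincide. But it is the squared-residual formulation that makes the update \emph{the} Newton step in the sense of the statement. It also makes your sigmoid analysis unnecessary.

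That analysis would be weak as motivation in any case. For a logistic $\Psi$, the undamped Newton correction along the normal line is $\sinh(s\tau)/s$. Halving it still overshoots badly on the flat tails, which is the failure you wanted damping to fix. Near the root, halving turns the cubic convergence of the undamped step (the inflection sits at the isovalue) into linear convergence with rate $\frac12$. Your observation that $\nabla v/\|\nabla v\|^2=V/(v\|V\|^2)$ is correct but plays no role in the proposition.
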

}

\proofbox[]{
\begin{proof}
    Introduce the auxiliary function $f(x) = {(0.5 - v(x))}^2$, whose roots coincide with the isosurface. Linearising $f$ around $x_i$ and setting to zero gives the Newton condition:
    \begin{equation}
        f(x_i) + \nabla f(x_i)^\top(x_{i+1} - x_i) = 0.
    \end{equation}
    We restrict to gradient-direction updates by writing $x_{i+1} = x_i + t\,\nabla v(x_i)$ for a scalar step $t$. Substituting $\nabla f(x) = -2\bigl(0.5-v(x)\bigr)\nabla v(x)$ and solving for $t$:
    \begin{equation}
        t
        = \frac{-f(x_i)}{\nabla f(x_i)^\top \nabla v(x_i)}
        = \frac{-(0.5-v(x_i))^2}{-2(0.5-v(x_i))\|\nabla v(x_i)\|^2}
        = \frac{0.5 - v(x_i)}{2\|\nabla v(x_i)\|^2}.
    \end{equation}
    Therefore:
    \begin{equation}
        x_{i+1}
        = x_i + t\,\nabla v(x_i)
        = x_i + \frac{0.5-v(x_i)}{2\|\nabla v(x_i)\|^2}\,\nabla v(x_i)
        = x_i + \frac{0.5 - v(x_i)}{2}\,\mathcal{N}(x_i). \qedhere
    \end{equation}
\end{proof}
}

\section{Experiments Details}\label{sec:experiments_details}

\paragraph{Loss Details.} At training time, we use our normal alignment loss $\calL_{\text{N}}$ in conjunction with the following set of losses: The standard photometric~\cite{kerbl3Dgaussians} loss $\calL_{\text{RGB}}$, depth-normal consistency $\calL_{\text{DN}}$~\cite{yu2024gaussian,zhang2024rade} and the multi-view loss $\mathcal{L}_{\text{MV}} = \lambda_{\text{pc}} \mathcal{L}_{\text{pc}} +  \lambda_{\text{gc}} \mathcal{L}_{\text{gc}}$, where the terms control the strength of the photometric and geometric consistency as defined in~\cite{Zhang2026GeometryGrounded,chen2024pgsr}. These have the following weights $\lambda_{\text{DN}}=0.05,\lambda_{\text{N}}=0.05, \lambda_{\text{pc}}=0.6, \lambda_{\text{gc}}=0.02$.
This results in the total loss 
\[
\calL=\calL_{\text{RGB}} +  \lambda_{\text{DN}}\calL_{\text{DN}} + \lambda_{\text{N}}\mathcal{L}_{\text{N}} + \mathcal{L}_{\text{MV}}
\]

\paragraph{Learnable Normals Parameterization.} We observed that parameterizing the learnable normals $n_i$ as simple, normalized vectors of 3 coordinates leads to unstable optimization. Indeed, flipping the direction of the normal under this naive parameterization requires the vector to perform a rotation of $180$ degrees, leading to convergence toward inaccurate local minima and poor efficiency.
Instead, we parameterize the normal $n_i$ of the Gaussian $i$ with 4 parameters, including a sign parameter $s_i\in\IR$ and a direction $d_i \in \IR^3$. The normal is computed as:
\[
n_i = \tanh(s_i) \cdot \frac{d_i}{\|d_i\|} \> 
\]
This decomposition into direction and sign enforces the norm of the normal to be lower than 1, while allowing for ‘‘flipping’’ the direction simply by changing the sign parameter $s_i$. Overall, it stabilizes gradients and prevents optimization from getting stuck in local minima.

\paragraph{CUDA implementation of forward and backward pass.} As stated in the main paper, we provide a modified version of the efficient rasterizer proposed by the work Geometry Grounded Gaussian Splatting~\cite{Zhang2026GeometryGrounded}. This modification consists of using our definition of transmittance~(\cref{theorem:transmittance_gaussian}), instead of the original one that leads to the issues presented in~\cref{fig:comparison_transmittance}. In practice, the differences are simple and involve removing the square root of the original formulation. We will release the code upon acceptance.

\paragraph{T\&T Eval Bias.} In the main paper we mention that there is a bias towards dense meshes in the current evaluation used by the rest of the literature on the T\&T dataset. We give as simple support this snippet of code available in the RaDe-GS~\cite{zhang2024rade} repository (see~\cref{fig:code_flaw}). The code creates the point cloud to be compared to the ground truth by simply concatenating the available vertices of the mesh and the center of each face---as opposed to uniformly sampling the mesh surface.

\subsection{DTU Qualitative Examples}

In this section we briefly present some renders of the meshes extracted from DTU with our method. They can be found in \cref{fig:dtu_qualitative}.

\begin{figure}[ht!]
\centering
\setlength{\tabcolsep}{1pt}

\begin{tabular}{ccc}
    \includegraphics[width=0.323\linewidth]{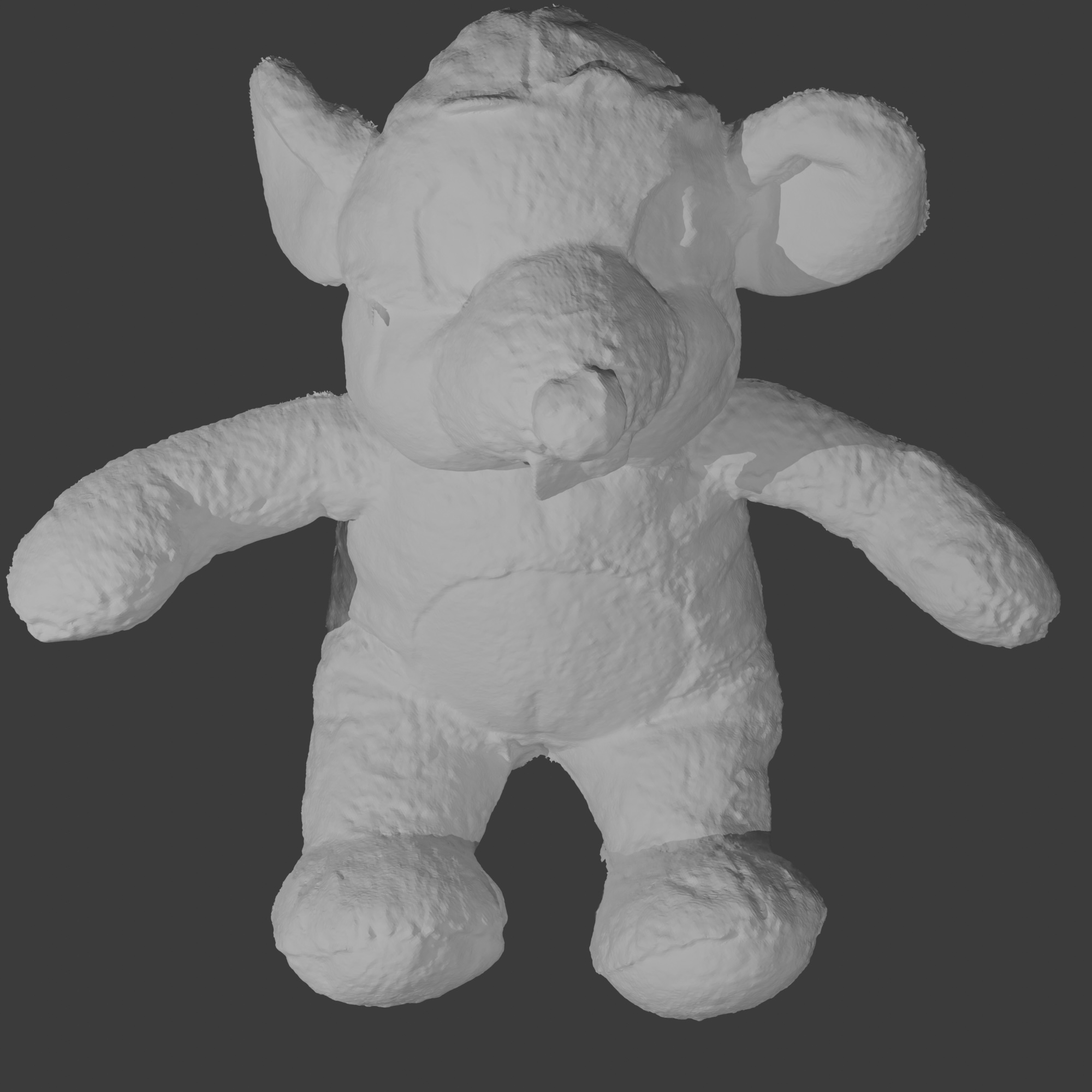} &
    \includegraphics[width=0.323\linewidth]{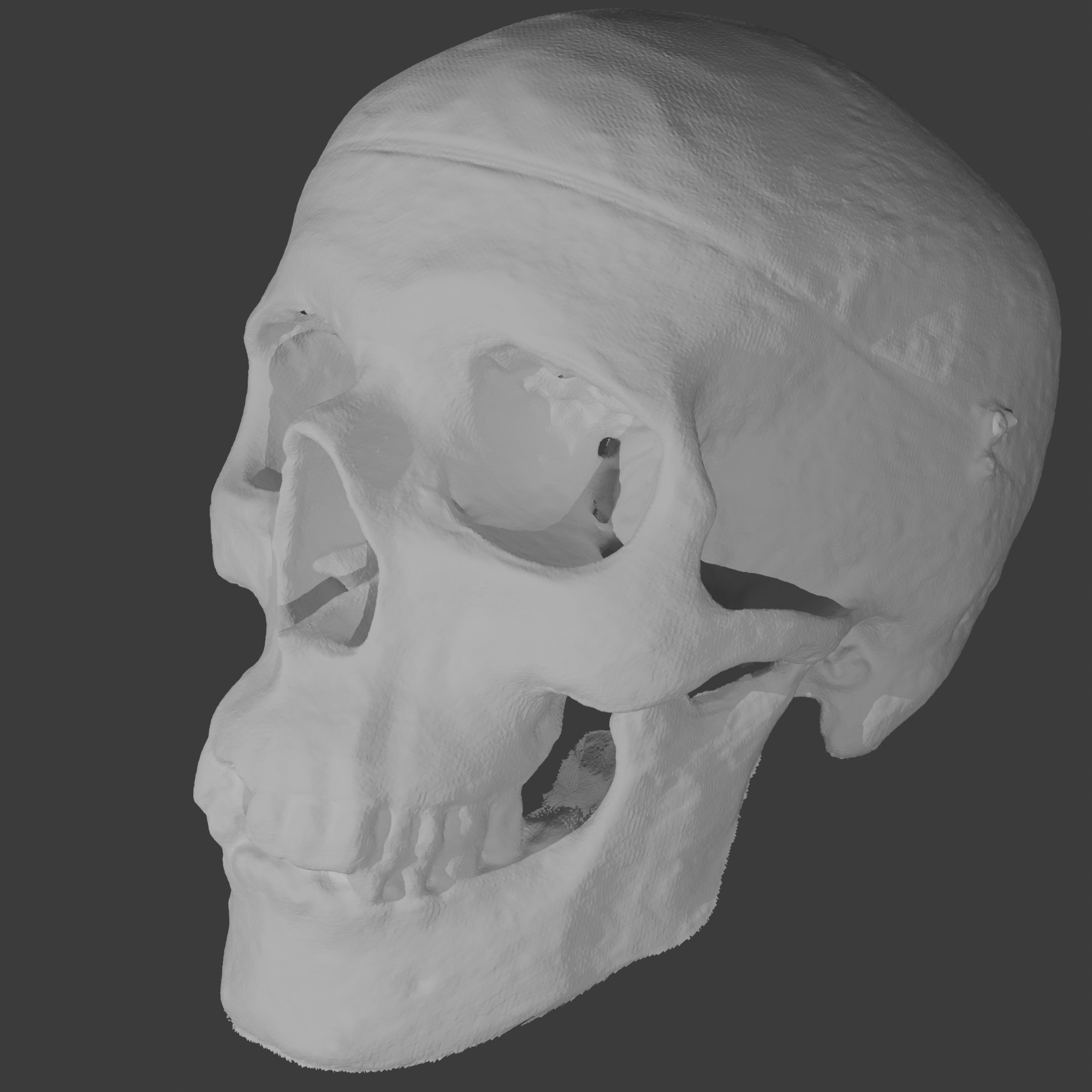} &
    \includegraphics[width=0.323\linewidth]{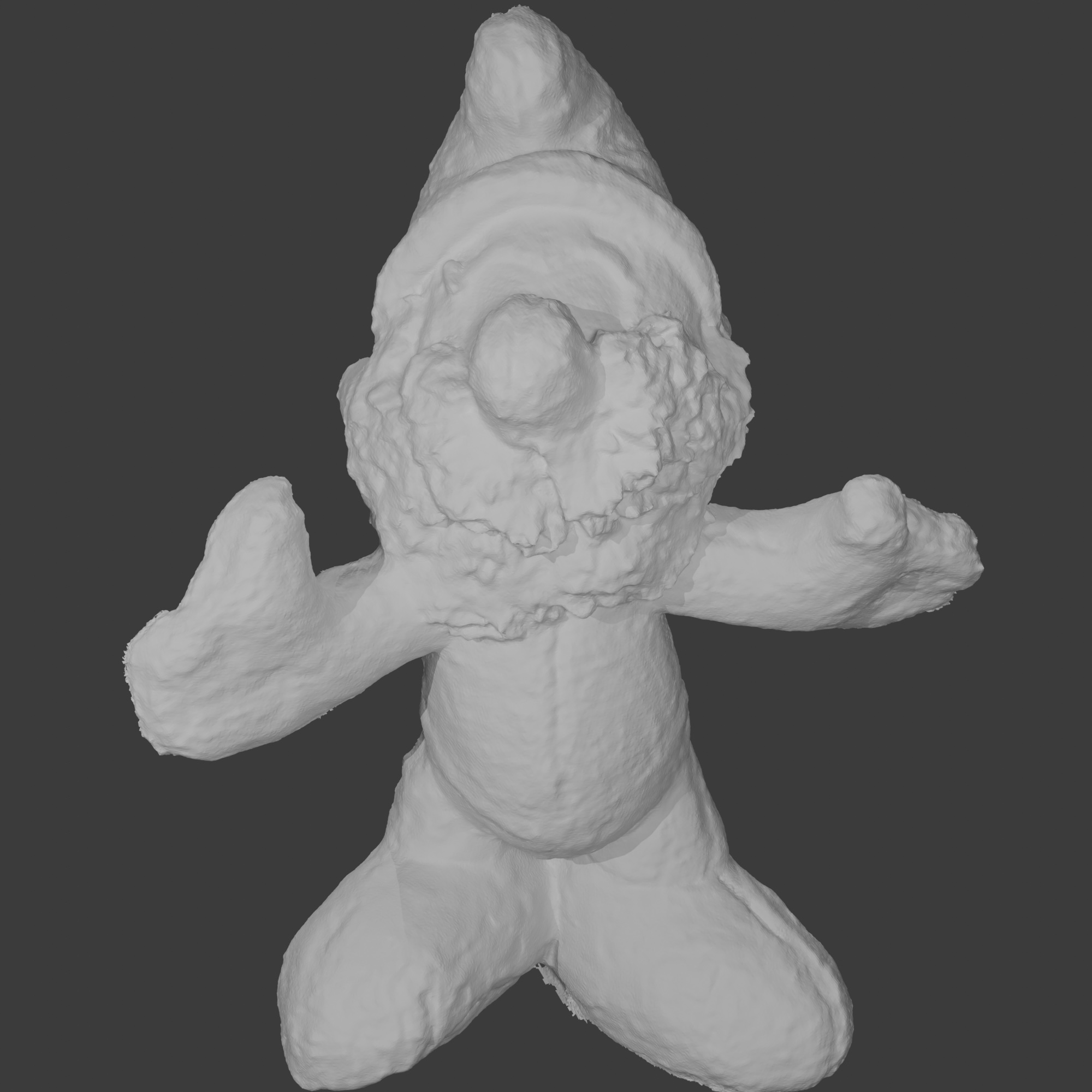} \\
\end{tabular}

\caption{
\textbf{Qualitative mesh renders on DTU.} Mesh renders of our method on three DTU scenes.
}\label{fig:dtu_qualitative}
\end{figure}

\subsection{T\&T Shortcomings Visualizations.}

We show in \cref{fig:tandt_shortcomings} that the T\&T ground-truth point clouds contain holes due to occlusion and unfavorable acquisition angles. These gaps are not a property of the reconstructed mesh but of the acquisition process itself. Methods such as PGSR tend to produce meshes with holes that align with these GT gaps---due to their use of depth filtering~\cite{chen2024pgsr} that relies on removing points viewed at grazing angles---which inflates their scores under the standard F1 metric. Our method reconstructs a more complete surface shell, which can be penalized in these under-observed regions.

\begin{figure}[ht!]
\centering
\setlength{\tabcolsep}{1pt}
\renewcommand{\arraystretch}{0.5}

\begin{tabular}{ccc}
    \small GT Point Cloud & \small PGSR & \small Ours \\[2pt]
    \includegraphics[width=0.323\linewidth]{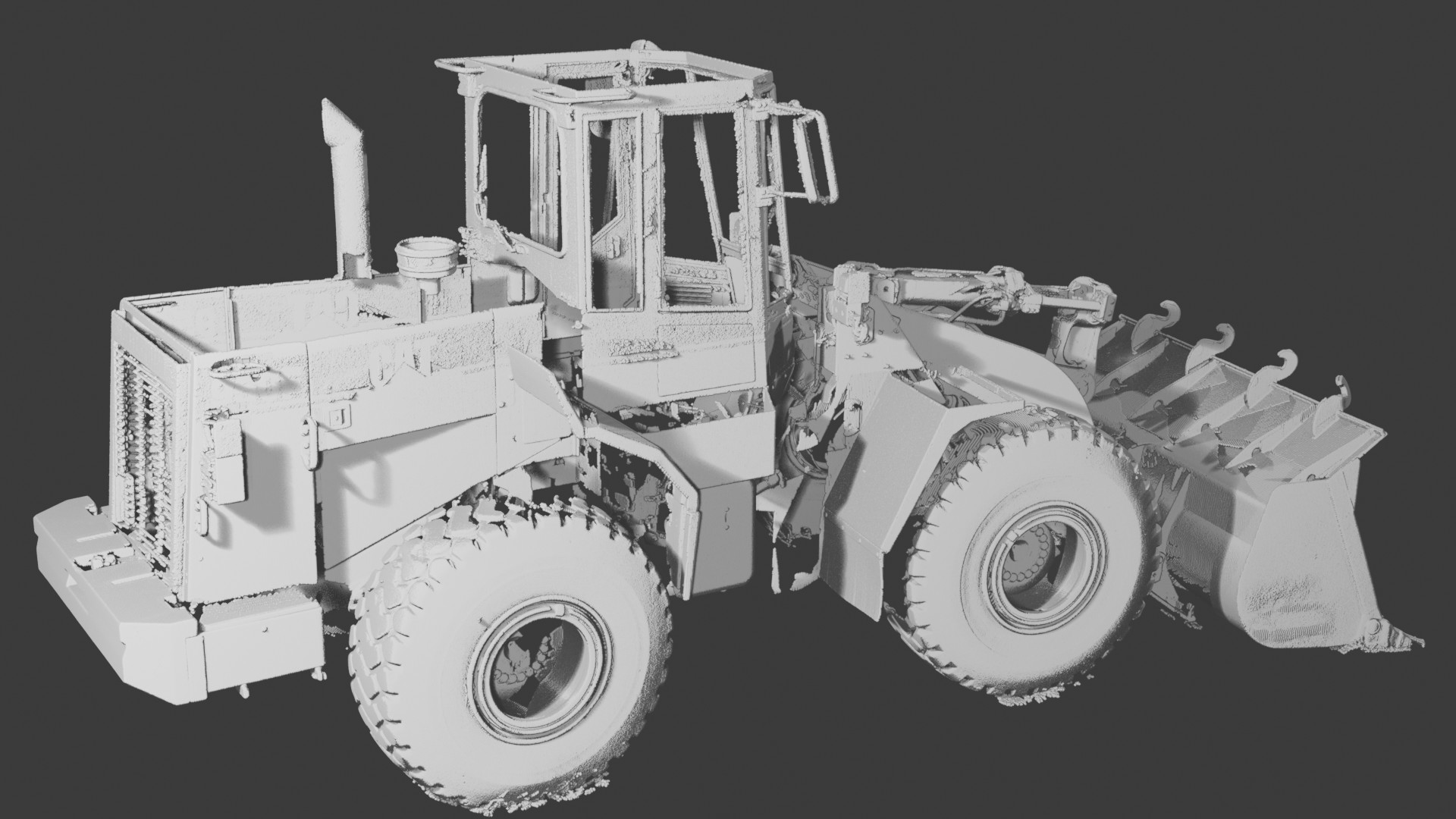} &
    \includegraphics[width=0.323\linewidth]{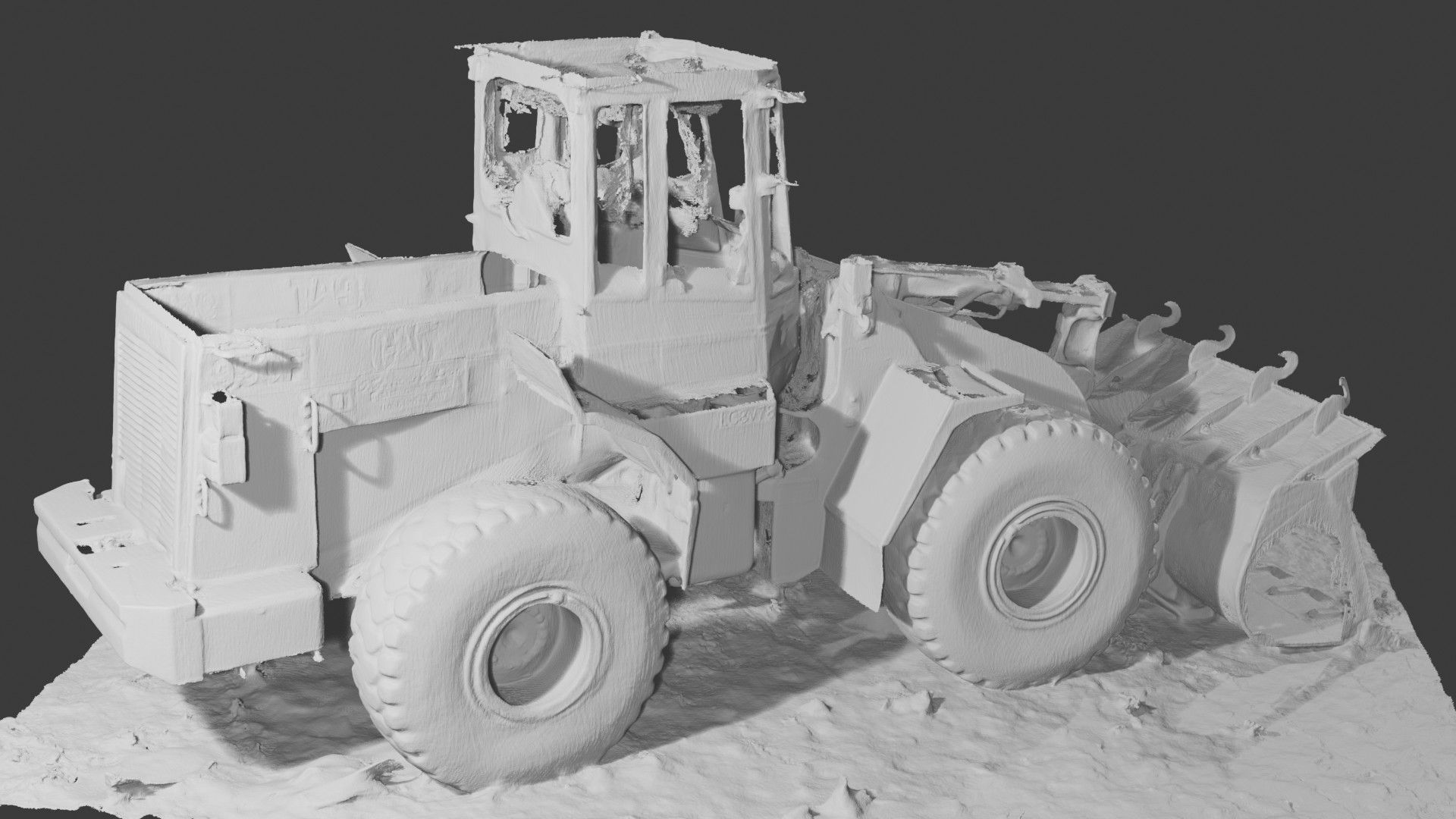} &
    \includegraphics[width=0.323\linewidth]{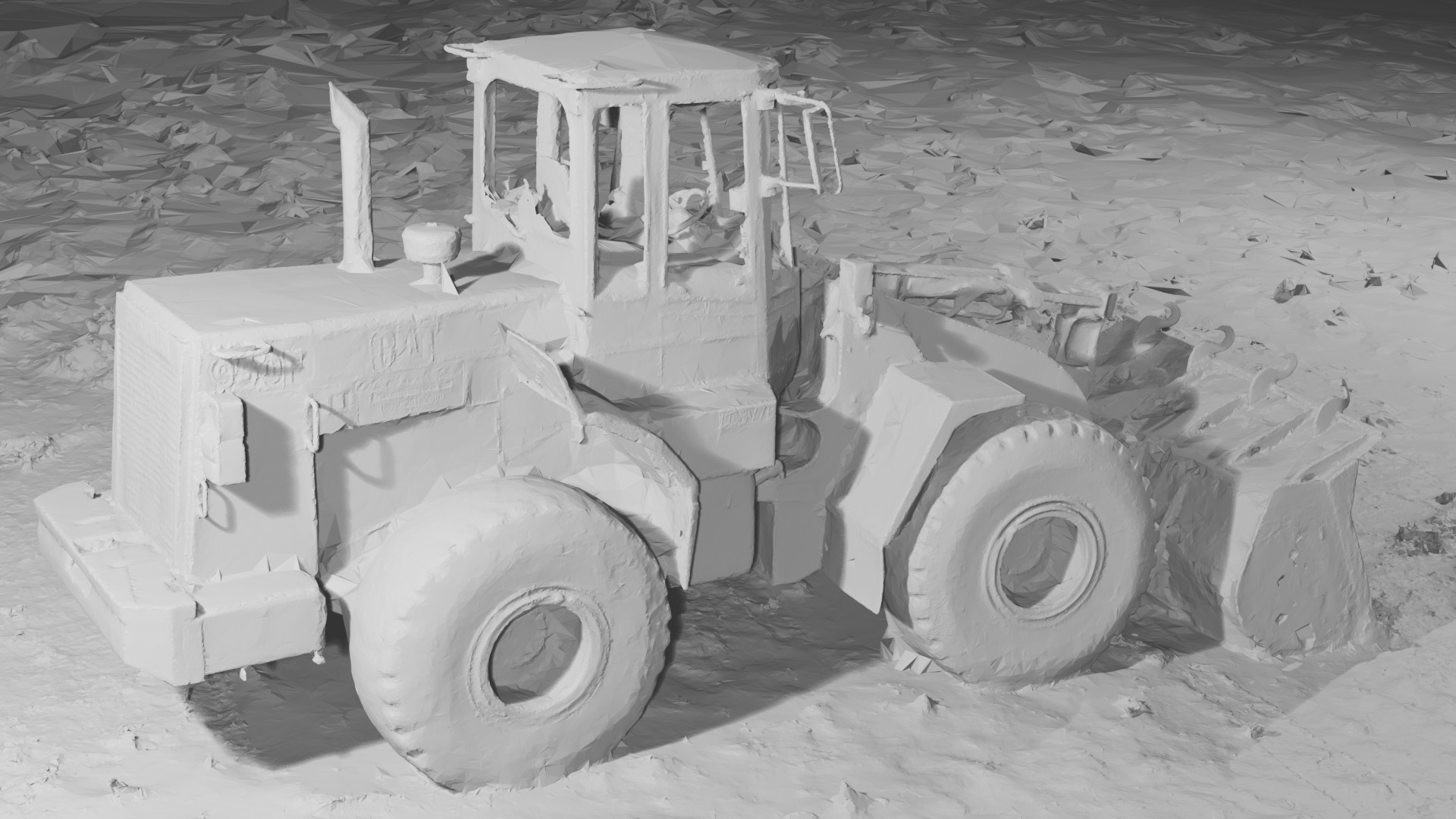} \\[4pt]
    \includegraphics[width=0.323\linewidth]{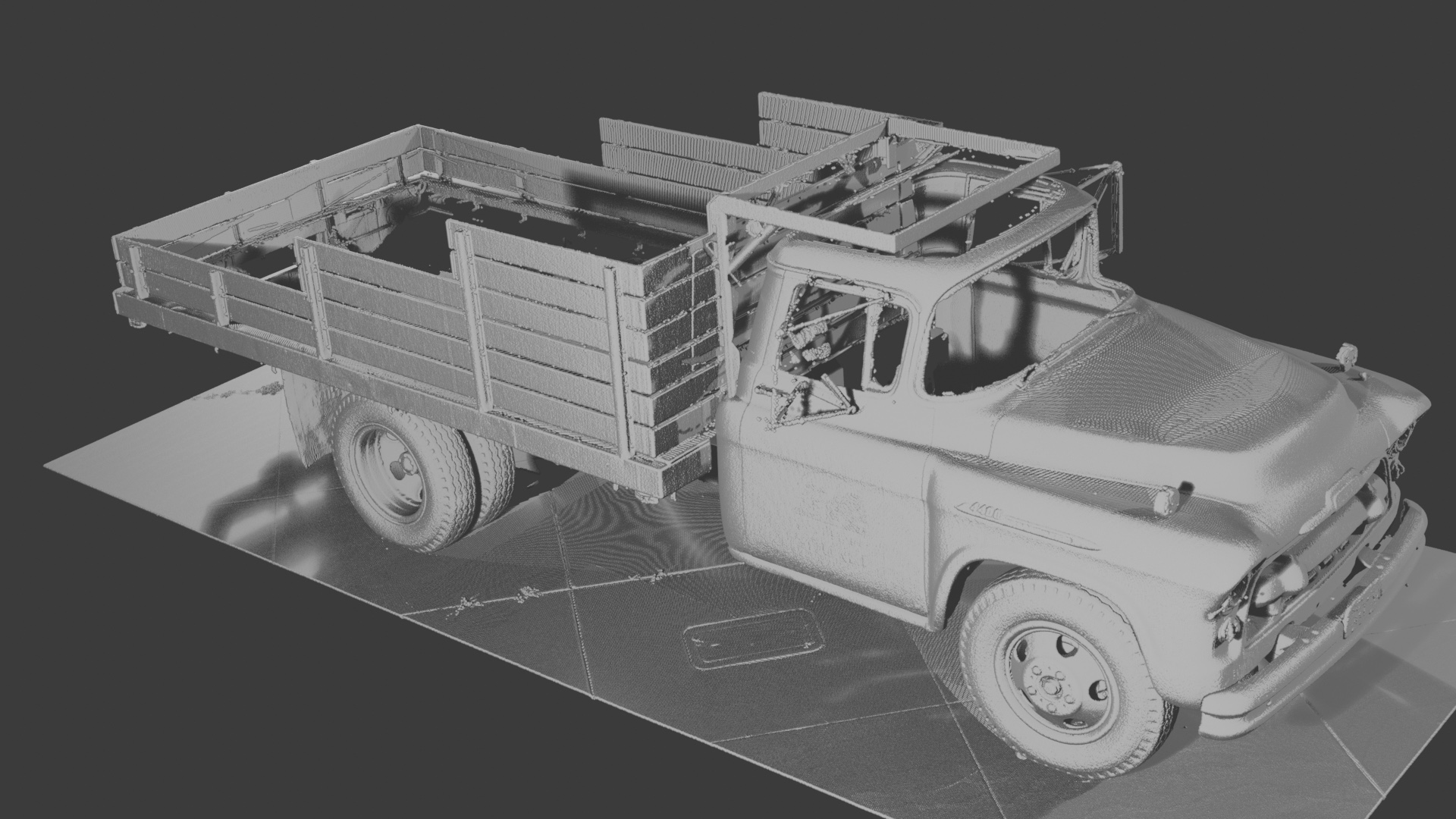} &
    \includegraphics[width=0.323\linewidth]{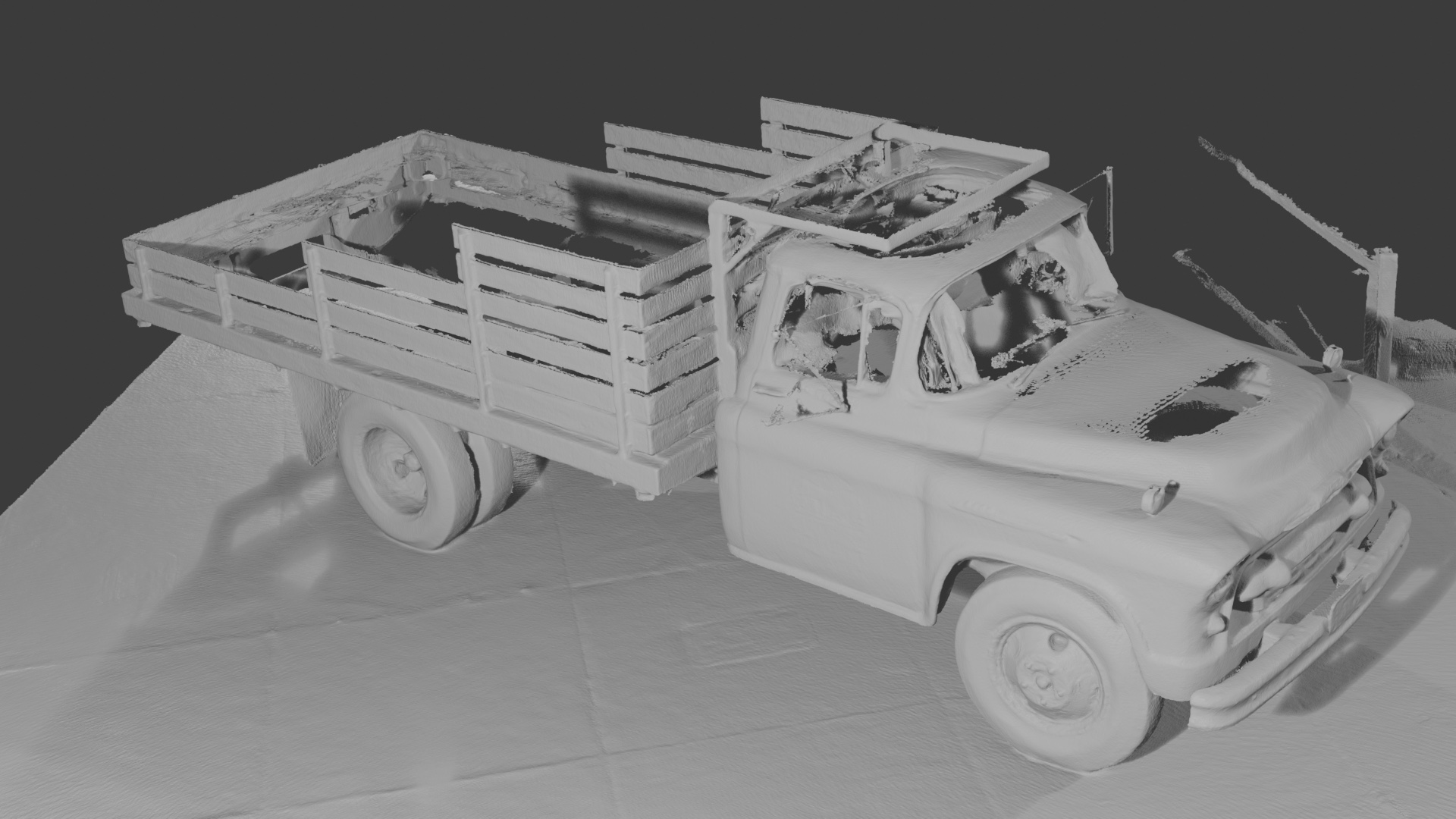} &
    \includegraphics[width=0.323\linewidth]{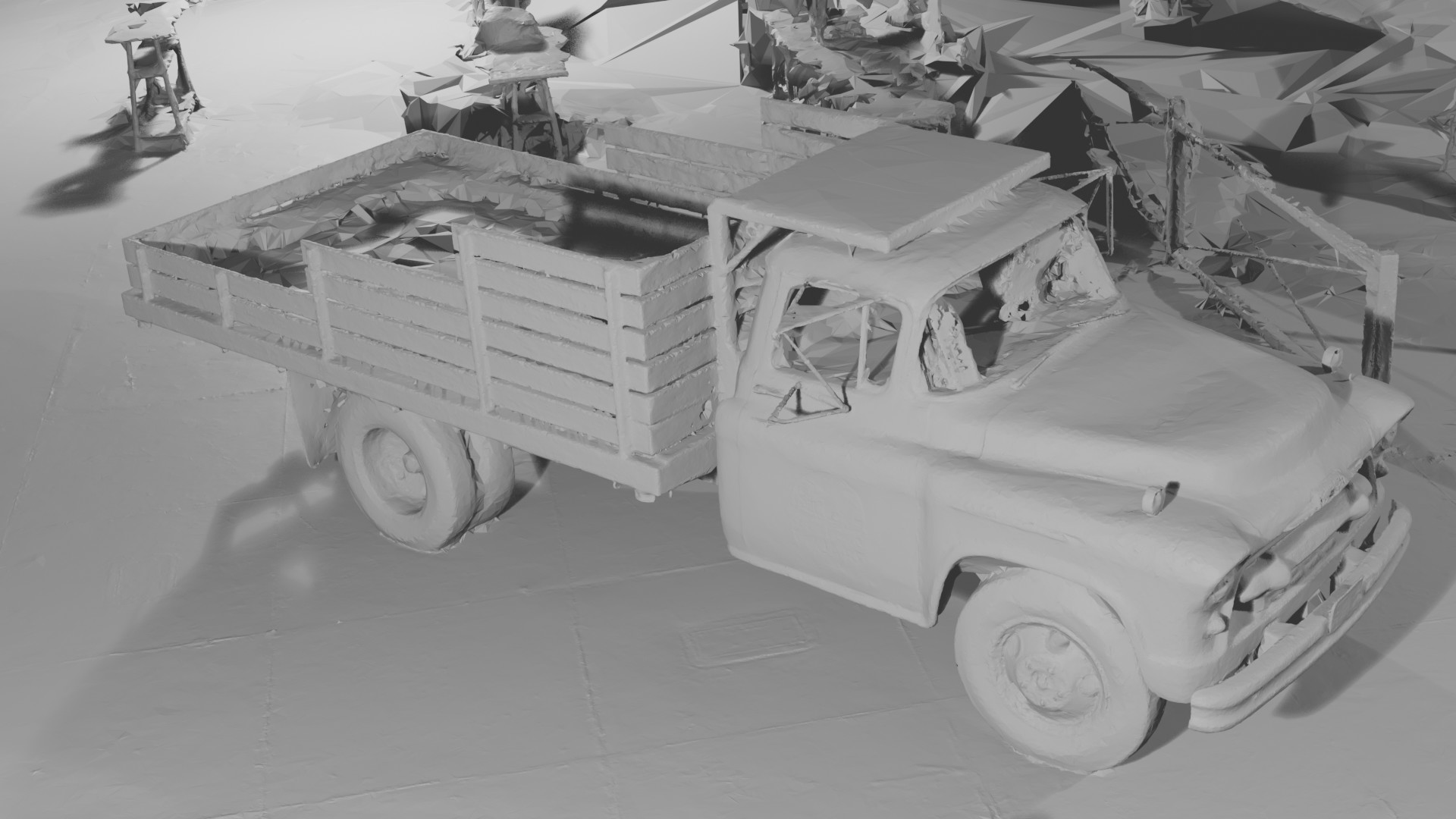} \\
\end{tabular}

\caption{
\textbf{T\&T evaluation shortcomings on the Caterpillar scene.}
The standard Tanks \& Temples ground-truth point cloud (left) contains holes due to occlusion and unfavorable acquisition angles---these are limitations of the LiDAR acquisition, not of the reconstruction.
PGSR (center) produces a mesh with holes that align with GT gaps, which inflates its F1 score.
Our method (right) reconstructs a more complete surface shell, which can be penalized by the evaluation metric in these under-observed regions. Note that there is no GT mesh, instead for visualization purposes we use the Points to Mesh Blender functionality to render the GT point cloud. 
}\label{fig:tandt_shortcomings}
\end{figure}

\begin{figure}[ht!]
\centering
\includegraphics[width=0.75\linewidth]{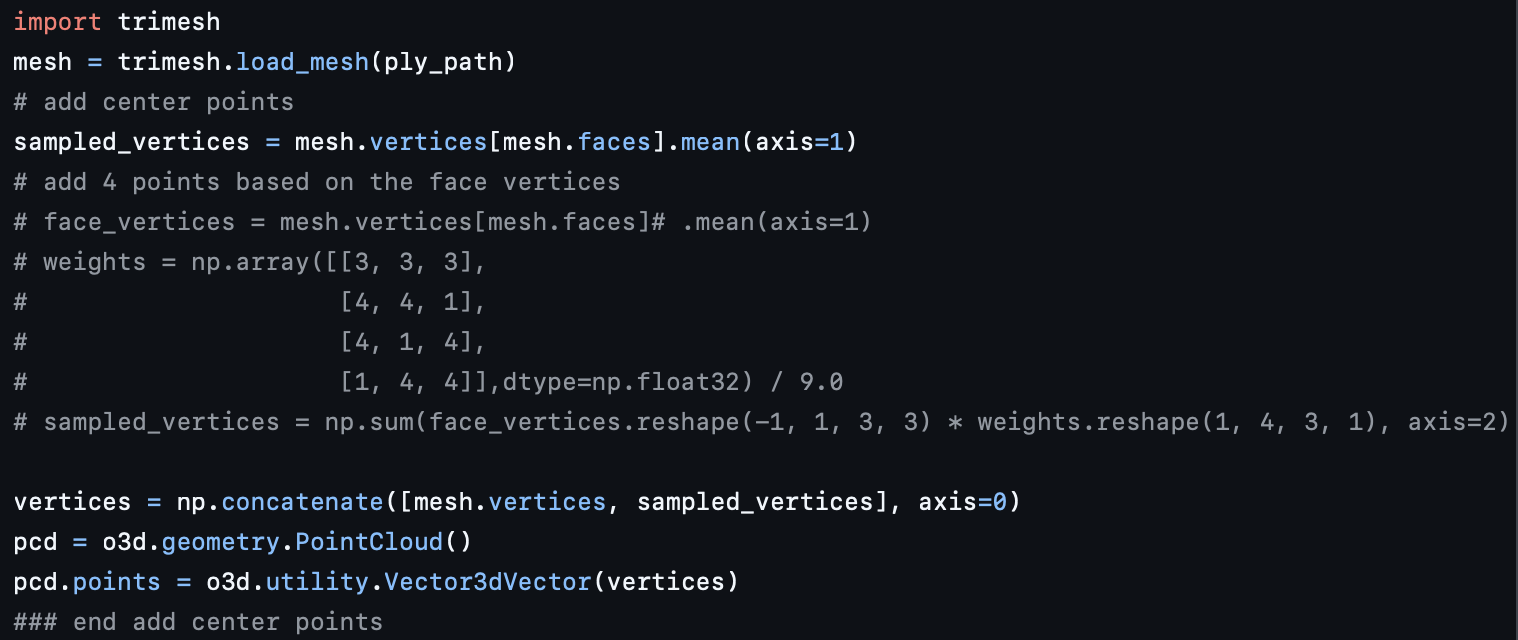}
\caption{
\textbf{Evaluation Code of T\&T used by the literature~\cite{yu2024gaussian,zhang2024rade,chen2024pgsr,jiang2025geometry}}
The traditional evaluation script uses the vertices and face center points from the mesh and uses them to build a point cloud. This way of constructing a prediction point cloud biases the metric towards dense meshes.
}\label{fig:code_flaw}
\end{figure}

\subsection{Additional Quantitative Results.}

In this section we showcase some of the mentioned and discussed experiments and results in the main paper, such as the Legacy Table for the T\&T dataset~(\cref{tab:old_mesh_quality}); the DTU evaluation~(\cref{tab:surface_metrics_dtu}); and the Mip~NeRF360 Novel View Synthesis Evaluation~(\cref{tab:nvs_metrics}).

\begin{table*}[!ht]
    \centering
    \caption{\textbf{Legacy quantitative comparison on the Tanks \& Temples dataset~\cite{Knapitsch2017} using the standard vertex-based metric.} We report F1-score and average optimization time. All results besides ours are taken from the MILo paper. When a method fails due to out-of-memory errors, we report the mean over successful scenes, denoted by a superscript asterisk ($x^*$). Despite the known bias of this metric (see \cref{fig:code_flaw})---vertex-based sampling non-uniformly queries the isosurface in proportion to mesh density---our method achieves the best F1 score among all representations. The sensitivity of this metric to mesh density is further evidenced by the significant score increase obtained simply by adding more pivot points.
    }
    \vspace{-0.2cm}
    \resizebox{0.98\linewidth}{!}{
    \begin{tabular}{@{}l|cccccc|cccccccc|cc}\label{tab:old_mesh_quality}
     & \multicolumn{6}{c@{}|}{Implicit} & \multicolumn{10}{c@{}}{Explicit} \\ 
     & Gaussian UDF & GS-Pull & GSDF & NeuS & Geo-Neus & Neuralangelo & SuGaR & 3DGS & 2DGS & GOF & RaDe-GS & QGS & VCR-GauS & MILo (GOF base) & \textbf{Ours (2p)} & \textbf{Ours (9p)}\\ 
     \hline
    Barn & 0.27 & 0.60 & 0.16 &  0.29 &  0.33 &  0.70  & 0.14 & 0.13 & 0.36 & 0.51 & 0.43 & 0.43 & 0.62 & 0.59 & 0.58 & 0.65 \\
    Caterpillar & 0.17 & 0.37 & 0.11 & 0.29 & 0.26 & 0.36 & 0.16 & 0.08 & 0.23 & 0.41 & 0.32 & 0.31 & 0.26 & 0.39 & 0.48 & 0.50 \\
    Courthouse & 0.03 & 0.16 & OOM & 0.17 & 0.12 & 0.28 & 0.08 & 0.09 & 0.13 & 0.28 & 0.21 & 0.26 & 0.19 & 0.29 & 0.30 & 0.35 \\
    Ignatius & 0.38 & 0.71 & 0.34 & 0.83 & 0.72 & 0.89 & 0.33 & 0.04 & 0.44 & 0.68 & 0.69 & 0.79 & 0.61 & 0.78 & 0.70 & 0.75 \\
    Meetingroom & 0.17 & 0.22 & 0.01 & 0.24 & 0.20 & 0.32 & 0.15 & 0.01 & 0.16 & 0.28 & 0.25 & 0.25 & 0.19 & 0.26 & 0.33 & 0.39 \\
    Truck & 0.30 & 0.52 & 0.25 & 0.45 & 0.45 & 0.48 & 0.26 & 0.19 & 0.26 & 0.59 & 0.51 & 0.60 & 0.52 & 0.62 & 0.58 & 0.61 \\
    \hline
    Mean & 0.22 & 0.43 & $0.18^*$ & 0.38 & 0.35 & 0.50 & 0.19 & 0.09 & 0.30 & 0.46 & 0.40 & 0.44 & 0.40 & 0.49 & 0.50 & 0.54 \\
    Time & 90m & 37.6m & 70m & >24h & >24h & >24h & 73~m & 7.9~m & 12.3~m & 69m & 11.5~m & 120 m & 53 m & 150 m & 27 m & 27m \\ 
    \end{tabular}
    }
    \vspace{-0.2cm}
\end{table*}

\setlength\tabcolsep{0.5em}

\begin{table*}[!ht]
\centering
\caption{\textbf{Surface reconstruction metrics on the DTU dataset}. We report the Chamfer Distance across 15 scenes. Our method achieves competitive performance on object-centric scenes, while being able to recover very fine details in unbounded scenes.
}
\vspace{-0.2cm}
\resizebox{.98\textwidth}{!}{
\begin{tabular}{@{}llcccccccccccccccclc}
\hline
 \multicolumn{3}{c}{} & 24 & 37 & 40 & 55 & 63 & 65 & 69 & 83 & 97 & 105 & 106 & 110 & 114 & 118 & 122 & & Mean \\ \cline{4-18} \cline{20-20}
\multirow{4}{*}{\rotatebox[origin=c]{90}{implicit}} & NeRF~\cite{mildenhall2020nerf} & & 1.90 & 1.60 & 1.85 & 0.58 & 2.28 & 1.27 & 1.47 & 1.67 & 2.05 & 1.07 & 0.88 & 2.53 & 1.06 & 1.15 & 0.96 & & 1.49 \\
& VolSDF~\cite{yariv2021volume_volsdf} & & 1.14 & 1.26 & 0.81 & 0.49 & 1.25 & 0.70 & 0.72 & 1.29 & 1.18 & 0.70 & 0.66 & 1.08 & 0.42 & 0.61 & 0.55 & & 0.86 \\
& NeuS~\cite{wang2021neus} & & 1.00 & 1.37 & 0.93 & 0.43 & 1.10 & 0.65 & 0.57 & 1.48 & 1.09 & 0.83 & 0.52 & 1.20 & \snd{0.35} & 0.49 & 0.54 & & 0.84 \\
& Neuralangelo~\cite{li2023neuralangelo} & & \snd{0.37} & 0.72 & \trd{0.35} & \trd{0.35} & 0.87 & \trd{0.54} & 0.53 & 1.29 & 0.97 & 0.73 & 0.47 & 0.74 & \snd{0.32} & 0.41 & 0.43 & & 0.61 \\ 
\cline{2-2} \cline{4-18} \cline{20-20}
\multirow{10}{*}{\rotatebox[origin=c]{90}{explicit}} 
& 3D GS~\cite{kerbl3Dgaussians} & & 2.14 & 1.53 & 2.08 & 1.68 & 3.49 & 2.21 & 1.43 & 2.07 & 2.22 & 1.75 & 1.79 & 2.55 & 1.53 & 1.52 & 1.50 & & 1.96 \\
& SuGaR~\cite{guedon2024sugar} & & 1.47 & 1.33 & 1.13 & 0.61 & 2.25 & 1.71 & 1.15 & 1.63 & 1.62 & 1.07 & 0.79 & 2.45 & 0.98 & 0.88 & 0.79 & & 1.33  \\
& 2D GS~\cite{huang20242d} && 0.48 & 0.91 & 0.39 & 0.39 & 1.01 & 0.83 & 0.81 & 1.36 & 1.27 & 0.76 & 0.70 & 1.40 & 0.40 & 0.76 & 0.52 & & 0.80 \\
& GOF~\cite{yu2024gaussian} & & 0.50 & 0.82 & 0.37 & 0.37 & 1.12 & 0.74 & 0.73 & 1.18 & 1.29 & 0.68 & 0.77 & 0.90 & 0.42 & 0.66 & 0.49 & & 0.74\\
& RaDe-GS~\cite{zhang2024rade} & & 0.46 & 0.73 & 0.33 & 0.38 & \snd{0.79} & 0.75 & 0.76 & 1.19 & 1.22 & \trd{0.62} & 0.70 & 0.78 & \trd{0.36} & 0.68 & 0.47 & & 0.68 \\
& MILo \cite{guedon2025milo} & & 0.43 & 0.74 & 0.34 & 0.37 & \trd{0.80} & 0.74 & 0.70 & 1.21 & 1.22 & 0.66 & 0.62 & 0.80 & 0.37 & 0.76 & 0.48 & & 0.68 \\
& PGSR \cite{chen2024pgsr} & & \fst{0.34} & \trd{0.54} & 0.44 & 0.37 & \fst{0.78} & 0.57 & 0.49 & \trd{1.06} & \fst{0.63} & \snd{0.59} & 0.47 & \fst{0.50} & \fst{0.30} & \trd{0.37} & \trd{0.34} & & 0.52  \\
& GGGS (20k iter) & & \trd{0.38} & \fst{0.50} & \snd{0.27} & \fst{0.31} & \trd{0.80} & \fst{0.43} & \fst{0.42} & \fst{1.04} & \snd{0.64} & \fst{0.52} & \fst{0.31} & \snd{0.56} & \fst{0.30} & \fst{0.31} & \fst{0.33} & & \fst{0.47}\\
& GGGS (30k iter) & & \snd{0.37} & \snd{0.51} & \snd{0.27} & \fst{0.31} & 0.81 & \fst{0.43} & \fst{0.42} & \snd{1.05} & \snd{0.64} & \fst{0.52} & \snd{0.32} & \trd{0.58} & \fst{0.30} & \fst{0.31} & \fst{0.33} & & \snd{0.48}  \\
& \textbf{Ours (20k iter)} & & 0.39 & \fst{0.50} & \fst{0.26} & \snd{0.32} & \trd{0.80} & \snd{0.46} & \snd{0.44} & \snd{1.05} & \trd{0.65} & \fst{0.52} & \trd{0.34} & \trd{0.58} & \snd{0.32} & \snd{0.33} & \snd{0.34} & & \trd{0.49} \\
& \textbf{Ours (30k iter)} & & 0.40 & \snd{0.51} & \snd{0.27} & \snd{0.32} & 0.81 & \snd{0.46} & \trd{0.45} & \fst{1.04} & \trd{0.65} & \fst{0.52} & \trd{0.33} & 0.59 & \snd{0.32} & \snd{0.33} & \snd{0.34} & & \trd{0.49} \\
 \hline
\end{tabular}
}
\label{tab:surface_metrics_dtu}
\vspace{-0.1cm}
\end{table*}

\begin{table}[t]
    \centering
    \caption{\textbf{Quantitative results for novel view synthesis on MipNeRF~360~\cite{barron22mipnerf360}}. We report PSNR, SSIM, and LPIPS. Our method maintains competitive rendering quality while significantly improving surface reconstruction.}
    \vspace{-0.2cm}
    \resizebox{0.99\linewidth}{!}{
    \begin{tabular}{@{}l|ccc|ccc}
     & \multicolumn{3}{c@{}|}{Indoor Scenes} & \multicolumn{3}{c@{}}{Outdoor Scenes}\\ 
     & PSNR $\uparrow$ & SSIM $\uparrow$ & LPIPS $\downarrow$
     & PSNR $\uparrow$ & SSIM $\uparrow$ & LPIPS $\downarrow$ 
     \\ 
     \hline
    3DGS \cite{kerbl3Dgaussians}
    & 30.41 & 0.920 & 0.189
    & 24.64 & 0.731 & 0.234
    \\
    Mip-Splatting \cite{yu2024mip}
    & \best 30.90 & 0.921 & 0.194 
    & 24.65 & 0.729 & 0.245 
    \\
    BakedSDF \cite{yariv2023bakedsdf}
    & 27.06 & 0.836 & 0.258
    & 22.47 & 0.585 & 0.349 
    \\
    SuGaR \cite{guedon2024sugar}
    & 29.43 & 0.906 & 0.225
    & 22.93 & 0.629 & 0.356 
    \\
    2DGS \cite{ren20252dgs}
    & 30.40 & 0.916 & 0.195 
    & 24.34 & 0.717 & 0.246 
    \\
    GOF \cite{yu2024gaussian}
    & \sbest 30.79 & 0.924 & 0.184 
    & \tbest 24.82 & \tbest 0.750 & \tbest 0.202 
    \\
    RaDe-GS \cite{zhang2024rade}
    & 30.74 & \tbest 0.928 & \tbest 0.165 
    & \sbest 25.17 & \sbest 0.764 & \sbest 0.199 
    \\
    MILo (dense) \cite{guedon2025milo}
    & \tbest 30.76 & \best 0.934 & \best 0.155 
    & 24.81 & 0.744 & 0.229 
    \\
    Ours
    & 30.43 & \sbest 0.932 & \sbest 0.162 
    & \best 26.49 & \best 0.815 & \best 0.174 
    \\
    \end{tabular}
    }
    \label{tab:nvs_metrics}
    \vspace{-0.2cm}
\end{table}

\subsection{Additional Qualitative Results}

In this section we present some additional qualitative results. In~\cref{fig:ablation_nf_barn}, we show qualitative results of our ablation study. In~\cref{fig:ablation_radegs}, we show qualitative results of adding Gaussian Wrapping to RaDe-GS.

\begin{figure}[ht!]
\centering
\setlength{\tabcolsep}{1pt}
\renewcommand{\arraystretch}{0.5}

\begin{tabular}{cc}
    \small Baseline & \small +$\mathcal{L}_{\text{N}}$ \\[2pt]
    \includegraphics[width=0.49\linewidth]{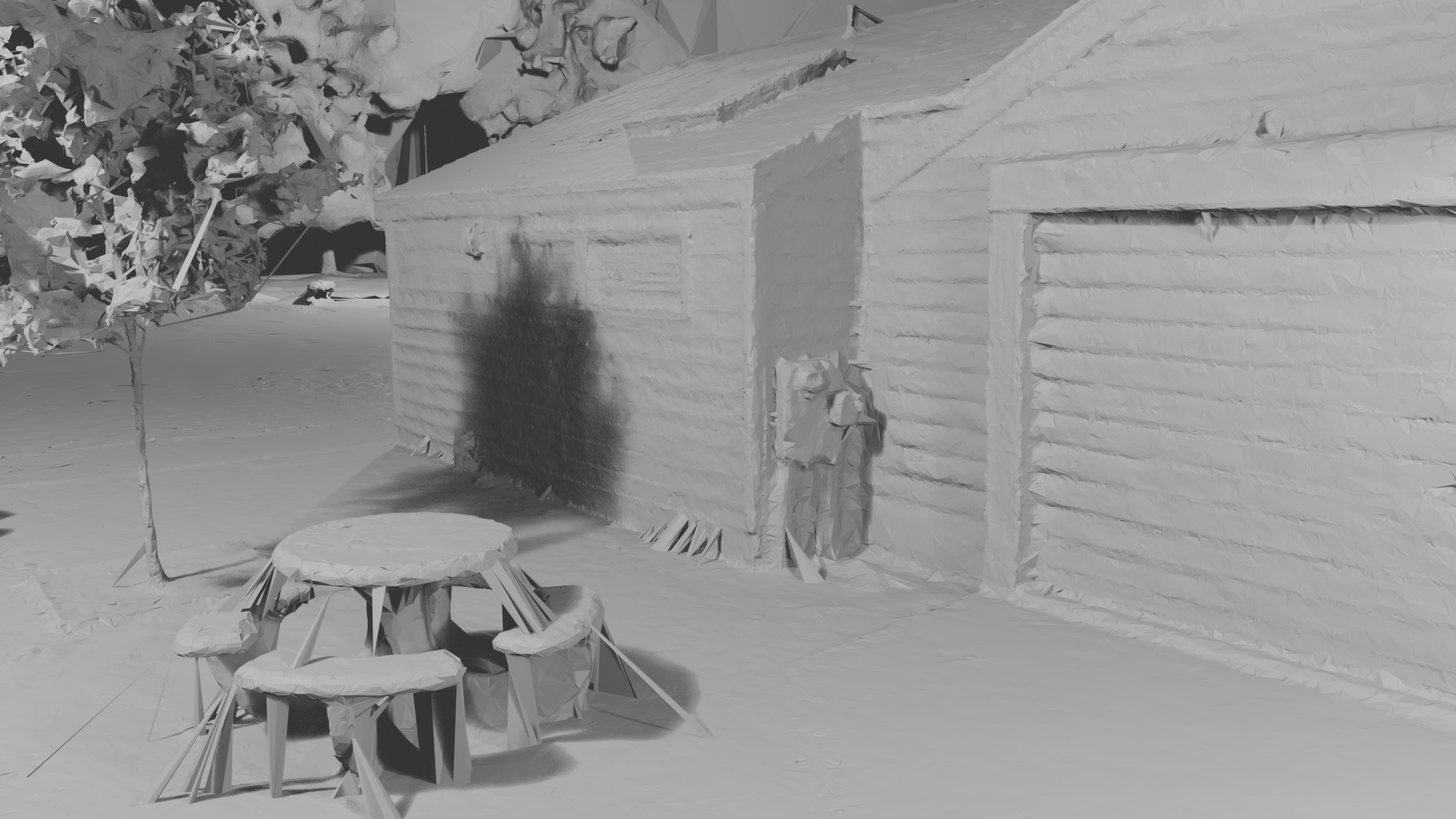} &
    \includegraphics[width=0.49\linewidth]{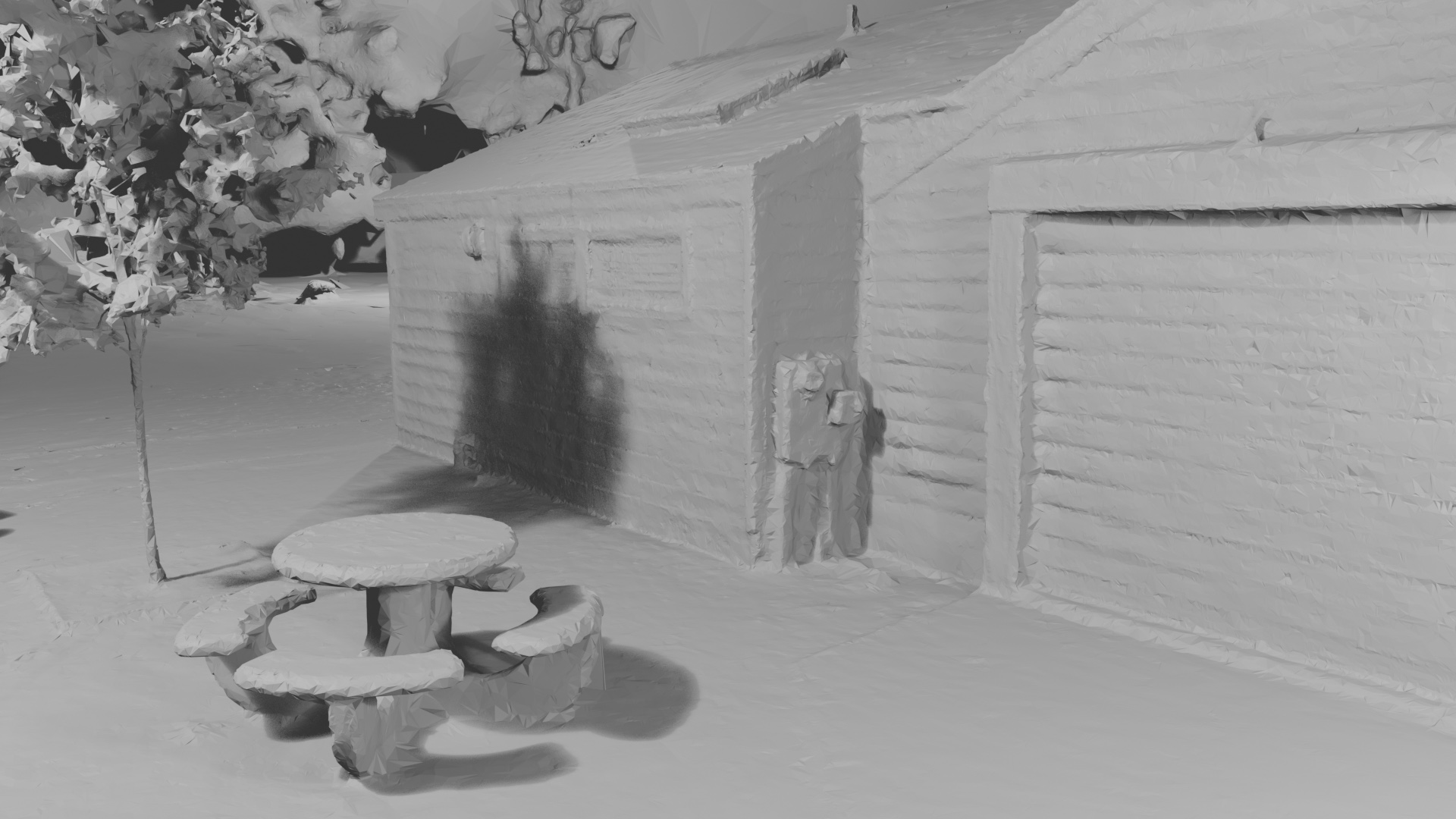} \\[6pt]
    \small +$\mathcal{L}_{\text{N}}$ & \small Ours \\[2pt]
    \includegraphics[width=0.49\linewidth]{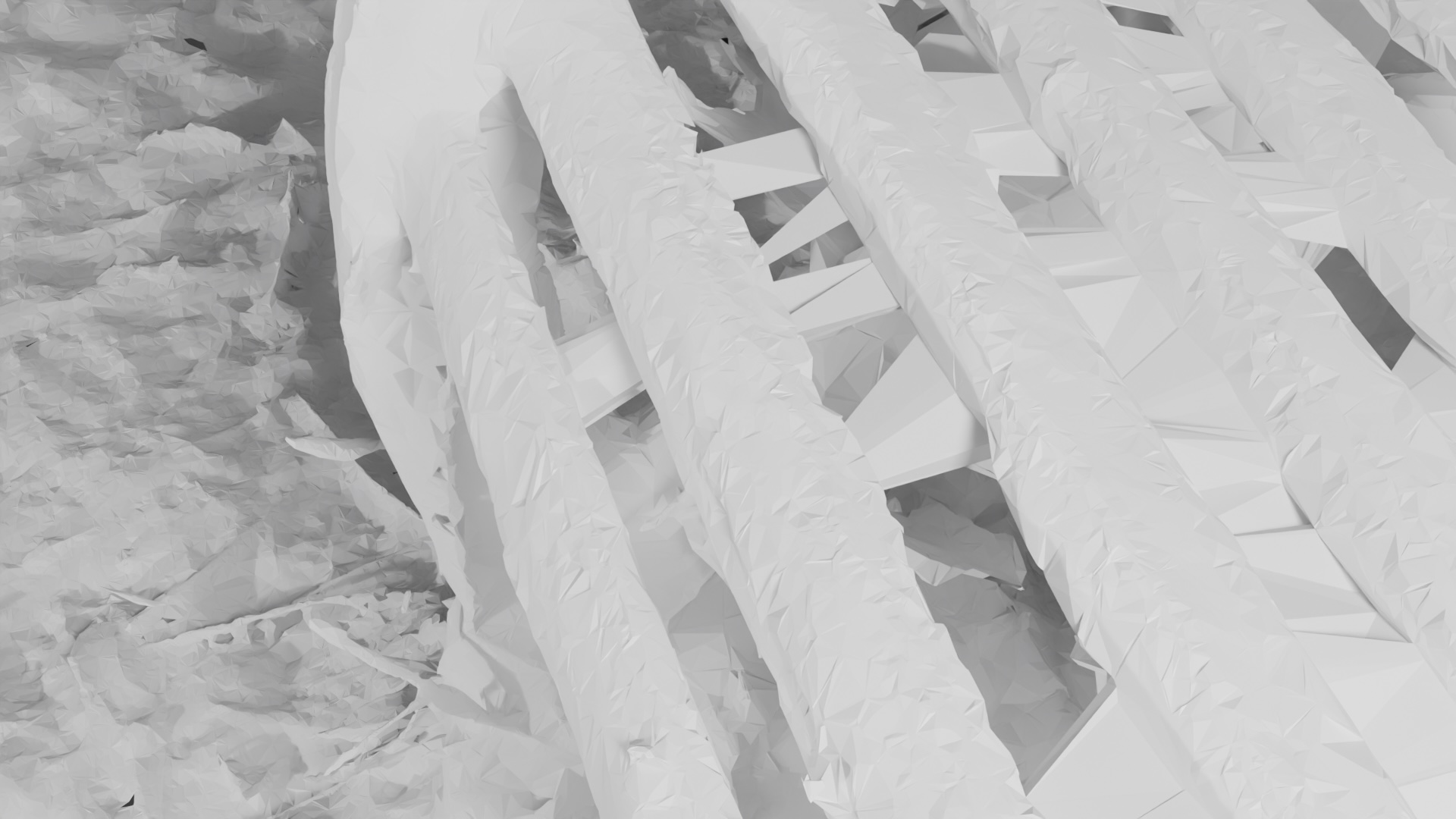} &
    \includegraphics[width=0.49\linewidth]{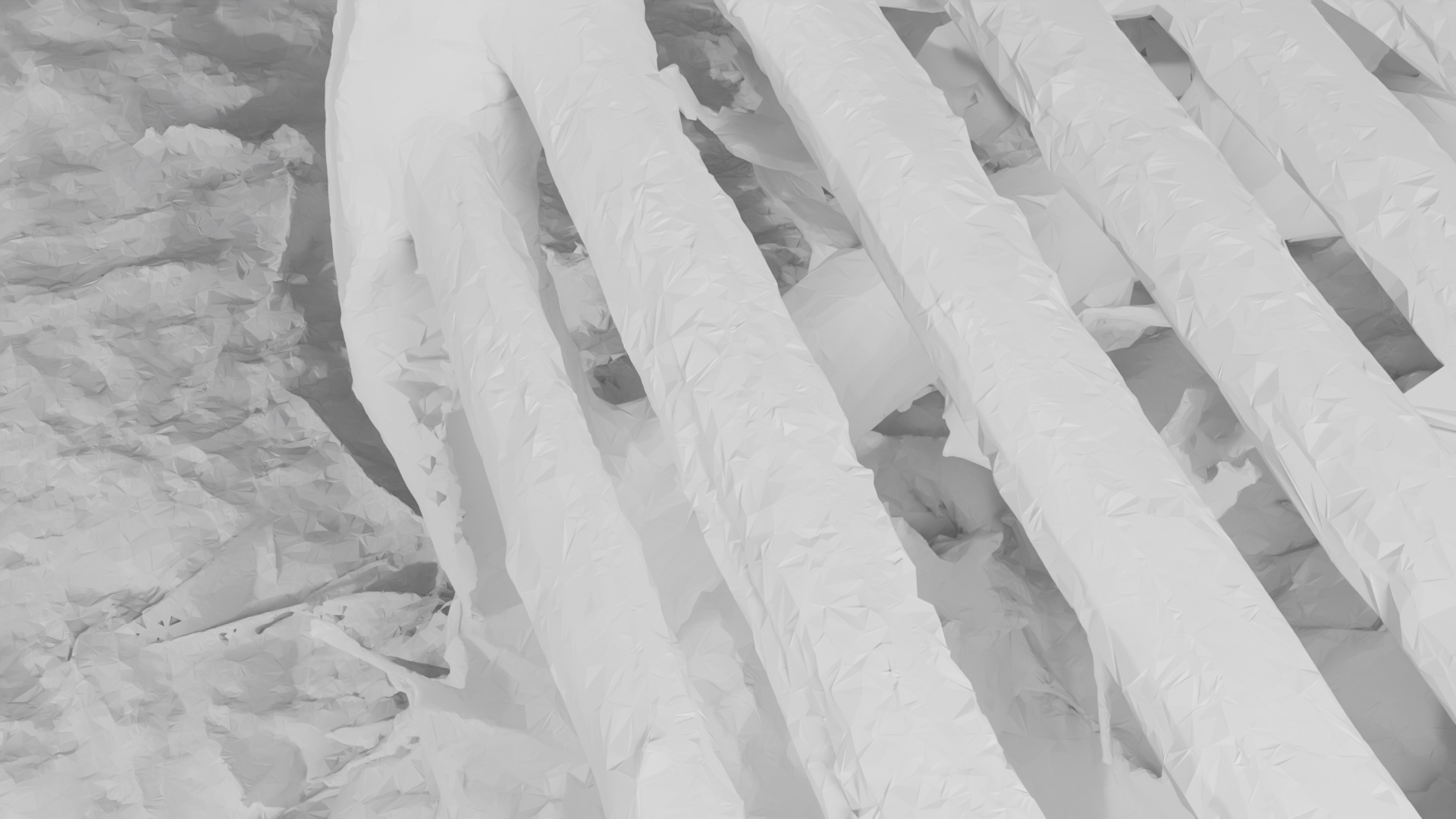} \\
\end{tabular}

\caption{
\textbf{Ablation of the normal field loss $\mathcal{L}_{\text{N}}$ and densification on the Barn and Truck scenes.}
On the Barn scene, we see that by extracting a mesh with 2 pivots without our alignment loss $\mathcal{L}_N$, several artifacts are present. These come in the form of large triangles that stem from a bad orientation of the Gaussians which result in bad placement for the pivots. Then below, we ablate the need for our densification (Ours = Baseline + $\mathcal{L}_N$ + Densification), we can see that it allows to solve for complex and poorly supervised geometry such as in between the bench slats in the bicyle scene.
}\label{fig:ablation_nf_barn}
\end{figure}

\begin{figure}[t]
\centering
\setlength{\tabcolsep}{1pt}
\begin{tabular}{cc}
    \small RaDe-GS & \small RaDe-GS + NF (Ours) \\[2pt]
    \includegraphics[width=0.49\linewidth]{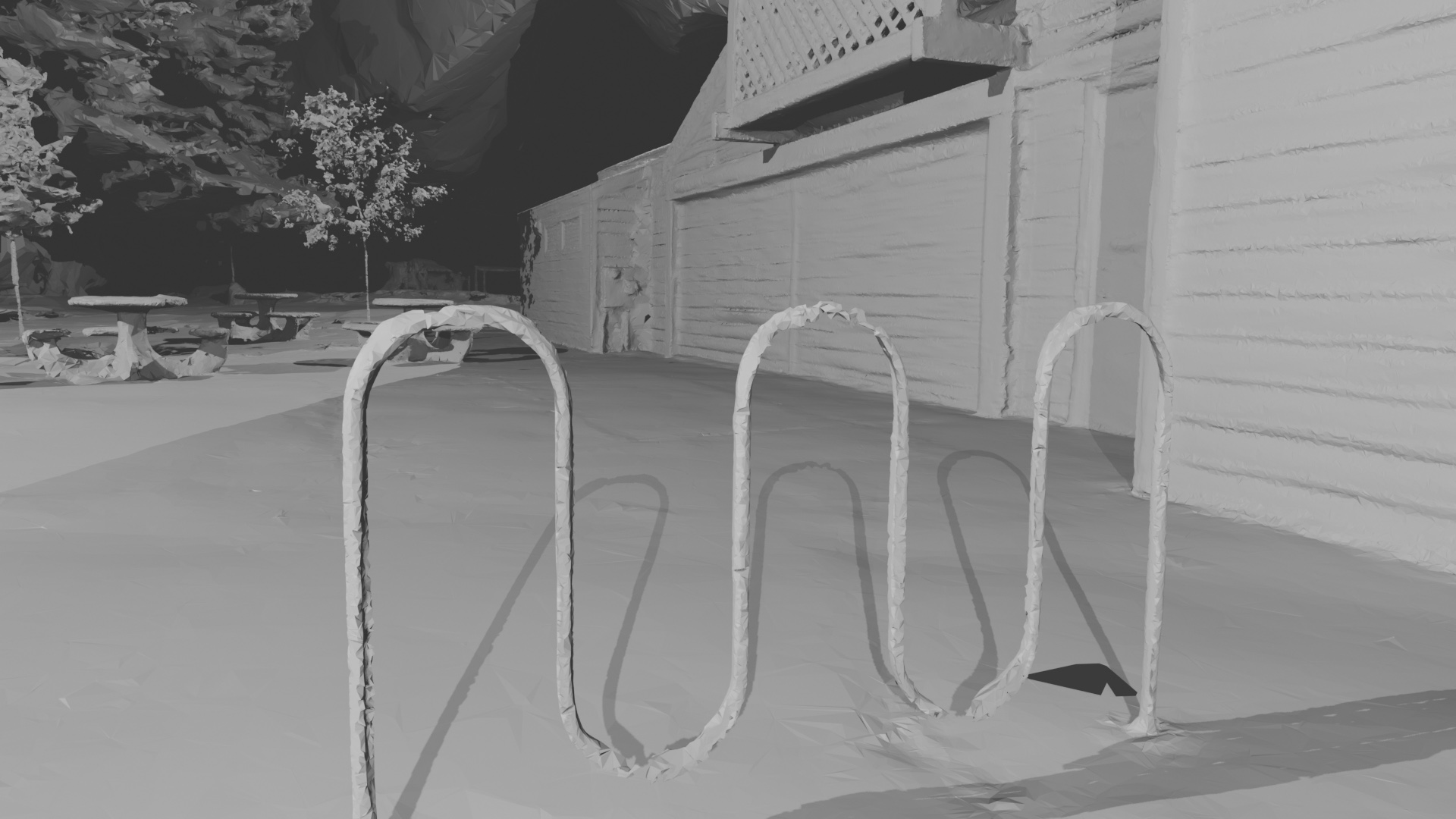} &
    \includegraphics[width=0.49\linewidth]{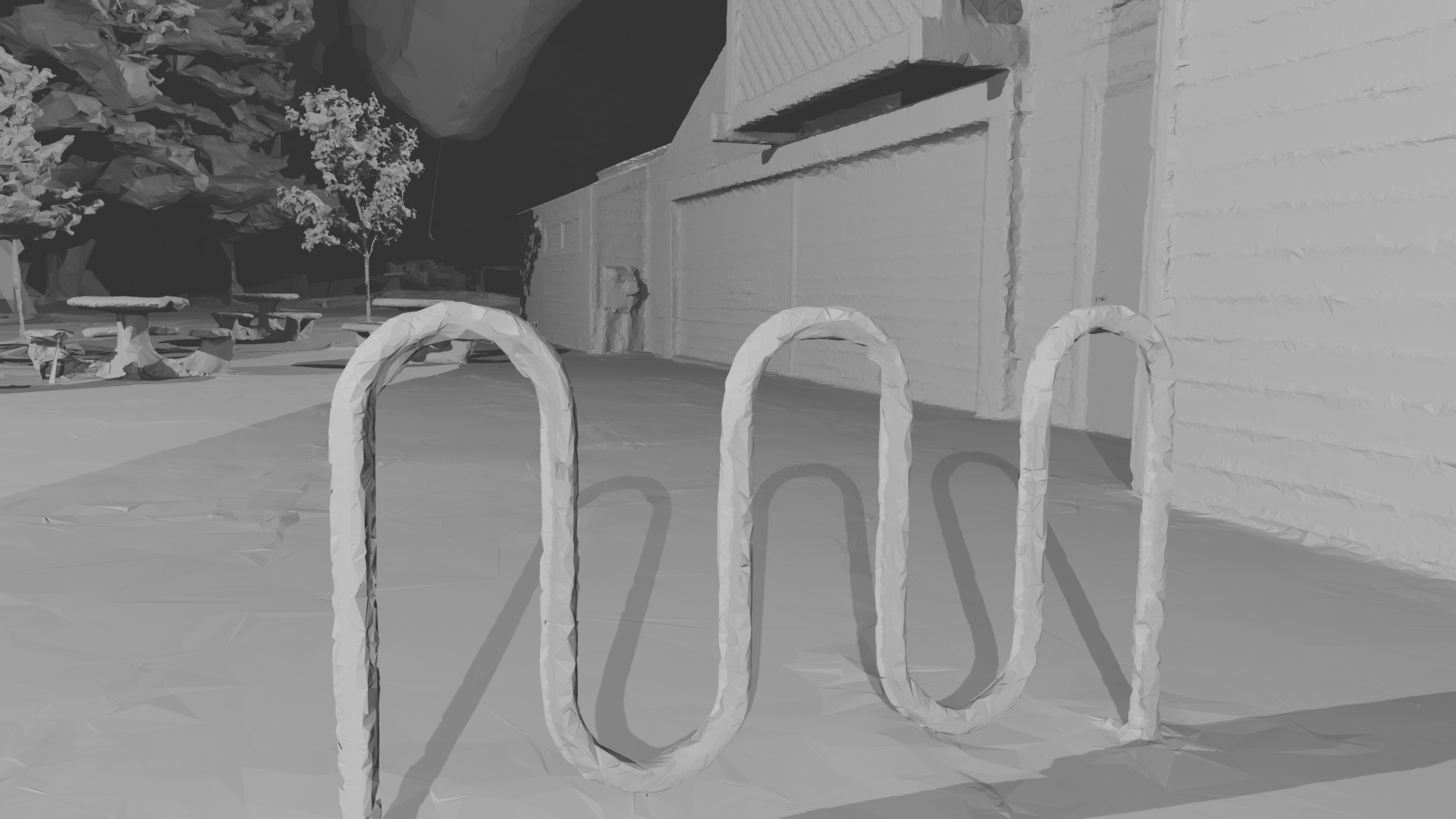} \\[4pt]
    \includegraphics[width=0.49\linewidth]{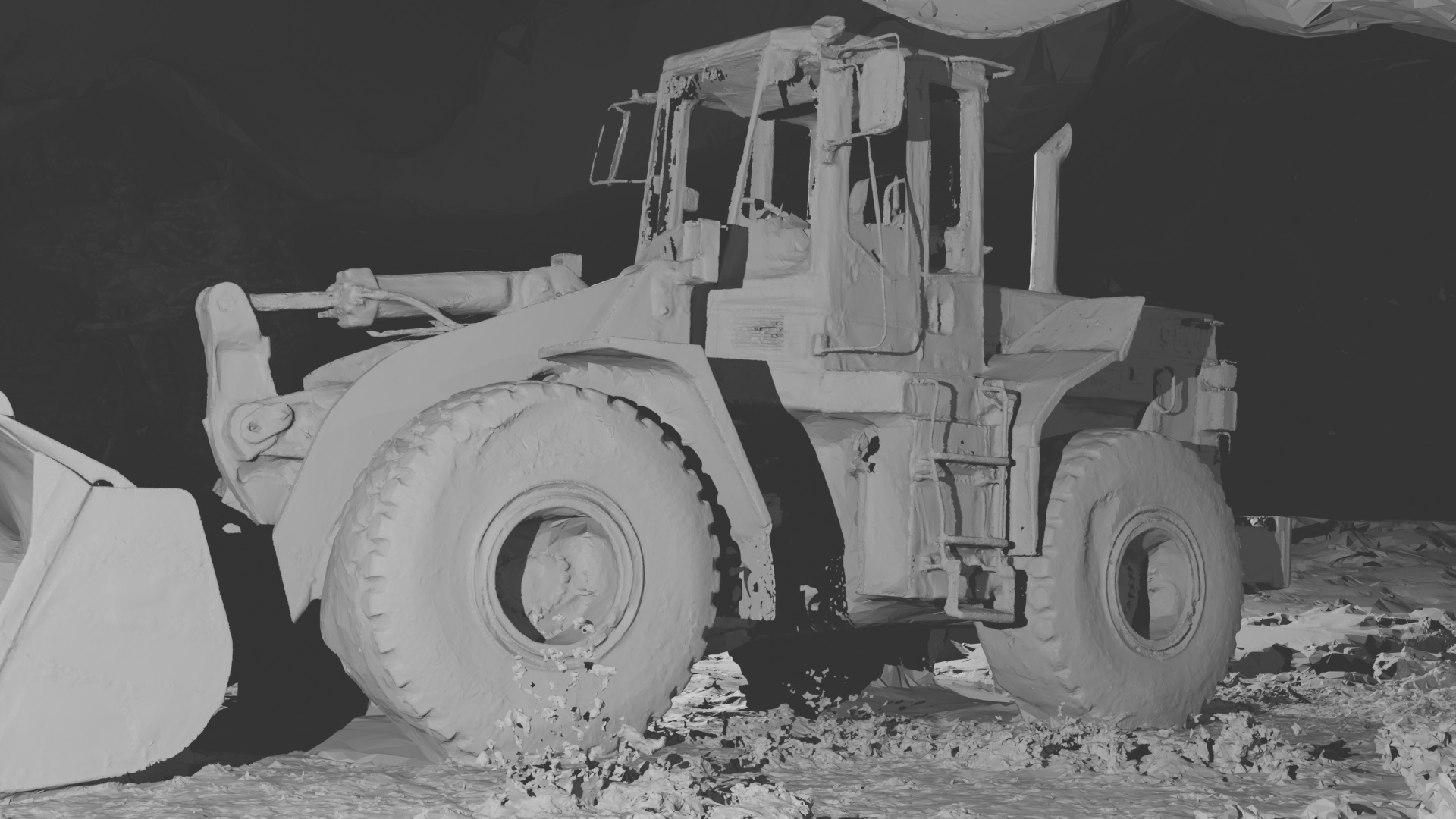} &
    \includegraphics[width=0.49\linewidth]{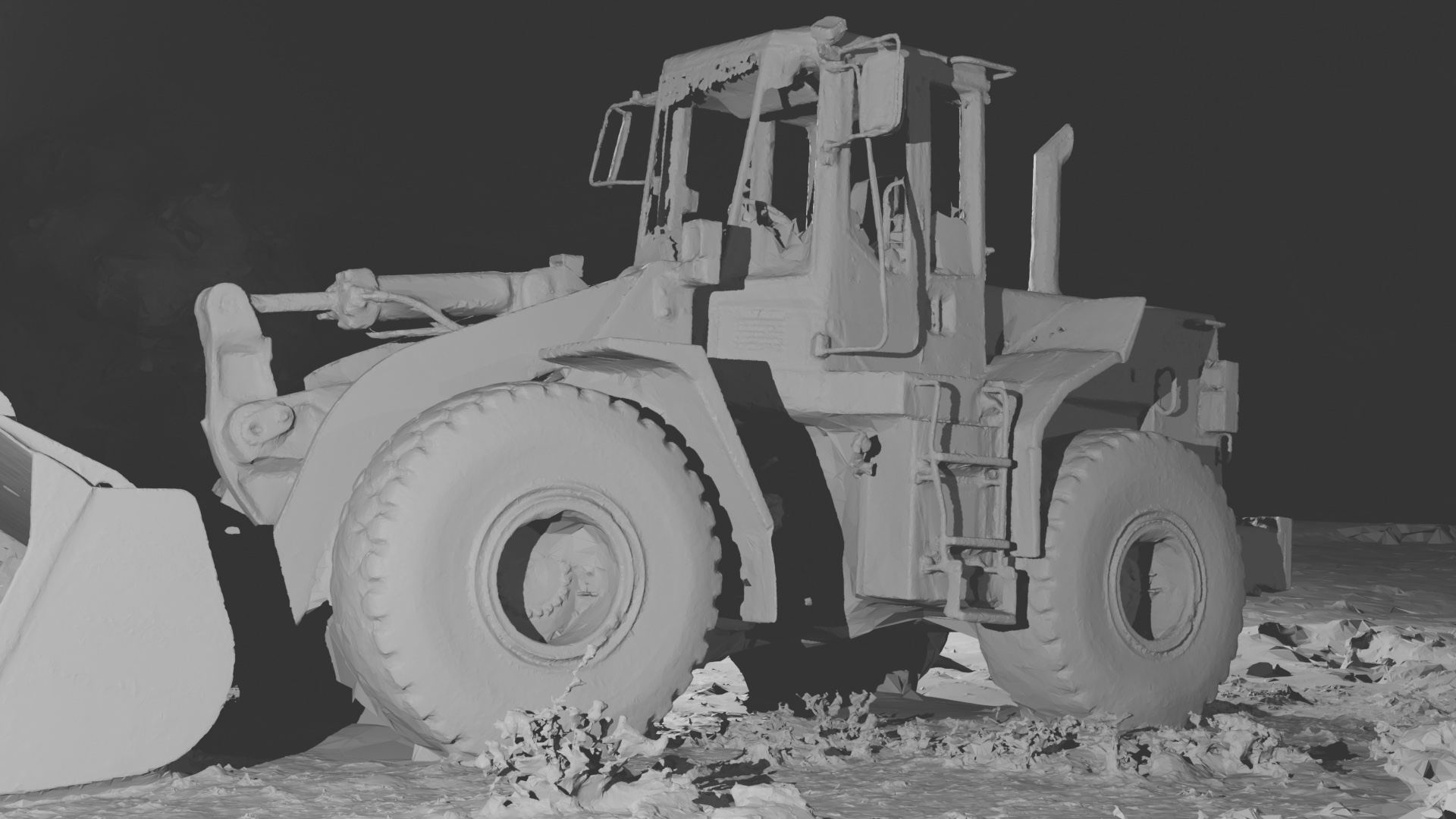} \\
\end{tabular}
\caption{\textbf{Qualitative effect of Normal Field (NF) supervision on RaDe-GS.} Adding our NF losses and densification to RaDe-GS yields more faithful geometry. For instance, we mititage the erosion of the bike lock in the Barn scene, and avoid holes in the wheel of the Caterpillar scene. This shows our approach can be used to improved other existing methods.}
\label{fig:ablation_radegs}
\end{figure}

\end{document}